\newtheorem{theorem}{Theorem}%[section]
\newtheorem{corollary}[theorem]{Corollary}
\newtheorem{lemma}[theorem]{Lemma}
\newtheorem{proposition}[theorem]{Proposition}
\newtheorem{example}[theorem]{Example}
\theoremstyle{definition}
\newtheorem{definition}[theorem]{Definition}
\theoremstyle{remark}
\newcommand{\denselist}{
\itemsep -2pt\topsep-8pt\partopsep-8pt
}
\newcommand{\blockcomment}[1]{ }
\def\reals{\mathbb{R}}
\newbox\subfigbox
\newcommand{\Xedge}{\Omega^{cross}_{A',B'}}
\newcommand{\ABedge}{\Omega^{cross}_{A,B}}
\newcommand{\BAedge}{\Omega^{cross}_{B,A}}
\newcommand{\Aedge}{\Omega^{int}_{A}}
\newcommand{\Bedge}{\Omega^{int}_{B}}
\newcommand{\FF}{\tilde{\mathcal{F}}}
\newcommand{\FFF}{\hat{\mathcal{F}}}
\numberwithin{equation}{section}
\theoremstyle{plain}
\begin{document}

\begin{frontmatter}
%\title{Riffled Independence\protect\thanksref{T1}}
\title{Uncovering the Riffled Independence Structure of Rankings}
\runtitle{Uncovering the Riffled Independence Structure of Rankings}
%\thankstext{T1}{Footnote to the title with the `thankstext' command.}

\begin{aug}
%\author{\fnms{Jonathan} \snm{Huang}\thanksref{t1,t2}\ead[label=e1]{jch1@cs.cmu.edu}}
\author{\fnms{Jonathan} \snm{Huang}\ead[label=e1]{jch1@cs.cmu.edu}}
\and
%\author{\fnms{Carlos} \snm{Guestrin}\thanksref{t3}\ead[label=e2]{guestrin@cs.cmu.edu}}
\author{\fnms{Carlos} \snm{Guestrin}\ead[label=e2]{guestrin@cs.cmu.edu}}

%\thankstext{t1}{Some comment}
%\thankstext{t2}{First supporter of the project}
%\thankstext{t3}{Second supporter of the project}
\runauthor{J. Huang et al.}

\affiliation{Carnegie Mellon University}

\address{200 Smith Hall \\
Carnegie Mellon University, \\
5000 Forbes Avenue,\\
Pittsburgh Pennsylvania, 15213. \\
\printead{e1}\\
\phantom{E-mail:\ }\printead*{e2}}
\end{aug}

\begin{abstract}
\looseness -1 Representing distributions over permutations
can be a daunting task due to the fact that
the number of permutations of $n$ objects
scales factorially in $n$. One recent way that 
has been used to reduce storage complexity
has been to exploit probabilistic independence, 
but as we argue, full independence assumptions
impose strong sparsity constraints on 
distributions and are unsuitable for modeling 
rankings. We identify a novel class of 
independence structures, called \emph{riffled 
independence}, encompassing a more expressive 
family of distributions while retaining many of 
the properties necessary for performing 
efficient inference and reducing sample 
complexity. In riffled independence, one draws 
two permutations independently, then performs 
the \emph{riffle shuffle}, common in card games,
to combine the two permutations to form a single 
permutation. Within the context of ranking, 
riffled independence corresponds to ranking 
disjoint sets of objects independently, then 
interleaving those rankings. In this paper, we 
provide a formal introduction to riffled independence
and present algorithms for using riffled independence
within Fourier-theoretic frameworks which have been 
explored by a number of recent papers. Additionally, 
we propose an automated method for discovering sets 
of items which are riffle independent from a training 
set of rankings. We show that our clustering-like 
algorithms can be used to discover meaningful latent 
coalitions from real preference ranking datasets and
to learn the structure of hierarchically 
decomposable models based on riffled independence.
\end{abstract}

\begin{keyword}[class=AMS]
\kwd[Primary ]{60K35}
\kwd{60K35}
\kwd[; secondary ]{60K35}
\end{keyword}

\begin{keyword}
\kwd{sample}
\kwd{\LaTeXe}
\end{keyword}

\end{frontmatter}

\section{Introduction}\label{sec:introduction}
\looseness -1 Ranked data appears ubiquitously in various statistics and machine learning
application domains.
Rankings are useful, for example, in reasoning about
preference lists in surveys~\citep{toshihiro03}, search results
in information retrieval applications~\citep{sunetal10}, 
and ballots in certain elections~\citep{diaconis89}
and even the ordering of topics and paragraphs within 
a document~\citep{chenetal09}.
The problem of building statistical models on rankings has
thus been an important research topic in the learning community.
As with many challenging learning problems, one must contend
with an intractably large state space when dealing with rankings
since there are $n!$ ways to rank $n$ objects.
In building a statistical model over rankings, simple (yet flexible)
models are therefore preferable because they are typically
more computationally tractable and less prone to overfitting.

\looseness -1 A popular and highly successful approach for
achieving such simplicity for distributions
involving large collections of interdependent variables
has been to exploit conditional independence structures
(e.g., naive Bayes, tree, Markov models).
With ranking problems, however, independence-based
relations are typically trickier to exploit due to the so-called
\emph{mutual exclusivity} constraints which constrain any
two items to map to different ranks in any given ranking.

In this paper, we present a novel, relaxed notion of independence, called
\emph{riffled independence}, in which one ranks disjoint subsets of
items independently, then interleaves the subset rankings
to form a joint ranking of the item set.  
For example, if one ranks a set of food items containing fruits and
vegetables by preference, then one might first rank the vegetable and fruit sets
separately, then interleave the two rankings to form a ranking for the full item set.
Riffled independence appears naturally in many ranked datasets --- as we show, political
coalitions in elections which use the STV (single transferable vote)
voting mechanism typically lead to pronounced riffled independence
constraints in the vote histograms.

Chaining the interleaving operations recursively leads to a 
simple, interpretable class of models over rankings, not unlike
graphical models.  We present methods for learning the parameters
of such models and for estimating their structure.

The following is an outline of our main contributions 
as well as a roadmap for the sections ahead.\footnote{
This paper is an extended presentation of our previous papers
~\citep{huangetal09c}, which was the first introduction
of riffled independence, and ~\citep{huangetal10a}, which studied
hierarchical models based on riffle independent decompositions.
}
\begin{itemize}\denselist
\item Section~\ref{sec:distributions} gives a broad overview of several
approaches for modeling probability distributions over permutations.
In particular, we summarize the results of~\cite{huangetal09a}, 
which studied probabilistic independence relations
in distributions on permutations.
\item In Section~\ref{sec:riffledindependence}, we introduce our main
contribution: an intuitive, novel generalization of the notion of
independence for permutations,
\emph{riffled independence}, based on interleaving independent rankings
of subsets of items. We show riffled independence to be a more
appropriate notion of independence for ranked data
and exhibit evidence that riffle independence relations can 
approximately hold in real ranked datasets.  
We also discuss ideas for exploiting riffled
independence relations in a distribution to reduce sample
complexity and to perform efficient inference.
\item Within the same section, we introduce a novel family of distributions
over the set of interleavings of two item sets,
called \emph{biased riffle shuffles}, that are useful in the context of
riffled independence. We propose an efficient recursive 
procedure for computing the
Fourier transform of these biased riffle shuffle distributions, 
%and 
%using this result, we devise a method for teasing apart the riffle-independent
%components of a distribution in the Fourier domain.
\item 
In Section~\ref{sec:algorithms}, we discuss the problem of estimating
model parameters of a riffle independent model from ranking data,
and computing various statistics from model parameters.
To perform such computations in a scalable way, we
develop algorithms that can be used in the Fourier-theoretic
framework of~\cite{kondor07},~\cite{huangetal09a}, and \cite{huangetal09b}
for joining riffle independent factors (\emph{RiffleJoin}),
and for teasing apart the riffle independent factors
from a joint (\emph{RiffleSplit}), and provide theoretical
and empirical evidence that our algorithms perform well.
\item 
We use Section~\ref{sec:hierarchical}
to define a family of simple and interpretable, yet flexible
distributions over rankings,
called hierarchical riffle independent models,
in which subsets of items are iteratively interleaved into larger 
and larger subsets
in a recursive stagewise fashion.
\item  
Sections~\ref{sec:structure1},~\ref{sec:structure2}, and~\ref{sec:structure3}
tackle the problem of structure learning for our 
riffle independent models.
In Section~\ref{sec:structure1}, we propose a method for finding the
partitioning of the item set such that the subsets
of the partition are as close to riffle independent as possible.
In particular, we propose a novel objective for quantifying the
degree to which two subsets are riffle independent to each
other.
In Section~\ref{sec:structure2} and ~\ref{sec:structure3}
we apply our partitioning algorithm to perform model selection
from training data in polynomial time, without
having to exhaustively search over
the exponentially large space of hierarchical structures.
\item 
Finally in Section~\ref{sec:experiments},
we apply our algorithms to a number of datasets both
simulated and real in order to validate our methods and assumptions.
We show that our methods are indeed effective, and apply them
in particular to various voting and preference ranking datasets.
% discover the best hierarchical structure 
%for modeling real datasets.
\end{itemize}

\section{Distributions on rankings}\label{sec:distributions}
In this paper, we will be concerned with distributions
over rankings.  A \emph{ranking}
$\sigma=(\sigma(1),\dots,\sigma(n))$ is a one-to-one association
between $n$ items and ranks, where $\sigma(j)=i$ means that the $j^{th}$
item is assigned rank $i$ under $\sigma$.
By convention, we will think of low ranked items as being \emph{preferred}
over higher ranked items (thus, ranking an item in first place
means that it is the most preferred out of all items).
We will also refer to a ranking $\sigma$ by its
inverse, $\llbracket\sigma^{-1}(1),\dots,\sigma^{-1}(n)\rrbracket$
(called an \emph{ordering} and denoted with double brackets instead
of parentheses),
where $\sigma^{-1}(i)=j$ also means that the
$j^{th}$ item is assigned rank $i$ under $\sigma$.
The reason for using both notations is due to the fact that
certain concepts will be more intuitive to express using either
the ranking or ordering notation.

%\footnote{
%The ordering notation is different from that used in~\cite{huangetal09c},
%but we find it to be more intuitive for riffled independence.
%}
\newcommand{\aC}{{\bf{C}}}
\newcommand{\aP}{{\bf{P}}}
\newcommand{\aL}{{\bf{L}}}
\newcommand{\aO}{{\bf{O}}}
\newcommand{\aF}{{\bf{F}}}
\newcommand{\aG}{{\bf{G}}}
\begin{example}
As a running example in this paper, we will consider ranking a small
list of 6 items consisting of fruits and vegetables enumerated below:
\vspace{-5mm}

{\footnotesize
\begin{align*}
1.&\; Corn \;(\aC) &
2.&\; Peas \;(\aP) &
3.&\; Lemon \;(\aL) \\ 
4.&\; Orange \;(\aO) & 
5.&\; Fig \;(\aF) & 
6.&\; Grapes \;(\aG) 
\end{align*}\vspace{-5mm}
}

\noindent The ranking $\sigma=(3,1,5,6,2,4)$ means, for example, that
Corn is ranked third, Peas is ranked first, Lemon is ranked fifth, and so on.
In ordering notation, the same ranking is expressed as:
$\sigma=\llbracket\aP,\aF,\aC,\aG,\aL,\aO\rrbracket$.
%for example,
%ranks Peas first, Figs second, Corn third, Grapes fourth, Lemons fifth, and
%Oranges last.
Finally we will use 
$\sigma(3)=\sigma(\aL)=5$ to denote the rank of the third
item, Lemon.
\end{example}

\paragraph{Permutations and the symmetric group}
Rankings are similar to \emph{permutations}, which are 1-1 mappings from the
set $\{1,\dots,n\}$ into itself, the subtle difference being that
rankings map between two \emph{different} sets of size $n$.
In this paper, we will use the same notation for permutations and
rankings, but use permutations to refer to (1-1) functions which rearrange
the ordering of the item set or the ranks.
If $\tau$ is a permutation of the set of ranks, then
then given a ranking $\sigma$, one can rearrange the ranks by left-composing
with $\tau$.  Thus, the ranking $\tau\sigma$
maps item $i$ to rank $\tau(\sigma(i))$.  On the other hand, if $\tau$ is
a permutation of the item set, one can 
rearrange the item set by right-composing with $\tau^{-1}$.
Thus, if item $j$ was relabeled as item $i=\tau(j)$,
then $\sigma(\tau^{-1}(i))$ returns the rank of item $j$ with respect
to the original item ordering. 
Finally, we note that the composition of any two permutations is
itself a permutation, and the collection of all $n!$ permutations
forms a group, commonly known as the \emph{symmetric group}, 
or $S_n$.\footnote{We will sometimes abusively denote the set of rankings 
by $S_n$. Strictly speaking, however, the rankings are not a group, 
and instead one says that $S_n$ acts faithfully on rankings.}

A distribution $h(\sigma)$, defined over the set of rankings or permutations
can be viewed as a joint distribution over the $n$ variables
$(\sigma(1),\dots,\sigma(n))$ (where $\sigma(j)\in\{1,\dots,n\}$),
subject to \emph{mutual exclusivity constraints}
which stipulate that two objects cannot simultaneously map to the same rank,
or alternatively, that two ranks cannot simultaneously be occupied by
the same object
($h(\sigma(i)=\sigma(j))=0$ whenever $i\neq j$).

\begin{example}[APA election data]\label{ex:apadata}
As a running example throughout the paper (in addition to 
the fruits and vegetables), we will analyze the well known 
APA election dataset that was first used by~\cite{diaconis88} and 
has since been
analyzed in a number of ranking studies.  The APA dataset is a 
collection of 5738 ballots from a 1980 presidential election 
of the American Psychological Association
where members rank ordered five candidates from favorite to least favorite.
The names of the five candidates that year were 
(1) William Bevan, (2) Ira Iscoe, (3) Charles Kiesler, (4) Max Siegle,
and (5) Logan Wright~\citep{marden95}.

Since there are five candidates, there are $5!=120$ possible rankings.
In Figure~\ref{fig:apavotedistribution} we plot the proportion of votes
that each ranking received.  
Interestingly, instead of concentrating at just a small set of rankings, the vote
distribution in the APA dataset is fairly diffuse with every ranking receiving
some number of votes.
The mode of the vote distribution occurs
at the ranking $\sigma=(2,3,1,5,4)=\llbracket\mbox{C. Kiesler},\mbox{W. Bevan},
\mbox{I. Iscoe},\mbox{L. Wright},\mbox{M. Siegle}\rrbracket$ with 186 votes.  

For interpretability, we also visualize the \emph{matrix of first-order marginals} 
in which the $(i,j)$ entry represents the number of voters who assigned rank
$i$ to candidate $j$.
Figure~\ref{fig:apafirstorder} represents the first-order matrix using grayscale
levels to represent numbers of voters.  What can be seen is that overall, candidate
3 (C. Kiesler) received the highest number of votes for rank 1 (and incidentally,
won the election).  The vote distribution gives us a story that goes far
deeper than simply telling us who the winner was, however.  
\cite{diaconis88}, for example, noticed 
that candidate 3 also had a significant ``hate'' 
vote --- a good number of voters placed him in the last rank.
Throughout this paper, we will let this story unfold 
via a series of examples based on the APA dataset.

\begin{figure*}[t!]
\begin{center}
\subfigure[]{
\includegraphics[width=.55\textwidth]{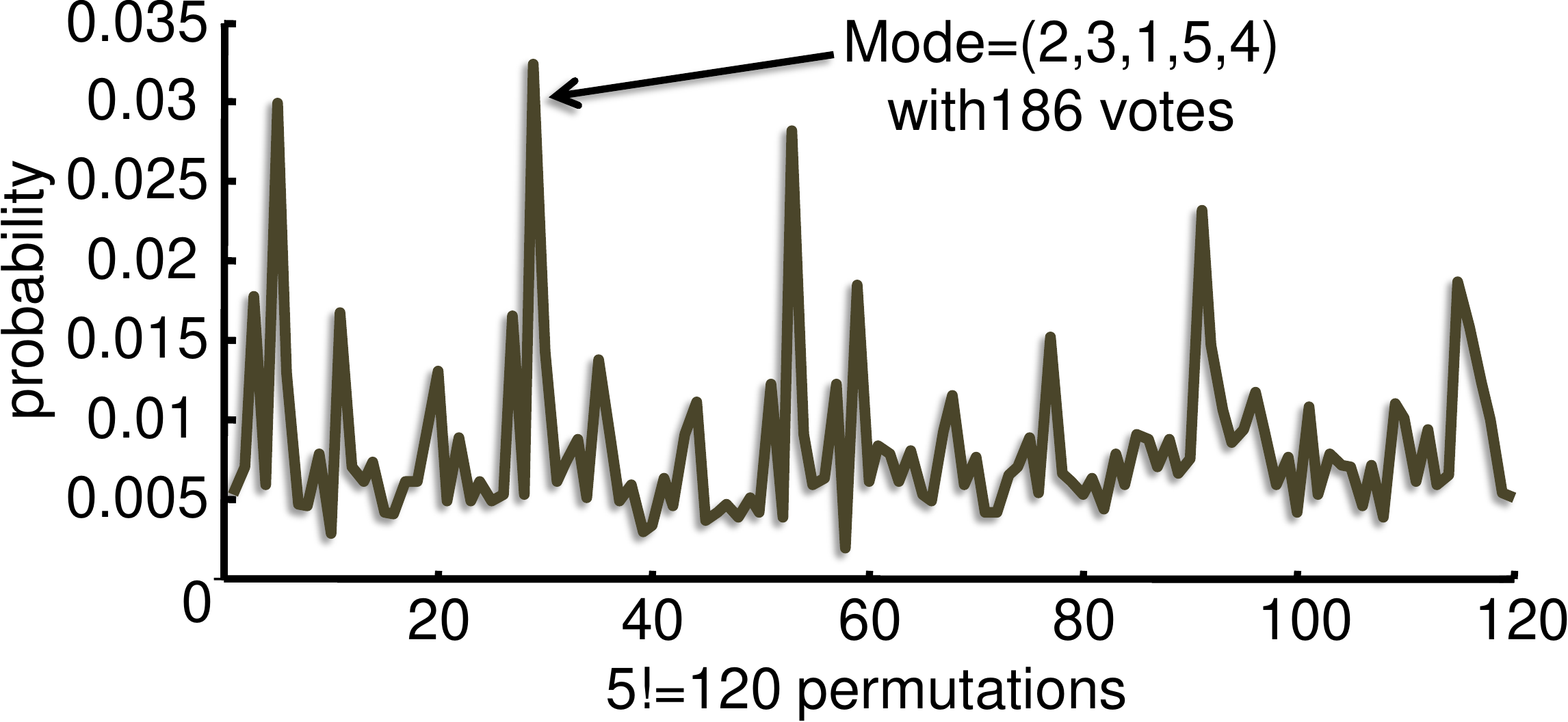}
   \label{fig:apavotedistribution}
}
\subfigure[]{
\raisebox{3pt}{
\includegraphics[width=.35\textwidth]{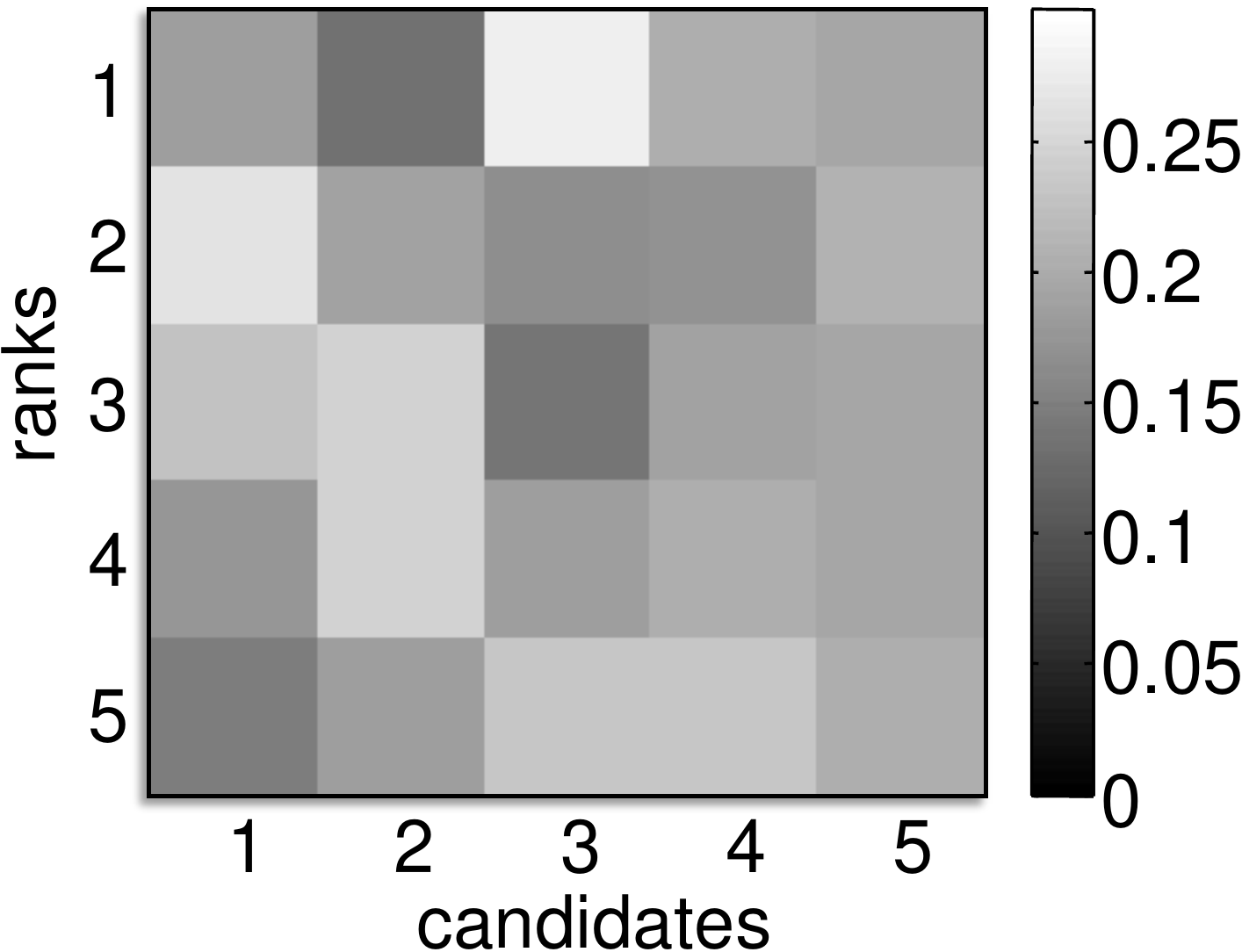}
   \label{fig:apafirstorder}
}}
\caption{APA (American Psyochological Association) election data. 
\subref{fig:apavotedistribution} vote distribution: percentage of votes for
each of $5!=120$ possible rankings --- the mode of the distribution
is $\sigma=(2,3,1,5,4)$.  
\subref{fig:apafirstorder} Matrix of first order marginals: the $(i,j)^{th}$
entry reflects the number of voters who ranked candidate $j$ in the $i^{th}$ rank.
}
\label{fig:apa}
\end{center}
\end{figure*}
\end{example}

\subsection{Dealing with factorial possibilities}
The fact that there are factorially many possible rankings
poses a number of significant challenges for learning and inference.  
First, there is no way to tractably represent arbitrary distributions over rankings
for large $n$.  Storing an array of $12!$ doubles, for example, requires
roughly 14 gigabytes of storage, which is beyond the 
RAM capacity of a typical modern PC. 
Second, the naive algorithmic complexity of common probabilistic operations
is also intractable for such distributions.  Computing the marginal probability,
$h(\sigma(i)<\sigma(j))$, that item $i$ is preferred to item $j$, for example,
requires a summation over $O((n-2))!)$ elements.
Finally, even if storage and computation issues were resolved, one would still have 
sample complexity issues to contend with --- for nontrivial $n$, it is 
impractical to hope that each of the $n!$ possible rankings would appear
even once in a training set of rankings.  The only existing 
datasets in which every possible ranking is realized are those for which
$n\leq 5$, and in fact, the APA dataset (Example~\ref{ex:apadata}) is the only
such dataset for $n=5$ that we are aware of.

The quest for exploitable problem structure has led
researchers in machine learning and related fields
to consider a number of possibilities including
distribution sparsity~\citep{reid79,jagabathula08,farias09},
exponential family parameterizations~\citep{meila07,helmbold07,
lebanon08,patterson09}, algebraic/Fourier structure~\citep{kondor07,
kondor08,huangetal07,huangetal09b},
and probabilistic independence~\citep{huangetal09a}.  We briefly summarize
several of these approaches in the following.

\paragraph{Parametric models}
We will not be able to do justice to the sheer volume of previous work
on parametric ranking models. Parametric probabilistic models over 
the space of rankings have a rich tradition in statistics, 
\citep{thurstone27,mallows57,plackett75,marden95,fligner86,fligner88,meila07,guiver09}, 
and to this day, researchers continue to expand upon this body of work.
For example, the well known \emph{Mallows model} (which we will discuss in 
more detail in Section~\ref{sec:hierarchical}), which is
often thought of as an analogy of the normal distribution for permutations,
parameterizes a distribution with a ``mean'' permutation and a 
precision/spread parameter.

The models proposed in this paper generalize some of the classical models
from the statistical ranking literature, allowing for more expressive 
distributions to be captured. At the same time, our methods 
form a conceptual bridge to popular models (i.e., graphical models) 
used in machine learning
which, rather than relying on a prespecified parametric form, 
simply work within a family of distributions that are consistent with 
some set of conditional independence assumptions~\citep{koller09}.

\paragraph{Sparse methods}
Sparse methods for summarizing distributions range from older ad-hoc 
approaches such as maintaining $k$-best hypotheses~\citep{reid79} to the more
updated compressed sensing inspired 
approaches discussed in~\citep{jagabathula08,farias09}.
Such approaches assume that there are at most $k$ permutations which 
own all (or almost all) of the probability mass, where $k$ scales
either sublinearly or as a low degree polynomial in $n$.
While sparse distributions have been successfully applied
in certain tracking domains, we argue that they are often
less suitable in ranking problems where it might be necessary
to model indifference over a large subset of objects.\footnote{
In some situations, particularly when one is interested primarily
in accurately capturing a loss or payoff function instead of 
raw ranking probabilities, it can suffice to use a sparse proxy
distribution even if the true underlying distribution is not itself
sparse.  See, for example,~\citep{helmbold07,farias09} for details.} 
If one is approximately indifferent among 
a subset of $k$ objects, then there are at least $k!$
rankings with nonzero probability mass.  As an
example, one can see that 
the APA vote distribution (Figure~\ref{fig:apavotedistribution}) 
is clearly not a sparse distribution, with each ranking having
received some nonzero number of votes.

\paragraph{Fourier-based (low-order) methods}
Another recent thread of research has centered around
\emph{Fourier-based methods} which maintain a set of low-order summary
statistics~\citep{shin05,diaconis88,kondor08b,huangetal09b}.
The \emph{first-order summary}, for example, stores a marginal
probability of the form $h(\sigma\,:\,\sigma(j)=i)$
for every pair $(i,j)$ and thus requires
storing a matrix of only $O(n^2)$ numbers.  In our fruits/vegetables example,
 we might store
the probability that Figs are ranked first, or the probability that
Peas is ranked last.  
\begin{example}[APA election data (continued)]
In the following matrix, we record the first order
matrix computed from the histogram of votes in the APA election example
(also visualized using grayscale levels in Figure~\ref{fig:apafirstorder}).
Dividing each number by the total number of votes would yield a matrix
of first order marginal probabilities.\vspace{-3mm}

{\scriptsize
\[
\left[\sum_{\sigma:\sigma(j)=i} h(\sigma)\right]_{i,j} = 
\left[
\begin{array}{c|ccccc}
 & \mbox{W. Bevan} & \mbox{I. Iscoe} & \mbox{C. Kiesler} & \mbox{M. Siegle} & \mbox{L. Wright} \\
\hline
\mbox{Rank 1} & 1053 &  775 & 1609 & 1172 & 1129 \\
\mbox{Rank 2} & 1519 & 1077 &  960 &  972 & 1210 \\
\mbox{Rank 3} & 1313 & 1415 &  793 & 1089 & 1128 \\
\mbox{Rank 4} & 1002 & 1416 & 1050 & 1164 & 1106 \\
\mbox{Rank 5} &  851 & 1055 & 1326 & 1341 & 1165 
\end{array}\right].
\]\vspace{-5mm}
}
\end{example}
More generally, one might store
\emph{$s^{th}$-order marginals}, which are
marginal probabilities of $s$-tuples.
The second-order marginals, for example, take the form
$h(\sigma:\sigma(k,\ell)=(i,j))$, (perhaps encoding
the joint probability that Grapes are ranked first,
and Peas second) and require $O(n^4)$ storage.

Low-order marginals turn out to be intimately related to 
a generalized form of Fourier analysis.
Generalized Fourier transforms for functions on permutations have been
studied for several decades now primarily by Persi Diaconis and
his collaborators~\citep{diaconis88,clausen93,maslen98,terras99,rockmore00}.
Low-order marginals correspond, in a certain sense, to the 
low-frequency Fourier coefficients of a distribution over permutations.  
For example, the first-order matrix of $h(\sigma)$ can be 
reconstructed exactly from $O(n^2)$
of the lowest frequency Fourier coefficients of $h(\sigma)$, 
and the second-order matrix from $O(n^4)$ of the lowest frequency 
Fourier coefficients.  From a Fourier
theoretic perspective, one sees that low order marginals
are not just a reasonable way of summarizing a distribution, but can 
actually be viewed as a principled ``low frequency'' approximation 
thereof.
In contrast with sparse methods, 
Fourier-based methods handle diffuse distributions well
but are not easily scalable without making aggressive independence
assumptions~\citep{huangetal09a} since, in general,
one requires $O(n^{2s})$ coefficients
to exactly reconstruct $s^{th}$-order marginals, which
quickly becomes intractable for moderately large $n$.

\subsection{Fully independent subsets of items}\label{sec:fullyindep}
To scale to larger problems,~\cite{huangetal09a}
demonstrated that, by exploiting \emph{probabilistic independence},
one could dramatically improve the scalability of Fourier-based
methods, e.g., for tracking problems, since confusion in data association
only occurs over small independent subgroups of objects in many problems.
Probabilistic independence assumptions on the symmetric group
can simply be stated as follows.
Consider a distribution $h$ defined over $S_n$.
Let $A$ be a $p$-subset of $\{1,\dots,n\}$, say, $\{1,\dots,p\}$
and let $B$ be its complement ($\{p+1,\dots,n\}$) with size $q=n-p$.
We say that $\sigma(A)=(\sigma(1),\sigma(2),\dots,\sigma(p))$ and
$\sigma(B)=(\sigma(p+1),\dots,\sigma(n))$ are \emph{independent} if
\begin{equation}\label{eqn:fullyfactorized}
h(\sigma) = f(\sigma(1),\sigma(2),\dots,\sigma(p))\cdot 
        g(\sigma(p+1),\dots,\sigma(n)).
\end{equation}
Storing the parameters for the above distribution requires
keeping $O(p!+q!)$ probabilities instead of the much larger $O(n!)$
size required for general distributions.  Of course, $O(p!+q!)$
can still be quite large.  Typically, one decomposes the distribution
recursively and stores factors exactly for small enough factors,
or compresses factors using Fourier coefficients (but using higher
frequency terms than what would be possible without the independence
assumption).
In order to exploit probabilistic independence in the Fourier domain,
~\cite{huangetal09a} proposed algorithms for joining
factors and splitting distributions into independent components in
the Fourier domain.

\begin{figure*}[t!]
\begin{center}
\subfigure[]{
\includegraphics[width=.22\textwidth]{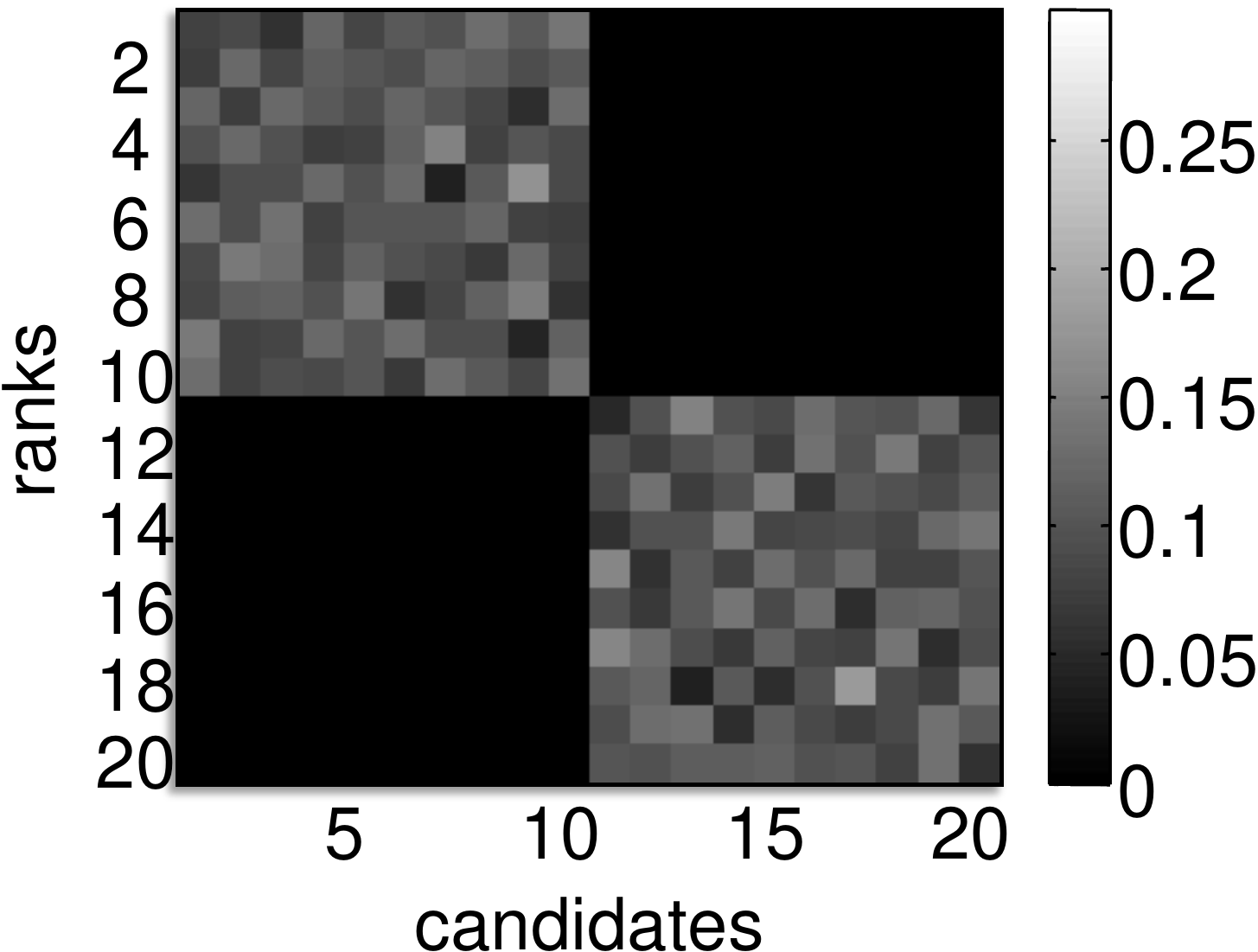}
   \label{fig:notindependent}
}
\subfigure[]{
\includegraphics[width=.22\textwidth]{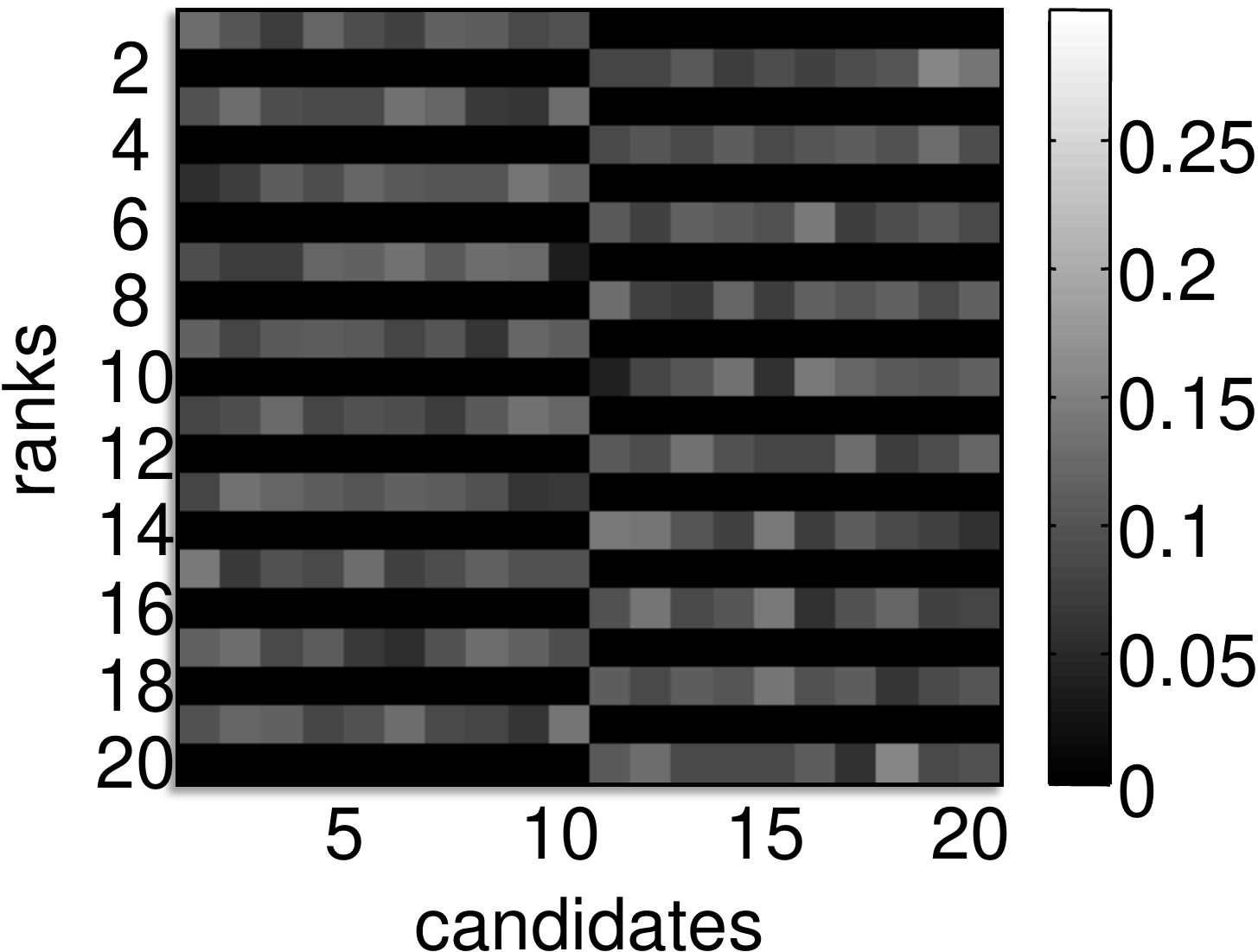}
   \label{fig:fullyindependent2}
}
\subfigure[]{
\includegraphics[width=.22\textwidth]{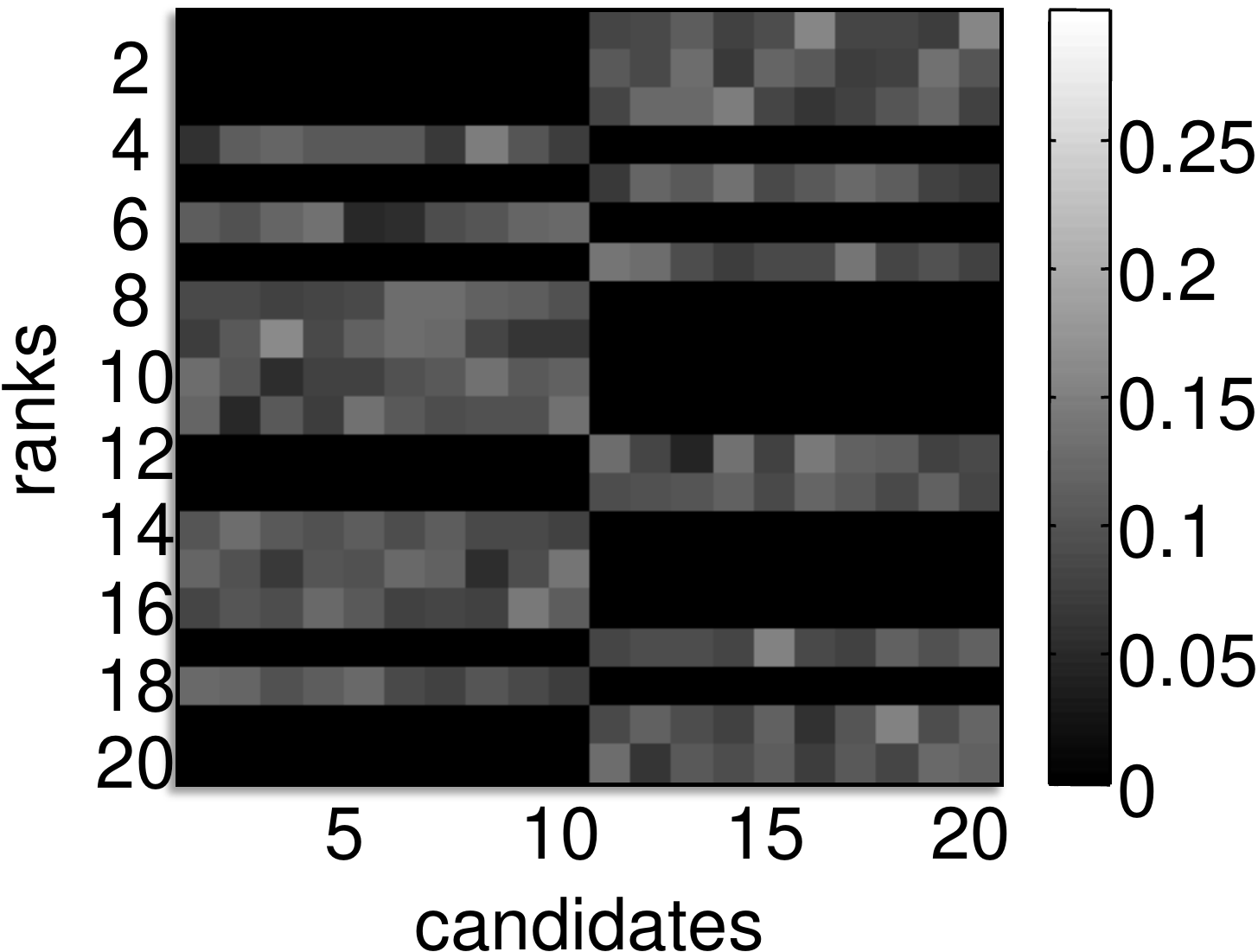}
   \label{fig:fullyindependent3}
}
\subfigure[]{
\includegraphics[width=.22\textwidth]{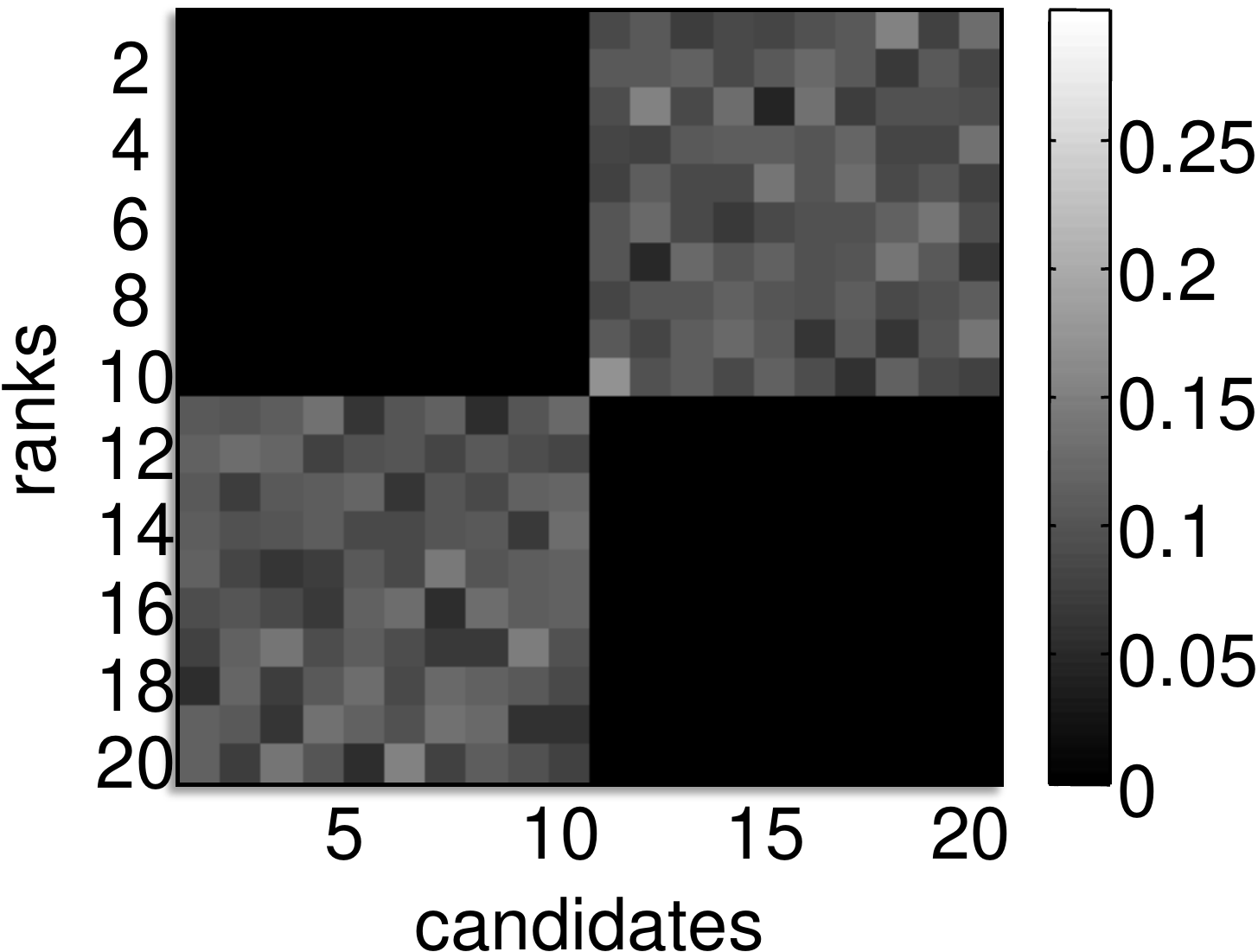}
   \label{fig:fullyindependent4}
}\qquad\qquad
\caption{Example first-order matrices with
$A=\{1,2,3\}$, $B=\{4,5,6\}$  fully independent, where
black means $h(\sigma:\sigma(j)=i)=0$.  In each case,
there is some $3$-subset $A'$ which $A$ is constrained to map to with
probability one.  Notice that, with respect to some rearranging of the rows,
independence imposes a block-diagonal structure on
first-order matrices.}
\label{fig:fullyindependent}
\end{center}
\end{figure*}
\begin{figure*}[t!]
\begin{center}
\subfigure[]{
\includegraphics[width=.50\textwidth]{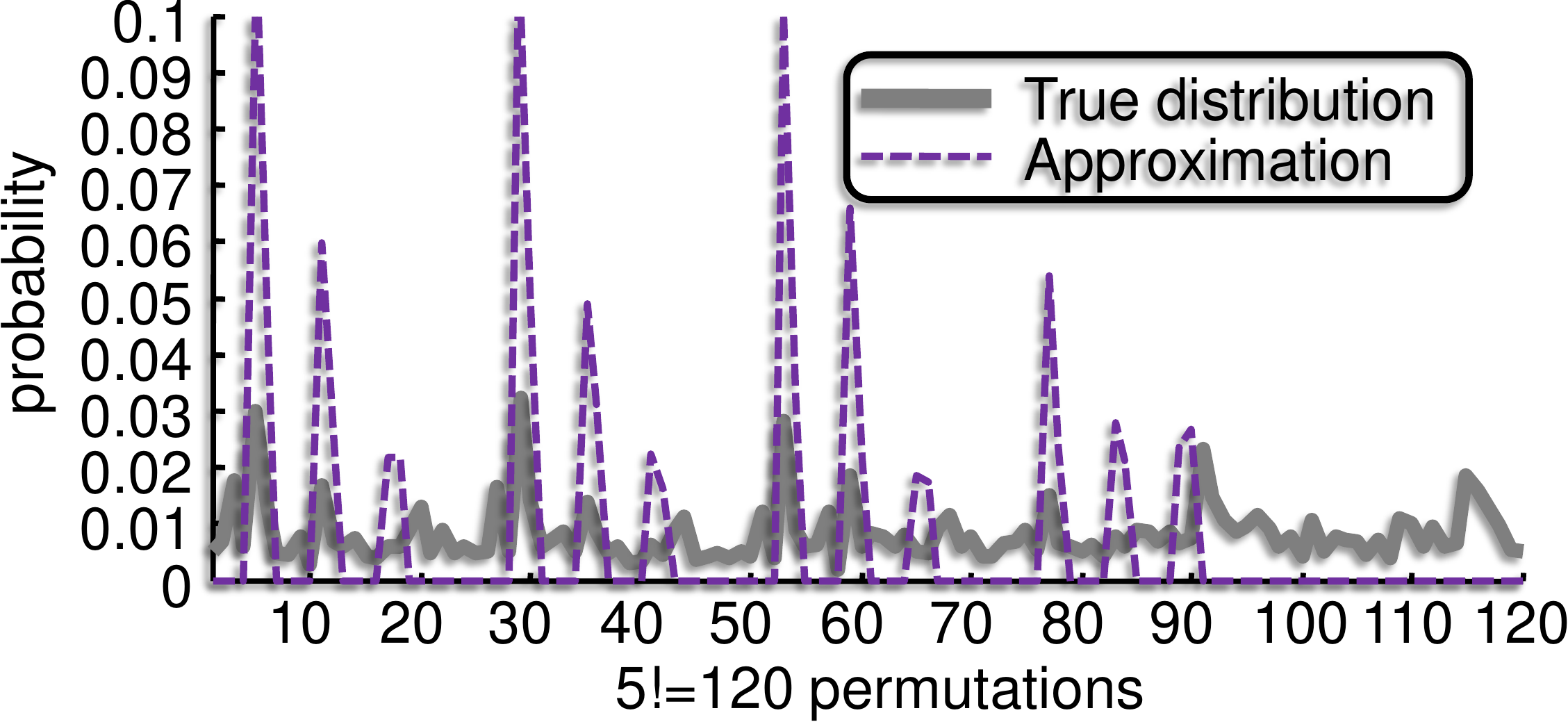}
   \label{fig:apafullyindependent_votedistribution}
}\qquad
\subfigure[]{
\raisebox{3pt}{
\includegraphics[width=.30\textwidth]{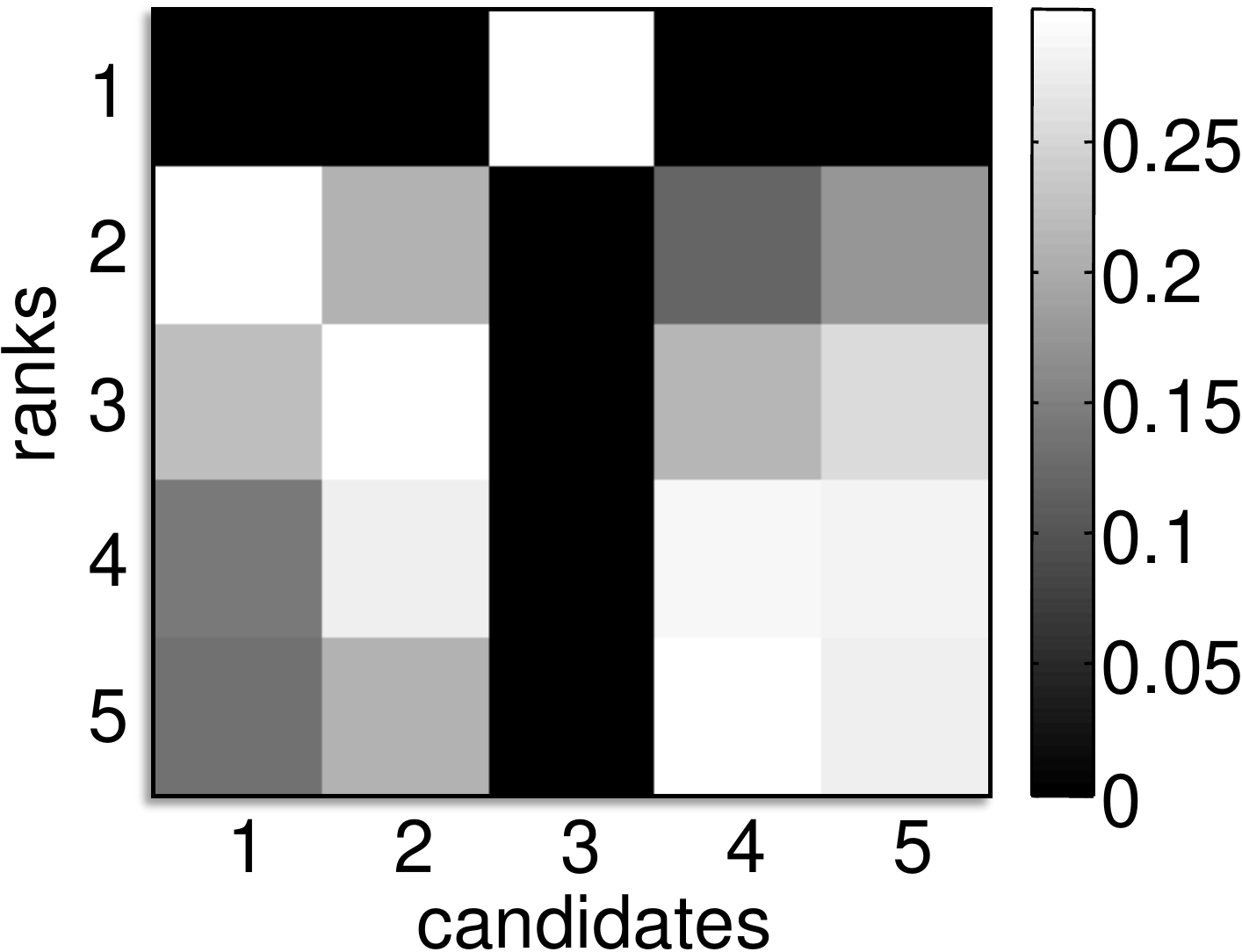}
   \label{fig:apafullyindependent_firstorder}
}
}
\caption{Approximating the APA vote distribution by a factored distribution
in which candidate 3 is independent of candidates $\{1,2,4,5\}$.
\subref{fig:apafullyindependent_votedistribution} in thick gray, the true distribution,
in dotted purple, the approximate distribution.  Notice that the factored
distribution assigns zero probability to most permutations.
\subref{fig:apafullyindependent_firstorder} matrix of first order marginals
of the approximating distribution.
}
\label{fig:apafullyindependentexamples}
\end{center}
\end{figure*}
Despite its utility for many tracking problems, however, we argue
that the independence assumption on permutations implies a rather
restrictive constraint on distributions, rendering independence
highly unrealistic in ranking applications.
In particular, using the mutual exclusivity property,
it can be shown~\citep{huangetal09a}
that, if $\sigma(A)$ and $\sigma(B)$ are independent, then
$A$ and $B$ are not allowed to map to the same ranks.  That is,
for some fixed $p$-subset $A'\subset \{1,\dots,n\}$,
$\sigma(A)$ is a permutation of elements in $A'$ and
$\sigma(B)$ is a permutation of its complement, $B'$, with
probability 1.
\begin{example}
Continuing with our vegetable/fruit example with $n=6$, if the vegetable
and fruit rankings,\vspace{-3mm}

{\footnotesize
\[
\sigma_A=[\sigma(\mbox{Corn}),\sigma(\mbox{Peas})],\; \mbox{and}\;
\sigma_B=[\sigma(\mbox{Lemons}),\sigma(\mbox{Oranges}),
	\sigma(\mbox{Figs}),\sigma(\mbox{Grapes})],
\]\vspace{-3mm}
}
 
\noindent are known to be independent.
Then for $A'=\{1,2\}$, the vegetables occupy the first and
second ranks with probability
one, and the fruits occupy ranks $B'=\{3,4,5,6\}$ with probability one,
reflecting that vegetables are always preferred over fruits according to this 
distribution.
\end{example}
\cite{huangetal09a} refer to this restrictive constraint
as the \emph{first-order condition} because of the block
structure imposed upon first-order
marginals (see Figure~\ref{fig:fullyindependent}).
In sports tracking, permutations represent the mapping
between the identities of players with positions on the field, and in such settings,
the first-order condition might say, quite reasonably,
that there is potential identity confusion within tracks for the red team and
within tracks for the blue team but no confusion between the two
teams.  
In our ranking example however, the first-order condition forces
the probability of any vegetable being in third place to be zero,
even though both vegetables will, in general, have nonzero marginal
probability of being in second place, which seems quite unrealistic.
\begin{example}[APA election data (continued)]
Consider approximating the APA vote distribution
by a factorized distribution (as in Equation~\ref{eqn:fullyfactorized}).  
In Figure~\ref{fig:apafullyindependentexamples},
we plot (in solid purple) the factored distribution
which is closest to the true distribution with respect to total variation
distance.  In our approximation, candidate 3 is constrained
to be independent of the remaining four candidates and maps to rank 1
with probability 1.  

While capturing the fact that the ``winner'' of the election
should be candidate 3, the fully factored distribution can be seen
to be a poor approximation, assigning zero probability to most permutations
even if all permutations received a positive number of votes.
Since the support of the true distribution is not contained within the 
support of the approximation, the KL divergence, 
$D_{KL}(h_{true};h_{approx})$ is infinite.  %The total variation distance
%between the distributions in this case is $D_{TV}(h_{true};h_{approx}) = .072$.
\end{example}
In the next section, we overcome the restrictive first-order condition
with the more flexible notion of \emph{riffled independence}.

\section{\emph{Riffled independence}: definitions and examples}\label{sec:riffledindependence}
The \emph{riffle (or dovetail) shuffle}~\citep{diaconis92}
is perhaps the most commonly
used method of card shuffling, in which one cuts a deck of $n$ cards
into two piles, $A=\{1,\dots,p\}$ and $B=\{p+1,\dots,n\}$,
with size $p$ and $q=n-p$, respectively,
and successively drops the cards, one by one,
so that the two piles become interleaved (see Figure~\ref{fig:shuffleporn})
into a single deck again.
Inspired by the riffle shuffle, we present a novel relaxation of the full
independence assumption, which we call \emph{riffled independence}.
Rankings that are riffle independent are formed by independently selecting
rankings for two disjoint
subsets of objects, then interleaving the two
rankings using a riffle shuffle to form a final ranking over all objects.
Intuitively, riffled independence models complex
relationships within each set $A$ and $B$ while allowing
correlations between the sets to be modeled only through a
constrained form of shuffling.  
\begin{example}
Consider generating a ranking of vegetables and fruits.
We might first `cut the deck' into two piles,
a pile of vegetables ($A$) and a pile of fruits ($B$), 
and in a first stage, independently
decide how to rank each pile.
For example, within vegetables, we might decide that Peas 
are preferred to Corn:
$\llbracket \aP,\aC\rrbracket=\llbracket Peas,Corn\rrbracket$.
Similarly, within fruits, we might decide on the following 
ranking:
$\llbracket\aL,\aF,\aG,\aO\rrbracket = \llbracket Lemons,Figs,Grapes,Oranges\rrbracket$ (Lemons preferred over Figs, Figs preferred over Grapes, Grapes
preferred over Oranges).

In the second stage of our model, the fruit and vegetable rankings
are interleaved to form a full preference ranking over all six items.
For example, if the interleaving is given by: 
$\llbracket Veg,Fruit,Fruit,Fruit,Veg,Fruit\rrbracket$, then the resulting
full ranking is:
\[
\sigma = \llbracket Peas,Lemons,Figs,Grapes,Corn,Oranges\rrbracket.
\]
\end{example}
\begin{figure*}[t!]
\begin{center}
\subfigure[]{
\raisebox{6pt}{
\includegraphics[width=.20\textwidth]{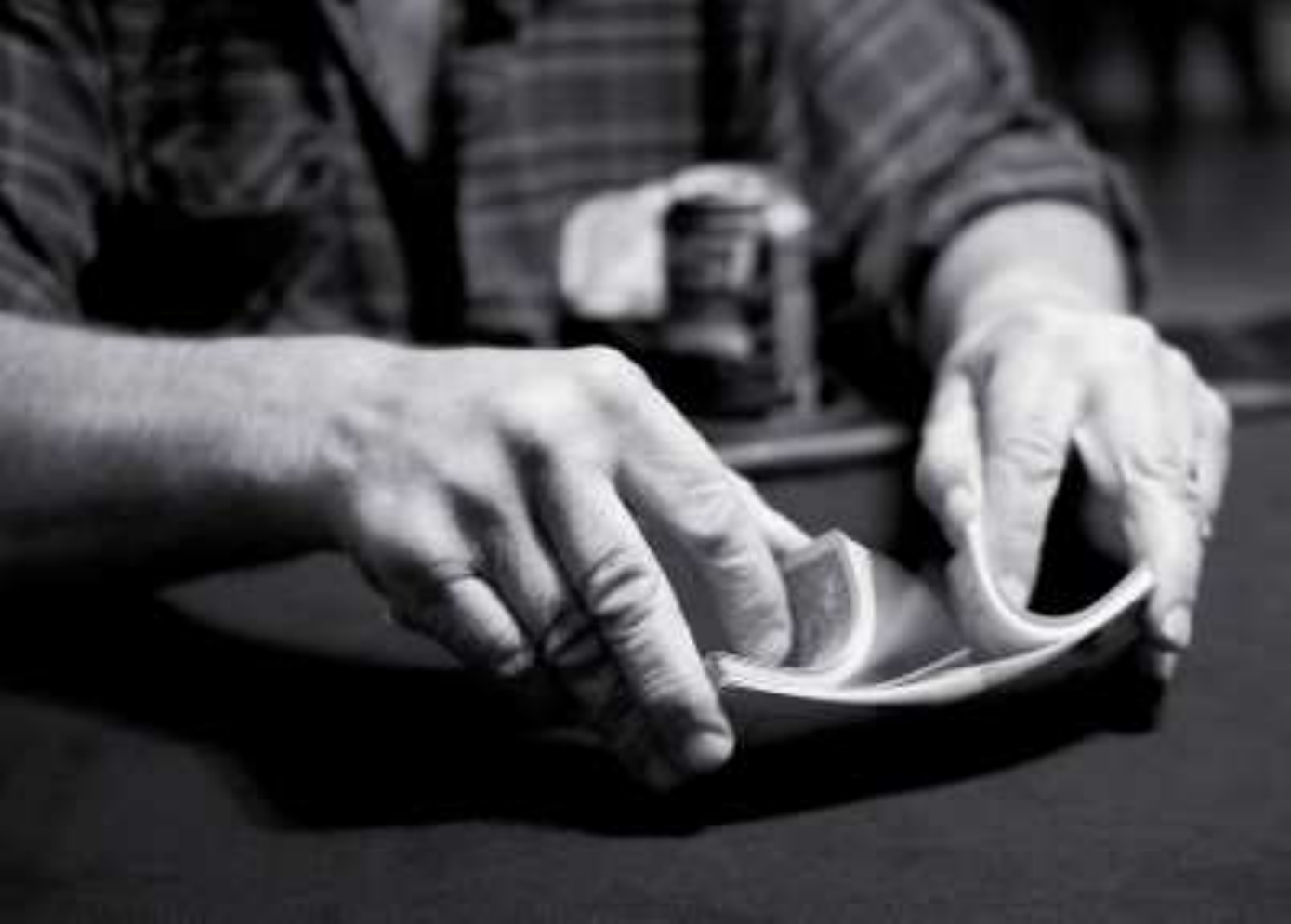}
   \label{fig:shuffleporn}
}
}\qquad\qquad
\subfigure[]{
\includegraphics[width=.375\textwidth]{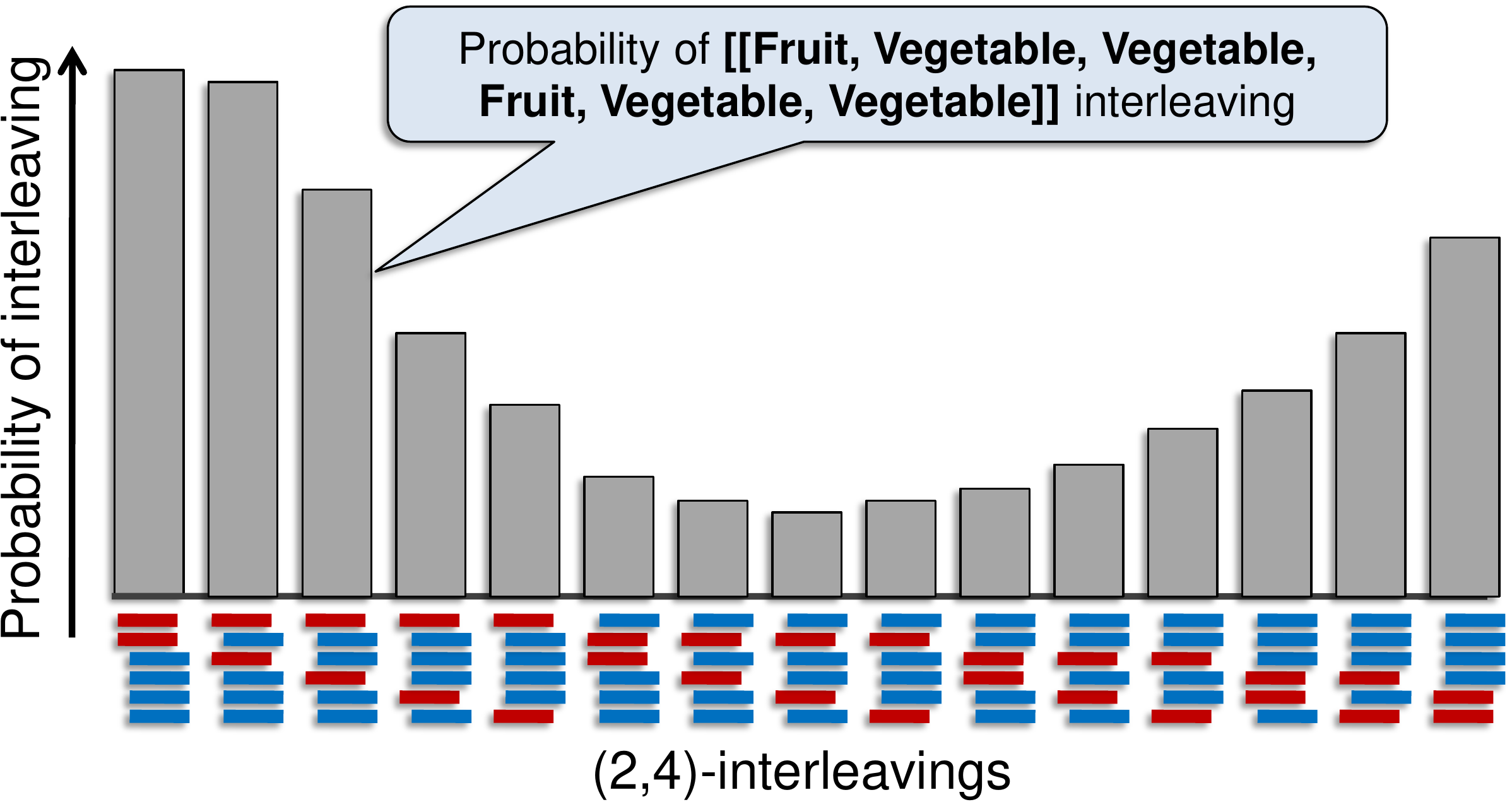}
   \label{fig:interleavings}
}
\caption{\subref{fig:shuffleporn} Photograph of the riffle shuffle executed
on  a standard deck of cards;
\subref{fig:interleavings} Pictorial example of a $(2,4)$-interleaving distribution, with red cards (offset to the left) denoting Vegetables, and blue cards
(offset to the right) denoting Fruits.
}
\label{fig:cardshufflingexamples}
\end{center}
\end{figure*}

\subsection{Convolution based definition of riffled independence}
There are two ways to define riffled independence, 
and, we will first provide a definition using convolutions, a view
inspired by our card shuffling intuitions.
Mathematically, shuffles are modeled as random walks on the symmetric group.  
The ranking $\sigma'$ \emph{after} a shuffle is generated from the
ranking \emph{prior} to that shuffle, $\sigma$, by drawing a
permutation, $\tau$ from an \emph{interleaving distribution} $m(\tau)$,
and setting $\sigma'=\tau\sigma$ (the composition of the mapping $\tau$ with
$\sigma$).
Given the distribution $h'$ over $\sigma$, we can find the
distribution $h(\sigma')$ \emph{after} the shuffle via the formula:
%\emph{convolution}:
$h(\sigma')%=\left[m*h\right](\sigma') 
= \sum_{\{\sigma,\tau\,:\,\sigma'=\tau\sigma\}} 
m(\tau)h'(\sigma)$.  This operation which combines 
the distributions $m$ and $h$ is commonly known as \emph{convolution}:
\begin{definition}
Let $m$ and $h'$ be probability distributions on $S_n$.
The \emph{convolution} of the distributions is the function:
$
[m*h'](\sigma) \equiv \sum_{\pi\in S_n} m(\pi)\cdot h(\pi^{-1}\sigma). 
$
We use the $*$ symbol to denote
the convolution operation. Note that $*$ is not 
in general commutative (hence, $m*h'\neq h'*m$).
\end{definition}

Besides the riffle shuffle, there are a number of
different 
shuffling strategies ---  the pairwise shuffle, 
for example, simply selects two cards
at random and swaps them.  The question then, is \emph{what are interleaving 
shuffling distributions $m$ that correspond to riffle shuffles?}
To answer this question, we use
the distinguishing property of the riffle shuffle, that, after
cutting the deck into two piles of size $p$ and $q=n-p$,
it must preserve
the relative ranking relations within each pile.  Thus, if
the $i^{th}$ card appears above the $j^{th}$ card in one of the piles, then
after shuffling, the $i^{th}$ card \emph{remains} above the $j^{th}$ card.
In our example, relative rank preservation says that if Peas
is preferred over Corn prior to shuffling, they continue to be
preferred over Corn after shuffling.
Any allowable riffle shuffling distribution must therefore assign zero
probability to permutations which do not preserve relative
ranking relations.  As it turns out, the set of permutations which
\emph{do} preserve these relations have a simple description.
\begin{definition}[Interleaving distributions]\label{def:riffleshuffle}
The \emph{$(p,q)$-interleavings} are defined as the following set:\vspace{-4mm}

{\footnotesize
\[
\Omega_{p,q} \equiv \{\tau\in S_n\,:\,\tau(1)<\tau(2)<\dots<\tau(p),\;\mbox{and}\;
	\tau(p+1)<\tau(p+2)<\dots<\tau(n)\}.
\]\vspace{-4mm}
}

\noindent A distribution $m_{p,q}$ on $S_n$ is called
an \emph{interleaving distribution}
if it assigns nonzero probability \emph{only} to elements in $\Omega_{p,q}$.
\end{definition}
The $(p,q)$-interleavings
can be shown to preserve relative ranking relations within each of the
subsets $A=\{1,\dots,p\}$ and $B=\{p+1,\dots,n\}$ upon
multiplication: %(see Lemma~\ref{lem:relativerankpreservation} in Appendix).
\begin{lemma}\label{lem:relativerankpreservation}
Let $i,j\in A=\{1,\dots,p\}$ (or $i,j\in B=\{p+1,\dots,n\}$) and 
let $\tau$ be any $(p,q)$-interleaving
in $\Omega_{p,q}$.  Then $i<j$ \emph{if and only if}
$\tau(i)<\tau(j)$ \;\;(i.e., permutations in $\Omega_{p,q}$ preserve
relative ranking relations).
\end{lemma}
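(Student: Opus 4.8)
The plan is to unfold the definition of $\Omega_{p,q}$ directly, since the claimed property is essentially built into it. By Definition~\ref{def:riffleshuffle}, any $\tau\in\Omega_{p,q}$ satisfies $\tau(1)<\tau(2)<\dots<\tau(p)$ and $\tau(p+1)<\tau(p+2)<\dots<\tau(n)$, which is precisely the statement that the restriction of $\tau$ to each block $A=\{1,\dots,p\}$ and $B=\{p+1,\dots,n\}$ is a strictly increasing function of its index. The lemma should therefore fall out with no substantive computation; the only care needed is to dispatch both directions of the biconditional and to handle the two blocks, which are symmetric.

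First I would establish the forward implication for $A$. Given $i,j\in A$ with $i<j$, I chain the defining inequalities: since $i\leq j-1$, transitivity of $\tau(i)<\tau(i+1)<\dots<\tau(j)$ yields $\tau(i)<\tau(j)$. For the reverse implication I would invoke that $\tau\in S_n$ is a bijection, hence injective, in combination with the forward direction. Suppose $\tau(i)<\tau(j)$ for $i,j\in A$. Since $<$ is a total order on $\{1,\dots,p\}$, exactly one of $i=j$, $i>j$, or $i<j$ holds. The first is ruled out because injectivity would force $\tau(i)\neq\tau(j)$; the second is ruled out because the forward implication applied to $j<i$ would give $\tau(j)<\tau(i)$, contradicting $\tau(i)<\tau(j)$. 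Hence $i<j$, completing the equivalence on $A$.

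Finally, the argument for $B$ is identical after translating indices by $p$: the block condition $\tau(p+1)<\dots<\tau(n)$ plays exactly the role that $\tau(1)<\dots<\tau(p)$ did for $A$, so the same two steps apply verbatim. There is no real obstacle here, as the statement is a definitional verification rather than a genuine theorem; if anything, the only thing to watch is to avoid circularity by keeping the forward monotonicity step (which uses only the chain of inequalities) logically prior to the reverse step (which reuses it together with injectivity).
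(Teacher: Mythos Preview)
Your proof is correct and is the natural argument: the lemma is an immediate consequence of Definition~\ref{def:riffleshuffle}, and you have spelled out both directions cleanly. The paper itself does not supply a proof for this lemma, treating it as self-evident from the definition of $\Omega_{p,q}$; your write-up simply makes explicit what the paper leaves implicit.
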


\begin{example}
\newcommand{\colformat}[6]{
({\color{black}{\bf #1}},{\color{black}{\bf #2}},{\color{black}{\bf #3}},{\color{black}{\bf #4}},{\color{black}{\bf #5}},{\color{black}{\bf #6}})}
In our vegetable/fruits example,
In our vegetable/fruits example,
we have $n=6$, $p=2$ (two vegetables, four fruits). 
The set of $(2,4)$-interleavings is:\vspace{-3mm}

{\tiny
\[
\Omega_{2,4} = \left\{
\begin{array}{ccccc}
\colformat{1}{2}{3}{4}{5}{6}, &
\colformat{1}{3}{2}{4}{5}{6}, &
\colformat{1}{4}{2}{3}{5}{6}, &
\colformat{1}{5}{2}{3}{4}{6}, &
\colformat{1}{6}{2}{3}{4}{5}, \\

\colformat{2}{3}{1}{4}{5}{6}, &
\colformat{2}{4}{1}{3}{5}{6}, &
\colformat{2}{5}{1}{3}{4}{6}, &
\colformat{2}{6}{1}{3}{4}{5}, &
\colformat{3}{4}{1}{2}{5}{6}, \\

\colformat{3}{5}{1}{2}{4}{5}, &
\colformat{3}{6}{1}{2}{4}{5}, &
\colformat{4}{5}{1}{2}{3}{6}, &
\colformat{4}{6}{1}{2}{3}{5}, &
\colformat{5}{6}{1}{2}{3}{4} \\
\end{array}
\right\},
\]\vspace{-3mm}
}

{\noindent or written in ordering notation,}\vspace{-3mm}
\newcommand{\cV}{{\color{black}{ {\bf V}} }}
\newcommand{\cF}{{\color{black}{ {\bf F}}} }

{\scriptsize
\[
\Omega_{2,4} =
\left\{
\begin{array}{ccccc}
\llbracket \cV\cV\cF\cF\cF\cF \rrbracket, &
\llbracket \cV\cF\cV\cF\cF\cF \rrbracket, &
\llbracket \cV\cF\cF\cV\cF\cF \rrbracket, &
\llbracket \cV\cF\cF\cF\cV\cF \rrbracket, &
\llbracket \cV\cF\cF\cF\cF\cV \rrbracket, \\

\llbracket \cF\cV\cV\cF\cF\cF \rrbracket, &
\llbracket \cF\cV\cF\cV\cF\cF \rrbracket, &
\llbracket \cF\cV\cF\cF\cV\cF \rrbracket, &
\llbracket \cF\cV\cF\cF\cF\cV \rrbracket, &
\llbracket \cF\cF\cV\cV\cF\cF \rrbracket, \\

\llbracket \cF\cF\cV\cF\cV\cF \rrbracket, &
\llbracket \cF\cF\cV\cF\cF\cV \rrbracket, &
\llbracket \cF\cF\cF\cV\cV\cF \rrbracket, &
\llbracket \cF\cF\cF\cV\cF\cV \rrbracket, &
\llbracket \cF\cF\cF\cF\cV\cV \rrbracket \\
\end{array}
\right\}.
\]\vspace{-3mm}
}

Note that the number of possible interleavings is 
$|\Omega_{p,q}|={n\choose p}={n\choose q}=6!/(2!4!)=15$.
One possible riffle shuffling distribution on $S_6$ might, 
for example, assign uniform probability ($m_{2,4}^{unif}(\sigma)=1/15$) 
to each permutation in $\Omega_{2,4}$ and zero probability to 
everything else, reflecting indifference between vegetables and fruits.
Figure~\ref{fig:interleavings} is a graphical example of
a $(2,4)$-interleaving distribution. 
\end{example}

We now formally define our generalization of independence
where a distribution which fully factors independently
%(as in Equation~\ref{eqn:fullyfactorized})
is allowed to undergo a single riffle shuffle.
\begin{definition}[Riffled independence]\label{def:riffledindep}
\looseness -1 The subsets $A=\{1,\dots,p\}$
and $B=\{p+1,\dots,n\}$ are said to be
\emph{riffle independent}
if $h = m_{p,q}*(f_A(\sigma(A))\cdot g_B(\sigma(B)))$,
with respect to
some interleaving distribution $m_{p,q}$
and distributions $f_A, g_B$, respectively.
%We denote riffled independence by: $h=f\perp_{m_{p,q}} g$,
We will notate the riffled independence relation as $A\perp_m B$,
and refer to $f_A, g_B$ as \emph{relative ranking factors}.
\end{definition}
Notice that without the additional convolution, the definition
of riffled independence reduces to the fully independent
case given by Equation~\ref{eqn:fullyfactorized}.
\begin{example}
Consider drawing a ranking from a riffle independent model.
One starts with two piles of cards, $A$ and $B$, stacked together in a deck.
In our fruits/vegetables setting, if we always prefer
vegetables to fruits, then the vegetables occupy 
positions $\{1,2\}$ and the fruits occupy positions $\{3,4,5,6\}$.
In the first step, rankings of each pile are drawn independent.
For example, we might have the rankings:
$\sigma(\mbox{Veg}) = (2,1)$ and $\sigma(\mbox{Fruit})= (4,6,5,3)$,
constituting a draw from the fully independent model described in 
Section~\ref{sec:fullyindep}.
In the second stage, the deck of cards is cut and interleaved
by an independently selected element $\tau\in\Omega_{2,4}$.
For example, if: \vspace{-3mm}

{\footnotesize
\[
\tau = (2,3,1,4,5,6)=
\llbracket Fruit,Veg,Veg,Fruit,Fruit,Fruit\rrbracket,
\]\vspace{-4mm}
}

\noindent then the joint ranking is:\vspace{-3mm}

{\footnotesize
\begin{align*}
\tau(\sigma(Veg),\sigma(Fruit)) &= (2,3,1,4,5,6)(2,1,4,6,5,3) = (3,2,4,6,5,1), \\
	&= 
	\llbracket Grapes,Peas,Corn,Lemon,Fig,Orange\rrbracket.
\end{align*}
}
\end{example}

\subsection{Alternative definition of riffled independence}
It is possible to
rewrite the definition of riffled independence
so that it does not involve a convolution.
We first define functions which map a given full ranking
to relative rankings and interleavings for $A$ and $B$.
\begin{definition}
\mbox{}
\begin{itemize}\denselist
\item (\emph{Absolute ranks}):
Given a ranking $\sigma\in S_n$, and a subset
$A\subset\{1,\dots,n\}$, $\sigma(A)$ denotes
the \emph{absolute ranks} of items in $A$.
\item (\emph{Relative ranking map}):
Let $\phi_A(\sigma)$
denote the ranks of items in $A$ \emph{relative}
to the set $A$.  For example, in the
ranking $\sigma=\llbracket\aP,\aL,\aF,\aG,\aC,\aO\rrbracket$,
the relative ranks of the vegetables is $\phi_A(\sigma)=\llbracket \aP,\aC\rrbracket=\llbracket Peas,Corn\rrbracket$.  Thus, while corn is ranked fifth
in $\sigma$, it is ranked second in $\phi_A(\sigma)$.
Similarly, the relative ranks of the fruits is
$\phi_B(\sigma)=\llbracket\aL,\aF,\aG,\aO\rrbracket = \llbracket Lemons,Figs,Grapes,Oranges\rrbracket$.
\item (\emph{Interleaving map}):
Likewise, let $\tau_{A,B}(\sigma)$ denote the way in which
the sets $A$ and $B$ are interleaved by $\sigma$.  For example, using
the same $\sigma$ as above, the interleaving of vegetables
and fruits is $\tau_{A,B}(\sigma)=
\llbracket Veg,Fruit,Fruit,Fruit,Veg,Fruit\rrbracket$.
In ranking notation (as opposed to ordering notation), 
$\tau_{A,B}$ can be written as 
$(\mbox{sort}(\sigma(A)),\mbox{sort}(\sigma(B)))$.  
Note that for every possible interleaving, $\tau\in \Omega_{p,q}$ there are 
exactly $p!\times q!$ distinct permutations which are associated to $\tau$
by the interleaving map.
\end{itemize}
\end{definition}
Using the above maps, 
the following lemma provides an algebraic expression for how 
any permutation $\sigma$ can be uniquely decomposed into 
an interleaving composed with relative rankings of $A$ and $B$,
which have been ``stacked'' into one deck.
%relative rankings of $A$ and $B$, as well as their interleaving.
\begin{lemma}\label{lem:tau}
Let $A=\{1,\dots,p\}$, and $B=\{p+1,\dots,n\}$.
Any ranking $\sigma\in S_n$ can be decomposed \emph{uniquely} as
an interleaving $\tau\in \Omega_{p,q}$ composed with 
a ranking of the form $(\pi_p,\pi_q+p)$, where $\pi_p\in S_p$, $\pi_q\in S_q$,
and $\pi_q+p$ means that the number $p$ is added to every rank
in $\pi_q$.
Specifically, $\sigma=\tau(\pi_p,\pi_q+p)$
with $\tau=\tau_{A,B}(\sigma)$, $\pi_p=\phi_A(\sigma)$, 
and $\pi_q=\phi_B(\sigma)$ (Proof in Appendix).
\end{lemma}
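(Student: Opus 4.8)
The plan is to treat the permutations of the form $(\pi_p,\pi_q+p)$ as exactly the subgroup $S_p\times S_q\le S_n$ of permutations that fix each block $A=\{1,\dots,p\}$ and $B=\{p+1,\dots,n\}$ setwise, and to show that $\Omega_{p,q}$ is a transversal for its left cosets, with the distinguished representative of the coset containing $\sigma$ given explicitly by $\tau_{A,B}(\sigma)$. I would prove existence by directly verifying that the claimed triple $(\tau,\pi_p,\pi_q)=(\tau_{A,B}(\sigma),\phi_A(\sigma),\phi_B(\sigma))$ reconstructs $\sigma$, and then prove uniqueness by a forcing argument. As a sanity check, $|\Omega_{p,q}|=\binom{n}{p}=n!/(p!\,q!)=|S_n|/|S_p\times S_q|$, consistent with the earlier remark that exactly $p!\,q!$ permutations share each interleaving.

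For existence, the key observation is a sorting identity. By definition $\tau=\tau_{A,B}(\sigma)=(\mbox{sort}(\sigma(A)),\mbox{sort}(\sigma(B)))$ is strictly increasing on $\{1,\dots,p\}$ and on $\{p+1,\dots,n\}$, so $\tau\in\Omega_{p,q}$ is immediate, and $\tau$ restricted to $\{1,\dots,p\}$ is precisely the increasing enumeration of the rank-set $\sigma(A)$, while $\tau$ restricted to $\{p+1,\dots,n\}$ enumerates $\sigma(B)$ increasingly (these images are disjoint and cover $\{1,\dots,n\}$, so $\tau$ is a genuine permutation). Now fix $j\in A$. The relative rank $\pi_p(j)=\phi_A(\sigma)(j)$ is by definition the position of $\sigma(j)$ in the sorted list of $\sigma(A)$, so $\tau(\pi_p(j))$ is the $\pi_p(j)$-th smallest element of $\sigma(A)$, which is $\sigma(j)$ itself. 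Since $(\pi_p,\pi_q+p)(j)=\pi_p(j)$ for $j\le p$, this gives $\sigma(j)=\tau\big((\pi_p,\pi_q+p)(j)\big)$. The identical argument on $B$, using $(\pi_p,\pi_q+p)(p+k)=\pi_q(k)+p$ and that $\tau(p+s)$ is the $s$-th smallest element of $\sigma(B)$, yields $\sigma(p+k)=\tau\big((\pi_p,\pi_q+p)(p+k)\big)$. Combining over all coordinates gives $\sigma=\tau\,(\pi_p,\pi_q+p)$.

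For uniqueness, suppose $\sigma=\tau'(\pi_p',\pi_q'+p)$ is any decomposition with $\tau'\in\Omega_{p,q}$ and $(\pi_p',\pi_q'+p)\in S_p\times S_q$. Because $(\pi_p',\pi_q'+p)$ permutes $\{1,\dots,p\}$ among itself, we have $\tau'(\{1,\dots,p\})=\sigma(\{1,\dots,p\})=\sigma(A)$, and similarly $\tau'(\{p+1,\dots,n\})=\sigma(B)$. But an element of $\Omega_{p,q}$ is strictly increasing on each block, hence it is the \emph{unique} order-preserving bijection from $\{1,\dots,p\}$ onto the now-fixed image $\sigma(A)$ and from $\{p+1,\dots,n\}$ onto $\sigma(B)$; therefore $\tau'=\tau$. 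Then $(\pi_p',\pi_q'+p)=(\tau')^{-1}\sigma=\tau^{-1}\sigma=(\pi_p,\pi_q+p)$, and reading off the two blocks forces $\pi_p'=\pi_p$ and $\pi_q'=\pi_q$.

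I expect the substance to lie entirely in the bookkeeping of the existence step; the only genuine subtlety is keeping straight that left-composition by $\tau$ acts on \emph{ranks} --- monotonically relabeling the placeholder blocks $\{1,\dots,p\}$ and $\{p+1,\dots,n\}$ to the actually occupied rank-sets $\sigma(A)$ and $\sigma(B)$ (a property one can cross-check against Lemma~\ref{lem:relativerankpreservation}) --- while the block factor $(\pi_p,\pi_q+p)$ records the within-set orderings through the relative-rank maps. Once the sorting identity $\tau(\text{relative rank of }\sigma(j))=\sigma(j)$ is in hand, both existence and uniqueness fall out immediately, so I anticipate no real obstacle beyond setting up the notation carefully.
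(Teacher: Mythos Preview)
Your proposal is correct and takes essentially the same approach as the paper: both verify existence via the sorting identity that $\tau_{A,B}(\sigma)(\phi_A(\sigma)(j))$ returns the $\phi_A(\sigma)(j)$-th smallest element of $\sigma(A)$, which is $\sigma(j)$. Your treatment is in fact more complete, since the paper's appendix proof establishes only the existence half and leaves uniqueness implicit, whereas you give the clean forcing argument that the monotonicity constraint on $\Omega_{p,q}$ pins down $\tau'$ uniquely from the image sets $\sigma(A)$ and $\sigma(B)$.
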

Lemma~\ref{lem:tau} shows that one can think of a triplet
$(\tau\in\Omega_{p,q},\sigma_p\in S_p,\sigma_q\in S_q)$ as being 
coordinates which uniquely specify any ranking of items in $A\cup B$.
Using the decomposition, we can now state a second, perhaps more 
intuitive, definition of riffled independence in terms of the relative 
ranking and interleaving maps.  
\begin{definition}\label{def:riffledindep2}
Sets $A$ and $B$ are said to be \emph{riffle independent} if and only if, for every
$\sigma\in S_n$, the joint distribution $h$ factors as:
\begin{equation}\label{eqn:factorization}
h(\sigma) = m(\tau_{A,B}(\sigma))\cdot f_A(\phi_A(\sigma))\cdot g_B(\phi_B(\sigma)).
\end{equation}
\end{definition}
\begin{proposition}
Definitions~\ref{def:riffledindep} and~\ref{def:riffledindep2}
are equivalent.
\end{proposition}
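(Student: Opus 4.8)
The plan is to show both implications, using Lemma~\ref{lem:tau} as the central tool. The key observation is that Lemma~\ref{lem:tau} gives a bijection between rankings $\sigma \in S_n$ and triples $(\tau, \pi_p, \pi_q)$ where $\tau = \tau_{A,B}(\sigma) \in \Omega_{p,q}$, $\pi_p = \phi_A(\sigma) \in S_p$, and $\pi_q = \phi_B(\sigma) \in S_q$. Under this bijection, the factorization in Definition~\ref{def:riffledindep2} reads $h(\sigma) = m(\tau)\,f_A(\pi_p)\,g_B(\pi_q)$, so the real work is to verify that the convolution formula $h = m_{p,q} * (f_A \cdot g_B)$ from Definition~\ref{def:riffledindep} unpacks into exactly this same product. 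I would therefore compute the convolution explicitly and match terms.

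First I would start from the convolution definition, writing $[m_{p,q} * (f_A \cdot g_B)](\sigma) = \sum_{\pi \in S_n} m_{p,q}(\pi)\, (f_A \cdot g_B)(\pi^{-1}\sigma)$. Since $m_{p,q}$ is supported on $\Omega_{p,q}$, the sum collapses to $\pi \in \Omega_{p,q}$. Here $(f_A \cdot g_B)(\rho)$ is shorthand for the fully factored function $f_A(\rho(A)) g_B(\rho(B))$, which is nonzero only when $\rho$ maps $A$ into the absolute ranks $\{1,\dots,p\}$ and $B$ into $\{p+1,\dots,n\}$ --- that is, only when $\rho$ has the ``stacked'' form $(\pi_p, \pi_q + p)$. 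The crux is to argue that for a given $\sigma$, there is exactly one $\pi \in \Omega_{p,q}$ for which $\pi^{-1}\sigma$ has this stacked form, and that this $\pi$ is precisely $\tau_{A,B}(\sigma)$. This is exactly the uniqueness content of Lemma~\ref{lem:tau}: writing $\sigma = \tau(\pi_p, \pi_q + p)$ uniquely, we get $\pi^{-1}\sigma = (\pi_p, \pi_q + p)$ stacked iff $\pi = \tau = \tau_{A,B}(\sigma)$.

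Granting that, the single surviving term is $m_{p,q}(\tau_{A,B}(\sigma))\, f_A((\pi_p, \pi_q+p)(A))\, g_B((\pi_p,\pi_q+p)(B))$, and I would then identify $(\pi_p, \pi_q+p)(A) = \pi_p = \phi_A(\sigma)$ and $(\pi_p,\pi_q+p)(B) = \pi_q = \phi_B(\sigma)$ (the offset by $p$ on the $B$-block being exactly what makes $g_B$ see the relative ranking rather than the absolute one). This yields $h(\sigma) = m(\tau_{A,B}(\sigma))\, f_A(\phi_A(\sigma))\, g_B(\phi_B(\sigma))$, matching Definition~\ref{def:riffledindep2}. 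Running the computation in reverse gives the converse: if $h$ satisfies \eqref{eqn:factorization}, then defining the fully factored function by $f_A \cdot g_B$ on stacked rankings and convolving with $m$ reproduces $h$, since the bijection guarantees no collisions among the interleavings.

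The main obstacle I anticipate is purely bookkeeping rather than conceptual: carefully tracking the index shift ``$+p$'' on the $B$-block and confirming that the support condition on $f_A \cdot g_B$ isolates a single interleaving $\pi$ in the convolution sum. Once the reduction to stacked rankings is established via Lemma~\ref{lem:tau}, both directions follow by reading the same product equation in two ways, so I would emphasize the uniqueness of the decomposition and keep the rest as a direct, routine verification.
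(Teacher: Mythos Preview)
Your proposal is correct and follows essentially the same argument as the paper: both start from the convolution, restrict the sum to $\Omega_{p,q}$ using the support of $m_{p,q}$, then use the support of $f_A\cdot g_B$ on stacked rankings (equivalently, on the subgroup $S_p\times S_q$) together with the uniqueness in Lemma~\ref{lem:tau} to isolate the single term $\tau=\tau_{A,B}(\sigma)$, and finally identify the factors with $\phi_A(\sigma)$ and $\phi_B(\sigma)$. The paper likewise only details one direction and declares the converse similar, just as you do.
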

\begin{proof}
Assume that $A=\{1,\dots,p\}$ and $B=\{p+1,\dots,n\}$ 
are riffle independent with respect to 
Definition~\ref{def:riffledindep}.  We will show
that Definition~\ref{def:riffledindep2} is also satisfied (the
opposite direction will be similar).  Therefore, we assume that
$h = m_{p,q}*(f(\sigma_A)\cdot g(\sigma_B))$.
Note that $f(\sigma_A)\cdot g(\sigma_B)$ is supported on the subgroup
$S_p\times S_q\equiv\{\sigma\in S_n\,:\,1\leq \sigma(i) \leq p,\,\mbox{whenever}\,
1\leq i\leq p \}$.

Let $\sigma=(\sigma_A,\sigma_B)$ be any ranking.
We will need to use a simple claim:
consider the ranking $\tau^{-1}\sigma$
(where $\tau\in \Omega_{p,q}$).
Then $\tau^{-1}\sigma$ is an element of the subgroup $S_p\times S_q$
if and only if $\tau=\tau_{A,B}(\sigma)$.
\vspace{-3mm}

{\footnotesize
\begin{align*}
[m_{p,q}*(f\cdot g)](\sigma) &= \sum_{\sigma'\in S_n} 
	m_{p,q}(\sigma')\cdot [f\cdot g] (\sigma'^{-1}\sigma), \\
	&= \sum_{\tau\in \Omega_{p,q}} m_{p,q}(\tau)
	\cdot[f\cdot g](\tau^{-1}\sigma), \qquad(\mbox{\em since $m_{p,q}$ is supported 		on $\Omega_{p,q}$}) \\
	&= m_{p,q}(\tau_{A,B}(\sigma))\cdot [f\cdot g]
		((\tau_{A,B}^{-1}(\sigma))\sigma), \\
	&\qquad\qquad(\mbox{\em by the claim above and since $f\cdot g$
		is supported on $S_p\times S_q$}) \\
	&= m_{p,q}(\tau_{A,B}(\sigma))\cdot [f\cdot g](\phi_A(\sigma),\phi_B(\sigma)), \qquad(\mbox{\em by Lemma~\ref{lem:tau}}) \\
	&= m_{p,q}(\tau_{A,B}(\sigma))\cdot f(\phi_A(\sigma))\cdot g(\phi_B(\sigma)). \qquad(\mbox{\em by independence of $f\cdot g$})
\end{align*}\vspace{-3mm}
}

Thus, we have shown that Definition~\ref{def:riffledindep2} has been
satisfied as well.
\end{proof}
\paragraph{Discussion}
We have presented two ways of thinking about riffled independence.
Our first formulation, in terms of convolution, is motivated by
the connections between riffled independence and card shuffling theory.
As we show in Section~\ref{sec:algorithms}, the convolution based
view is also crucial for working with Fourier coefficients of 
riffle independent distributions and analyzing the theoretical properties
of riffled independence.
Our second formulation on the other hand, shows the concept
of riffled independence to be remarkably simple --- that the
probability of a single ranking can be computed without summing
over all rankings (required in convolution) ---
a fact which may not have been obvious from Definition~\ref{def:riffledindep}.

%we also mention some algebraic interpretations.
%interpretations in this remark.
Finally, for interested readers, 
the concept of riffled independence also has a simple and 
natural group theoretic description. 
By a fully factorized distribution, we refer to a distribution 
supported on the subgroup $S_p\times S_q$, 
which factors along the $S_p$ and $S_q$ ``dimensions''.
As we have discussed, such sparse distributions are not appropriate
for ranking applications, and one would like to work with distributions
capable of placing nonzero probability mass on all rankings.
In the case of the symmetric group, however, there is a third ``missing
dimension'' --- the coset space, $S_n/(S_p\times S_q)$.
Thus, the natural extension of full independence is to randomize
over a set of coset representatives of $S_p\times S_q$,
what we have referred to in the above discussion 
as \emph{interleavings}.
The draws from each set, $S_p$, $S_q$, and $S_n/(S_p\times S_q)$
are then independent in the ordinary sense, and we say
that the item sets $A$ and $B$ are riffle independent.

\paragraph{Special cases}
There are a number of special case distributions captured
by the riffled independence model that are useful for honing intuition.
We discuss these extreme cases in the following list.
\begin{itemize}\denselist
\item (\emph{Uniform and delta distributions}):
Setting the interleaving distribution and both relative ranking
factors to be uniform distributions yields
the uniform distribution over all full rankings.
Similarly, setting the same distributions to be delta distributions 
(which assign zero probability to all rankings but one)
always yields a delta distribution.

It is interesting to note that while $A$ and $B$ are always
fully independent
under a delta distribution, they are never independent under a uniform
distribution.  However,
both uniform and delta
distributions factor \emph{riffle independently} with respect
to any partitioning of the item set.  Thus, not only is
$A=\{1,\dots,p\}$ riffle independent $B=\{p+1,\dots,n\}$,
but in fact, any set $A$ is riffle independent of its complement.
\item (\emph{Uniform interleaving distributions}):
Setting the interleaving distribution to be uniform,
as we will discuss more in detail later, reflects complete indifference
between the sets $A$ and $B$, even if $f$ and $g$ encode complex
preferences within each set alone.
\item (\emph{Uniform relative ranking factors}):
Setting the relative ranking factors, $f$ and $g$ to be uniform distributions
means that with respect to the joint distribution $h$, all items in 
$A$ are completely interchangeable amongst each other (as are all items
in $B$). 
\item (\emph{Delta interleaving distributions}):
Setting the interleaving distribution, $m_{p,q}$, 
to be a delta distribution
on \emph{any} of the $(p,q)$-interleavings in $\Omega_{p,q}$
recovers the definition of ordinary
probabilistic independence, and thus riffled independence
is a strict generalization thereof (see Figure~\ref{fig:fullyindependent}).
Just as in the full independence regime,
where the distributions $f$ and $g$ are marginal distributions of
absolute rankings of $A$ and $B$, in the riffled independence regime,
$f$ and $g$ can be thought of as marginal distributions of the \emph{relative
rankings} of item sets $A$ and $B$.
\item (\emph{Delta relative ranking factor}):
On the other hand, if one of the relative ranking factors, say $f$, is a
delta distribution and the other two distributions $m_{p,q}$ and $g$
are uniform, then the resulting riffle independent distribution $h$ 
can be thought of as an indicator function for the set of 
rankings that are consistent with one particular incomplete ranking
(in which only the relative ranking of $A$ has been specified).
Such distributions can be useful in practice when the input data comes
in the form of incomplete rankings rather than full rankings.
\end{itemize}

\begin{figure*}[t!]
\begin{center}
\subfigure[]{
\includegraphics[width=.5\textwidth]{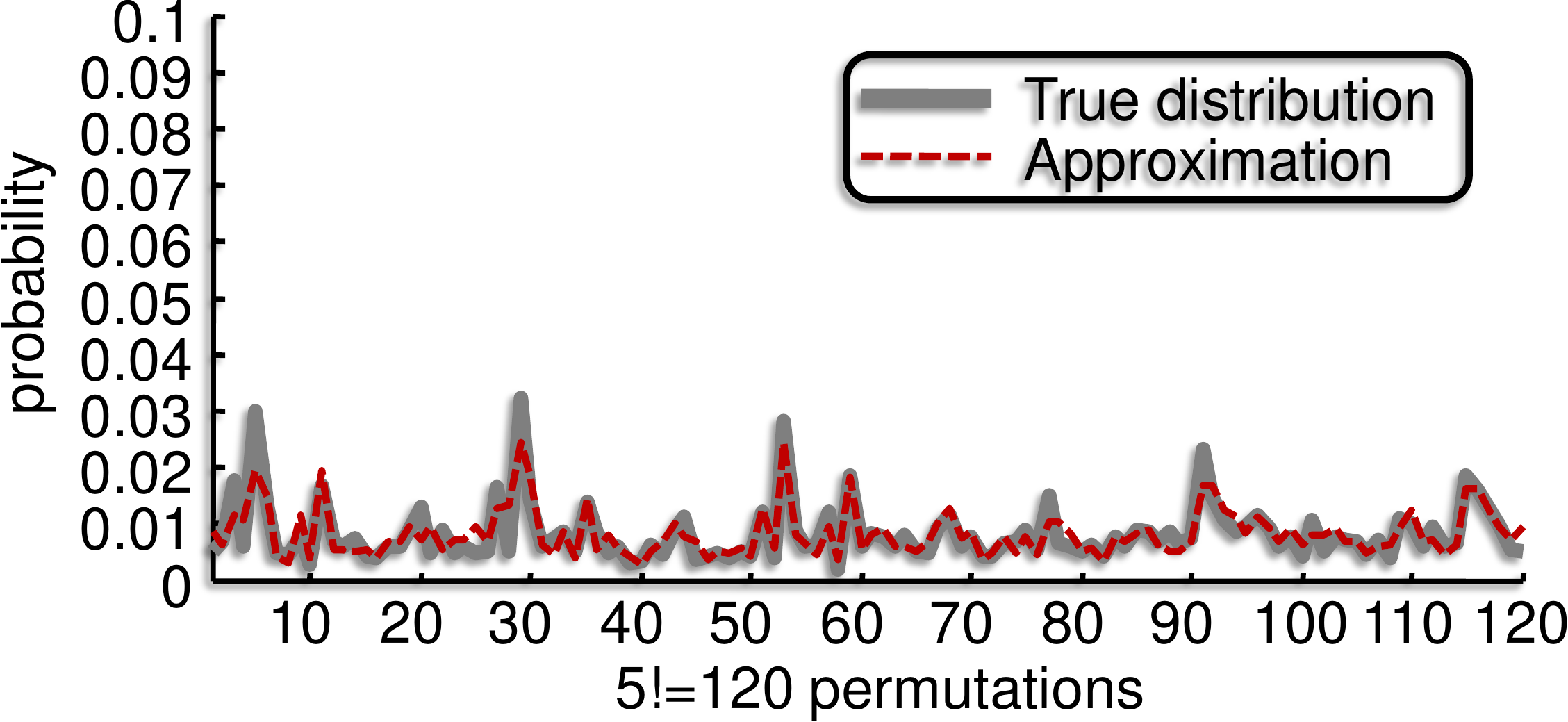}
   \label{fig:removecandidate2}
}\qquad
\subfigure[]{
\raisebox{3pt}{
\includegraphics[width=.3\textwidth]{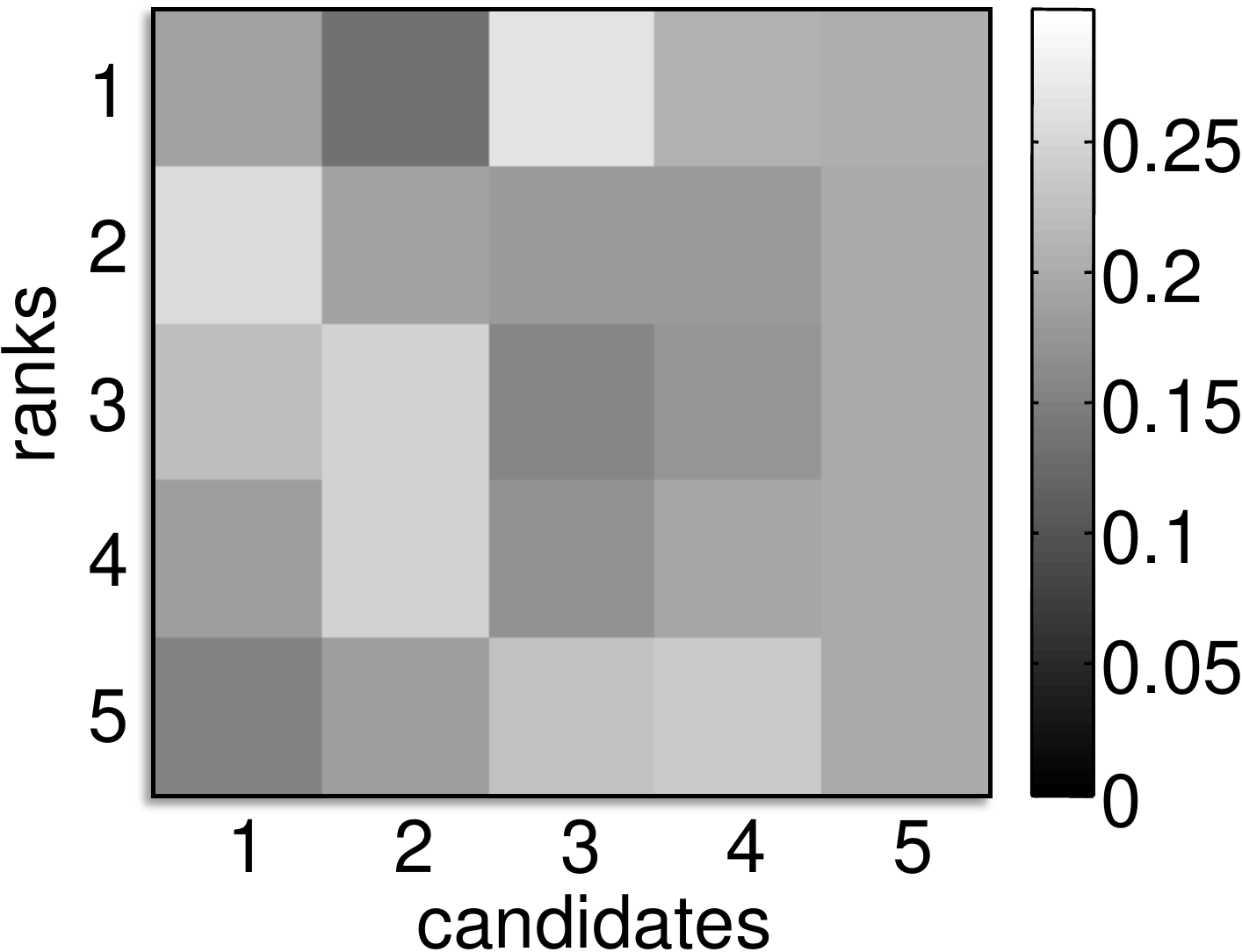}
   \label{fig:apafirstorder_removecandidate2}
}
}
\subfigure[]{
\includegraphics[width=.5\textwidth]{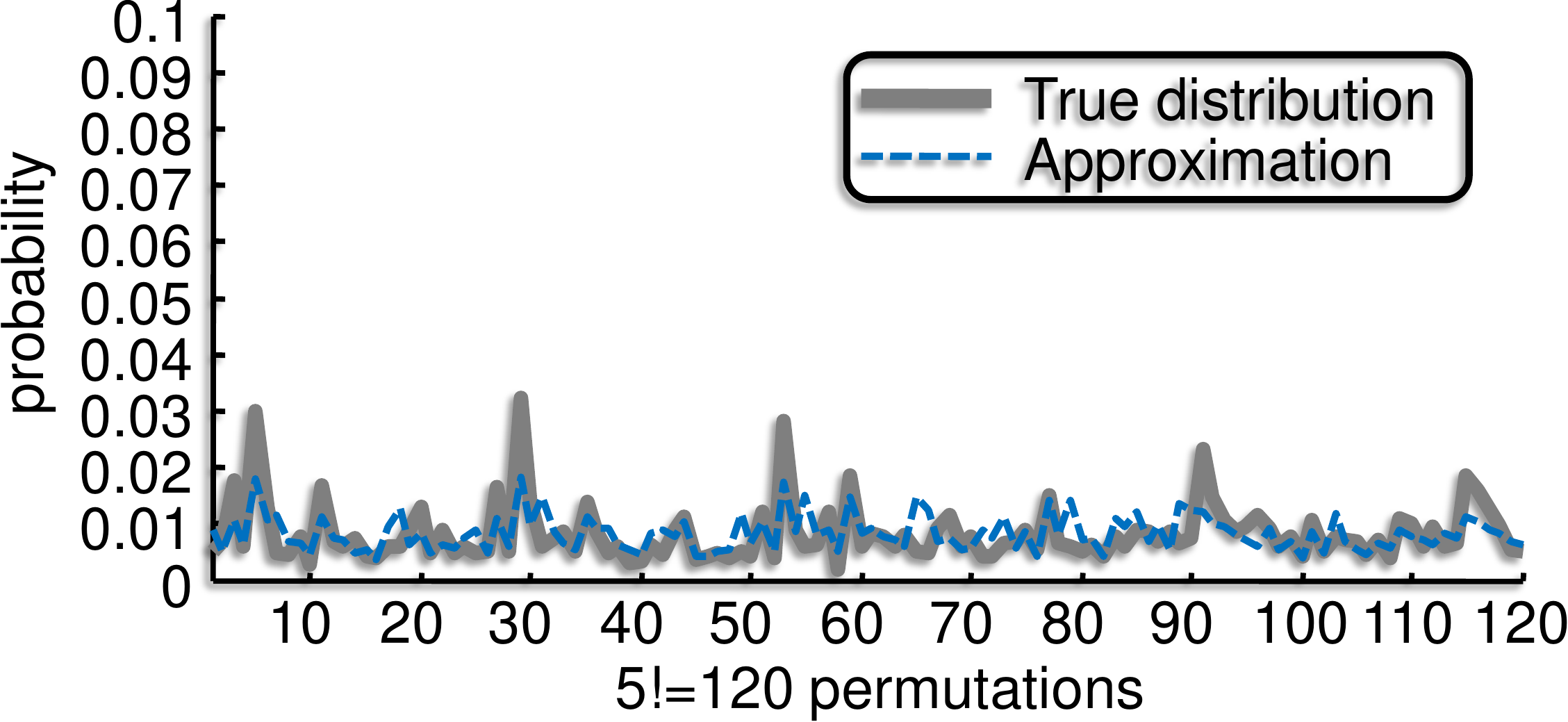}
   \label{fig:removecandidate3}
}\qquad
\subfigure[]{
\raisebox{3pt}{
\includegraphics[width=.3\textwidth]{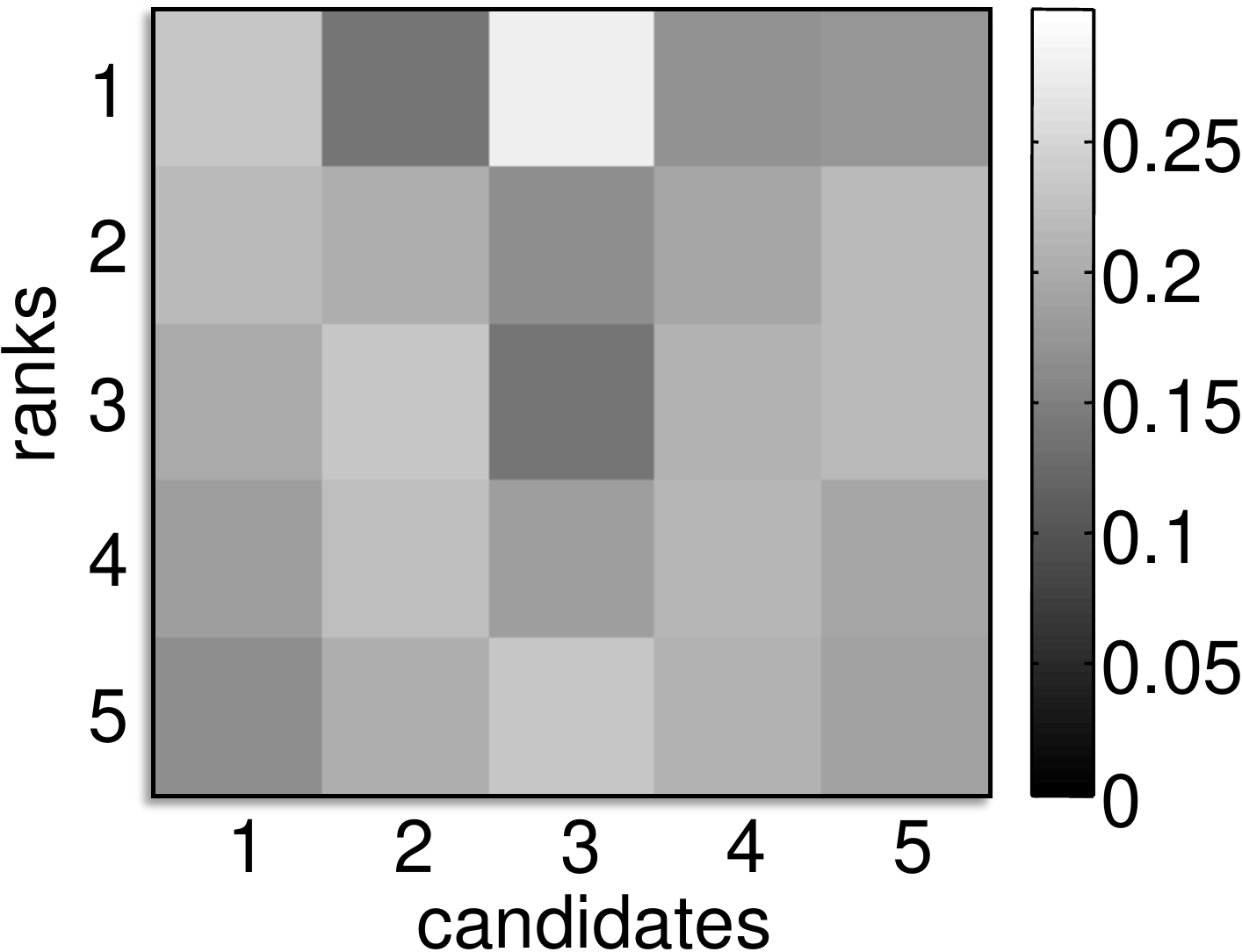}
   \label{fig:apafirstorder_removecandidate3}
}
}
\caption{Approximating the APA vote distribution by riffle independent
distributions.
\subref{fig:removecandidate2} approximate distribution when candidate 2
is riffle independent of remaining candidates;
\subref{fig:removecandidate3} approximate distribution when candidate 3
is riffle independent of remaining candidates;
\subref{fig:apafirstorder_removecandidate2}
and \subref{fig:apafirstorder_removecandidate3} corresponding first order
marginals of each approximate distribution.
}
\label{fig:apa_removesingleton}
\end{center}
\end{figure*}

\begin{example}[APA election data (continued)]\label{ex:removecandidate2}
Like the independence assumptions commonly used in naive Bayes models,
we would rarely expect riffled independence to exactly hold in real data.
Instead, it is more appropriate to view riffled independence assumptions
as a form of model bias that ensures learnability for small
sample sizes, which as we have indicated, is almost always the case
for distributions over rankings.
%\Jon{can we use bootstrap to do a riffle independence test?}

Can we ever expect riffled independence to be manifested in a real dataset?
In Figure~\ref{fig:removecandidate2}, we plot (in dotted red) a riffle 
independent approximation to the true APA vote distribution (in thick gray)
which is optimal with respect to KL-divergence (we will explain how
to obtain the approximation in the remainder of the paper).
The approximation in Figure~\ref{fig:removecandidate2} is obtained
by assuming that the candidate set $\{1,3,4,5\} $ is riffle independent
of $\{2\}$, and as can be seen, is quite accurate compared to the truth
(with the KL-divergence from the true to the factored distribution
being $d_{KL}=.0398$).
Figure~\ref{fig:apafirstorder_removecandidate2} exhibits the first order
marginals of the approximating distribution, which can also visually be seen 
to be a faithful approximation (see Figure~\ref{fig:apafirstorder}).
We will discuss the interpretation of the result further in Section~\ref{sec:hierarchical}.

For comparison, we also display 
(in Figures~\ref{fig:removecandidate3} and \ref{fig:apafirstorder_removecandidate3}) 
the result of approximating the true
distribution by one in which candidate $\{3\}$, the winner, is riffle independent
of the remaining candidate.  
The resulting approximation is inferior, and the lesson
to be learned in the example is that finding the correct/optimal
partitioning of the item set is important in practice.
We remark however, that the approximation obtained by factoring
out candidate 3 is not a terrible approximation (especially on examining first
order marginals), and that both approximations are far more accurate than
the fully independent approximation showed earlier in
Figure~\ref{fig:apafullyindependentexamples}. The KL divergence
from the true distribution to the factored distribution (with candidate 3 riffle
independent of the remaining candidates)
is $d_{KL}=.0841$.
\end{example}

\begin{figure}[t!]
\incmargin{1em}
\begin{algorithm2e}[H]
{\scriptsize
\dontprintsemicolon
{\sc DrawRiffleUnif($p,q,n$)} \xspace \tcp*[f]{\footnotesize ($p+q=n$)} \\
%\ Draw $r\sim \mbox{Unif}(0,1)$; \tcp{$p+q=n$} \
\SetKwIF{If}{ElseIf}{Else}{with prob}{}{otherwise}{otherwise}{endif}
\ \If{$q/n$ \tcp*[f]{\footnotesize drop from right pile}}{
%\      Draw $\sigma^-\sim m_{p,q-1}^{unif}$\; 
\       $\sigma^-\leftarrow \mbox{\sc DrawRiffleUnif($p,q-1,n-1$)}$ \;
\       \lForEach{i}{$\sigma(i)\leftarrow 
                \left\{\begin{array}{cc}
                \sigma^{-}(i) & \mbox{if $i<n$} \\
                n & \mbox{if $i=n$}
                \end{array}\right.$}\;
}\ElseIf{\tcp*[f]{\footnotesize drop from left pile}}{
%\       $\sigma^- \leftarrow \mbox{\sc DrawUniformRiffle}(p-1,q)$ \;
%\      Draw $\sigma^-\sim m_{p-1,q}^{unif}$\;
\       $\sigma^-\leftarrow \mbox{\sc DrawRiffleUnif($p-1,q,n-1$)}$ \;
\       \lForEach{i}{$\sigma(i)\leftarrow 
                \left\{\begin{array}{cc}
                \sigma^{-}(i) & \mbox{if $i<p$} \\
                n & \mbox{if $i=p$} \\
                \sigma^{-}(i-1) & \mbox{if $i>p$}
                \end{array}\right.$}\;
}
\ \Return{$\sigma$} \;
\caption{\scriptsize Recurrence for drawing $\sigma\sim m^{unif}_{p,q}$
(Base case: return $\sigma=[1]$ if $n=1$).}
\label{alg:riffle}
}
\end{algorithm2e}
\decmargin{1em}
\end{figure}

\subsection{Interleaving distributions}\label{sec:interleavings}
There is, in the general case, a significant increase in
storage required for riffled independence over full independence.
In addition to the $O(p!+q!)$ storage
required for distributions $f$ and $g$, we now require $O({n\choose p})$
storage for the nonzero terms of the riffle shuffling distribution $m_{p,q}$.
%Instead of representing all possible riffle shuffling distributions, however,
We now introduce a family of useful riffle shuffling distributions
which can be described using only a handful of parameters.
The simplest riffle shuffling distribution is the
\emph{uniform riffle shuffle}, $m_{p,q}^{unif}$,
which assigns uniform probability to all $(p,q)$-interleavings
and zero probability to all other elements in $S_n$.
Used in the context
of riffled independence, $m_{p,q}^{unif}$ models potentially
complex relations within $A$ and
$B$, but only captures the simplest possible correlations across
subsets.  We might, for example, have complex preference relations amongst
vegetables and amongst fruits, but be completely indifferent with respect
to the subsets, vegetables and fruits, as a whole.

There is a simple recursive method for uniformly drawing
$(p,q)$-interleavings.
Starting with a deck of $n$ cards cut
into a left pile ($\{1,\dots,p\}$)
and a right pile ($\{p+1,\dots,n\}$),  pick one of the piles
with probability proportional to its size ($p/n$ for the left pile, $q/n$
for the right) and drop the bottommost card, thus mapping either
card $p$ or card $n$ to rank $n$.  Then recurse on
the $n-1$ remaining undropped cards, drawing a $(p-1,q)$-interleaving
if the right pile was picked, or a $(p,q-1)$-interleaving
if the left pile was picked. See Algorithm~\ref{alg:riffle}.

\begin{figure*}[t!]
\begin{center}
\subfigure[$\alpha=0$]{
\includegraphics[width=.22\textwidth]{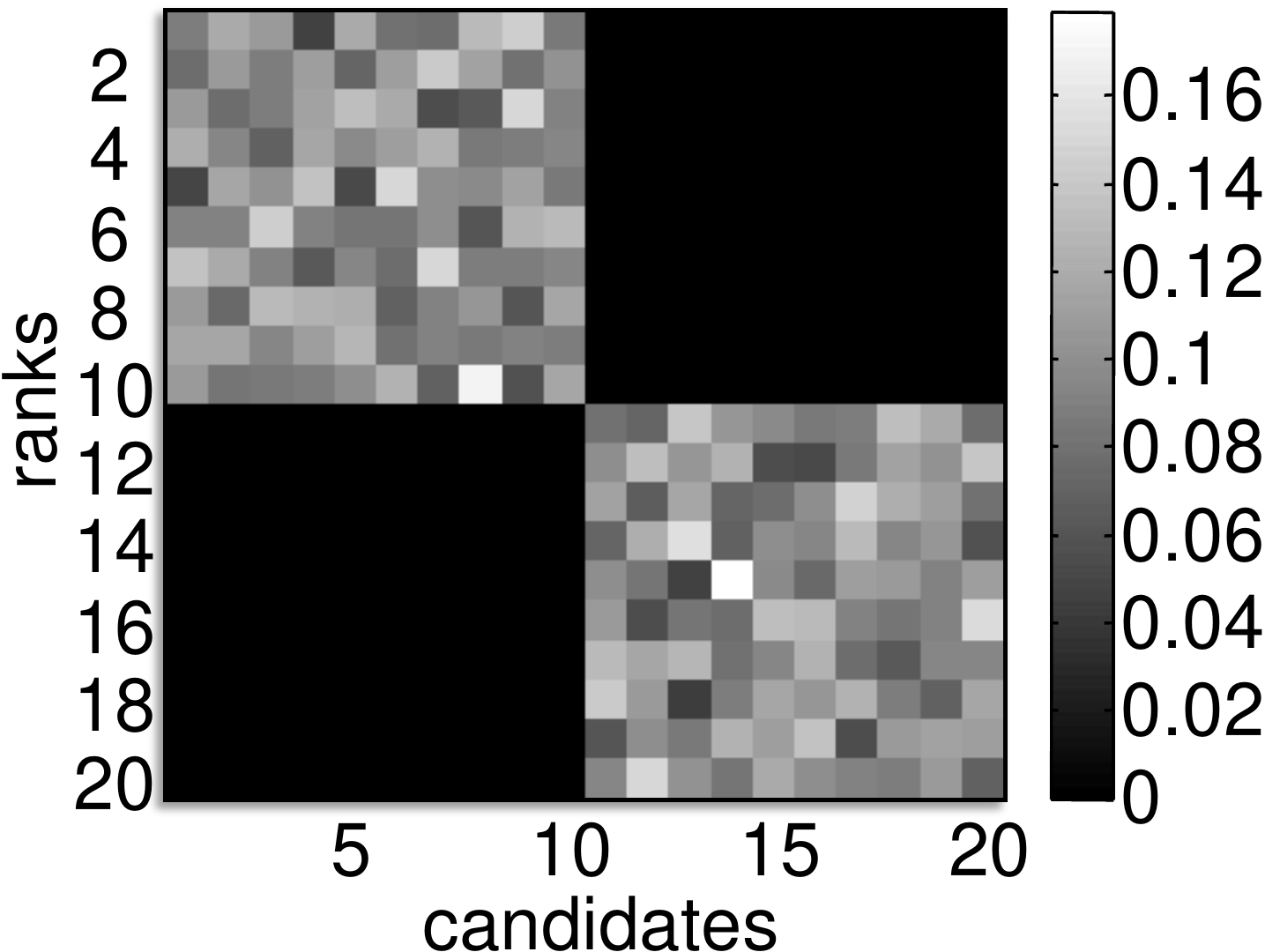}
   \label{fig:riffleindependent1}
}
\subfigure[$\alpha=1/6$]{
\includegraphics[width=.22\textwidth]{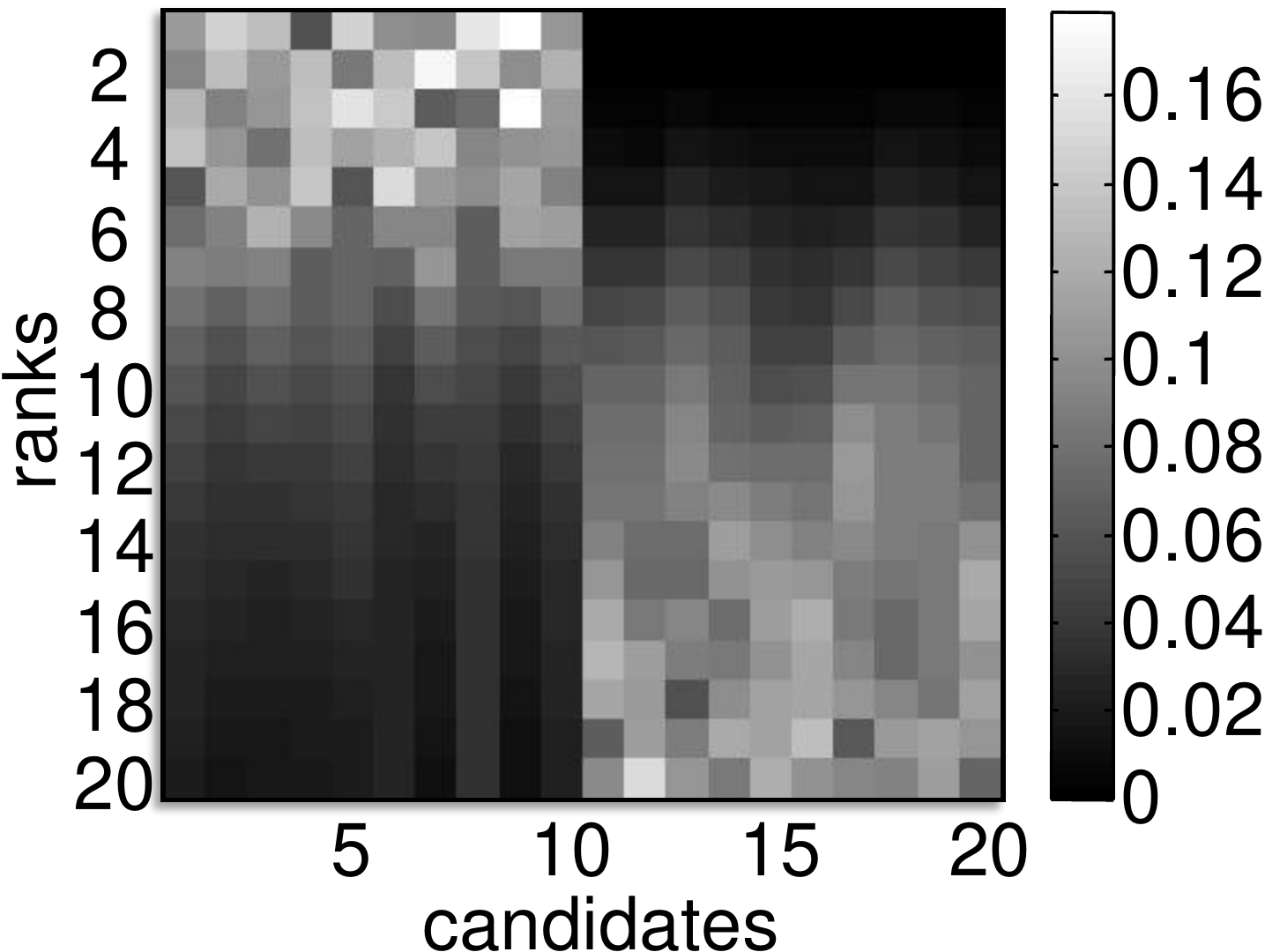}
   \label{fig:riffleindependent2}
}
\subfigure[$\alpha=1/3$]{
\includegraphics[width=.22\textwidth]{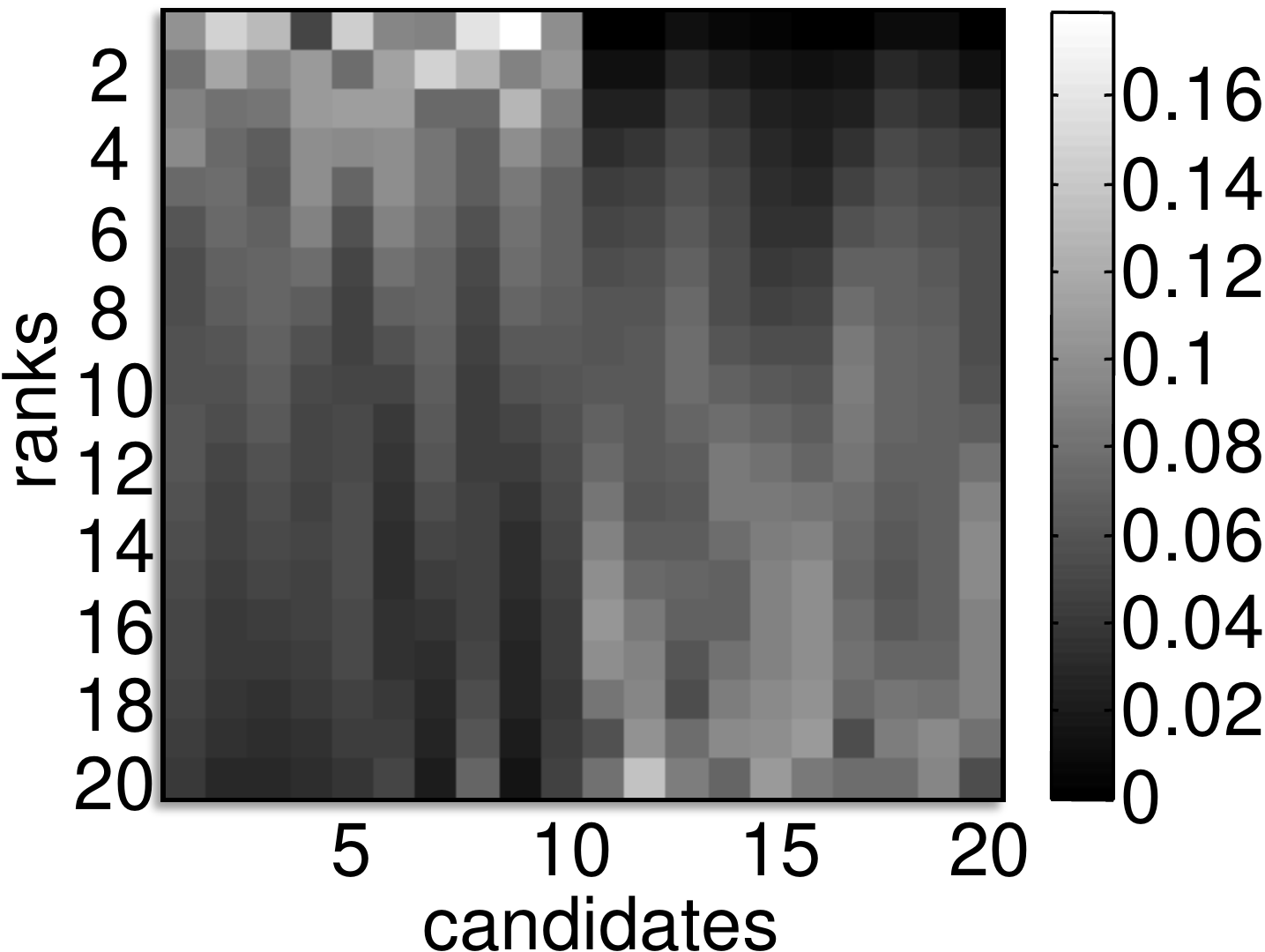}
   \label{fig:riffleindependent3}
}
\subfigure[$\alpha=1/2$]{
\includegraphics[width=.22\textwidth]{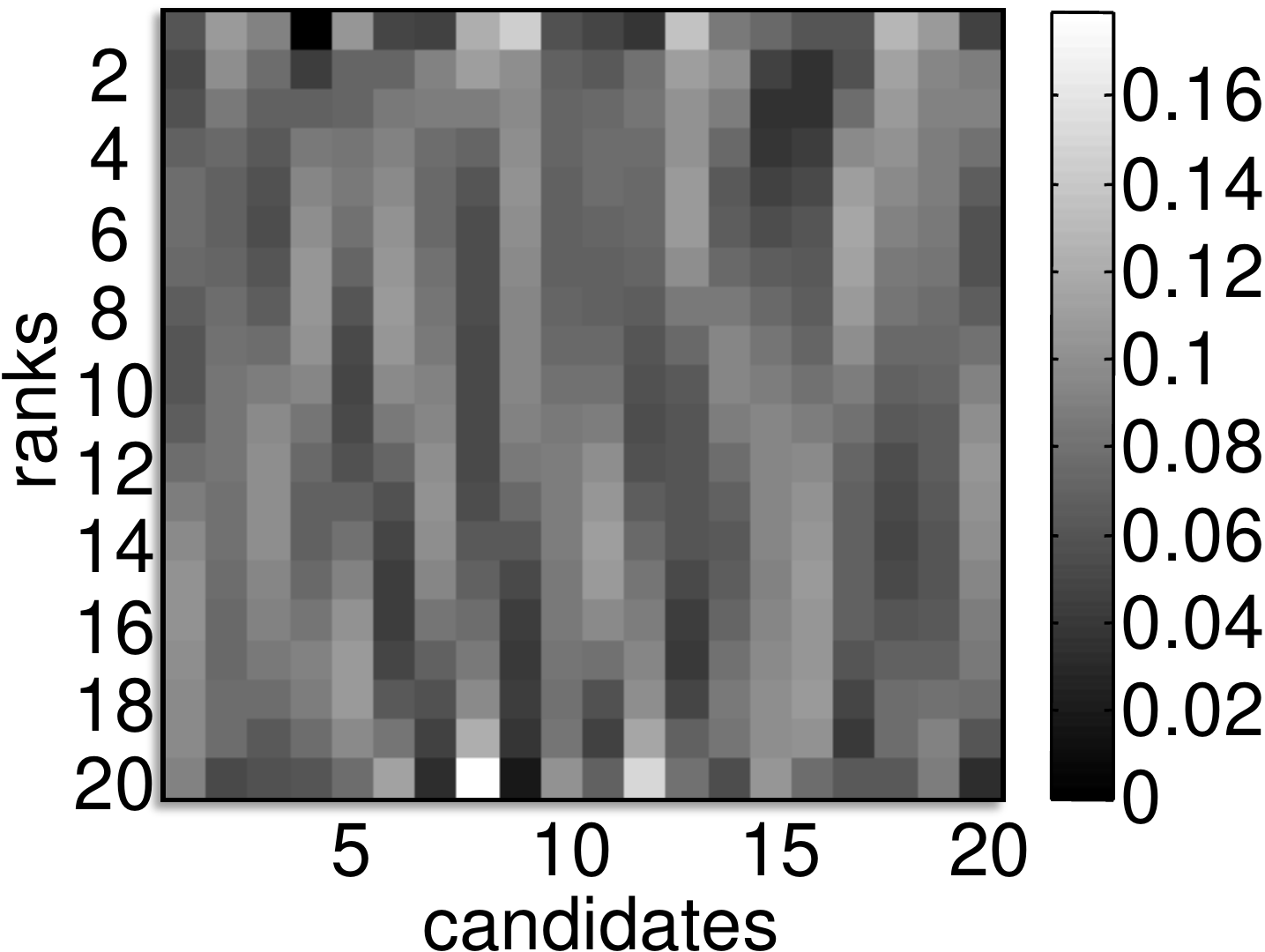}
   \label{fig:riffleindependent4}
}
\subfigure[$\alpha=2/3$]{
\includegraphics[width=.22\textwidth]{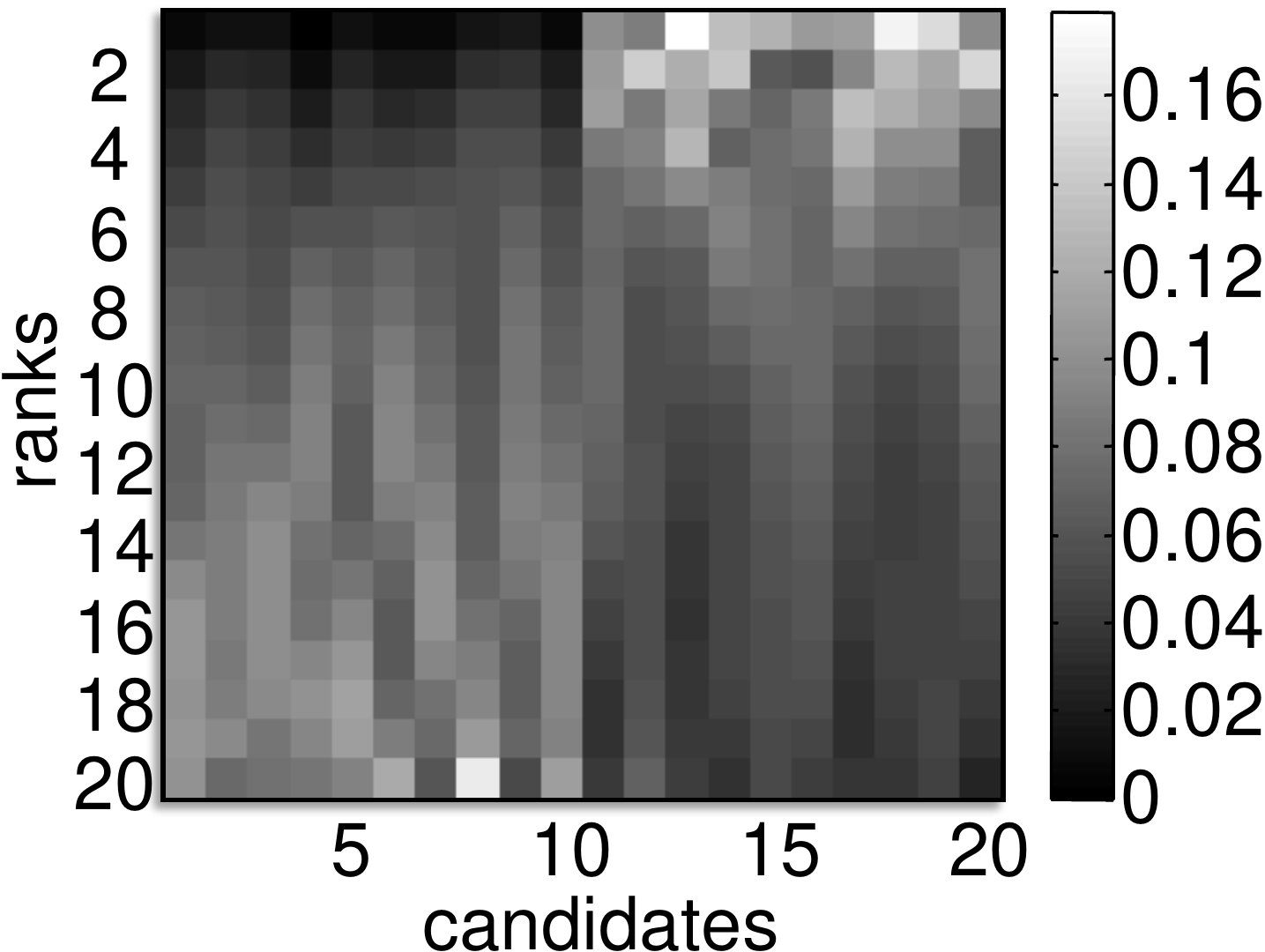}
   \label{fig:riffleindependent5}
}
\subfigure[$\alpha=5/6$]{
\includegraphics[width=.22\textwidth]{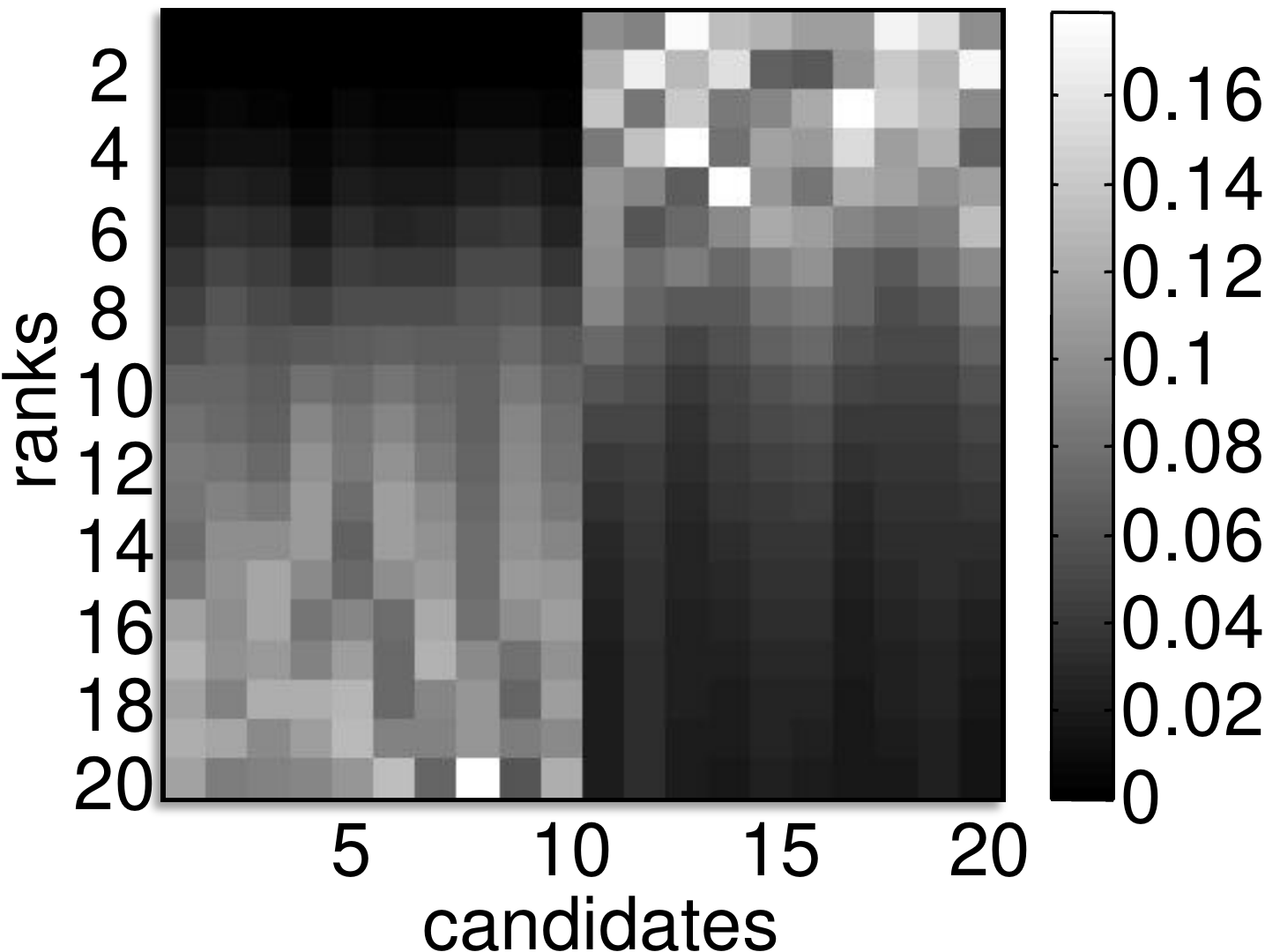}
   \label{fig:riffleindependent6}
}
\subfigure[$\alpha=1$]{
\includegraphics[width=.22\textwidth]{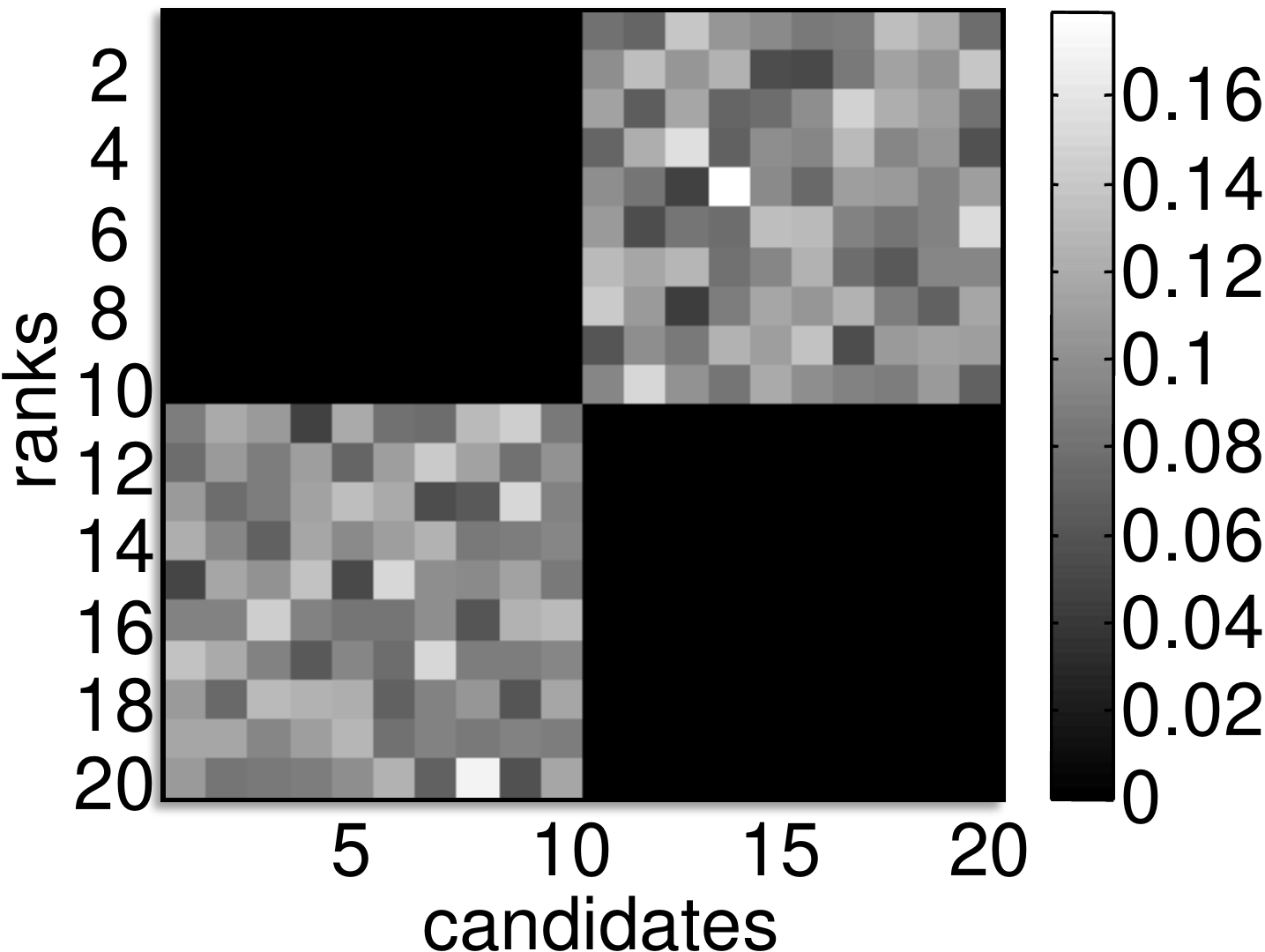}
   \label{fig:riffleindependent7}
}
\caption{First-order matrices
with a deck of 20 cards,
$A=\{1,\dots,10\}$, $B=\{11,\dots,20\}$,
\emph{riffle independent} and various
settings of $\alpha$.
Compare these matrices to the fully independent first order
marginal matrices of Figure~\ref{fig:fullyindependent} and note that here,
the nonzero blocks are allowed to `bleed' into zero regions.
Setting $\alpha=0$ or $1$, however,
recovers the fully independent case, where a subset of objects is preferred over
the other with probability one.
}
\label{fig:biasedriffle}
\end{center}
\end{figure*}

It is natural to consider generalizations
where one is preferentially biased towards dropping cards from the left hand
over the right hand (or vice-versa).
We model this bias using a simple one-parameter family of distributions in
which cards from the left and right piles drop with probability proportional
to $\alpha p$ and $(1-\alpha)q$, respectively, instead of $p$ and $q$.
We will refer to $\alpha$ as the \emph{bias parameter}, and the family
of distributions parameterized by $\alpha$ as the \emph{biased
riffle shuffles}.\footnote{The recurrence in Alg.~\ref{alg:riffle}
has appeared in various forms in literature~\citep{diaconis92}.
We are the first to (1) use the recurrence to Fourier transform
$m_{p,q}$, and to (2) consider biased versions.  The biased riffle shuffles
in~\cite{fulman98} are not similar to our biased riffle shuffles.
}

In the context of rankings, biased riffle shuffles provide a simple model
for expressing groupwise preferences (or indifference)
for an entire subset $A$ over $B$ or vice-versa.
The bias parameter $\alpha$
can be thought of as a knob controlling the preference for one
subset over the other, and might reflect,
for example, a preference for fruits over vegetables, or perhaps
indifference between the two subsets.
Setting $\alpha=0$ or $1$ recovers the full independence assumption, preferring
objects in $A$ (vegetables) over objects in $B$ (fruits)
with probability one (or vice-versa),
and setting $\alpha=.5$, recovers the uniform riffle shuffle
(see Fig.~\ref{fig:biasedriffle}).
Finally, there are a number of straightforward
generalizations of the
biased riffle shuffle that one can use to realize richer
distributions.  For example, $\alpha$ might depend on the number of
cards that have been dropped from each pile (allowing perhaps, for distributions
to prefer crunchy \emph{fruits} over crunchy \emph{vegetables}, but
soft \emph{vegetables} over soft \emph{fruits}).

\section{Exploiting structure for probabilistic inference}
In this section, we discuss a number of basic properties
of riffled independence, which show that certain probabilistic
inference operations can be accomplished by operating on a single factor rather
than the entire joint distribution.

Upon knowing that $A$ is riffle independent of $B$,
an immediate consequence is that we can show,
just as in the full independence regime, that
conditioning operations on certain observations and
MAP (maximum a posteriori) assignment
problems decompose according to riffled independence structure.
All of the following properties are straightforward to derive using the 
factorization in Definition~\ref{def:riffledindep2}.
\begin{proposition}[Probabilistic inference decompositions]\label{prop:inference}
\mbox{}
\begin{itemize}\denselist
\item 
(\emph{Conditioning}):
Consider
prior and likelihood functions, $h_{prior}$ and $h_{like}$,
on $S_n$ in which subsets $A$ and $B$ are riffle independent,
with parameters $(m_{prior},f_{prior},g_{prior})$ and 
$(m_{like},f_{like},g_{like})$, respectively.
Let $\odot$ denote the pointwise product operation between two functions.
Then $A$ and $B$ are also riffle independent with respect to the 
posterior distribution under Bayes rule, which has interleaving distribution
$m_{like}\odot m_{prior}$ with relative ranking factors 
$f_{like}\odot f_{prior}$ and $g_{like}\odot g_{prior}$, for
$A$ and $B$ respectively.
\item (\emph{MAP assignment}):
Let $A$ and $B$ be riffle independent subsets.
Consider the following permutations:
\[
\pi_p^*=\arg\max_\pi f_A(\pi), \;\; \pi_q^*=\arg\max_\pi g_B(\pi),\;\;
\tau^*=\arg\max_\tau m_{p,q}(\tau).
\]
Then the mode of $h$ is $\tau^*$ composed with $\pi^* = (\pi_p^*,\pi_q^*)$ 
(i.e., $\arg\max_\sigma h(\sigma) = \tau^*\pi^*$).
\item (\emph{Entropy}):
Consider riffle independent subsets $A$ and $B$.
The entropy of the joint distribution is given by:
$H[h] = H[m_{p,q}]+H[f_A]+H[g_B]$.
\end{itemize}
\end{proposition}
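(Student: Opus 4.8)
The plan is to reduce all three claims to the non-convolutional factorization of Definition~\ref{def:riffledindep2}, namely $h(\sigma) = m(\tau_{A,B}(\sigma))\cdot f_A(\phi_A(\sigma))\cdot g_B(\phi_B(\sigma))$, together with the bijection supplied by Lemma~\ref{lem:tau}. The single fact doing all the work is that $\sigma \mapsto (\tau_{A,B}(\sigma),\phi_A(\sigma),\phi_B(\sigma))$ is a one-to-one correspondence between $S_n$ and $\Omega_{p,q}\times S_p\times S_q$. Consequently, any sum or maximization over $\sigma\in S_n$ may be rewritten as an independent sum or maximization over the three coordinates $(\tau,\pi_p,\pi_q)$ ranging over this product space. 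Once this change of variables is in place, each of the three parts follows from a standard property of product distributions.

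For conditioning, I would write the Bayes posterior as $h_{post}(\sigma) \propto h_{prior}(\sigma)\,h_{like}(\sigma)$ and substitute the factorization for each of the two riffle independent functions. Grouping the six resulting factors by coordinate (all evaluated at the common $\tau=\tau_{A,B}(\sigma)$, $\pi_p=\phi_A(\sigma)$, $\pi_q=\phi_B(\sigma)$) yields a product of the three pointwise products $(m_{like}\odot m_{prior})(\tau)$, $(f_{like}\odot f_{prior})(\pi_p)$, and $(g_{like}\odot g_{prior})(\pi_q)$; note that the first remains supported on $\Omega_{p,q}$, so it is still an interleaving distribution. The only point requiring care is normalization: I would invoke the bijection to factor the normalizing sum $\sum_\sigma h_{prior}(\sigma)h_{like}(\sigma)$ into a product of three separate sums over $\tau$, $\pi_p$, and $\pi_q$, so that the normalized posterior splits cleanly into three normalized factors. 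This exhibits the posterior as riffle independent with exactly the claimed interleaving and relative ranking factors.

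For the MAP assignment, I would again substitute the factorization and use the bijection to recast $\arg\max_\sigma h(\sigma)$ as $\max_{(\tau,\pi_p,\pi_q)} m_{p,q}(\tau)\,f_A(\pi_p)\,g_B(\pi_q)$. Because all three factors are nonnegative and depend on disjoint coordinates, the maximum of the product is attained by maximizing each factor separately, producing $\tau^*$, $\pi_p^*$, $\pi_q^*$ as defined; reassembling via Lemma~\ref{lem:tau} gives the mode $\tau^*\pi^*$ with $\pi^*=(\pi_p^*,\pi_q^*)$. For the entropy, I would substitute the factorization into $H[h] = -\sum_\sigma h(\sigma)\log h(\sigma)$, expand the logarithm of the product into a sum of three logarithms, and change variables by the bijection. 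Each of the three resulting triple sums collapses using that $m_{p,q}$, $f_A$, and $g_B$ each sum to one over their respective domains, leaving precisely $H[m_{p,q}] + H[f_A] + H[g_B]$, the familiar additivity of entropy under independence.

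The argument is essentially bookkeeping, so there is no deep obstacle; the one step that must be handled honestly rather than waved through is the justification that summation and maximization over $S_n$ genuinely decouple across the three coordinates. This rests entirely on the uniqueness half of Lemma~\ref{lem:tau} (that the decomposition $\sigma = \tau(\pi_p,\pi_q+p)$ is unique), and once that bijection is cited, the factorization of the sums and of the $\arg\max$ is immediate and everything else is routine.
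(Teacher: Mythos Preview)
Your proposal is correct and follows exactly the route the paper indicates: the paper does not give a formal proof but simply states that all three properties ``are straightforward to derive using the factorization in Definition~\ref{def:riffledindep2},'' and your argument does precisely this, invoking Lemma~\ref{lem:tau} to justify the change of variables from $\sigma$ to $(\tau,\pi_p,\pi_q)$. Your write-up is in fact more careful than the paper's, explicitly noting the normalization issue for conditioning and the role of the bijection in decoupling the sum and the $\arg\max$.
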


Some ranked datasets come in the form of pairwise
comparisons, with records of the form ``object $i$ is
preferred to object $j$''.  
As a corollary to Proposition~\ref{prop:inference}, we now argue
that conditioning on these
pairwise ranking likelihood functions (that depend only on
whether object $i$ is preferred to object $j$)
decomposes along riffled independence structures.
The pairwise ranking model~\citep{huangetal09b}
for objects $i$ and $j$, is defined over $S_n$ as:\vspace{-3mm}

{\footnotesize
\[
f_{like}(\sigma)=\delta^n_{\sigma(i)<\sigma(j)}(\sigma) = \left\{\begin{array}{cc}
\beta & \mbox{if $\sigma(i)<\sigma(j)$} \\
1-\beta & \mbox{otherwise}
\end{array}\right.,\;\;0\leq\beta\leq 1,
\]\vspace{-3mm}
}

\noindent and reflects the fact that object $i$ is preferred to object $j$
(with probability $\beta$).  If objects
$i$ and $j$ both belong to one of the sets, say $A$, 
then only one factor requires an update using Bayes rule.
If vegetables and fruits are riffle independent, for example,
then less computation would be required
to compare a vegetable against a vegetable than to compare a fruit
against a vegetable.
For example, the observation that Corn is preferred over Peas
affects only the distribution, $f_A$, over vegetables.
More formally, we state this intuitive corollary as follows:
\begin{corollary}\label{cor:pairwiseranking}
Consider conditioning on the pairwise ranking model, 
$f_{like}(\sigma)=\delta^n_{\sigma(i)<\sigma(j)}(\sigma)$,
and suppose that $A$ and $B$ are riffle independent subsets with respect
to the prior distribution $h_{prior}$, with parameters 
$(m_{prior}$, $f_{prior}$, $g_{prior})$.
If $i,j\in A$, then $A$ and $B$ are riffle independent with 
respect to the posterior distribution, whose parameters are identical
to those of the prior, except for the relative ranking 
factor corresponding to $A$, which is 
$f_{post}=f_{prior}\odot \delta_{\sigma(i)<\sigma(j)}^{p}$.
%can be performed on $S_q$
%instead of all of $S_n$ using the update:
%$h_{post}\leftarrow f_{prior}\perp_{m_{p,q}} \left(g_{prior}\odot 
%                \delta^q_{\sigma(i)<\sigma(j)}\right)$.
\end{corollary}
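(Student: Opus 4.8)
The plan is to derive the corollary directly from the Conditioning clause of Proposition~\ref{prop:inference} by exhibiting the pairwise likelihood itself as a riffle independent function with respect to the partition $(A,B)$. First I would show that, when $i,j\in A$, the likelihood $f_{like}(\sigma)=\delta^n_{\sigma(i)<\sigma(j)}(\sigma)$ is riffle independent, with a uniform interleaving factor, a uniform relative ranking factor on $B$, and the relative ranking factor $\delta^p_{\sigma(i)<\sigma(j)}$ on $A$. Granting this, Proposition~\ref{prop:inference} immediately gives that the posterior $h_{prior}\odot f_{like}$ is riffle independent; its interleaving factor is $m_{prior}$ pointwise multiplied by a constant, its $B$-factor is $g_{prior}$ pointwise multiplied by a constant, and its $A$-factor is $f_{prior}\odot\delta^p_{\sigma(i)<\sigma(j)}$, which is exactly the claimed parameterization once the constants are absorbed into the normalization of the posterior.

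The crux is establishing the factorization of $f_{like}$, and this is where relative rank preservation enters. Using the unique decomposition $\sigma=\tau(\pi_p,\pi_q+p)$ of Lemma~\ref{lem:tau}, with $\tau=\tau_{A,B}(\sigma)\in\Omega_{p,q}$ and $\pi_p=\phi_A(\sigma)\in S_p$, I would observe that for $i\in A$ we have $\sigma(i)=\tau(\pi_p(i))$ with $\pi_p(i)\in\{1,\dots,p\}$. Since every $\tau\in\Omega_{p,q}$ is strictly increasing on $\{1,\dots,p\}$ (Lemma~\ref{lem:relativerankpreservation}, or directly the definition of $\Omega_{p,q}$), for $i,j\in A$ one has
\[
\sigma(i)<\sigma(j)\;\Longleftrightarrow\;\tau(\pi_p(i))<\tau(\pi_p(j))\;\Longleftrightarrow\;\pi_p(i)<\pi_p(j),
\]
so the event $\{\sigma(i)<\sigma(j)\}$ depends on $\sigma$ only through $\phi_A(\sigma)$, and not at all on the interleaving $\tau_{A,B}(\sigma)$ or on $\phi_B(\sigma)$. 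Hence $f_{like}(\sigma)=\delta^p_{\sigma(i)<\sigma(j)}(\phi_A(\sigma))$, which is precisely the factored form of Definition~\ref{def:riffledindep2} with the interleaving and $B$-factors taken to be constant.

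I expect this order-preservation step to be the only real content; the remainder is bookkeeping. The one point to handle carefully is the role of normalization: Proposition~\ref{prop:inference} is phrased in terms of the unnormalized pointwise product, so I would note that multiplying the interleaving factor and the $B$-factor by positive constants rescales the joint by a single global constant that is removed upon renormalizing the posterior, leaving $m_{prior}$ and $g_{prior}$ effectively unchanged and only $f_{prior}$ genuinely modified. This yields the stated conclusion that the posterior is riffle independent with parameters identical to the prior except for $f_{post}=f_{prior}\odot\delta^p_{\sigma(i)<\sigma(j)}$.
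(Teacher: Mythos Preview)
Your proposal is correct and follows essentially the same approach as the paper's own proof sketch: factor the pairwise likelihood as a riffle independent function (uniform interleaving, uniform $B$-factor, and $\delta^p_{\sigma(i)<\sigma(j)}$ on $A$), then invoke the Conditioning clause of Proposition~\ref{prop:inference}. You actually supply more detail than the paper does, explicitly invoking Lemmas~\ref{lem:relativerankpreservation} and~\ref{lem:tau} to justify the key equivalence $\sigma(i)<\sigma(j)\Leftrightarrow\phi_A(\sigma)(i)<\phi_A(\sigma)(j)$ and carefully tracking the normalization constants.
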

\begin{proof}[Proof sketch]
First show that the subsets $A$ and $B$ are 
riffle independent with respect to the likelihood function, 
$\delta^n_{\sigma(i)<\sigma(j)}$ by equating the likelihood function
to a product of a uniform interleaving distribution, $m^{unif}_{p,q}$,
and relative ranking factor $f_A=\delta^n_{\sigma(i)<\sigma(j)}$ for $A$,
and a uniform relative ranking factor for $B$.
Then apply Proposition~\ref{prop:inference}.
\end{proof}
Let us compare the result of the corollary to what is possible 
with a fully factored distribution.
If $A$ and $B$ were fully independent, then conditioning 
on any distribution which involved items in $A$ (or only in $B$)
would require only updating the factor associated with item set $A$.
For example, if $i\in A$, then first-order observations of the form
``item $i$ is in rank $j$'' can be efficiently conditioned
in the fully independent scenario.
With riffled independence, it is not, in general, possible to condition
on such first-order observations without modifying all of the $O(n!)$ 
parameters.  However, as Corollary~\ref{cor:pairwiseranking} shows,
pairwise comparisons involving $i,j$ both in $A$ (or both in $B$)
\emph{can} be performed exactly by updating either $f$ (or $g$) without
having to touch all $O(n!)$ probabilities.

%To conclude, 
%we have discussed how the riffled independence assumption
%can be viewed as a certain class of conditional independence 
%assumptions, but that it behaves much like full independence
%in the sense that probabilities can be computed by multiplying 
%the distributions $m_{p,q}$, $f$ and $g$.  It may seem that the convolution
%from Section~\ref{sec:riffledindependence} was completely unnecessary,
%but as we show in the next section, the properties of convolution are
%crucial for formulating efficient Fourier-theoretic algorithms.

\section{Algorithms for a fixed partitioning of the item set}\label{sec:algorithms}
We have thus far covered a number of intuitive examples and properties
of riffled independence.
Given a set of rankings drawn from some distribution $h$, 
we are now interested in estimating a number of statistical quantities, such
as the parameters of a riffle independent model.
In this section, we will assume a \emph{known structure}
(that the partitioning of the item set into subsets $A$ and $B$ is known), 
and given such a partitioning of the item set, 
we are interested in the 
problem of estimating parameters (which we will refer to as \emph{RiffleSplit}),
and the inverse problem of computing probabilities (or marginal
probabilities) with given parameters (which we will refer to 
as \emph{RiffleJoin}).

\paragraph{RiffleSplit} 
In RiffleSplit (which we will also refer to as the \emph{parameter
estimation} problem), we would like to estimate various statistics of the 
relative ranking and interleaving
distributions of a riffle independent distribution
($m_{p,q}$, $f_A$, and $g_B$).
Given a set of i.i.d. training examples, 
$\sigma^{(1)},\dots, \sigma^{(m)}$, we might, for example, want to estimate
each raw probability (e.g., estimate $m_{p,q}(\tau)$ for each interleaving
$\tau$).  In general, we may be 
interested in estimating more general statistics (e.g., what
are the second order relative ranking probabilities of the set of
fruits?).

Since our variables are discrete, computing the maximum likelihood
parameter estimates consists of forming counts of the number of 
training examples consistent with a given interleaving or relative ranking.
Thus, the MLE parameters in our problem are simply given by the following
formulas: \vspace{-3mm}

{\footnotesize\allowdisplaybreaks
\begin{align}
m_{p,q}^{MLE}(\tau) &\propto \sum_{i=1}^m \mathds{1}\left[\tau=\tau_{A,B}(\sigma^{(i)})\right], 
	\label{eqn:m_mle}\\
f_A^{MLE}(\sigma_A) &\propto \sum_{i=1}^m \mathds{1}\left[\sigma_A = \phi_A(\sigma^{(i)})\right], 
	\label{eqn:f_mle} \\
g_B^{MLE}(\sigma_B) &\propto \sum_{i=1}^m \mathds{1}\left[\sigma_B = \phi_B(\sigma^{(i)})\right].
	\label{eqn:g_mle}
\end{align}\vspace{-2mm}
}

\paragraph{RiffleJoin}
Having estimated parameters of a riffle independent distribution, 
we would like to now compute various statistics of the data itself.
In the simplest case, we are interested in estimating $h(\sigma)$,
the joint probability of a single ranking, which can be evaluated simply
by plugging parameter estimates of $m_{p,q}$, $f_A$, and $g_B$ into our
second definition of riffled independence (Definition~\ref{def:riffledindep2}).

More generally however, we may be interested
in knowing the low-order statistics of the data (e.g.,
the first order marginals, second order marginals, etc.),
or related statistics (such as $h(\sigma(i)<\sigma(j))$, the probability
that object $i$ is preferred to object $j$).
And typically for such low-order statistics, one must compute
a sum over rankings.  For example, to compute the probability that 
item $j$ is ranked in position $i$, one must sum over $(n-1)!$ rankings:\vspace{-3mm}

{\footnotesize
\begin{equation}\label{eqn:bigsum}
h(\sigma\,:\,\sigma(j)=i) =
	\sum_{\sigma\,:\,\sigma(j)=i} m_{p,q}(\tau_{A,B}(\sigma))\cdot
		(f_A(\phi_A(\sigma))\cdot g_B(\phi_B(\sigma))).
\end{equation}\vspace{-3mm}
}

While Equation~\ref{eqn:bigsum} may be feasible for small $n$
(such as on the APA dataset), the sum quickly grows to be
intractable for larger $n$.
One of the main observations of the remainder of this section,
however, is that low-order marginal probabilities of the joint
distribution can always be computed directly from low-order 
marginal probabilities of the relative ranking and interleaving
distributions without explicitly computing intractable sums.

\subsection{Fourier theoretic algorithms for riffled independence}
We now present algorithms for working with riffled independence 
(solving the RiffleSplit and RiffleJoin problems) in the Fourier
theoretic framework of~\cite{kondor07,huangetal09a,huangetal09b}.
The Fourier theoretic perspective of riffled independence presented
here is valuable because it will allow us to work directly with 
low-order statistics instead of having to form the necessary
raw probabilities first.
Note that readers who are primarily interested in the structure learning
can jump directly to Section~\ref{sec:hierarchical}.

We begin with a brief introduction to Fourier theoretic inference
on permutations
(see~\cite{kondor08b,huangetal09b} for a detailed exposition).
Unlike its analog on the real line, the Fourier
transform of a function on $S_n$ takes the form of a collection
of Fourier coefficient \emph{matrices} ordered with respect to frequency.
Discussing the analog
of frequency for functions on $S_n$,
is beyond the scope of our
paper, and, given a distribution $h$, we simply index the
Fourier coefficient matrices of $h$ as $\widehat{h}_0$, $\widehat{h}_1$,
$\dots$, $\widehat{h}_K$ ordered with respect to some measure of
increasing complexity.
We use $\widehat{h}$ to denote the complete collection of Fourier
coefficient matrices.
One rough way to understand this complexity, as mentioned in
Section~\ref{sec:distributions}, is by the fact that
the low-frequency Fourier coefficient matrices of a distribution
can be used to reconstruct low-order marginals.
For example, the first-order matrix of marginals of $h$ can always
be reconstructed from the matrices $\hat{h}_0$ and $\hat{h}_1$.
As on the real line, many of the familiar properties of the Fourier
transform continue to hold.  The following are several basic properties
used in this paper:
\begin{proposition}[Properties of the Fourier transform,
~\cite{diaconis88}]\label{prop:fourierprops}
\mbox{}
Consider any $f,g:S_n\to\reals$.
\begin{itemize}\denselist
\item (Linearity) For any
$\alpha,\beta\in\reals$,
$[\widehat{\alpha f+\beta g}]_i=\alpha\widehat{f}_i+\beta\widehat{g}_i$ holds
at all frequency levels $i$.
\item (Convolution) The Fourier transform of a convolution is a
product of Fourier transforms:
$[\widehat{f*g}]_{i}=\widehat{f}_{i}\cdot \widehat{g}_{i}$,
for each frequency level $i$, where the operation $\cdot$
is matrix multiplication.
\item (Normalization) The first coefficient matrix, $\hat{f}_0$,
is a scalar and equals
$\sum_{\sigma\in S_n} f(\sigma)$.
\end{itemize}
\end{proposition}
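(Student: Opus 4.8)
The plan is to work from the representation-theoretic definition of the Fourier transform on $S_n$, namely that for each frequency level $i$ there is an irreducible matrix representation $\rho_i : S_n \to \reals^{d_i \times d_i}$ and the Fourier coefficient matrix is $\widehat{f}_i = \sum_{\sigma \in S_n} f(\sigma)\,\rho_i(\sigma)$. All three properties are then immediate consequences of (a) the linearity of this defining sum and (b) the fact that each $\rho_i$ is a group homomorphism, i.e.\ $\rho_i(\pi\sigma) = \rho_i(\pi)\,\rho_i(\sigma)$ for all $\pi,\sigma \in S_n$. Since $S_n$ is finite, every sum in sight is finite and no analytic subtleties intervene.

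First I would dispatch linearity by substituting $\alpha f + \beta g$ into the definition and splitting the sum:
\begin{equation*}
[\widehat{\alpha f + \beta g}]_i = \sum_{\sigma \in S_n}(\alpha f(\sigma) + \beta g(\sigma))\,\rho_i(\sigma) = \alpha\sum_{\sigma} f(\sigma)\rho_i(\sigma) + \beta\sum_{\sigma} g(\sigma)\rho_i(\sigma) = \alpha\widehat{f}_i + \beta\widehat{g}_i.
\end{equation*}
Normalization is equally direct: the lowest frequency level corresponds to the trivial representation $\rho_0$, which is one-dimensional and sends every group element to the scalar $1$; hence $\widehat{f}_0 = \sum_{\sigma} f(\sigma)\cdot 1 = \sum_{\sigma} f(\sigma)$, confirming both that $\widehat{f}_0$ is a scalar and that it equals the total mass of $f$.

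The substantive step is the convolution property, which I would prove by unrolling the definition of convolution and reindexing. Starting from $[f*g](\sigma) = \sum_{\pi\in S_n} f(\pi)\,g(\pi^{-1}\sigma)$, I would write
\begin{equation*}
[\widehat{f*g}]_i = \sum_{\sigma\in S_n}\Big(\sum_{\pi\in S_n} f(\pi)\,g(\pi^{-1}\sigma)\Big)\rho_i(\sigma),
\end{equation*}
then perform the change of variables $\tau = \pi^{-1}\sigma$ (so $\sigma = \pi\tau$), which for each fixed $\pi$ is a bijection of $S_n$. Applying the homomorphism property $\rho_i(\pi\tau) = \rho_i(\pi)\rho_i(\tau)$ lets the two sums separate:
\begin{equation*}
[\widehat{f*g}]_i = \sum_{\pi\in S_n}\sum_{\tau\in S_n} f(\pi)\,g(\tau)\,\rho_i(\pi)\rho_i(\tau) = \Big(\sum_{\pi} f(\pi)\rho_i(\pi)\Big)\Big(\sum_{\tau} g(\tau)\rho_i(\tau)\Big) = \widehat{f}_i\cdot\widehat{g}_i.
\end{equation*}

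The main obstacle, such as it is, lies in this last step: one must verify that the substitution is a genuine bijection (closure of $S_n$ under left multiplication) and, crucially, invoke the homomorphism property in exactly the order that produces $\widehat{f}_i$ on the left and $\widehat{g}_i$ on the right. Because matrix multiplication is noncommutative, this ordering is essential, and it is precisely what mirrors the noncommutativity of $*$ flagged in the definition of convolution. Everything else is routine bookkeeping over finite sums.
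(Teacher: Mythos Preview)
Your proof is correct. The paper does not supply its own proof of this proposition; it is stated as a citation to \cite{diaconis88}, and your derivation from the representation-theoretic definition $\widehat{f}_i = \sum_{\sigma} f(\sigma)\rho_i(\sigma)$ is exactly the standard argument one finds there.
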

A number of papers in recent years
(\cite{kondor07,huangetal07,huangetal09a,huangetal09b})
have considered approximating distributions over permutations
using a truncated (bandlimited) set of Fourier coefficients and have proposed
inference algorithms that operate on these
Fourier coefficient matrices.  For example, one can perform
generic marginalization, Markov chain prediction, and
conditioning operations using only Fourier coefficients
without ever having to perform an inverse Fourier transform.

In this section, we provide
generalizations of the algorithms in~\cite{huangetal09a} that
tackle the RiffleJoin and RiffleSplit problems.
We will assume, without loss of generality 
that $A=\{1,\dots,p\}$ and $B=\{p+1,\dots,n\}$ 
(this assumption will be discarded in later sections),
Although we begin each of the following discussions as if all of the
Fourier coefficients are provided, we will be especially interested
in algorithms that work well in cases where only a truncated set of
Fourier coefficients are present, and where $h$ is only
\emph{approximately} riffle independent.

For both problems, we will rely on two Fourier domain algorithms introduced 
in~\cite{huangetal09a}, \emph{Join} and \emph{Split}, as subroutines.
Given independent factors $f:S_p\to\reals$
and $g:S_q\to\reals$, Join returns 
the joint distribution $f\cdot g$.  Conversely, given 
a distribution $h:S_n\to\reals$, Split computes
$f$ and $g$ by marginalizing over $S_q$ or $S_p$, respectively.
For example, ${\mbox{\sc Split}}[h]$ returns a function defined on $S_p$, and 
${\mbox{\sc Split}}[h](\sigma_p) = \sum_{\sigma_q\in S_q} h((\sigma_p,\sigma_q))$.
We will overload the ${\mbox{\sc Join/Split}}$ names to 
refer to both the ordinary and Fourier theoretic formulations 
of the same procedures.

\begin{figure}[t!]
\incmargin{1em}
\begin{algorithm2e}[H]
{\scriptsize
\SetKwInOut{Input}{input}\SetKwInOut{Output}{output}
{\sc RiffleJoin($\widehat{f},\widehat{g},\widehat{m_{p,q}}$)} \xspace \\
\Input{Fourier transforms of $f_A$, $g_B$, and $m$ ($\hat{f}, \hat{g},\widehat{m_{p,q}}$ respectively)}
\Output{Fourier transform of the joint distribution, $\hat{h}$}
\BlankLine
 $\widehat{h'} = \mbox{\sc Join}(\widehat{f},\widehat{g})$ \;
 \ForEach{frequency level $i$}{
       $\widehat{h}_{i}\;\leftarrow\; \left[\widehat{m_{p,q}}\right]_{i}\cdot \widehat{h'}_{i}$ \;
    }
 \Return $\widehat{h}$ \;
\caption{\footnotesize Pseudocode for \emph{RiffleJoin}}
\label{alg:rifflejoin}
}
\end{algorithm2e}
\decmargin{1em}
\end{figure}

\begin{figure}[t!]
\incmargin{1em}
\begin{algorithm2e}[H]
{\scriptsize
\SetKwInOut{Input}{input}\SetKwInOut{Output}{output}
{\sc RiffleSplit($\widehat{h}$)} \xspace \\
\Input{Fourier transform of the empirical joint distribution $\hat{h}$}
\Output{Fourier transform of MLE estimates of $f_A$, $g_B$ ($\hat{f},\hat{g}$)}
\BlankLine
 \ForEach{frequency level $i$}{
       $\widehat{h'}_{i}\;\leftarrow\; \left[\widehat{m}_{p,q}^{unif}\right]^T_{i}\cdot \widehat{h}_{i}$ \;
    }
 $[\widehat{f},\widehat{g}]\;\leftarrow\; \mbox{\sc Split}(\widehat{h'})$ \;
 Normalize $\hat{f}$ and $\hat{g}$\;
 \Return $\hat{f},\hat{g}$\;
\caption{\footnotesize Pseudocode for \emph{RiffleSplit}}
\label{alg:rifflesplit}
}
\end{algorithm2e}
\decmargin{1em}
\end{figure}

\subsection{\emph{RiffleJoin} in the Fourier domain}
Given the Fourier coefficients of $f$, $g$, and $m$, we
can compute the Fourier coefficients of $h$ using
Definition~\ref{def:riffledindep} (our first definition) 
by applying the Join algorithm
from~\cite{huangetal09a} and the \emph{Convolution Theorem}
(Proposition~\ref{prop:fourierprops}), which tells
us that the Fourier transform of a convolution can be written
as a pointwise product of Fourier transforms.
To compute the $\hat{h}_i$, the Fourier theoretic formulation of
the \emph{RiffleJoin} algorithm simply
calls the Join algorithm on $\widehat{f}$ and $\widehat{g}$,
and convolves the result by $\widehat{m}$
(see Algorithm~\ref{alg:rifflejoin}).

In general, it may be intractable to Fourier transform
the riffle shuffling distribution $m_{p,q}$.  However, there are
some cases in which $m_{p,q}$ can be computed.
For example, if $m_{p,q}$ is computed directly from a set of training examples,
then one can simply compute the desired Fourier coefficients
using the definition of the Fourier transform
given in~\cite{huangetal09b}, which is tractable as long as
the samples can be tractably stored in memory.
For the class of biased riffle shuffles  
that we discussed in Section~\ref{sec:riffledindependence},
one can also efficiently compute the low-frequency terms
of $\widehat{m_{p,q}^\alpha}$ by employing the recurrence relation in
Algorithm~\ref{alg:riffle}. 
In particular, Algorithm~\ref{alg:riffle}
expresses a biased riffle shuffle on $S_n$
as a linear combination of biased riffle shuffles on $S_{n-1}$.
By invoking linearity of the Fourier transform
(Proposition~\ref{prop:fourierprops}), one can efficiently
compute $\widehat{m_{p,q}^\alpha}$
via a dynamic programming approach quite reminiscent
of Clausen's FFT (\emph{Fast Fourier transform}) 
algorithm~\cite{clausen93}.
We describe our algorithm in more 
detail in Appendix~\ref{sec:fourierinterleavings}.
To the best of our knowledge, we are the first to compute the Fourier
transform of riffle shuffling distributions.

\subsection{\emph{RiffleSplit} in the Fourier domain}
Given the Fourier coefficients of a riffle independent
distribution $h$, we would like to tease apart the factors.
%$m_{p,q}$, $f_A$ and $g_B$.  
In the following, we show how to recover the relative
ranking distributions, $f_A$ and $g_B$, and
defer the problem of recovering the interleaving distribution 
for Appendix~\ref{sec:fourierinterleavings}.

From the RiffleJoin algorithm, we saw that for each frequency level
$i$, $\hat{h}_{i}=\left[\widehat{m_{p,q}}\right]_{i}
\cdot[\widehat{f\cdot g}]_{i}$.
The first solution to the splitting problem that might occur is to
perform a deconvolution by multiplying each $\widehat{h}_{i}$
term by the inverse of the matrix $\left[\widehat{m_{p,q}}\right]_{i}$
(to form $\left[\widehat{m_{p,q}}\right]^{-1}_{i}\cdot 
\widehat{h}_{i}$) and call the Split algorithm from~\cite{huangetal09a}
on the result.  Unfortunately, the matrix $\left[\widehat{m_{p,q}}\right]_{i}$
is, in general, non-invertible.
Instead, our RiffleSplit algorithm left-multiplies each
$\widehat{h}_{i}$ term by
$[\widehat{m}_{p,q}^{unif}]^T_{i}$, which can be shown
to be equivalent to convolving the distribution $h$ by the `\emph{dual
shuffle}', $m^*$, defined as $m^*(\sigma)=m^{unif}_{p,q}(\sigma^{-1})$.
While convolving by $m^*$ does not produce a distribution that
factors independently, the Split algorithm from~\cite{huangetal09a}
can still be shown to recover the Fourier transforms
$\hat{f}_A^{MLE}$ and $\hat{g}^{MLE}_B$ of the maximum likelihood
parameter estimates:
%\begin{theorem}\label{thm:rifflesplit}
%If $h=f\perp_{m_{p,q}} g$, then RiffleSplit
%(Alg.~\ref{alg:rifflesplit}) (with $\hat{h}$ as input),
%returns $\hat{f}$ and $\hat{g}$ \emph{exactly}.
%\end{theorem}
\begin{theorem}\label{thm:rifflesplit}
Given a set of rankings with empirical distribution $\tilde{h}$,
the maximum likelihood estimates of the relative ranking distributions over
item sets $A$ and $B$ are given by:
\begin{equation}
[f^{MLE}_A,g^{MLE}_B] \propto {\mbox{\sc Split}}\left[m^*_{p,q} * \tilde{h}\right],
\end{equation}
where $m^*_{p,q}$ is the dual shuffle (of the uniform interleaving distribution).
Furthermore, the Fourier transforms of the relative ranking distributions are:
\begin{equation*}
[(\widehat{f^{MLE}})_i,(\widehat{g^{MLE}})_i] \propto 
	{\mbox{\sc Split}}\left[\left(\hat{m}_{p,q}^{unif}\right)^T_i\cdot \hat{\tilde{h}}_i\right],\;\mbox{for all frequency levels $i$}.
\end{equation*}
\end{theorem}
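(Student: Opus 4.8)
The plan is to establish the spatial-domain identity
\[
\mbox{\sc Split}\left[m^*_{p,q} * \tilde{h}\right] \propto \left[f^{MLE}_A,\, g^{MLE}_B\right]
\]
first, and then read off the Fourier-domain statement as a corollary using the Convolution Theorem. First I would unwind the convolution against the dual shuffle. By the definition of convolution and of $m^*_{p,q}$, we have $[m^*_{p,q}*\tilde{h}](\sigma) = \sum_{\pi} m^{unif}_{p,q}(\pi^{-1})\,\tilde{h}(\pi^{-1}\sigma)$; substituting $\nu=\pi^{-1}$ and using that $m^{unif}_{p,q}$ is uniform on $\Omega_{p,q}$ and zero elsewhere reduces this to
\[
[m^*_{p,q}*\tilde{h}](\sigma) = \frac{1}{|\Omega_{p,q}|}\sum_{\nu\in\Omega_{p,q}} \tilde{h}(\nu\sigma),
\]
which is simply an average of $\tilde{h}$ over all interleavings.

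Next I would apply \mbox{\sc Split}, i.e.\ marginalize this function over $S_q$ to extract the factor for $A$. Evaluating at $(\sigma_p,\sigma_q)\in S_p\times S_q$ and summing over $\sigma_q$ produces, up to the constant $1/|\Omega_{p,q}|$, the double sum $\sum_{\sigma_q\in S_q}\sum_{\nu\in\Omega_{p,q}} \tilde{h}(\nu(\sigma_p,\sigma_q))$.

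The crux, and the step I expect to demand the most care, is re-indexing this double sum via the unique decomposition of Lemma~\ref{lem:tau}. That lemma guarantees that for fixed $\sigma_p$ the map $(\nu,\sigma_q)\mapsto \nu(\sigma_p,\sigma_q)$ is a bijection from $\Omega_{p,q}\times S_q$ onto $\{\mu\in S_n : \phi_A(\mu)=\sigma_p\}$, with $\nu=\tau_{A,B}(\mu)$ and $\sigma_q=\phi_B(\mu)$. The danger here is getting the order of composition or the direction of the bijection backwards, so I would cross-check it against the claim already used in the equivalence proof of Definitions~\ref{def:riffledindep} and~\ref{def:riffledindep2} (namely that $\tau^{-1}\sigma\in S_p\times S_q$ exactly when $\tau=\tau_{A,B}(\sigma)$). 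Once the re-indexing is justified, the double sum collapses to $\sum_{\mu\,:\,\phi_A(\mu)=\sigma_p}\tilde{h}(\mu)$, which is precisely the empirical fraction of samples whose relative ranking of $A$ equals $\sigma_p$; by the count form of $\tilde{h}$ this agrees with the maximum-likelihood count of Equation~\ref{eqn:f_mle} up to the global constant $1/|\Omega_{p,q}|$. Hence $\mbox{\sc Split}[m^*_{p,q}*\tilde{h}]$ is proportional to $f^{MLE}_A$, the symmetric argument gives $g^{MLE}_B$, and normalization removes the constant.

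Finally I would transfer to the Fourier domain. Taking Fourier transforms and invoking the Convolution Theorem (Proposition~\ref{prop:fourierprops}) replaces the convolution by the pointwise product $\widehat{m^*_{p,q}}_i\cdot\hat{\tilde{h}}_i$ at each frequency level $i$. It then remains only to identify $\widehat{m^*_{p,q}}_i$ with $\left(\hat{m}^{unif}_{p,q}\right)^T_i$: writing $\widehat{m^*_{p,q}}_\rho = \sum_\tau m^{unif}_{p,q}(\tau)\,\rho(\tau^{-1})$ and using that the representations may be chosen real orthogonal, so that $\rho(\tau^{-1})=\rho(\tau)^T$, yields the transpose relation. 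Composing with the (linear) Fourier-domain \mbox{\sc Split} then produces the claimed formula for $(\widehat{f^{MLE}})_i$ and $(\widehat{g^{MLE}})_i$, completing the proof.
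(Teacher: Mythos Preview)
Your proposal is correct and follows essentially the same route as the paper: you use Lemma~\ref{lem:tau} to identify the double sum over $\Omega_{p,q}\times S_q$ with the set of rankings having a fixed $\phi_A$, rewrite this as a dual-shuffle convolution followed by {\sc Split}, and then invoke the Convolution Theorem together with the transpose identity $\widehat{m^*}_i=\hat{m}_i^T$ for the Fourier statement. The only cosmetic difference is that the paper runs the argument in the opposite direction, starting from the MLE count of Equation~\ref{eqn:f_mle} and working toward the {\sc Split}-of-convolution form, whereas you start from $\mbox{\sc Split}[m^*_{p,q}*\tilde{h}]$ and unwind it to the MLE.
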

\begin{proof}
We will use $\pi_p\in S_p$ and $\pi_q\in S_q$
to denote relative rankings of $A$ and $B$ respectively.
Let us consider estimating $f_A^{MLE}(\pi_p)$.
If $\tilde{h}$ is the empirical distribution of the training examples,
then $f_A^{MLE}(\pi_p)$ can be computed by summing over examples
in which the relative ranking of $A$ is consistent with $\pi_p$ (Equation~\ref{eqn:f_mle}), or
equivalently, by marginalizing $\tilde{h}$ over the interleavings
and the relative rankings of $B$.  Thus, we have:\vspace{-3mm}

{\footnotesize
\begin{equation}\label{eqn:marginalizerelranks}
f_A^{MLE}(\pi_p) = \sum_{\pi_q\in S_q} \left(\sum_{\tau\in \Omega_{p,q}} \tilde{h}(\tau(\pi_p,\pi_q+p))\right),
\end{equation}\vspace{-3mm}
}

\noindent where we have used Lemma~\ref{lem:tau} to decompose a ranking $\sigma$
into its component relative rankings and interleaving.

The second step is to notice that the outer summation of
Equation~\ref{eqn:marginalizerelranks} is exactly the type of marginalization
that can already be done in the Fourier domain via the Split
algorithm of~\cite{huangetal09a}, and thus, $f_A^{MLE}$ can be
rewritten as $f_A^{MLE}=\mbox{\sc Split}(h')$,
where the function $h':S_n\to\reals$ is defined as
$h'(\sigma) = \sum_{\tau\in \Omega_{A,B}} h(\tau\sigma)$.
Hence, if we could compute the Fourier transform of the function
$h'$, then we could apply the ordinary Split algorithm to recover
the Fourier transform of $f_A^{MLE}$.

In the third step, we observe that the function $h'$ can be written as a convolution
of the dual shuffle with $h$, thus establishing the first part of the theorem:\vspace{-3mm}

{\footnotesize
\begin{equation*}
h'(\sigma) = \sum_{\tau\in \Omega_{A,B}} h(\tau\sigma)
        \propto \sum_{\pi\in S_n} m^{unif}_{p,q}(\pi) h(\pi\sigma)
        \propto \sum_{\pi\in S_n} m^*_{p,q}(\pi) h(\pi^{-1}\sigma)
        \propto [m^*_{p,q} * h](\sigma).
\end{equation*}\vspace{-3mm}
}

\noindent Next, we use a standard fact about Fourier 
transforms~\citep{diaconis88} --- given a function $m^*:S_n\to\reals$ 
defined as $m^*(\sigma)=m(\sigma^{-1})$,
the Fourier coefficient matrices of $m^*$ are related to those of $m$ by
the transpose.  Hence, $\hat{m^*}_i^T=\hat{m}_i$, for every frequency level $i$.
Applying the convolution theorem to the Fourier coefficients of the dual shuffle and the
empirical distribution establishes the final part of the theorem.
\end{proof}
Notice that to compute the MLE relative ranking factors 
in the Fourier domain, it is not necessary to know the interleaving
distribution.  
It is necessary, however,
to compute the Fourier coefficients
of the \emph{uniform} interleaving distribution ($m^{unif}_{p,q}$), 
which we discuss in Appendix~\ref{sec:fourierinterleavings}.  
It is also necessary to normalize the output
of Split to sum to one, but fortunately, normalizing a function $h$
can be performed in the Fourier
domain simply by dividing each Fourier coefficient matrix by $\widehat{h}_0$
(Proposition~\ref{prop:fourierprops}).  See Algorithm~\ref{alg:rifflesplit}
for pseudocode.

\subsection{Marginal preservation guarantees}
Performing our Fourier domain algorithms with a complete set
of Fourier coefficients is just as intractable as performing
the computations naively.  Typically, in the Fourier setting, one
hopes instead to work with a set of low-order terms.
For example, in the case of RiffleJoin, we might only receive 
the second order marginals of the parameter distributions as input.
A natural question to ask then, is what is the approximation 
quality of the output given a bandlimited input?
We now state a result below, which shows how
our algorithms perform when called with a truncated set
of Fourier coefficients.
\begin{theorem}\label{thm:bandlimiting}
Given enough Fourier terms to reconstruct the $k^{th}$-order marginals
of $f$ and $g$, RiffleJoin returns enough Fourier terms to \emph{exactly}
reconstruct the $k^{th}$-order marginals of $h$.  Likewise,
given enough Fourier terms to reconstruct the $k^{th}$-order marginals
of $h$, RiffleSplit returns enough Fourier terms to \emph{exactly}
reconstruct the $k^{th}$-order marginals of both $f$ and $g$.
\end{theorem}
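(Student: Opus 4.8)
The plan is to reduce the theorem to two independent facts, one about the Join/Split subroutines and one about the (de)convolution steps that RiffleJoin and RiffleSplit wrap around them. Recall from Algorithm~\ref{alg:rifflejoin} that RiffleJoin first calls $\mbox{\sc Join}(\widehat{f},\widehat{g})$ to obtain $\widehat{h'}$ with $h'=f_A\cdot g_B$, and then sets $\widehat{h}_i=[\widehat{m_{p,q}}]_i\cdot\widehat{h'}_i$ at each frequency level $i$; RiffleSplit (Algorithm~\ref{alg:rifflesplit}) does the reverse, first forming $\widehat{h'}_i=[\widehat{m}^{unif}_{p,q}]^T_i\cdot\widehat{h}_i$ and then calling $\mbox{\sc Split}$. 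So I would prove the two halves by separately verifying that (i) the pointwise matrix products do not mix frequency levels, and (ii) Join and Split map $k^{th}$-order marginal information of their inputs to $k^{th}$-order marginal information of their outputs.

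For fact (i), I would invoke the Convolution Theorem (Proposition~\ref{prop:fourierprops}): since $\widehat{h}_i$ and $\widehat{h'}_i$ are related by left multiplication by a single matrix $[\widehat{m_{p,q}}]_i$ (respectively $[\widehat{m}^{unif}_{p,q}]^T_i$) at the same index $i$, the map is block diagonal across frequency levels. The first-order matrix of marginals of $h$ is reconstructible from $\widehat{h}_0$ and $\widehat{h}_1$, and more generally the $k^{th}$-order marginals are reconstructible from a fixed low-frequency band of coefficient matrices (those indexed by partitions $\lambda\vdash n$ with $\lambda_1\geq n-k$). Because the product acts within each level, computing the band of $\widehat{h}$ requires only the same band of $\widehat{h'}$ together with the corresponding low-frequency coefficients of $\widehat{m_{p,q}}$ (or $\widehat{m}^{unif}_{p,q}$), and the latter are exactly the terms we already know how to compute from Appendix~\ref{sec:fourierinterleavings}.

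For fact (ii), I would cite the bandlimiting guarantee already established for the plain Join and Split procedures in~\cite{huangetal09a}: given enough Fourier terms to reconstruct the $k^{th}$-order marginals of the factors $f$ and $g$, Join returns enough to reconstruct the $k^{th}$-order marginals of the product $f\cdot g$, and conversely Split recovers the $k^{th}$-order marginals of the factors from those of the joint. Chaining the two facts then gives the theorem: for RiffleJoin, the $k^{th}$-order marginals of $f$ and $g$ yield (via Join) those of $h'=f_A\cdot g_B$, and then (via the level-preserving convolution step) those of $h$; for RiffleSplit, the $k^{th}$-order marginals of $h$ yield (via the level-preserving deconvolution step) those of $h'$, and then (via Split) those of $f$ and $g$.

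The main obstacle is fact (ii), the marginal-order preservation of Join and Split, whose proof rests on the Littlewood--Richardson branching structure relating irreducibles of $S_n$ to those of $S_p\times S_q$: one must check that the partitions $\mu\vdash p$ and $\nu\vdash q$ contributing to a low-order $\lambda\vdash n$ are themselves low-order, so that the low-frequency bands on the three groups are mutually consistent. This is precisely the combinatorial content already worked out in~\cite{huangetal09a}, so within this paper it can be invoked directly; the genuinely new verification is the easy observation that the interleaving (de)convolution is diagonal across frequency and hence introduces no higher-order dependence.
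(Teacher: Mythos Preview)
Your proposal is correct and matches the paper's own proof essentially line for line: the paper also reduces the result to the Convolution Theorem (Proposition~\ref{prop:fourierprops}) for the level-preserving (de)convolution step, together with the bandlimiting guarantees for Join and Split established as Theorems~9 and~12 in~\cite{huangetal09a}. Your additional remarks about the Littlewood--Richardson branching are accurate background for why the cited results hold, but the paper simply invokes them as black boxes.
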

\begin{proof}
This result is a simple consequence of the well-known
convolution theorem (Proposition~\ref{prop:fourierprops})
and Theorems 9 and 12 from~\cite{huangetal09a}.
Theorem 9 from~\cite{huangetal09a} states that, given $s^{th}$-order
marginals of factors $f$ and $g$, the Join algorithm
can reconstruct the $s^{th}$-order marginals of the joint distribution
$f\cdot g$, \emph{exactly}.  Since the riffle independent joint distribution
is $m*(f\cdot g)$ and convolution operations are pointwise in the
Fourier domain (Proposition~\ref{prop:fourierprops}), 
then given enough Fourier
terms to reconstruct the $s^{th}$-order marginals of the
function  $m^{p,q}$, we can also reconstruct the $s^{th}$-order
marginals of the riffle independent joint from the output of RiffleSplit.
\end{proof}
\subsection{Running time}
If the Fourier coefficient matrix for frequency level $i$ of a
joint distribution
is $d\times d$  then 
the running time complexity of the Join/Split algorithms
of~\cite{huangetal09a} are at worst, cubic in the dimension, $O(d^3)$.
If the interleaving Fourier coefficients are precomputed ahead of
time, then the complexity of RiffleJoin/RiffleSplit
is also $O(d^3)$.

If not, then we must Fourier transform the interleaving distribution.
For RiffleJoin, we can Fourier transform the empirical distribution 
directly from the definition, or use the Algorithms presented in 
Appendix~\ref{sec:fourierinterleavings} in the case of biased riffle shuffles,
which has $O(n^2d^3)$ running time
in the worst case when $p\sim O(n)$.
For RiffleSplit, one must compute the Fourier transform
of the uniform interleaving distribution, which, as we have shown in 
Section~\ref{sec:interleavings}, also takes the form of a biased riffle 
shuffle and therefore also can be computed in $O(n^2d^3)$ time.
In Section~\ref{sec:experiments}, we plot experimental running times.

\section{Hierarchical riffle independent decompositions}\label{sec:hierarchical}
Thus far throughout the paper, we have focused exclusively on understanding
riffled independent models with a single binary partitioning of the 
full item set.
In this section we explore a natural model simplification 
which comes from the simple
observation that, since the relative ranking distributions
$f_A$ and $g_B$ are again distributions over rankings,
the sets $A$ and $B$ can further be decomposed into 
riffle independent subsets.
We call such models \emph{hierarchical riffle independent decompositions}.
Continuing with our running example, one can imagine that 
the fruits are further partitioned
into two sets, a set consisting of citrus fruits ((\aL) Lemons 
and (\aO) Oranges) and a set consisting of mediterranean 
fruits ((\aF) Figs and (\aG) Grapes).  To generate a full
ranking, one first draws rankings of the citrus and 
mediterranean fruits independently
($\llbracket\aL,\aO\rrbracket$ and $\llbracket \aG,\aF\rrbracket$, 
for example).  Secondly, the two sets are interleaved to form a 
ranking of all fruits ($\llbracket\aG,\aL,\aO,\aF\rrbracket$).
Finally, a ranking of the vegetables is drawn 
($\llbracket \aP,\aC\rrbracket$) and interleaved with the fruit 
rankings to form a full joint ranking: 
$\llbracket \aP,\aG,\aL,\aO,\aF,\aC\rrbracket$.
Notationally, we can express the hierarchical decomposition
as $\{\aP,\aC\}\perp_{m_1} (\{\aL,\aO\}\perp_{m_2}\{\aF,\aG\})$.  We can also 
visualize hierarchies using trees (see Figure~\ref{fig:tree1} for our
example).  The subsets of items which appear as leaves in the 
tree will be referred to as \emph{leaf sets}.
\begin{figure*}[t!]
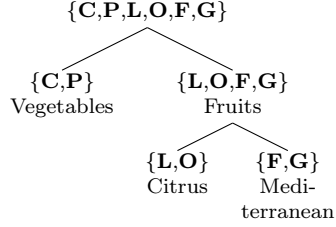
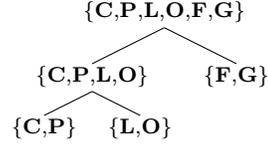
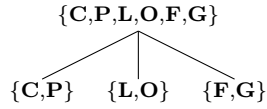
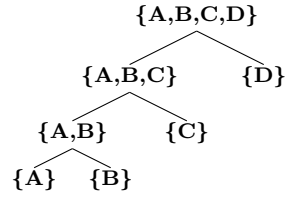

\begin{center}
{\scriptsize
\subfigure[Example of hierarchical riffled independence structure on $S_6$]{
\begin{minipage}[t][1.3in]{0.33\linewidth}
        \qquad
\Tree [.{\{\aC,\aP,\aL,\aO,\aF,\aG\}} {\{\aC,\aP\}\vspace{0mm}}\\{\scriptsize Vegetables}
[.{\{\aL,\aO,\aF,\aG\}\vspace{0mm}}\\{\scriptsize Fruits}
{\{\aL,\aO\}\vspace{0mm}}\\{\scriptsize Citrus}
{\{\aF,\aG\}\vspace{0mm}}\\{\scriptsize Medi-\vspace{0mm}}\\{\scriptsize terranean} ]   ]
        \qquad
   \label{fig:tree1}
\end{minipage}
}
\qquad
\qquad
\qquad
\subfigure[Another example, not equivalent to~\subref{fig:tree1}]{
\begin{minipage}[t][1.3in]{0.3\linewidth}
        \qquad
\Tree [.{\{\aC,\aP,\aL,\aO,\aF,\aG\}}
[.{\{\aC,\aP,\aL,\aO\}}
{\{\aC,\aP\}}
{\{\aL,\aO\}} ]
{\{\aF,\aG\}} ]
        \qquad
   \label{fig:tree4}
\end{minipage}
}
\qquad
\subfigure[3-way decomposition for $S_6$ (generalizes the class of distributions
parameterized by~\subref{fig:tree1},~\subref{fig:tree4}]{
\begin{minipage}[t][1.3in]{0.33\linewidth}
        \qquad
\Tree [.{\{\aC,\aP,\aL,\aO,\aF,\aG\}} {\{\aC,\aP\}} {\{\aL,\aO\}} {\{\aF,\aG\}}  ]
        \qquad
   \label{fig:tree2}
\end{minipage}
}
\qquad
\qquad
\qquad
\subfigure[Hierarchical decomposition into singleton subset, where each leaf
set consists of a single item (we will also refer to this particular
type of tree as a 1-thin chain)]{
\begin{minipage}[t][1.3in]{0.33\linewidth}
        \qquad{\bf
\Tree [.{\{A,B,C,D\}} [.{\{A,B,C\}} [.\{A,B\} {\{A\}} {\{B\}} ] {\{C\}} ]  {\{D\}} ]
        }\qquad
   \label{fig:tree3}
\end{minipage}
}
}
\caption{Examples of distinct hierarchical riffle independent structures.}
\end{center}
\end{figure*}

A natural question to ask is: if we used a different
hierarchy with the same leaf sets, would we capture the same
distributions?  For example, does a distribution which decomposes according
to the tree in Figure~\ref{fig:tree4} also decompose according
to the tree in Figure~\ref{fig:tree1}?
The answer, in general, is no, due to the fact that
distinct hierarchies impose different sets of independence
assumptions, and as a result, different structures can be
well or badly suited for modeling a given dataset.
Consequently, it is important to use the ``correct'' structure
if possible.

\subsection{Shared independence structure}
It is interesting to note, however, that 
while the two structures in Figures~\ref{fig:tree1} and~\ref{fig:tree4}
encode distinct families of distributions, it is possible
to identify a set of independence assumptions common to both structures.
In particular since both structures have the same leaf sets,
any distributions consistent with either of the two hierarchies 
must also be consistent with what we call a \emph{$3$-way decomposition}.
We define a $d$-way decomposition to be a distribution with
a single level of hierarchy, but instead of partitioning the
entire item set into just two subsets, one partitions into
$d$ subsets, then interleaves the relative rankings of each of the
$d$ subsets together
to form a joint ranking of items.
Any distribution consistent with either Figure~\ref{fig:tree4}
or~\ref{fig:tree1} must consequently also be consistent
with the structure of Figure~\ref{fig:tree2}.  More generally, we have:
\begin{proposition}\label{prop:leafsets}
If $h$ is a hierarchical riffle independent model with $d$ leaf
sets, then $h$ can also be written as a $d$-way decomposition.
\end{proposition}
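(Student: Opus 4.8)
The plan is to prove the statement by induction on the number of leaf sets $d$, using the pointwise factorization of Definition~\ref{def:riffledindep2} together with a $d$-way generalization of the unique decomposition in Lemma~\ref{lem:tau}. First I would fix the target notion precisely: a $d$-way decomposition is given by a partition of the item set into $d$ ordered blocks $A_1,\dots,A_d$, relative ranking maps $\phi_{A_i}$, and a joint interleaving map $\tau^{(d)}$ recording, for each rank, which block the occupying item belongs to, so that $h$ is a $d$-way decomposition precisely when $h(\sigma)=m^{(d)}(\tau^{(d)}(\sigma))\cdot\prod_{i=1}^d f_i(\phi_{A_i}(\sigma))$ for some interleaving distribution $m^{(d)}$ and relative ranking factors $f_i$.

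The base case $d=2$ is exactly Definition~\ref{def:riffledindep2}. For the inductive step, I would examine the root of the hierarchy, which splits the items into two riffle independent subsets $A$ and $B$ whose subtrees have $d_A$ and $d_B$ leaf sets respectively, with $d_A+d_B=d$. Applying Definition~\ref{def:riffledindep2} at the root gives $h(\sigma)=m(\tau_{A,B}(\sigma))\cdot f_A(\phi_A(\sigma))\cdot g_B(\phi_B(\sigma))$, and the inductive hypothesis applied to the relative ranking distributions $f_A$ (on $S_{|A|}$) and $g_B$ (on $S_{|B|}$) expresses each as a $d_A$-way and $d_B$-way decomposition, respectively. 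Substituting these into the root factorization yields a product of three interleaving terms together with $d$ leaf-set relative ranking factors.

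Two facts then finish the argument. The first is a transitivity property of the relative ranking map: for a leaf set $A_i\subseteq A$, the relative ranking of $A_i$ computed inside $\phi_A(\sigma)$ coincides with its relative ranking $\phi_{A_i}(\sigma)$ computed from the full ranking. This is immediate from the definition of the relative ranking map, which preserves the relative order of items within a set (mirroring the order preservation of Lemma~\ref{lem:relativerankpreservation}), so the $d$ factors appearing after substitution are exactly the $f_i(\phi_{A_i}(\sigma))$ required by the definition. The second, and the main obstacle, is to show that the three interleaving terms collapse into a single term $m^{(d)}(\tau^{(d)}(\sigma))$: I would establish the $d$-way analog of Lemma~\ref{lem:tau}, namely that the triple $(\tau_{A,B}(\sigma),\,\tau^{A}(\phi_A(\sigma)),\,\tau^{B}(\phi_B(\sigma)))$ is in bijection with the flat interleaving $\tau^{(d)}(\sigma)$. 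Intuitively the top-level $A/B$ pattern, together with the within-$A$ and within-$B$ block patterns, determines and is determined by the full per-rank block assignment, so composing the interleavings is just refining the partition of the rank positions.

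Defining $m^{(d)}$ as the pushforward product of $m$, $m_A$, and $m_B$ under this bijection then completes the factorization. Nonnegativity is immediate, and summing $h$ over all rankings and using the $d$-way bijection to factor the sum as $\sum_{\tau^{(d)}} m^{(d)}(\tau^{(d)})\prod_{i=1}^d\big(\sum_{\pi_i} f_i(\pi_i)\big)$ shows $m^{(d)}$ normalizes correctly, so that $h$ is a genuine $d$-way decomposition. I expect the bijection of interleavings to be the only step requiring genuine care, since the order-preservation and independence bookkeeping is routine once that composition is pinned down.
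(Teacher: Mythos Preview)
Your proposal is correct and follows essentially the same argument as the paper: split at the root, apply the inductive hypothesis to each subtree's relative ranking distribution, then use transitivity of the relative ranking map ($\phi_{A_i}(\phi_A(\sigma))=\phi_{A_i}(\sigma)$) together with the fact that the nested interleavings determine the flat $d$-way interleaving. The only cosmetic difference is that the paper inducts on $n$ (so $|A|,|B|<n$ automatically) while you induct on $d$; both work, and your treatment of the interleaving bijection and normalization is somewhat more explicit than the paper's, which simply absorbs the three interleaving factors into a single $\tilde{m}(\tau_{A_1,\dots,A_{d_1},B_1,\dots,B_{d_2}})$ without further comment.
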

\begin{proof}
We proceed by induction.  Suppose the result holds for $S_{n'}$
for all $n'<n$. We want to establish that the result also holds for $S_n$.
If $h$ factors according to a hierarchical riffle independent model, then it
can be written as $h=m\cdot f_A\cdot g_B$, where $m$ is the interleaving
distribution, and $f_A$, $g_B$ themselves factor as hierarchical
riffle independent distributions with, say, $d_1$ and $d_2$ leaf
sets, respectively (where $d_1+d_2=d$).  By the hypothesis, since $|A|,|B|<n$,
we can factor both $f_A$ and $g_B$ as $d_1$ and $d_2$-way decompositions
respectively.
We can therefore write $f_A$ and $g_B$ as:\vspace{-3mm}

{\footnotesize
\begin{align*}
f_A(\pi_A) = m_A(\tau_{A_1,\dots,A_{d_1}})
        \cdot \prod_{i=1}^{d_1} f_{A_i} \left(\phi_{A_i}(\pi_A)\right),\;\;
g_B(\pi_B) = m_B(\tau_{B_1,\dots,B_{d_2}})
        \cdot \prod_{i=1}^{d_2} g_{B_i} \left(\phi_{B_i}(\pi_B)\right).
\end{align*}\vspace{-3mm}
}

\noindent Substituting these decompositions into the factorization of the
distribution $h$,  we have:\vspace{-3mm}

{\footnotesize\allowdisplaybreaks
\begin{align*}
h(\sigma) &= m(\tau_{A,B}(\sigma))f_A(\phi_A(\sigma))g_B(\phi_B(\sigma)), \\
        &= \left(m(\tau_{A,B}(\sigma))m_A(\tau_{A_1\dots,A_{d_1}})m_B(\tau_{B_1,\dots,B_{d_2}})\right) \\
	&\qquad\qquad\cdot\prod_{i=1}^{d_1} f_{A_i}\left(\phi_{A_i}(\phi_A(\sigma))\right)\prod_{i=1}^{d_2} g_{B_i}\left(\phi_{B_i}(\phi_B(\sigma))\right), \\
        &= \tilde{m}(\tau_{A_1,\dots,A_{d_1},B_1\dots,B_{d_2}}) 
		\cdot\prod_{i=1}^{d_1} f_{A_i}\left(\phi_{A_i}(\sigma)\right) \prod_{i=1}^{d_2} g_{B_i}\left(\phi_{B_i}(\sigma)\right),
\end{align*}\vspace{-3mm}
}

\noindent where the last line follows because any legitimate
interleaving of the sets $A$ and $B$
is also a legitimate interleaving of the
sets $A_1,\dots,A_{d_1},B_1,\dots,B_{d_2}$
 and since $\phi_{A_i}(\phi_A(\sigma))=\phi_{A_i}(\sigma)$.
This shows that the distribution $h$ factors as a $d_1+d_2$-way
decomposition, and concludes the proof.
\end{proof}
In general, knowing the hierarchical decomposition
of a model is more desirable than knowing its $d$-way
decomposition which may require many more parameters 
$\left(O(\frac{n!}{\prod_i d_i!}), \mbox{where $i$ indexes over leaf sets}\right)$.
For example, the $n$-way decomposition requires $O(n!)$ parameters
and captures every distribution over permutations.

\subsection{Thin chain models}\label{sec:thinchains}
There is a class of particularly simple hierarchical models
which we will refer to as $k$-thin chain models.
By a $k$-thin chain model, we refer to a hierarchical structure
in which the size of the
smaller set at each split in the
hierarchy is fixed to be a constant %$k\sim O(1)$
and can therefore be expressed as:
\begin{align*}
(A_1\perp_m (A_2 \perp_m (A_3\perp_m \dots))), \; |A_i|=k,\mbox{for all $i$}.
\end{align*}
See Figure~\ref{fig:tree3} for an example of $1$-thin chain.
We view thin chains as being somewhat analogous to
thin junction tree models~\citep{bach01}, in which
cliques are never allowed to have more than $k$ variables.
When $k\sim O(1)$, for example, the number of model parameters scales polynomially in $n$.
To draw rankings from a thin chain model, one sequentially
inserts items independently,
one group of size $k$ at a time, into the full ranking.

\begin{theorem}\label{thm:suffstats}
The $k^{th}$ order marginals %of a $k$-thin chain model
are sufficient statistics for a $k$-thin chain model.
\end{theorem}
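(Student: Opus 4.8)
The plan is to show that the map from a set of training rankings to the maximum likelihood parameters of a $k$-thin chain factors through the collection of $k^{th}$ order marginals; by the factorization criterion for sufficiency this is exactly what we need. I would argue by induction on the number of leaf sets (equivalently, on $n$), exploiting the recursive structure of the chain together with the marginal preservation guarantee of Theorem~\ref{thm:bandlimiting}.

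For the base case, a chain with a single leaf set of size at most $k$ is simply a distribution on $S_{n'}$ with $n'\le k$, whose $k^{th}$ order marginals coincide with the distribution itself, so sufficiency is immediate. For the inductive step, I would write the top split of the chain as $h = m_1 * (f_{A_1}\cdot g_{B_1})$, where $|A_1| = k$ and $g_{B_1}$ is itself a $k$-thin chain on the remaining $n-k$ items (Definition~\ref{def:riffledindep2}). The two top-level factors are recoverable directly from the joint absolute-rank distribution of the $k$ items in $A_1$, which is itself a $k^{th}$ order marginal of $h$: since the interleaving and the relative ranking of $A_1$ are independent (Lemma~\ref{lem:tau}), summing this marginal over the orderings of $A_1$ returns the interleaving distribution $m_1$ (a distribution over which $k$-subset of ranks $A_1$ occupies), while retaining only the relative order returns $f_{A_1}$. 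Hence both $m_1^{MLE}$ and $f_{A_1}^{MLE}$ are functions of the $k^{th}$ order marginals alone.

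It then remains to recover the parameters of the subchain $g_{B_1}$, and here I would invoke Theorem~\ref{thm:bandlimiting}: given enough Fourier terms to reconstruct the $k^{th}$ order marginals of $h$, RiffleSplit returns enough to reconstruct the $k^{th}$ order marginals of $g_{B_1}$. By the inductive hypothesis the $k^{th}$ order marginals are sufficient for the $(n-k)$-item chain $g_{B_1}$, so its parameters are determined as well, completing the induction. The main obstacle to watch is the top-level interleaving factor $m_1$: unlike the relative ranking factor, which lives on $S_k$ and is trivially captured, $m_1$ ranges over all $\binom{n}{k}$ rank subsets, and it is recoverable from a $k^{th}$ order marginal \emph{only} because the smaller set has size exactly $k$, so that its full absolute-rank distribution is a genuine $k^{th}$ order marginal. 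This is precisely where the thinness assumption enters, and it also clarifies why $k$-thin chains, rather than thicker hierarchies, admit $k^{th}$ order sufficient statistics.
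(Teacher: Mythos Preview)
Your approach is essentially the same as the paper's: the paper records the theorem simply as a ``Corollary of Theorem~\ref{thm:bandlimiting}'', and your inductive argument is precisely the unpacking of that corollary---recursively applying RiffleSplit down the chain, with the base case handled because each leaf factor lives on $S_{k'}$ with $k'\le k$. Your treatment is in fact more complete than the paper's one-liner, since you explicitly address the interleaving factor $m_1$ at each level (which Theorem~\ref{thm:bandlimiting} as stated only covers implicitly), correctly observing that the full absolute-rank distribution of the size-$k$ set $A_1$ is itself a $k^{th}$-order marginal and determines $m_1$. One minor citation slip: the independence of $\tau_{A,B}(\sigma)$ and $\phi_{A_1}(\sigma)$ is not Lemma~\ref{lem:tau} (which is purely a decomposition statement) but rather the riffled-independence assumption itself (Definition~\ref{def:riffledindep2}).
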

\begin{proof}
Corollary of Theorem~\ref{thm:bandlimiting}
\end{proof}

%\subsection{Relation to classical models}

\begin{figure*}[t!]
\begin{center}
\includegraphics[width=.28\textwidth]{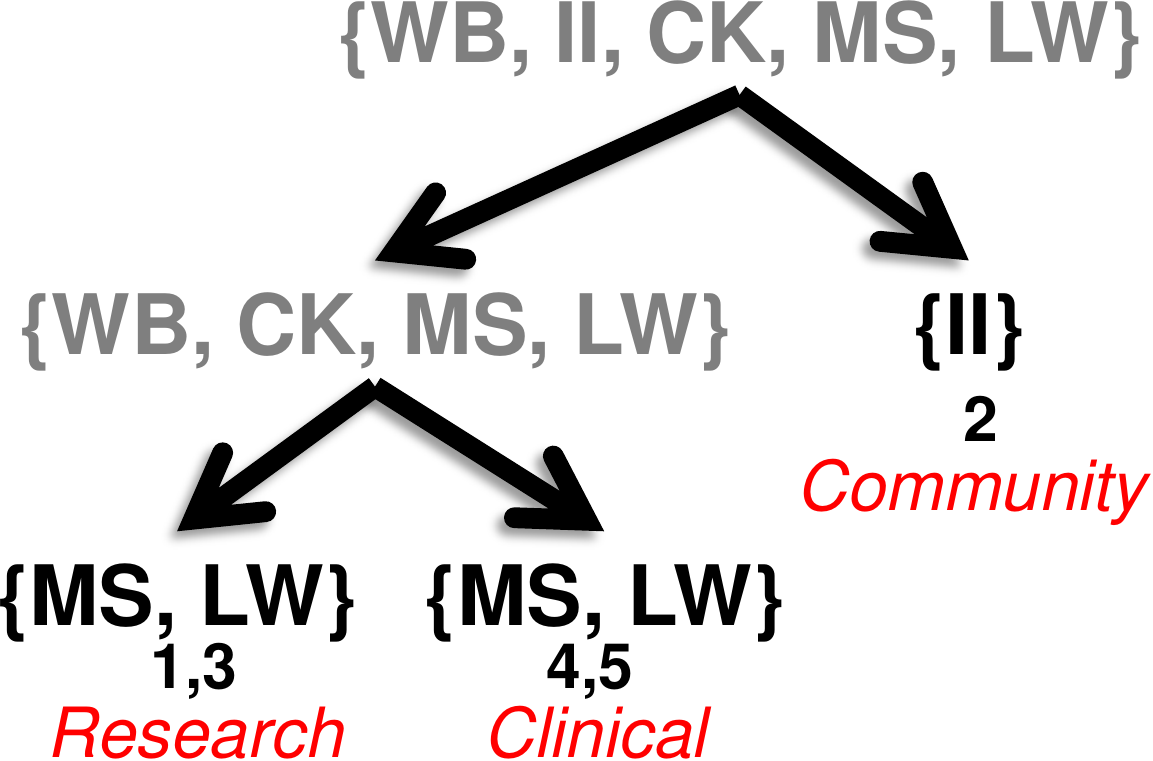}
\caption{Hierarchical structure learned from APA data.}
   \label{fig:apatree}
\end{center}
\end{figure*}
\begin{example}[APA election data (continued)]\label{ex:apahierarchy}
The APA, as described by~\cite{diaconis89}, is divided
into ``\emph{academicians and clinicians who are on uneasy terms}''.
%\Jon{what about the community psychologists?}
In 1980, candidates $\{1,3\}$ (W. Bevan and C. Kiesler
who were research psychologists) and 
$\{4,5\}$ (M.Siegle and L. Wright, who were clinical 
psychologists) fell on opposite
ends of this political spectrum with candidate 2 (I. Iscoe) 
being somewhat independent.
Diaconis conjectured that voters choose one group over the other, and then
choose within.  We are now able to verify Diaconis' conjecture using our
riffled independence framework.
After removing candidate 2 from the distribution, we perform
a search within candidates $\{1,3,4,5\}$ to again find \emph{nearly}
riffle independent subsets.  We find that
$A=\{1,3\}$ and $B=\{4,5\}$ are very nearly riffle independent (with 
respect to KL divergence) and
thus are able to verify that candidate sets $\{2\}$, $\{1,3\}$,
$\{4,5\}$ are indeed grouped in a riffle independent sense
in the APA data.  We remark that in a later work, ~\cite{marden95}
identified candidate 2 (I. Iscoe)
as belonging to yet a third group of psychologists
called \emph{community psychologists}.  The hierarchical structure 
that best describes the APA data is shown in Figure~\ref{fig:apatree}
and the KL-divergence from the true distribution 
to the hierarchical model is $d_{KL}=.0676$.

Finally for the two main opposing groups within the APA,
the riffle shuffling distribution for sets $\{1,3\}$ and $\{4,5\}$
is not well approximated by a biased riffle shuffle.  
Instead, since there are two coalitions, 
we fit a mixture of two biased riffle shuffles to the data
and found the bias parameters of the mixture components to be
$\alpha_1\approx.67$ and $\alpha_2\approx .17$, indicating that the two
components oppose each other (since $\alpha_1$ and $\alpha_2$ lie on
either side of $.5$).
\end{example}

\section{Structure discovery I: objective functions}\label{sec:structure1}
Since different hierarchies impose different
independence assumptions, we would like to find the structure
that is best suited for modeling a given ranking dataset.
On some datasets, a natural hierarchy might be available ---
for example, if one were familiar with the typical politics of APA elections, 
then it may have been possible to ``guess'' the optimal hierarchy.
However, for general ranked data, it is not always obvious 
what kind of groupings riffled independence will lead to, particularly
for large $n$.
Should fruits really be riffle independent of vegetables?  Or
are green foods riffle independent of red foods?

Over the next three sections, we address the problem of 
automatically discovering hierarchical 
riffle independent structures from training data.
Key among our observations is the fact that
while item ranks cannot be independent due to mutual
exclusivity, relative ranks between sets of items are not
subject to the same constraints.
More than simply being a `clustering' algorithm, however, our procedure
can be thought of as a structure learning algorithm, like those from the
graphical models literature~\cite{koller09}, which find
the optimal (riffled) independence decomposition of a distribution.

The base problem that we address in this current section is how to
find the best structure if there is only one level of partitioning
and two leaf sets, $A$, $B$.  Alternatively, we want to find the topmost
partitioning of the tree.  In Section~\ref{sec:structure2}, we use
this base case as part of a top-down approach for learning a full hierarchy.
\subsection{Problem statement}
Given then, a training set of rankings,
$\sigma^{(1)}$, $\sigma^{(2)}$, $\dots,\sigma^{(m)}\sim h$, drawn i.i.d.
from a distribution in which a
subset of items, $A\subset\{1,\dots,n\}$,
is riffle independent of its complement, $B$, the problem which 
we address in this section is that of
automatically determining the sets $A$ and $B$.
If $h$ does not \emph{exactly} factor riffle independently, then we
would like to find the riffle independent approximation
which is \emph{closest} to $h$ in some sense.  Formally, we
would like to solve the problem:
\begin{align}\label{eqn:mainproblem}
\arg\min_A \min_{m,f,g}&\;\; D_{KL}(\hat{h}(\sigma) \,||\, 
m(\tau_{A,B}(\sigma)) f(\phi_A(\sigma)) g(\phi_B(\sigma))
),
\end{align}
where $\hat{h}$ is the empirical distribution of
training examples and $D_{KL}$ is the Kullback-Leibler divergence measure.
Equation~\ref{eqn:mainproblem} is a seemingly reasonable
objective since it can also be interpreted as maximizing
the likelihood of the training data.
%If $A$ and $B$ are truly riffle independent, then
In the limit of infinite data, 
Equation~\ref{eqn:mainproblem} can be shown via
the Gibbs inequality to attain its minimum, zero, at
the subsets $A$ and $B$, if and only if the sets
$A$ and $B$ are truly riffle independent of each other.

For small problems, % (roughly $n\leq 10$), 
one can actually solve Problem~\ref{eqn:mainproblem} using
a single computer by evaluating the approximation
quality of each subset $A$ and taking the minimum, which
was the approach taken in Example~\ref{ex:apahierarchy}. %~\cite{huangetal09c}.
However, for larger problems, one runs into time and sample
complexity problems
since optimizing the globally defined objective function
(Equation~\ref{eqn:mainproblem}) requires relearning all model parameters
($m$, $f_A$, and $g_B$)
for each of the exponentially many
subsets of $\{1,\dots,n\}$.
In fact, for large sets $A$ and $B$, it is rare that one would have enough
samples to estimate the relative ranking parameters $f_A$ and $g_B$ without
already having discovered the hierarchical 
riffle independent decompositions of $A$ and $B$.
%In the remainder,
We next propose a more locally defined
objective function, reminiscent of clustering, which we will
use instead of Equation~\ref{eqn:mainproblem}.
As we show, our new objective will be more tractable to compute and
have lower sample complexity for estimation.

\subsection{Proposed objective function}
The approach we take is to
minimize a different measure that exploits
the observation that
\emph{absolute ranks of items in $A$ are fully
independent of relative ranks of items in $B$, and vice versa} (which we
prove in Proposition~\ref{prop:micriterion}).
With our vegetables and fruits, for example, knowing that
Figs is ranked first among all six items (the absolute rank of a fruit)
should give no information about whether Corn is preferred
to Peas (the relative rank of vegetables).
More formally, given a subset $A=\{a_1,\dots,a_\ell\}$, recall that
$\sigma(A)$ denotes 
the vector of (absolute) ranks assigned to items in $A$ by $\sigma$
(thus, $\sigma(A) = (\sigma(a_1),\sigma(a_2),\dots,\sigma(a_\ell))$).
We propose to minimize an alternative objective function:
\begin{equation}\label{eqn:micriterion}
\mathcal{F}(A) \equiv I(\sigma(A)\;;\;\phi_B(\sigma)) 
        +I(\sigma(B)\;;\;\phi_A(\sigma)),
\end{equation}
where $I$ denotes the mutual information (defined between
two variables $X_1$ and $X_2$
by $I(X_1;X_2)\equiv D_{KL}(P(X_1,X_2) || P(X_1)P(X_2))$.

The function $\mathcal{F}$ does not have the same
likelihood interpretation
as the objective function of Equation~\ref{eqn:mainproblem}.
However, it can be thought of as a composite likelihood
of two models, one in which the relative rankings of $A$ are independent
of absolute rankings of $B$, and one in which the relative rankings of $B$
are independent of absolute rankings of $A$ 
(see Appendix~\ref{sec:likelihood}).
With respect to distributions which satisfy (or approximately satisfy)
both models (i.e., the riffle independent distributions),
minimizing $\mathcal{F}$ \emph{is} equivalent to 
(or approximately equivalent to) maximizing the log likelihood of the data.
Furthermore, we can show that $\mathcal{F}$ is guaranteed
to detect riffled independence:
\begin{proposition}\label{prop:micriterion}
$\mathcal{F}(A)=0$ is
a necessary and sufficient criterion for a subset $A\subset\{1,\dots,n\}$ to be riffle independent
of its complement, $B$.
\end{proposition}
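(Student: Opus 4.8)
The plan is to reparametrize everything through the unique decomposition of Lemma~\ref{lem:tau}, which lets me replace the absolute-rank variables $\sigma(A),\sigma(B)$ by the interleaving $\tau=\tau_{A,B}(\sigma)$ together with the relative rankings $\pi_p=\phi_A(\sigma)$ and $\pi_q=\phi_B(\sigma)$. The first step is a purely combinatorial observation: the absolute ranks $\sigma(A)$ and the pair $(\tau,\pi_p)$ determine one another bijectively. Indeed, $\tau$ records exactly which ranks are occupied by items of $A$ (and, by complementation, which are occupied by $B$), while $\pi_p$ records the internal order of $A$; conversely $\sigma(A)$ recovers the occupied-rank set $\{\sigma(a):a\in A\}$ and recovers $\pi_p$ by sorting. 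The same holds with $B$ in place of $A$. Since mutual information is invariant under a bijective reparametrization of either argument, this gives
\[
\mathcal{F}(A)=I\bigl((\tau,\pi_p)\,;\,\pi_q\bigr)+I\bigl((\tau,\pi_q)\,;\,\pi_p\bigr),
\]
so the entire question reduces to a statement about the joint law of the three variables $(\tau,\pi_p,\pi_q)$ induced by $h$.

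Next I would dispatch the sufficiency direction, which is easy. If $A$ and $B$ are riffle independent, then Definition~\ref{def:riffledindep2} together with the bijection of Lemma~\ref{lem:tau} says that the joint law factors as $P(\tau,\pi_p,\pi_q)=m(\tau)\,f_A(\pi_p)\,g_B(\pi_q)$; that is, $\tau,\pi_p,\pi_q$ are mutually independent. Mutual independence immediately makes $(\tau,\pi_p)$ independent of $\pi_q$ and $(\tau,\pi_q)$ independent of $\pi_p$, so both mutual-information terms vanish and $\mathcal{F}(A)=0$.

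For the necessity direction I use nonnegativity of mutual information: $\mathcal{F}(A)=0$ forces each summand to vanish separately, giving the two factorizations $P(\tau,\pi_p,\pi_q)=P(\tau,\pi_p)\,P(\pi_q)$ and $P(\tau,\pi_p,\pi_q)=P(\tau,\pi_q)\,P(\pi_p)$. The crux is to bootstrap these two ``conditional'' independences into full mutual independence. Marginalizing the second identity over $\pi_q$ yields $P(\tau,\pi_p)=P(\tau)\,P(\pi_p)$ (i.e.\ $\tau\perp\pi_p$); substituting this back into the first identity gives $P(\tau,\pi_p,\pi_q)=P(\tau)\,P(\pi_p)\,P(\pi_q)$, which is exactly the factorization of Definition~\ref{def:riffledindep2} with $m=P_\tau$, $f_A=P_{\pi_p}$, $g_B=P_{\pi_q}$. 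Hence $A$ is riffle independent of $B$.

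The main obstacle is the first step --- carefully verifying the bijection between $\sigma(A)$ and $(\tau,\pi_p)$ and invoking it to rewrite $\mathcal{F}$ --- since everything after that is a short information-theoretic manipulation. I should also be careful to read the statement at the population level (the proposition concerns the true $h$, not the empirical $\hat{h}$), so that the mutual informations are genuine population quantities; the marginalization argument in the necessity step is where I must confirm that no positivity or full-support assumptions sneak in, but since it uses only summation identities it goes through for an arbitrary distribution $h$.
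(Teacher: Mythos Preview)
Your proposal is correct and follows essentially the same approach as the paper: both arguments hinge on the observation that $\sigma(A)$ is in bijection with $(\tau_{A,B}(\sigma),\phi_A(\sigma))$ (and symmetrically for $B$), then use the two resulting independence statements $(\tau,\pi_p)\perp\pi_q$ and $\pi_p\perp(\tau,\pi_q)$ to conclude mutual independence of $\tau,\pi_p,\pi_q$. Your write-up is in fact slightly more explicit than the paper's, which simply asserts ``It then follows that $\phi_A(\sigma)\perp\tau_{A,B}(\sigma)\perp\phi_B(\sigma)$'' without spelling out the marginalize-and-substitute step that you provide.
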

\begin{proof}
Suppose $A$ and $B$ are riffle independent.
We first claim that $\sigma(A)$ and $\phi_B(\sigma)$ are independent.
To see this, observe that the absolute ranks of $A$, $\sigma(A)$, are
determined by the relative rankings of $A$, $\phi_A(\sigma)$
and the interleaving
$\tau_{A,B}(\sigma)$.
By the assumption that $A$ and $B$ are riffle independent,
we know that the relative rankings of $A$
and $B$ ($\phi_A(\sigma)$ and $\phi_B(\sigma)$), and the interleaving
$\tau_{A,B}(\sigma)$ are independent, establishing the claim.
The argument that $\sigma(B)$ and $\phi_A(\sigma)$ are independent is
similar, thus establishing one direction of the proposition.

To establish the reverse direction, assume that
Equation~\ref{eqn:micriterion} evaluates to zero on sets $A$ and $B$.
It follows that $\sigma(A)\perp \phi_B(\sigma)$ and $\phi_A(\sigma)\perp\sigma(B)$.
Now, as a converse to the observation from above, note that the absolute ranks of $A$
\emph{determine} the relative ranks of $A$, $\phi_A(\sigma)$, as well as the interleaving
$\tau_{A,B}(\sigma)$.  Similarly, $\sigma(B)$ determines $\phi_B(\sigma)$ and $\tau_{A,B}(\sigma)$.
Thus, $\left(\phi_A(\sigma),\tau_{A,B}(\sigma)\right)\perp \phi_B(\sigma)$
and $\phi_A(\sigma)\perp \left(\tau_{A,B}(\sigma),\phi_B(\sigma)\right)$.
It then follows that
$\phi_A(\sigma)\perp \tau_{A,B}(\sigma)\perp \phi_B(\sigma)$.
\end{proof}
%\Jon{The other thing that needs to be mentioned is why 
%we can't just check to see that the relative ranks of $A$ are independent
%of the relative ranks of $B$.  It is certainly a necessary condition,
%but why not sufficient? Any distribution in which the interleaving
%depends on either of the relative rankings is a counterexample.  For example,}
%
As with Equation~\ref{eqn:mainproblem},
optimizing $\mathcal{F}$ is still
intractable for large $n$. However, $\mathcal{F}$ motivates
a natural proxy, in which we replace the mutual informations
defined over all $n$ variables by a sum of mutual informations defined
over just three variables at a time.
\begin{definition}[Tripletwise mutual informations]
Given any triplet of distinct items, $(i,j,k)$,
we define the tripletwise mutual information term, 
$I_{i;j,k}\equiv I(\sigma(i)\,;\,\sigma(j)<\sigma(k))$.
\end{definition}
The tripletwise mutual information $I_{i;j,k}$ can be 
computed as follows:\vspace{-3mm}

{\footnotesize
\begin{equation*}
I(\sigma(i)\,;\,\sigma(j)<\sigma(k))
	= \sum_{\sigma(i)} \sum_{\sigma(j)<\sigma(k)} h(\sigma(i),\sigma(j)<\sigma(k))\log \frac{h(\sigma(i),\sigma(j)<\sigma(k))}{h(\sigma(i))h(\sigma(j)<\sigma(k))},
\end{equation*}\vspace{-3mm}
}

\noindent where the inside summation runs over two values, true/false, for the binary variable $\sigma(j)<\sigma(k)$.
To evaluate how riffle independent two subsets $A$ and $B$ are,
we want to examine the triplets that straddle the two sets.
\begin{definition}[Internal and Cross triplets]
We define $\ABedge$ to be the set of triplets which ``cross''
from set $A$ to set $B$:
$
\ABedge \equiv \{(i;j,k)\,:\,i\in A, j,k\in B \}.
$
$\BAedge$ is similarly defined.
We also define $\Aedge$ to be the set of triplets that are internal to $A$:
$
\Omega^{int}_A \equiv \{(i;j,k)\,:\,i,j,k\in A\},
$
and again, $\Bedge$
is similarly defined.
\end{definition}
Our proxy objective function can be written as the sum of the
mutual information evaluated over all of the crossing triplets:\vspace{-3mm}

{\footnotesize
\begin{equation}\label{eqn:floworder}
\FF(A) \equiv \sum_{(i,j,k)\in \ABedge} I_{i;j,k} 
        + \sum_{(i,j,k)\in \BAedge} I_{i;j,k}.
\end{equation}\vspace{-3mm}
}

\begin{figure*}[t!]
\begin{center}
\subfigure[]{
\raisebox{20pt}{
\includegraphics[width=.4\textwidth]{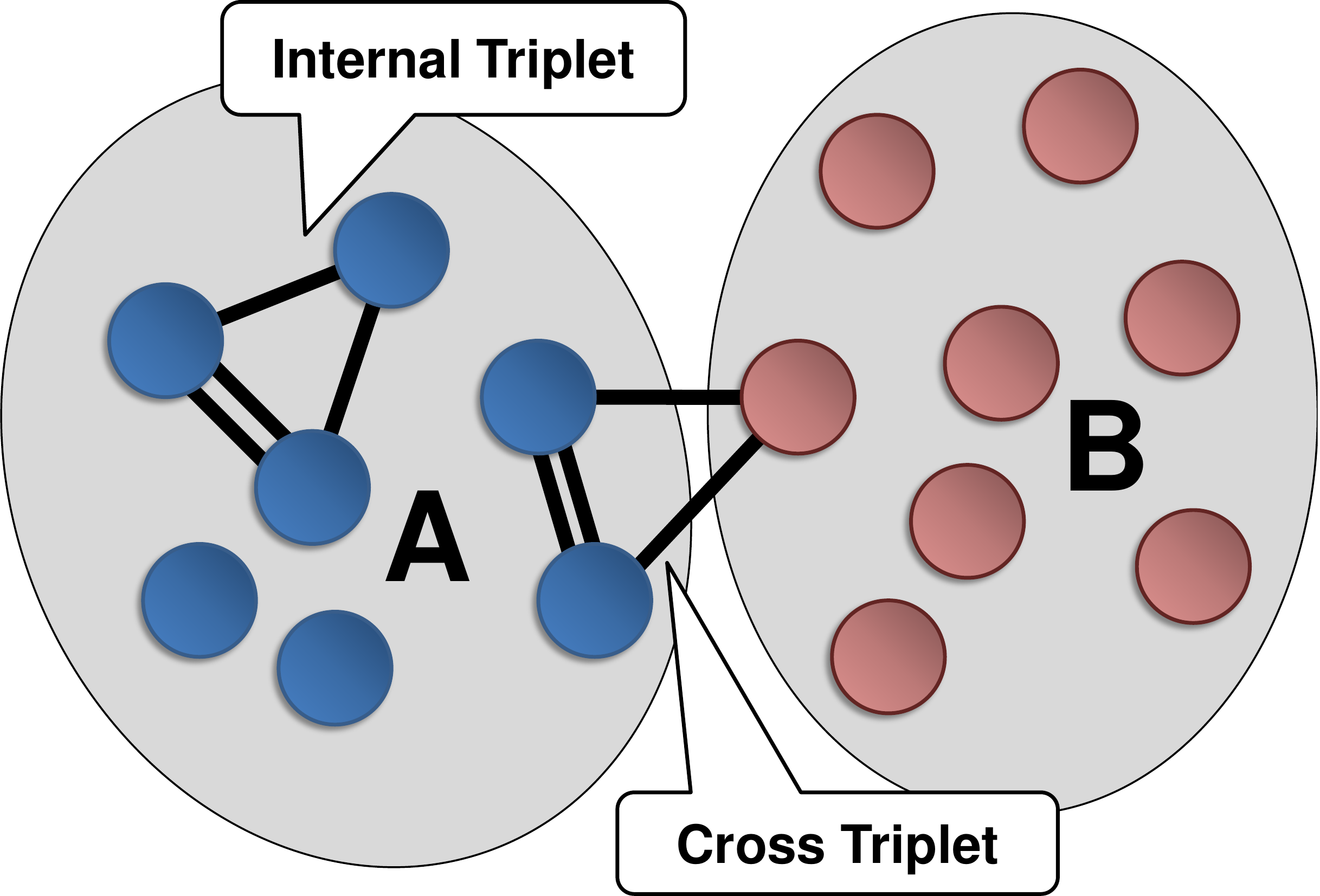}
\label{fig:diagram}
}
}\qquad\qquad
\subfigure[]{
\includegraphics[width=.4\textwidth]{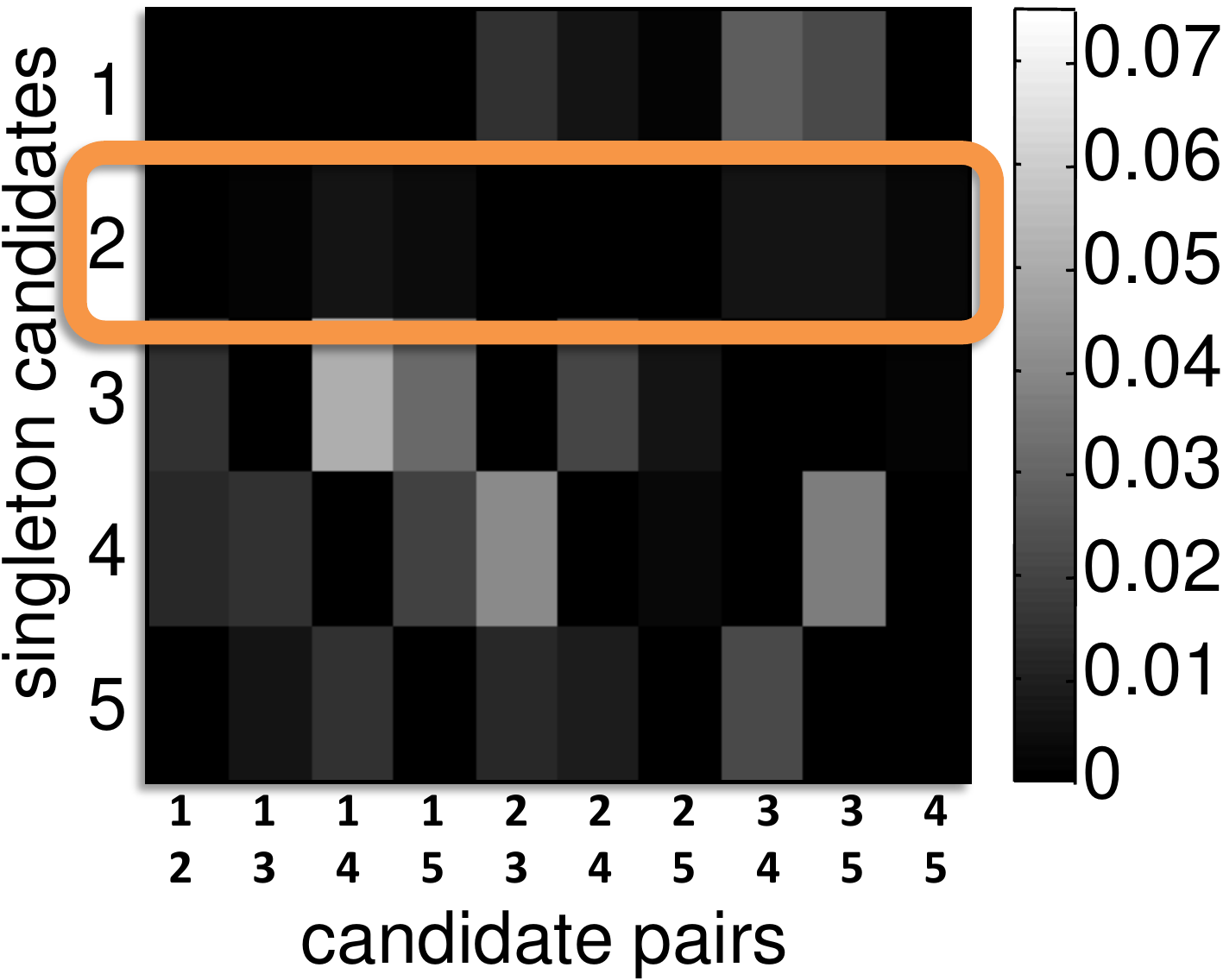}
\label{fig:apamutualinfo}
}
\caption{Examples: \subref{fig:diagram} shows a 
graphical depiction of the problem of finding
riffle independent subsets.  A triangle with vertices $(i,j,k)$
represents the term $I_{i;j,k}$
%Note the similarities to clustering.
Since the $I_{i;j,k}$ are not invariant with respect to a permutation of 
the indices $i$, $j$, and $k$, the triangles are directed, and we therefore use
double bars represent the nodes $j,k$ for the term $I_{i;j,k}$.
Note that if the tripletwise terms were instead replaced by edgewise terms,
the problem would simply be a standard clustering problem;
\subref{fig:apamutualinfo} shows the matrix of tripletwise mutual informations computed from the APA dataset (see Example~\ref{ex:apami}).}
\end{center}
\end{figure*}
$\FF$ can be viewed as a low order version of $\mathcal{F}$,
involving mutual information computations over triplets
of variables at a time instead of $n$-tuples.  The mutual
information $I_{i;j,k}$, for
example, reflects how much the rank of a vegetable ($i$)
tells us about how two fruits ($j$, $k$) compare.
If $A$ and $B$ are riffle independent, then we know that $I_{i;j,k}=0$
for any $(i,j,k)$ such that $i\in A$, $j,k\in B$ (and similarly
for any $(i,j,k))$ such that $i \in B$, $j,k\in A$.
Given that fruits and vegetables are riffle independent sets, 
knowing that Grapes is preferred to Figs should give no
information about the absolute rank of Corn,
and therefore $I_{Corn; Grapes, Figs}$ should be zero.
Note that such tripletwise independence assertions bear resemblance to
assumptions sometimes made in social choice theory,
commonly referred to as \emph{Independence of Irrelevant Alternatives}
~\citep{arrow63},
where the addition of a third element $i$, is assumed to not affect 
whether one prefers an element $j$ over $k$.

The objective $\FF$ is somewhat
reminiscent of typical graphcut
and clustering objectives. Instead of partitioning a set of nodes
based on sums of pairwise similarities, we partition based on sums
of tripletwise affinities.  We show a graphical depiction of the problem
in Figure~\ref{fig:diagram}, where cross triplets (in $\ABedge$, $\BAedge$)
have low weight and internal triplets
(in $\Aedge$, $\Bedge$) have high weight.
The objective is to find a partition such that the sum over cross triplets
is low.
In fact, the problem of
optimizing $\FF$ can be seen as an instance of the weighted,
directed hypergraph cut problem~\citep{galloetal93}.
Note that the word \emph{directed} is significant for us, because,
unlike typical clustering problems, our triplets
are not symmetric
(for example, $I_{i;jk}\neq I_{j;ik}$), resulting in a nonstandard
and poorly understood optimization problem.
%\Jon{Many of the recent papers on hypergraph cut (cite some)
%are nonapplicable to our problem precisely because it is not 
%clear how one would extend them to handle directed hypergraphs}
\begin{example}[APA election data (continued)]\label{ex:apami}
Figure~\ref{fig:diagram} visualizes the tripletwise mutual informations
computed from the APA dataset.
Since there are five candidates, there are ${5\choose 2}=10$ pairs
of candidates.  The $(i,(j,k))$ entry in the matrix corresponds to
$I(\sigma(i);\sigma(j)<\sigma(k))$.  For easier visualization, we have 
set entries of the form $(i,(i,k))$ and $(i,(j,i))$ to be zero since
they are not counted in the objective function.

The highlighted row corresponds to candidate 2, in which all of the 
mutual information terms are close to zero.  
We see that the tripletwise mutual information terms tell a story
consistent with the conclusion of Example~\ref{ex:removecandidate2}, 
in which we showed that candidate 2 was
approximately riffle independent of the remaining candidates.

Finally, it is also interesting to examine the $(3,(1,4))$ entry.
It is the largest mutual information in the matrix, a fact which should 
not be surprising since candidates 1 and 3 are politically aligned
(both research psychologists).  Thus, knowing, for example, that 
candidate 3 was ranked first is a strong indication that candidate
1 was preferred over candidate 4.
\end{example}

\subsection{Encouraging balanced partitions}
In practice, like the minimum cut
objective for graphs, the tripletwise objective
of Equation~\ref{eqn:floworder} has a tendency to ``prefer''
small partitions (either $|A|$ or $|B|$ very small) to more balanced partitions
($|A|,|B|\approx n/2$) due to the fact that unbalanced partitions
have fewer triplets that cross between $A$ and $B$.
The simplest way to avoid this bias is to optimize the objective
function over subsets of a fixed size $k$.  As we discuss in the
next section, optimizing with a fixed $k$
can be useful for building thin hierarchical riffle independent models.
Alternatively, one can use a modified objective function that encourages
more balanced partitions.
For example, we have found the following \emph{normalized
cut}~\citep{shi00} inspired
variation of our objective to be useful for detecting riffled independence when the
size $k$ is unknown:\vspace{-3mm}

{\footnotesize
\begin{align}
\mathcal{F}^{balanced}(A) \equiv &
\frac{\sum_{\ABedge} I_{i;j,k}}{\sum_{\ABedge} I_{i;j,k}+\sum_{\Aedge} I_{i;j,k}}  +\frac{\sum_{\BAedge} I_{i;j,k}}{\sum_{\BAedge} I_{i;j,k}+\sum_{\Bedge} I_{i;j,k}}
.\label{eqn:balanced}
\end{align}\vspace{-3mm}
}

\noindent Intuitively, the denominator in Equation~\ref{eqn:balanced}
penalizes subsets whose interiors have small weight.
Note that there exist many variations on the objective function
that encourage balance, but $\mathcal{F}^{balanced}$ is the
one that we have used in our experiments.

\subsection{Low-order detectability assumptions.}
When does $\FF$ detect riffled independence?
It is not difficult to see, for example, that $\FF=0$
is a necessary condition for riffled independence, since
$A\perp_m B$ implies $I_{a;b,b'}=0$.  We have:
\begin{proposition}\label{prop:ffnecessity}
If $A$ and $B$ are riffle independent sets, then $\FF(A)=0$.
\end{proposition}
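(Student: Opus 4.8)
The plan is to show that every individual term in the defining sum for $\FF(A)$ (Equation~\ref{eqn:floworder}) vanishes, which immediately gives $\FF(A)=0$. Since $\FF(A)$ is a sum of the nonnegative mutual information terms $I_{i;j,k}$ ranging over the crossing triplets in $\ABedge$ and $\BAedge$, it suffices to prove that $I_{i;j,k}=I(\sigma(i)\,;\,\sigma(j)<\sigma(k))=0$ whenever $i\in A$ and $j,k\in B$, with the $\BAedge$ case following by the symmetric argument.

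First I would argue that, for $j,k\in B$, the binary variable $\sigma(j)<\sigma(k)$ is a deterministic function of the relative ranking $\phi_B(\sigma)$. This holds essentially by definition of the relative ranking map: $\phi_B$ records the order of the items of $B$ among themselves, so $j$ precedes $k$ in $\sigma$ if and only if $j$ precedes $k$ in $\phi_B(\sigma)$ (the within-subset analogue of the order preservation established in Lemma~\ref{lem:relativerankpreservation}). Likewise, $\sigma(i)$ is by definition a coordinate of the absolute-rank vector $\sigma(A)$. Next I would invoke the forward direction of Proposition~\ref{prop:micriterion}, which already establishes that when $A$ and $B$ are riffle independent the absolute ranks $\sigma(A)$ are independent of the relative ranking $\phi_B(\sigma)$. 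Because $\sigma(i)$ is a function of $\sigma(A)$ and the indicator of $\sigma(j)<\sigma(k)$ is a function of $\phi_B(\sigma)$, these two variables are themselves independent, hence have zero mutual information. This yields $I_{i;j,k}=0$ for every triplet in $\ABedge$; repeating with the roles of $A$ and $B$ exchanged (using $\sigma(B)\perp\phi_A(\sigma)$) handles $\BAedge$, and summing over all crossing triplets gives $\FF(A)=0$.

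There is essentially no hard step here: the statement is a direct corollary of Proposition~\ref{prop:micriterion} together with the observation that a pairwise comparison within $B$ depends only on the relative ranking of $B$. The only point requiring any care is the functional-dependence claim --- that the event $\sigma(j)<\sigma(k)$ factors through $\phi_B(\sigma)$ --- since this is exactly what lets us descend from the $n$-variable independence asserted by Proposition~\ref{prop:micriterion} down to the tripletwise mutual information. Once that is in place, the nonnegativity of mutual information and the elementary fact that independence implies vanishing mutual information close the argument.
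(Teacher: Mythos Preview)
Your proposal is correct and follows exactly the approach the paper sketches: the paper states only that ``$A\perp_m B$ implies $I_{a;b,b'}=0$'' before giving the proposition, and your argument fills in precisely the details behind that one-line claim (functional dependence of $\sigma(j)<\sigma(k)$ on $\phi_B(\sigma)$, then invoking Proposition~\ref{prop:micriterion}). There is nothing to add.
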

However, the converse of
Proposition~\ref{prop:ffnecessity} is not true in full generality
without accounting
for dependencies that involve larger subsets of variables.
Just as the pairwise independence assumptions
that are commonly used for randomized algorithms~\citep{motwani96}\footnote{
A pairwise independent family of random variables
is one in which any two members are marginally independent.
Subsets with larger than two members may not necessarily
factor independently, however.
}
do not imply full independence between two sets of variables,
there exist distributions which ``look'' riffle
independent from tripletwise marginals but do not
factor upon examining higher-order terms.
Nonetheless, in most practical scenarios, we expect $\FF=0$ to imply
riffled independence.

\subsection{Quadrupletwise objective functions for riffled independence}
A natural variation of our method
is to base the objective function on 
the following quantities, defined over quadruplets of items
instead of triplets:
\begin{equation}\label{eqn:quad}
I_{ij;kl} \equiv I(\sigma(i)<\sigma(j)\,;\,\sigma(k)<\sigma(\ell)).
\end{equation}
Intuitively, $I_{ij;kl}$ measures how much knowing that, say, Peas is preferred
to Corn, tells us about whether Grapes are preferred to Oranges.
Again, if the fruits and vegetables are riffle independent, then the mutual
information should be zero.
Summing over terms which cross between the cut, we obtain a quadrupletwise objective function defined as:
$
\mathcal{F}^{quad}(A)\equiv \sum_{(i,j)\in A,(k,\ell)\in B} I_{ij;kl}.
$
If $A$ and $B$ are riffle independent with 
$i,j\in A$ and $k,\ell\in B$, then the mutual information $I_{ij;kl}$
is zero.  Unlike their tripletwise counterparts, however,
the $I_{ij,kl}$ do not arise from a global measure that is 
both necessary and sufficient for detecting riffled independence.  In
particular, $I(\phi_A(\sigma);\phi_B(\sigma))=0$ is insufficient
to guarantee riffled independence.  For example, if the interleaving
depends on the relative rankings of $A$ and $B$, then riffled independence
is not satisfied, yet $\mathcal{F}^{quad}(A)=0$.
%\begin{example}
%As a counterexample, construct a distribution in which the
%interleaving depends on one of the relative rankings
%\end{example}
Moreover, it is not clear how one would detect riffle independent
subsets consisting of a single element using a
quadrupletwise measure.
As such, we have focused on tripletwise measures in our experiments.
Nonetheless, quadrupletwise measures may potentially be useful 
in practice (for detecting larger subsets) and
have the significant advantage that the $I_{ij;kl}$ can be estimated
with fewer samples and using almost any imaginable form of 
partially ranked data.

\subsection{Estimating the objective from samples}
We have so far argued that $\FF$ is a reasonable function
for finding riffle independent subsets.
However, since we only have access to samples rather than the true
distribution $h$ itself, it will only be possible to compute an
approximation to the objective $\FF$.
In particular, for every triplet
of items, $(i,j,k)$, we must compute an estimate of the mutual information $I_{i;j,k}$
from i.i.d. samples drawn from $h$, and the main question is:
how many samples will we need in order for the approximate version of
 $\FF$ to remain a reasonable objective function?

In the following, we denote the estimated value of $I_{i;j,k}$ 
by $\hat{I}_{i;j,k}$.
For each triplet, we use a regularized procedure due to~\cite{hoffgen93}
to estimate mutual information.  We adapt his sample complexity bound
to our problem below.
\begin{lemma}
\looseness -1 For any fixed triplet $(i,j,k)$, the mutual information
$I_{i;j,k}$ can be
estimated to within an accuracy of $\Delta$ with probability at least
$1-\gamma$ using
$S(\Delta,\gamma)\equiv 
O\left(\frac{n^2}{\Delta^2}\log^2\frac{n}{\Delta}\log\frac{n}{\gamma}\right)$
i.i.d. samples and the same amount of time.
\end{lemma}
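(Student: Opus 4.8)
The plan is to reduce the estimation of $I_{i;j,k}$ to a direct application of H\"offgen's entropy-estimation bound~\citep{hoffgen93}, which states that the entropy of a distribution supported on a domain of size $d$ can be estimated to within accuracy $\delta$ with confidence $1-\eta$ using $O\left(\frac{d^2}{\delta^2}\log^2\frac{d}{\delta}\log\frac{d}{\eta}\right)$ i.i.d.\ samples, and in the same order of running time, since the regularized plug-in estimate is computed in a single pass over the empirical counts. The first step is to observe that $I_{i;j,k}=I(\sigma(i)\,;\,\sigma(j)<\sigma(k))$ is a mutual information between the variable $\sigma(i)$, which takes one of $n$ values, and the binary variable $\mathds{1}[\sigma(j)<\sigma(k)]$. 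Hence the relevant joint distribution lives on a domain of size $2n=O(n)$.

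Next I would expand the mutual information in terms of three entropies, $I_{i;j,k}=H(\sigma(i))+H(\mathds{1}[\sigma(j)<\sigma(k)])-H(\sigma(i),\,\mathds{1}[\sigma(j)<\sigma(k)])$, and estimate each term separately using the regularized estimator of~\cite{hoffgen93}. Running that estimator for each of the three entropies with target accuracy $\Delta/3$ and failure probability $\gamma/3$, and then combining the three estimates by the triangle inequality, yields an estimate $\hat{I}_{i;j,k}$ satisfying $|\hat{I}_{i;j,k}-I_{i;j,k}|\leq\Delta$. A union bound over the three invocations guarantees that all three entropy estimates are simultaneously accurate with probability at least $1-\gamma$.

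It then remains to substitute the domain sizes into H\"offgen's bound. The dominating term is the joint entropy $H(\sigma(i),\,\mathds{1}[\sigma(j)<\sigma(k)])$, whose domain has size $d=2n$; the marginal entropies use $d=n$ and $d=2$ respectively and contribute only lower-order terms. Plugging $d=O(n)$, $\delta=\Delta/3$, and $\eta=\gamma/3$ into H\"offgen's expression gives $O\left(\frac{n^2}{\Delta^2}\log^2\frac{n}{\Delta}\log\frac{n}{\gamma}\right)$ samples, since the constants $2$ and $3$ are absorbed into the $O(\cdot)$ and change the logarithmic arguments only by constants. The running-time claim follows because forming the empirical counts of the joint variable and evaluating the regularized entropy estimate takes time linear in the number of samples.

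I expect the main obstacle to be bookkeeping rather than conceptual: one must verify that the domain-size parameter entering H\"offgen's bound is indeed $O(n)$ (the $n$ possible absolute ranks of item $i$, with the binary comparison contributing only a constant factor), and then confirm that splitting the accuracy and confidence budgets three ways across the entropy terms perturbs only the constants hidden inside the $O(\cdot)$. No probabilistic machinery beyond the cited bound and an elementary union bound is required.
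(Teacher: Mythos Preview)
Your proposal is correct and follows essentially the same route as the paper: invoke H\"offgen's bound with domain size $2n$ for the joint variable $(\sigma(i),\mathds{1}[\sigma(j)<\sigma(k)])$, expand the mutual information as a sum and difference of three entropies, and absorb the constant-factor splitting of accuracy and confidence into the $O(\cdot)$. The paper's own argument (in the appendix, embedded in the proof of the all-triplets version) is slightly terser about the three-way budget split but is otherwise identical.
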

The approximate objective function is therefore:\vspace{-2mm}

{\footnotesize
\[
\FFF(A) \equiv \sum_{(i,j,k)\in \ABedge} 
        \hat{I}_{i;j,k} + \sum_{(i,j,k)\in\BAedge} \hat{I}_{i;j,k}.
\]\vspace{-3mm}
}

What we want to now show is that, if there exists
a unique way to partition $\{1,\dots,n\}$ into riffle independent
sets, then given enough training examples,
our approximation $\FFF$ uniquely
singles out the correct partition as its minimum with high probability.
A class of riffle independent distributions for which the
uniqueness requirement is satisfied consists of the
distributions for which
$A$ and $B$ are \emph{strongly connected}
according to the following definition.
\begin{definition}\label{def:strongconnectivity}
A subset $A\subset\{1,\dots,n\}$ is called
\emph{$\epsilon$-third-order strongly connected}
if, for every triplet $i,j,k\in A$
with $i,j,k$ distinct, we have $I_{i;j,k}>\epsilon$.
\end{definition}
If a set $A$ is riffle independent of $B$ and both sets are
third order strongly connected, then we can ensure that
riffled independence is detectable from third-order terms
and that the partition is unique.  We have the following
probabilistic guarantee.
\begin{theorem}\label{thm:samplecomplexity}
\looseness -1 Let $A$ and $B$ be $\epsilon$-third order strongly connected
riffle independent sets, and suppose $|A|=k$.
Given $S(\Delta,\epsilon)\equiv O\left(\frac{n^4}{\epsilon^2}\log^2\frac{n}{\epsilon}\log\frac{n}{\gamma}\right)$ i.i.d. samples,
the minimum of $\FFF$ is achieved at exactly the subsets $A$ and $B$
with probability at least $1-\gamma$.
\end{theorem}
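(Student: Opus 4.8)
The plan is to show that the \emph{population} objective $\FF$ separates the planted partition from every competitor by a margin proportional to $\epsilon$ times the number of ``miscut'' triplets, and then to argue that the empirical objective $\FFF$ inherits this separation once every tripletwise mutual information is estimated accurately enough. Throughout I assume the minimization is over subsets of the fixed size $k=|A|$, as anticipated in the discussion of balanced partitions.

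First I would pin down the value of $\FF$ at the planted partition: by the observation preceding Proposition~\ref{prop:ffnecessity}, riffled independence forces $I_{i;j,k}=0$ for every crossing triplet, so $\FF(A)=0$ (and, when $n=2k$, also $\FF(B)=0$, consistent with the two-way tie in the statement). Next I would prove the crucial lower bound: for any size-$k$ subset $A'\notin\{A,B\}$, write $A_1=A\cap A'$, $A_2=A\cap B'$, $B_1=B\cap A'$, $B_2=B\cap B'$, so that $|A_2|=|B_1|=:m\geq 1$ by the size constraint. Every ordered triplet lying entirely inside the true set $A$ or the true set $B$ but straddling the cut $(A',B')$ contributes a term exceeding $\epsilon$ by $\epsilon$-third-order strong connectivity. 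Counting these (using $I_{i;j,k}=I_{i;k,j}$) gives $\FF(A')\geq \epsilon\bigl(m\binom{k-m}{2}+(k-m)\binom{m}{2}+m\binom{n-k-m}{2}+(n-k-m)\binom{m}{2}\bigr)$, which in the worst case $m=1$ is of order $\epsilon\,(k^2+(n-k)^2)$.

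Then I would control the estimation error. Applying the preceding sample-complexity lemma (Höffgen's estimator) to each of the $O(n^3)$ triplets with target accuracy $\Delta$ and per-triplet failure probability $\gamma/O(n^3)$, a union bound guarantees $|\hat I_{i;j,k}-I_{i;j,k}|\leq\Delta$ simultaneously for all triplets with probability at least $1-\gamma$, the extra $O(\log n)$ in the failure budget being absorbed into the $\log(n/\gamma)$ factor. On this event $|\FFF(A')-\FF(A')|\leq N\Delta$ uniformly, where $N=k\binom{n-k}{2}+(n-k)\binom{k}{2}=O(n^3)$ is the number of crossing triplets. Combining the three steps, $\FFF(A)\leq N\Delta$ while $\FFF(A')\geq \FF(A')-N\Delta$, so it suffices that $\FF(A')>2N\Delta$ for every competitor; choosing $\Delta$ of order $\epsilon/n$ makes the $\Omega(\epsilon(k^2+(n-k)^2))$ margin dominate $2N\Delta=O(n^3)\Delta$, and substituting $\Delta\sim\epsilon/n$ into the lemma yields exactly $S(\Delta,\gamma)=O\!\left(\frac{n^4}{\epsilon^2}\log^2\frac{n}{\epsilon}\log\frac{n}{\gamma}\right)$.

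The main obstacle is the tension in the final step between the $O(n^3)$ terms summed in $\FFF$ and the per-triplet accuracy one can afford: a naive bound guaranteeing merely \emph{one} violated triplet of weight exceeding $\epsilon$ would force $\Delta=O(\epsilon/n^3)$ and a much larger sample complexity. What rescues the claimed rate is the counting argument of the second step, which shows that any wrong cut necessarily violates not one but $\Omega(k^2+(n-k)^2)$ internal triplets, so that the true separation margin itself scales with the number of summed terms and cancels most of the $N\Delta$ deviation. The delicate part is verifying that this lower bound holds uniformly over all wrong partitions and all admissible $m$, including the degenerate regimes (small $k$ or small $n-k$) where strong connectivity of one side is vacuous and one must lean entirely on the other side's internal triplets.
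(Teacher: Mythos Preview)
Your proposal is correct and follows essentially the same three-step architecture as the paper's proof: (i) $\FF(A)=0$ at the planted partition, (ii) a counting argument showing that any wrong $k$-subset $A'$ forces $\Omega((k^2+(n-k)^2))$ triplets internal to $A$ or $B$ to straddle the cut, hence $\FF(A')\gtrsim \epsilon\cdot n^2$, and (iii) a union bound over $O(n^3)$ triplets via H\"offgen's lemma to make the uniform deviation $N\Delta$ smaller than this margin, forcing $\Delta\sim\epsilon/n$.

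The only substantive difference is bookkeeping in step~(ii). The paper parameterizes by $\ell=|A\cap A'|$, counts ordered triplets, and arrives at the lower bound $\psi(n,k)=(n-k)(n-2k)$; you parameterize by $m=k-\ell$, count unordered triplets, and obtain $\tfrac{m(k-m)(k-2)}{2}+\tfrac{m(n-k-m)(n-k-2)}{2}$. Both are concave quadratics in the overlap parameter, minimized at the boundary, and yield a margin of order $\epsilon\,n^2$ in the worst case. Your bound is in fact slightly sharper: the paper's $\psi(n,k)$ vanishes at $k=n/2$, whereas your expression stays $\Theta(n^2)$ there. One point you should make explicit when writing this up is the verification that $m=1$ (or its symmetric counterpart) really is the minimizer over the admissible range of $m$; you assert this but do not argue it, and the concavity observation is exactly what is needed.
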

%If $k$ is small compared to $n$ ($k\sim O(n)$), then one can actually derive
%a better bound: $S(\Delta,\epsilon)\equiv 
%O\left(\frac{n^2}{\epsilon^2}\log^2\frac{n}{\epsilon}\log\frac{n}{\gamma}\right)$.
See the Appendix for details.
Finally, we remark that the strong connectivity assumptions used in 
Theorem~\ref{thm:samplecomplexity} are stronger than necessary ---
and with respect to certain interleaving distributions, 
it can even be the case that the estimated objective function singles
out the correct partition when all of internal triplets belonging to $A$ and $B$
have zero mutual information.
Moreover, in some cases, there are multiple valid partitionings of the item set.
For example the uniform distribution is a distribution in which every
subset $A\subset\{1,\dots,n\}$ is riffle independent of its complement.
In such cases, multiple solutions are equally good when evaluated
under $\FF$, but not its sample approximation, $\FFF$.

\section{Structure discovery II: algorithms}\label{sec:structure2}
%\begin{itemize}\denselist
%\item can we make a real theoretical statement about this algorithm?
%show that the estimator is \emph{consistent} (as $m\to\infty$, get the right
%tree)
%\end{itemize}
Having now designed a function that is tractable to estimate
from both perspectives of computational and sample
complexity, we turn to the problem of learning the
hierarchical riffle independence structure of a distribution
from training examples.  
Instead of directly optimizing an objective in the space of possible
hierarchies, we take a simple top-down approach in which the item sets are
recursively partitioned by optimizing $\FFF$
until some stopping criterion is met (for example, when the
leaf sets are smaller than some $k$, or simply stopping after
a fixed number of splits).

\subsection{Exhaustive optimization}
Optimizing the function $\FFF$ requires searching through
the collection of subsets of size $|A|=k$, which, when
performed exhaustively, requires $O\left({n\choose k}\right)$ time.
An exhaustive approach thus runs in exponential time, for example,
when $k\sim O(n)$.  

However, when the size of $k$ is known and small ($k\sim O(1)$), the optimal partitioning
of an item set can be found in polynomial time by exhaustively evaluating $\FFF$
over all $k$-subsets. 
\begin{corollary}
Under the conditions of Theorem~\ref{thm:samplecomplexity},
one needs at most $S(\Delta,\epsilon)\equiv 
O\left(\frac{n^2}{\epsilon^2}\log^2\frac{n}{\epsilon}\log\frac{n}{\gamma}\right)$
samples to recover the exact riffle independent partitioning
with probability $1-\gamma$.
\end{corollary}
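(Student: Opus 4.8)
The plan is to reprise the proof of Theorem~\ref{thm:samplecomplexity} almost verbatim, changing only the bookkeeping that converts a per-triplet accuracy into a guarantee on the global minimizer, and to invoke the hypothesis $k\sim O(1)$ precisely at that step. The whole point is that the number of triplets entering the objective $\FFF(A)$ collapses from $O(n^3)$ (the generic case, where $|A|\sim|B|\sim n/2$) down to $O(n^2)$ once one of the two sets has constant size, so that the same separation between the true partition and every competitor can be certified with a \emph{coarser} mutual-information accuracy $\Delta$, hence with fewer samples.

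First I would re-establish the gap between true objective values. By Proposition~\ref{prop:ffnecessity} the true partition gives $\FF(A)=0$, while for any competing $k$-subset $A'$ at least one element $b\in B$ is misplaced into $A'$ (or, symmetrically, an element of $A$ is pushed into $B'$). Since $A$ and $B$ are $\epsilon$-third-order strongly connected, all $\binom{|B'\cap B|}{2}=\Omega(n^2)$ triplets of the form $(b;j,k)$ with $j,k\in B'\cap B$ are internal to $B$, carry mutual information exceeding $\epsilon$, and now cross the $A'$ partition; because every $I_{i;j,k}\ge 0$, this already forces $\FF(A')=\Omega(n^2\epsilon)$. The care needed here is to make this lower bound hold \emph{uniformly} over all wrong $k$-subsets via a short case analysis on $|A'\cap A|$, and this is the step I expect to be the most delicate: one must confirm that no competitor can drive the count of misplaced internal triplets below $\Omega(n^2)$ while $k\sim O(1)$.

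Next I would bound the estimation error. If every tripletwise mutual information is estimated to additive accuracy $\Delta$, then the error incurred in any comparison $\FFF(A')-\FFF(A)$ is at most $\Delta$ times the number of triplets on which the two objectives disagree, which is at most $|\ABedge|+|\BAedge|$ for $A$ together with the two analogous crossing sets for $A'$. For $k\sim O(1)$ each such crossing set has size $k\binom{n-k}{2}+(n-k)\binom{k}{2}=O(n^2)$, so the accumulated error is $O(n^2)\Delta$. Choosing $\Delta=\Theta(\epsilon)$ with a suitably small constant makes this error strictly smaller than the $\Omega(n^2\epsilon)$ gap from the previous step, whence $\FFF$ is still minimized exactly at $A$.

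Finally I would feed $\Delta=\Theta(\epsilon)$ into the regularized estimator of~\cite{hoffgen93} quoted in the preceding lemma and take a union bound over the $O(n^3)$ triplets, replacing the per-triplet failure probability by $\gamma/O(n^3)$. Since $\log\!\big(n\cdot O(n^3)/\gamma\big)=O(\log(n/\gamma))$, the union bound is absorbed into the logarithmic factor, and the per-triplet cost $O\!\big(\tfrac{n^2}{\Delta^2}\log^2\tfrac{n}{\Delta}\log\tfrac{n}{\gamma}\big)$ becomes $S(\Delta,\epsilon)=O\!\big(\tfrac{n^2}{\epsilon^2}\log^2\tfrac{n}{\epsilon}\log\tfrac{n}{\gamma}\big)$, exactly the claimed bound. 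The only substantive difference from Theorem~\ref{thm:samplecomplexity} is that there the crude estimate $O(n^3)$ on the number of differing triplets forced $\Delta=\Theta(\epsilon/n)$, and with it the extra factor of $n^2$ in the sample count.
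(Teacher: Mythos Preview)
Your proposal is correct and follows essentially the same route as the paper. The paper does not give a separate proof for this corollary; it is an immediate specialization of the appendix proof of Theorem~\ref{thm:samplecomplexity}, where the required per-triplet accuracy is computed to be $\Delta=\tfrac{\epsilon}{4}\cdot\tfrac{n-2k}{nk}$, which is $\Theta(\epsilon)$ when $k\sim O(1)$, and this value of $\Delta$ plugged into Lemma~\ref{lem:allmi} yields the stated bound. Your counting of crossing triplets as $O(n^2)$ and of the separation $\psi(n,k)=(n-k)(n-2k)=\Omega(n^2)$ in the constant-$k$ regime reproduces exactly this computation.
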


When $k$ is small, we can therefore use exhaustive optimization 
to learn the structure of  $k$-thin chain models (Section~\ref{sec:thinchains})
in polynomial time.
The structure learning problem for thin chains is to
discover how the items are partitioned into groups,
which group is inserted first, which group is inserted second, and so
on.  To learn the structure of a thin chain, we can use exhaustive
optimization to learn the topmost partitioning of the item set, then recursively
learn a thin chain model for the items in the larger subset.

\begin{figure}[t!]
\incmargin{1em}
\begin{algorithm2e}[H]
{\scriptsize
\SetKwInOut{Input}{input}\SetKwInOut{Output}{output}
{\sc AnchorsPartition} \xspace \\
\Input{training set $\{\sigma^{(1)},\dots,\sigma^{(m)}\}$, $k\equiv|A|$}
\Output{Riffle independent partitioning of item set, $(A_{best},B_{best})$}
\BlankLine
 %Estimate and cache $\hat{I}_{i;j,k}$ for all $i,j,k$ \;
Fix $a_1$ to be any item \;
 \ForAll{$a_2\in \{1,\dots,n\},\;a_1\neq a_2$}{
	Estimate $\hat{I}_{x;a_1,a_2}$ for all $x\neq a_1,a_2$\;
	$\hat{I}^k \leftarrow$ $k^{th}$ smallest item in $\{\hat{I}_{x;a_1,a_2};x\neq a_1,a_2\}$ \;
	$A_{a_1,a_2}\leftarrow \{x\,:\,\hat{I}_{x;a_1,a_2}\leq \hat{I}^k \}$ \;
	}
 $A_{best} \leftarrow \arg\min_{a_1,a_2} \FFF(A_{a_1,a_2})$\;
$B_{best} \leftarrow \{1,\dots,n\} \backslash A_{best}$ \;
 \Return $[A_{best},B_{best}]$\;
\caption{\scriptsize Pseudocode for partitioning using the \emph{Anchors} method}
\label{alg:anchors}
}
\end{algorithm2e}
\decmargin{1em}
\end{figure}

\subsection{Handling arbitrary partitions using anchors}
When $k$ is large, or even unknown, $\FFF$ cannot be
optimized using exhaustive methods.
Instead, we propose a simple algorithm for finding
$A$ and $B$ based on the following observation.
If an oracle could identify any two
elements of the set $A$, say, $a_1,a_2$, in
advance, then the quantity $I_{x;a_1,a_2}=I(x;a_1<a_2)$ indicates
whether the item $x$ belongs to $A$ or $B$ since $I_{x;a_1,a_2}$ is nonzero
in the first case, and zero in the second case.

For finite training sets, when $I$ is only known approximately, one can
sort the set $\{I_{x;a_1,a_2}\,;\,x\neq a_1,a_2\}$ and if $k$
is known, take the $k$
items closest to zero to be the set $B$ (when $k$ is unknown, one can
use a threshold to infer $k$).
Since we compare all items against $a_1,a_2$, we refer to
these two fixed items as ``anchors''.  

Of course $a_1,a_2$ are not known in advance, but by fixing $a_1$ to be
an arbitrary item, one can repeat the above method for all
$n-1$ settings of $a_2$ to produce 
a collection of $O(n^2)$ candidate partitions.  
Each partition
can then be scored using the approximate objective $\FFF$,
and a final optimal partition can be selected as the minimum over the
candidates. See Algorithm~\ref{alg:anchors}.
In cases when $k$ is not known a priori, we evaluate partitions
for all possible settings of $k$ using $\FFF$.

Since the Anchors method does not require searching over subsets, it
can be significantly faster 
than an exhaustive optimization of $\FFF$.
Moreover, by assuming $\epsilon$-third order strong connectivity
as in the previous section, one can use similar arguments to derive
sample complexity bounds.
\begin{corollary}[of Theorem~\ref{thm:samplecomplexity}]\label{cor:anchors}
Let $A$ and $B$ be $\epsilon$-third order strongly connected
riffle independent sets, and suppose $|A|=k$.
Given $S(\Delta,\epsilon)$ i.i.d. samples, the output of the Anchors algorithm
is exactly $[A,B]$ with probability $1-\gamma$.  
In particular, the Anchors estimator is  consistent.
\end{corollary}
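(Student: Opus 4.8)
The plan is to read Algorithm~\ref{alg:anchors} as a candidate-generation stage followed by an $\FFF$-scoring stage, and to show that with $S(\Delta,\epsilon)$ samples and high probability (i) the true set $A$ is produced as one of the $O(n)$ candidates $A_{a_1,a_2}$, and (ii) the scoring stage selects it. Ingredient (ii) is essentially a restatement of Theorem~\ref{thm:samplecomplexity}: with $S(\Delta,\epsilon)$ samples, $\FFF$ is minimized over all $k$-subsets at exactly $A$ with probability $1-\gamma$, and since every candidate is a $k$-subset and $A$ is among them, $A_{best}=\arg\min_{a_1,a_2}\FFF(A_{a_1,a_2})=A$ and $B_{best}=A^c=B$. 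The real work therefore concentrates on (i), the anchor-separation property, plus a finite-sample argument that it survives estimation.

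For (i) I would first establish the population-level statement: if both anchors lie in $B$, then $I_{x;a_1,a_2}=0$ for every $x\in A$ while $I_{x;a_1,a_2}>\epsilon$ for every $x\in B\setminus\{a_1,a_2\}$. The first claim follows from riffled independence: by Lemma~\ref{lem:relativerankpreservation} the event $\{\sigma(a_1)<\sigma(a_2)\}$ with $a_1,a_2\in B$ is a function of the relative ranking $\phi_B(\sigma)$, and by the argument of Proposition~\ref{prop:micriterion} the absolute rank $\sigma(x)$ of an item $x\in A$ is independent of $\phi_B(\sigma)$, so the mutual information vanishes. The second claim is precisely the $\epsilon$-third-order strong connectivity of $B$ (Definition~\ref{def:strongconnectivity}). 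Consequently, among the $n-2$ items $x\neq a_1,a_2$, exactly the $k=|A|$ elements of $A$ attain zero mutual information, the $k$ smallest values single out $A$, and the (oracle) candidate equals $A$ exactly.

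To pass from the oracle to samples I would set $\Delta=\epsilon/2$ and apply the sample-complexity lemma of~\cite{hoffgen93} simultaneously to all $O(n^2)$ triplets $(x;a_1,a_2)$ touched by the fixed anchor $a_1$, taking a union bound; the resulting $\log(n^2/\gamma)$ is absorbed into the $\log(n/\gamma)$ factor, so $S(\Delta,\epsilon)$ samples still suffice. On this event every estimate $\hat I_{x;a_1,a_2}$ lies within $\epsilon/2$ of its true value, hence the $A$-items stay below $\epsilon/2$ while the $B$-items stay above $\epsilon/2$; the gap is preserved, and for anchors $a_1,a_2\in B$ the \emph{estimated} candidate $A_{a_1,a_2}$ still equals $A$. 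Intersecting this event with the high-probability event of Theorem~\ref{thm:samplecomplexity} (re-splitting $\gamma$), both (i) and (ii) hold at once, so the algorithm returns $[A,B]$ with probability at least $1-\gamma$; consistency is then immediate, since the failure probability tends to zero as the sample size grows.

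I expect the main obstacle to be the uniform finite-sample control rather than the population identity: one must guarantee that the zero-versus-$\epsilon$ separation holds \emph{simultaneously} across all $O(n^2)$ estimated anchor triplets, and across the triplets entering $\FFF$ in the scoring stage, without inflating the requirement beyond $S(\Delta,\epsilon)$. A second, more delicate point is ensuring that $A$ actually occurs among the candidates generated from the single fixed anchor: because $a_1$ is excluded from every set $A_{a_1,a_2}$, the clean recovery above needs $a_1$ to lie in the block complementary to the size-$k$ target (so that $A$ consists entirely of zero-information items and all of $A$ is eligible for selection). I would handle this by invoking the symmetry of the partition $\{A,B\}$ under riffled independence and adopting the convention that the fixed anchor is taken in the complement of $A$, which is the case that makes Proposition~\ref{prop:ffnecessity} and Theorem~\ref{thm:samplecomplexity} combine to pin down the output exactly.
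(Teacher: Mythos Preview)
The paper does not actually supply a proof for this corollary; it is stated immediately after the remark that ``by assuming $\epsilon$-third order strong connectivity as in the previous section, one can use similar arguments to derive sample complexity bounds,'' and nothing further is given. Your two-stage decomposition---(i) the anchor pair generates the true $A$ as a candidate via the zero-versus-$\epsilon$ separation of the $I_{x;a_1,a_2}$, and (ii) Theorem~\ref{thm:samplecomplexity} then forces the $\FFF$-scoring stage to select it---is precisely the argument the paper gestures toward, and your population-level separation (via Proposition~\ref{prop:micriterion} for the zero side, Definition~\ref{def:strongconnectivity} for the $\epsilon$ side) together with the uniform $\epsilon/2$ finite-sample bound is correct.

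You have also put your finger on a genuine loose end that the paper glosses over: because the algorithm excludes $a_1$ from every candidate $A_{a_1,a_2}$ and always returns a set of size exactly $k$, if the arbitrarily fixed $a_1$ happens to lie in $A$ and $k<n-k$, then no candidate can equal $A$ (since $a_1\notin A_{a_1,a_2}$) and none can equal $B$ (since $|B|>k$), so the true partition is never generated. The paper's claim in the running-time discussion that ``$a_1$ can be held fixed at any arbitrary element'' is therefore not quite right as written. Your resolution---adopt the convention that the fixed anchor sits in the larger block, which is harmless by the $A\leftrightarrow B$ symmetry of the riffle-independent partition---is the natural fix and is implicitly what the paper has in mind.
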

We remark, however, that there are practical differences that can at times
make the Anchors method somewhat less robust than an exhaustive search.
Conceptually, anchoring works well when there exists two elements
that are strongly
connected with all of the other elements in its set, which can then be 
used as the anchor elements $a_1, a_2$.  
An exhaustive search can work well in weaker conditions
such as when items are strongly connected through longer
paths.
We show in our experiments that the Anchors method can nonetheless
be quite effective for learning hierarchies.

\subsection{Running time}
We now consider the running time of our structure learning 
procedures.  In both cases,
it is necessary to precompute the mutual information quantities
$I_{i;j,k}$ for all triplets $i,j,k$ from $m$ samples.
For each triplet, we can compute $I_{i;j,k}$ in linear time
with respect to the sample size.
The set of all triplets can therefore be computed in $O(mn^3)$ time.

The exhaustive method for finding the $k$-subset
which minimizes $\FFF$ requires evaluating the objective
function at ${n\choose k}=O(n^k)$ subsets.  What is the complexity
of evaluating $\FFF$ at a particular partition $A,B$?
We need to sum the precomputed mutual informations
over the number of triangles that cross between $A$ and $B$.
If $|A|=k$ and $|B|=n-k$, then we can bound the number of such
triangles by $k(n-k)^2+k^2(n-k) = O(kn^2)$.
Thus, we require $O(n^k+kn^2)$ optimization time, leading
to a bound of $O(kn^{k+2}+mn^3)$ total time.

The Anchors method requires us to (again) precompute mutual informations.
The other seeming bottleneck is the last step, in which we must
evaluate the objective function $\FFF$ at $O(n^2)$ partitions.
In reality, if $|A|$ and $|B|$ are both larger than 1, then
$a_1$ can be held fixed at any arbitrary element, and we must only
optimize over $O(n)$ partitions.
When $|A|=|B|=1$, then $n=2$, in which case the two sets are
trivially riffle independent (independent of the actual
distribution).
As we showed in the previous
paragraph, evaluating $\FFF$ requires $O(kn^2)$ time, and thus
optimization using the Anchors method = $O(n^3(k+m))$ total time.
Since $k$ is much smaller than $m$ (in any meaningful training set),
we can drop it from the big-O notation to get $O(mn^3)$
time complexity, showing that the Anchors method is dominated
by the time that is required to precompute and cache mutual informations.
% there is a question of whether we really need to precompute all
% mutual informations or just at runtime compute O(n^2) triplets.
% But in the end since you need to evaluate a bunch of partitions
% under the original objective, I believe you will need O(n^3) mutual
% information terms in the worst case.

\section{Structure discovery III: quantifying stability}\label{sec:structure3}
Given a hierarchy estimated from data, we now discuss 
how one might practically quantify how confident we should be
about the hypothesized structure.  We might like to know if the 
amount of data that was used for estimating the structure
was adequate the support the learned structure, and,
if the the data looked slightly different, would the hypothesis change?

Bootstrapping~\citep{efron93} offers a simple approach ---
repeatedly resample the data with replacement, and estimate
a hierarchical structure for each resampling.
The difference between our setting and typical bootstrapping
settings, however, is that our structures lie in a large 
discrete set. Thus, unlike continuous parameters,
whose confidence we can often summarize with intervals or ellipses,
it is not clear how one might compactly summarize
a collection of many hierarchical clusterings of items.

The simplest way to summarize the collection of 
hierarchies obtained via the bootstrap is to 
measure the fraction of the estimated structures which 
are identical to the structure estimated from the original unperturbed dataset.
If, for small sets of resampled data, the estimated hierarchy is
consistently identical to that obtained from the original
data, then we can be confident that the data supports the hypothesis.
We show in the following example that, for the structure which was learned
from the APA dataset, a far smaller dataset would have sufficed.
\begin{figure*}[t!]
\begin{center}
\subfigure[]{
\includegraphics[width=.35\textwidth]{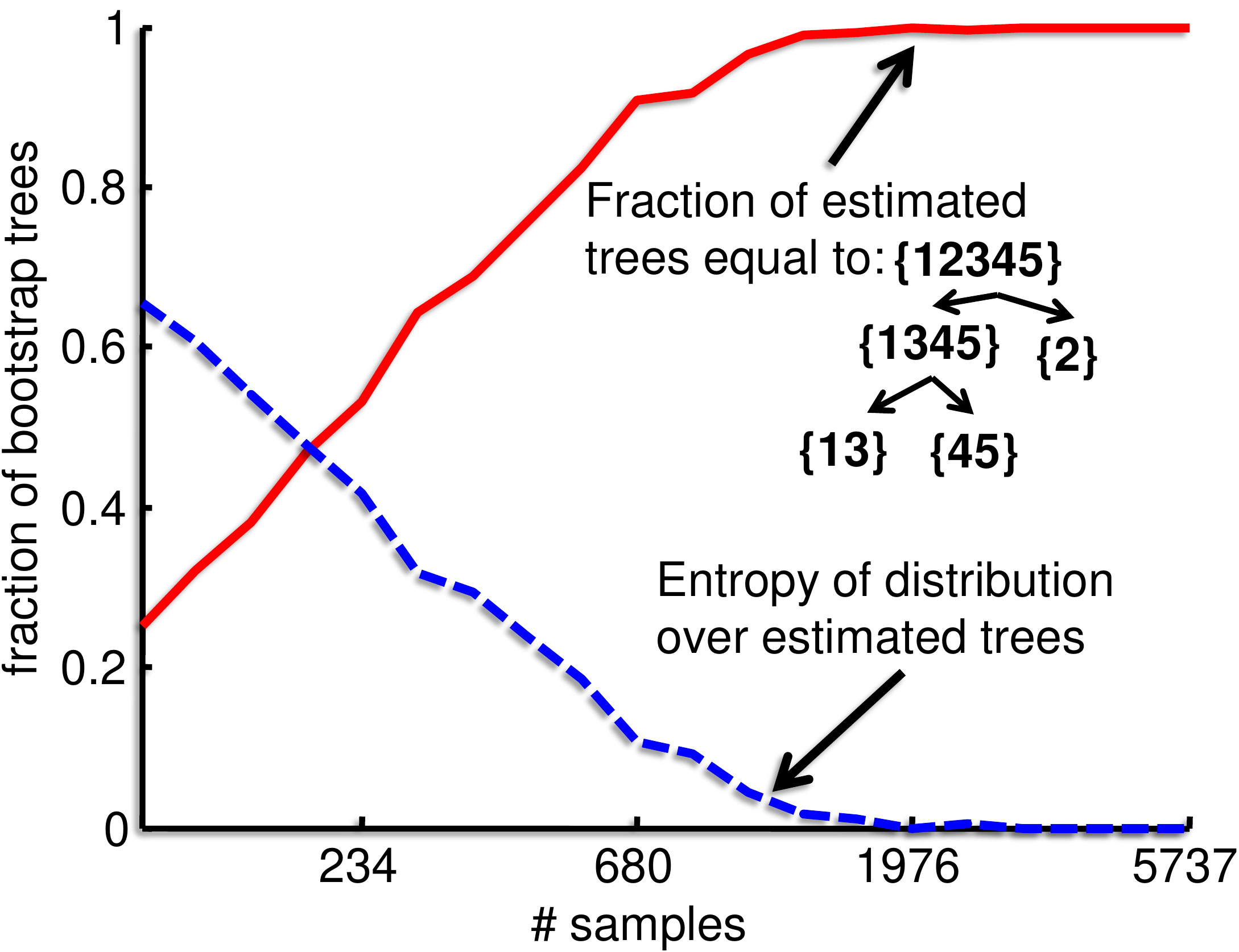}
\label{fig:apabootstrap1}
}\qquad\qquad
\subfigure[]{
\includegraphics[width=.50\textwidth]{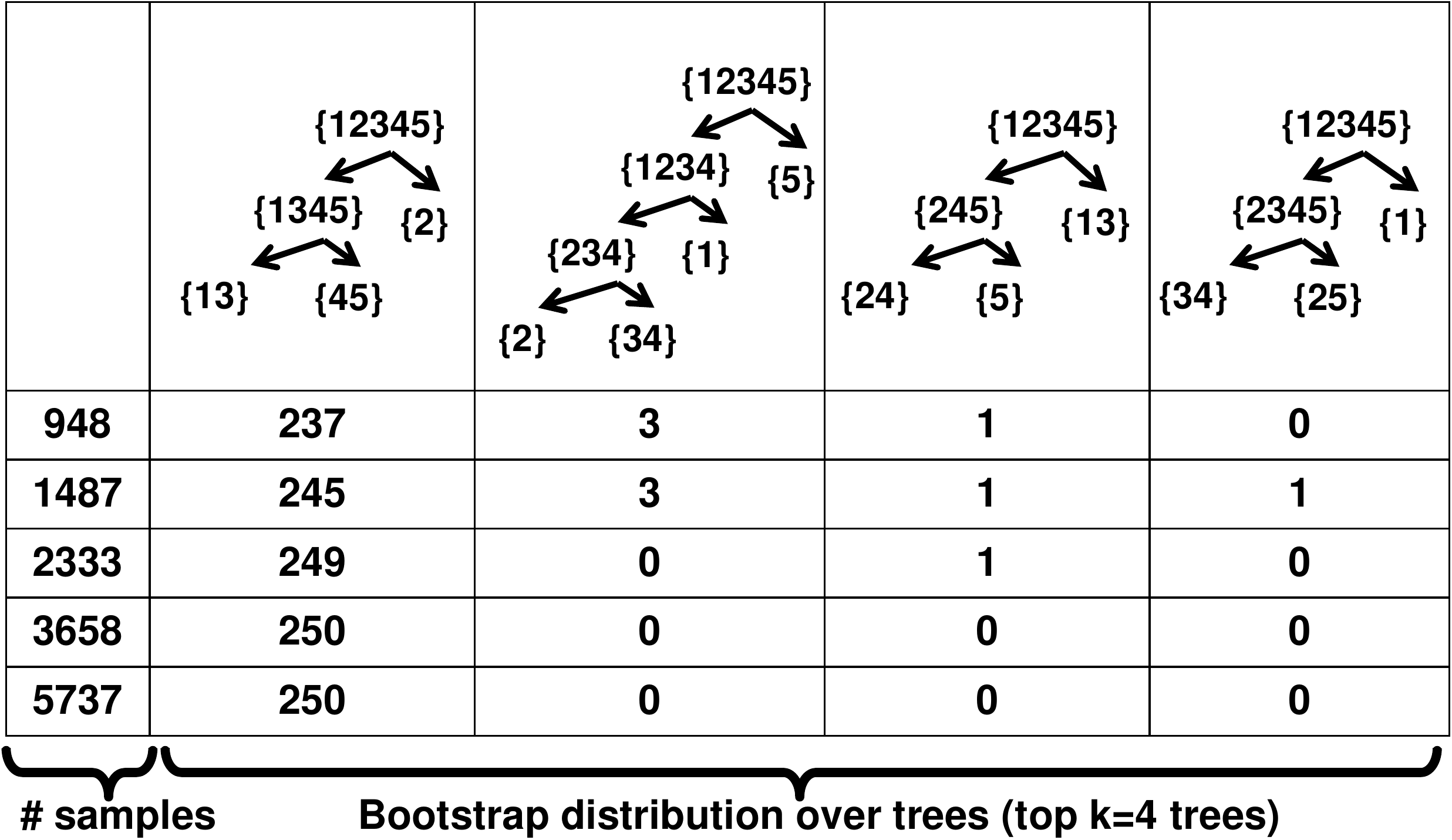}
\label{fig:apabootstrap2}
}
\caption{We show the distribution of structures estimated
from bootstrapped samples of the APA data (with varying sample sizes):
\subref{fig:apabootstrap1} plots (in solid red)
the fraction of bootstrapped trees for each sample size
which agree exactly with the hierarchy given in Figure~\ref{fig:apatree};
In \subref{fig:apabootstrap2}, we summarize the boostrap distribution
for the largest sample sizes.}
\label{fig:apabootstrap}
\end{center}
\end{figure*}
\begin{example}[APA Election data (continued)]
As our final APA related example, we show the results of 
bootstrap resampling in Figure~\ref{fig:apabootstrap}.
To generate the plots, we resampled the APA dataset with replacement 
200 times each for varying sample sizes, and ran our Anchors
algorithm on each resulting sample.
Figure~\ref{fig:apabootstrap1} plots (in solid red)
the fraction of bootstrapped trees for each sample size
which agree exactly with the hierarchy given in Figure~\ref{fig:apatree}.
Given that we forced sets to be partitioned until they had at most 2 items, 
there are 120 possible hierarchical structures for the APA dataset.

It is interesting to see that the hierarchies returned
by the algorithm are surprisingly stable even given fewer than 100 samples,
with about 25\% of bootstrapped trees agreeing with the optimal
hierarchy.  At 1000 samples, almost all trees agree with the 
optimal hierarchy.
In Figure~\ref{fig:apabootstrap2}, we show a table of the 
bootstrap distribution for the largest sample sizes
(which were concentrated at only a handful of trees).
\end{example}
For larger item sets $n$, however, it is rarely the case that there is
enough data to strongly support the hierarchy in terms of the above
measure.  In these cases, instead of asking whether entire structures
agree with each other \emph{exactly}, it makes sense to ask whether
estimated \emph{substructures} agree.
For example, a simple measure might amount to computing the fraction
of structures estimated from resampled datasets
which agreed with the original structure at the topmost partition.
Another natural measure is to count the fraction of structures
which correctly recovered all (or a subset of) 
leaf sets for the original dataset, but not necessarily the correct
hierarchy.  By Proposition~\ref{prop:leafsets},
correctly discovering the leaf set partitioning is probabilistically 
meaningful, and corresponds to correctly identifying
the $d$-way decomposition corresponding to a distribution, but 
failing to identifying the specific hierarchy.

We remark that sometimes, there is no one unique
structure corresponding to a distribution.   The uniform 
distribution, for example, is consistent with any hierarchical
riffle independent structure, and so bootstrapped hierarchies
will not concentrate on any particular structure or even substructure.
Moreover, even when there \emph{is} true unique structure
corresponding to the generating distribution, it may be the 
case that other simpler structures perform better when there is not
much available training data.

\section{Related work}\label{sec:relatedwork}

%\subsection{Statistical modeling for rankings}
%\citep{marden95,fligner86,fligner88,plackett75,mallows57,guiver09,thurstone27,gormley06,diaconis89,patterson09,lebanon08,meila07}
%
%The tripletwise independence relations which arise
%in riffled independence are reminiscent of
%the \emph{independence of irrelevant alternatives}
%axiom, sometimes assumed in social choice theory \citep{arrow63}.
%This \emph{IIA} axiom states that, if an item $x$ is preferred over
%$y$ from the set $\{x,y\}$, then the introduction of a third alternative, $z$,
%will not make item $y$ preferable to item $x$.
%IIA, however, is assumed to hold for every triplet of items, while
%for riffled independence, the triplets which satisfy such
%independence relations are the ones which straddle the two sets.

Our work draws from several literatures: card shuffling
research due primarily to Persi Diaconis and collaborators
\citep{diaconis92,fulman98}, papers about Fourier theoretic
probabilistic inference over permutations from the machine learning
community\citep{kondor08b,huangetal09b,huangetal09a}, as well
as graphical model structure learning research.

\subsection{Card shuffling theory}
\cite{diaconis92} provided a
a convergence analysis of repeated riffle shuffles.
Our novelty lies in the combination
of shuffling theory with independence, which was first exploited in
\cite{huangetal09a}, for scaling inference operations
to large problems.  %As discussed in Section~\ref{sec:riffledindependence},
Finally, we remark that \cite{fulman98} introduced
a class of shuffles known as biased riffle shuffles which
are not the same as the biased riffle shuffles discussed in our paper.
The fact that the uniform riffle shuffling can be realized
by dropping card with probability proportional to the number of cards
remaining in each hand has been observed in a number of
papers~\citep{diaconis92}, but we are the first to (1) formalize 
this in the form
of the recurrence given in Equation~\ref{eqn:recurrence},
%We are also the first to %use the recurrence to 
and (2) to compute the Fourier transform
of the uniform and biased riffle shuffling distributions.

\subsection{Fourier analysis on permutations}
Our dynamic programming
approach bears some similarities to the FFT (Fast Fourier
Transform) algorithm proposed by~\cite{clausen93}, and in particular,
relies on the same branching rule recursions~\citep{sagan01}.  While
the Clausen FFT requires $O(n!\log(n!))$ time, since our biased riffle shuffles
are parameterized by a single $\alpha$, we can use the recurrence
to compute low-frequency Fourier terms in polynomial time.

\subsection{Learning structured representations}
Our insights for the structure learning problems are inspired
by some of the recent approaches in the machine learning literature
for learning the structure of thin junction trees~\citep{bach01}.
In particular, the idea of using a low order proxy objective with a 
graph-cut like optimization algorithm is similar to an idea which 
was recently introduced in~\cite{shahaf09}, which determines optimally
thin separators with respect to the Bethe free energy approximation (of the entropy)
rather than a typical log-likelihood objective.
Our sample analysis is based on the mutual information sample
complexity bounds derived in~\cite{hoffgen93}, which was also used in 
~\cite{chechetka07} for developing a structure learning algorithm
for thin junction trees with provably polynomial sample complexity.
Finally, the bootstrap methods which we have employed in our experiments
for verifying robustness bear much resemblance to some of the
common bootstrapping methods which have been used in bioinformatics
for analyzing phylogenetic trees \citep{holmes99,holmes03}.

\section{Experiments}\label{sec:experiments}
%\begin{itemize}\denselist
%\item mention that loglikelihood comparisons are especially
%necessary since our methods do not explicitly optimize likelihood.
%\end{itemize}
In this section, we present a series of experiments
to validate our models and methods.
All experiments were implemented in Matlab, except for the Fourier
theoretic routines, which were written in C++.
We tested on lab machines with two AMD quadcore Opteron 2.7GHz processors
with 32 Gb memory.
We have already analyzed the APA data extensively throughout the
paper.  Here, we demonstrate our algorithms on simulated data 
as well as other real datasets, namely, sushi preference data, and  
Irish election data.

\subsection{Simulated data}
We begin with a discussion of our simulated data experiments.
We first consider approximation quality and timing issues
for a single binary partition of the item set.
%\begin{itemize}\denselist
%\item simulated data experiments - we can show more plots
%than we could in the original paper.
%\item add experiments about running time - compare to optimizing
%the log likelihood exactly
%\item discuss apa data a bit more - talk about the penalty
%incurred for using the wrong hierarchy, wrong leaf sets.
%Talk about fitting a mixture of interleaving distributions
%\end{itemize}

\paragraph{Binary partitioning of the item set}
To understand the behavior of RiffleSplit in approximately riffle
independent situations,
we drew sample sets of varying sizes
from a riffle independent distribution on $S_8$ (with bias
parameter $\alpha=.25$) and use RiffleSplit
to estimate the relative ranking factors and interleaving 
distribution from the empirical distribution.
In Figure~\ref{fig:approxaccuracy}, we plot the KL-divergence between
the true distribution and that obtained by applying
RiffleJoin to the estimated riffle factors.  With small
sample sizes (far less than $8!=40320$), we are able to recover
accurate approximations despite the fact that the empirical distributions
are not exactly riffle independent.  For comparison, we ran
the experiment using the Split algorithm~\cite{huangetal09a}
to recover the parameters.
Perhaps surprisingly, one can show %(see Appendix) 
that the Split algorithm from ~\cite{huangetal09a} is
also an unbiased, consistent estimator of the riffle factors,
but it does not return the maximum likelihood parameter estimates
because it effectively ignores rankings which are not contained
in the subgroup $S_p\times S_q$.  Consequently, our RiffleSplit
algorithm converges to the correct parameters with far fewer samples.
%(i.e., converges in probability to the true parameters 
%as the number of training examples increases to infinity),
%albeit without the optimality guarantee that we have shown for
%Rifflesplit (Theorem~\ref{thm:kl})
%and therefore requires far more samples to reliably approximate $h$.

Next, we show that our Fourier domain algorithms are capable
of handling sizeable item sets (with size $n$) when
working with low-order terms.
In Figure~\ref{fig:scalingfigure}
we ran our Fourier domain RiffleJoin algorithm
on various simulated distributions.
We plot running times
of RiffleJoin (without precomputing the interleaving distributions)
as a function of $n$ (setting $p=\lceil n/2\rceil$, which is
the worst case) scaling up to $n=40$.

\begin{figure*}[t!]
\begin{center}
\subfigure[Parameter learning from synthetic data]{
\includegraphics[width=.42\textwidth]{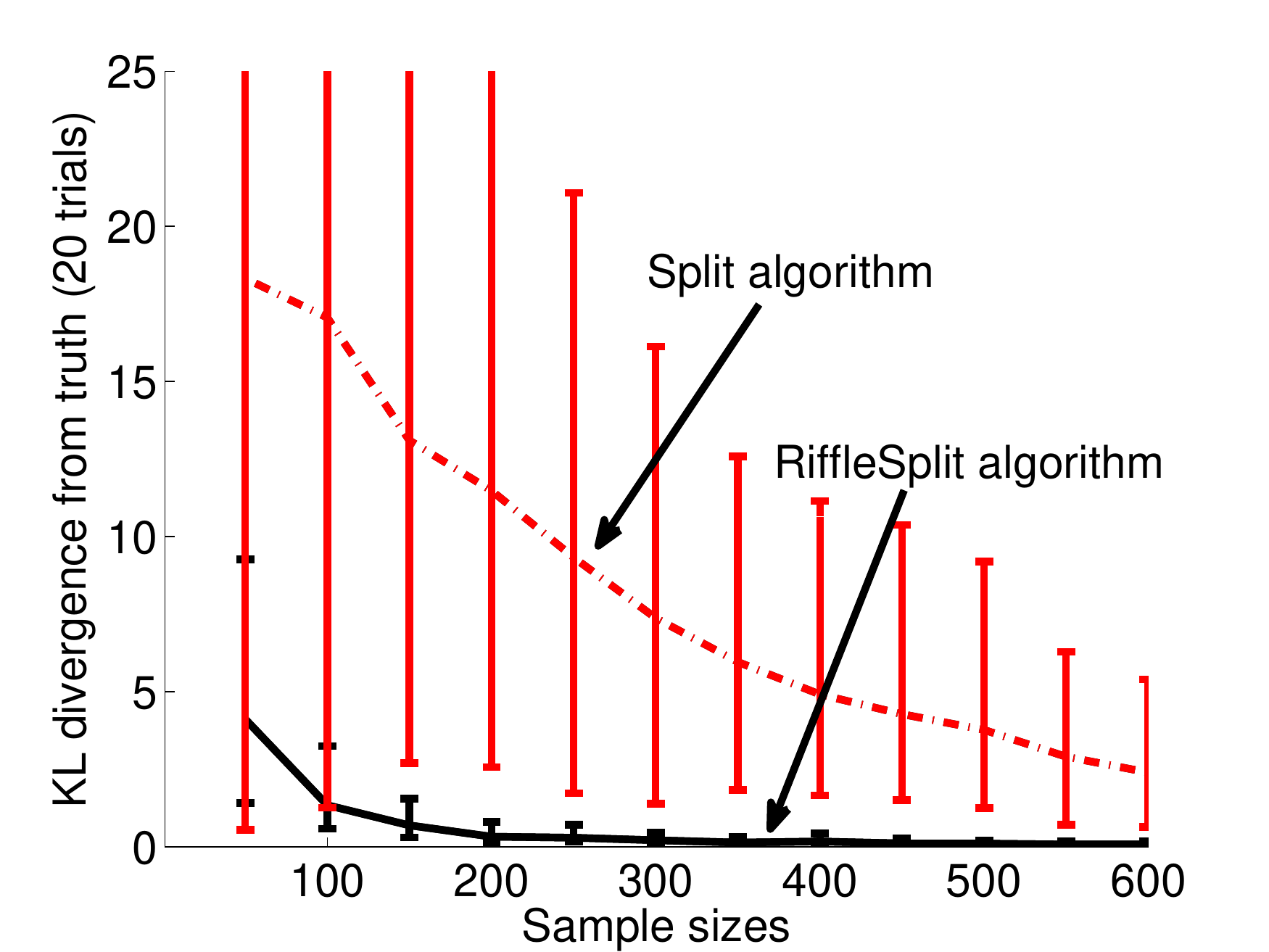}
   \label{fig:approxaccuracy}
}\qquad\qquad
\subfigure[Running times of RiffleJoin]{
%\raisebox{12pt}{
\includegraphics[width=.39\textwidth]{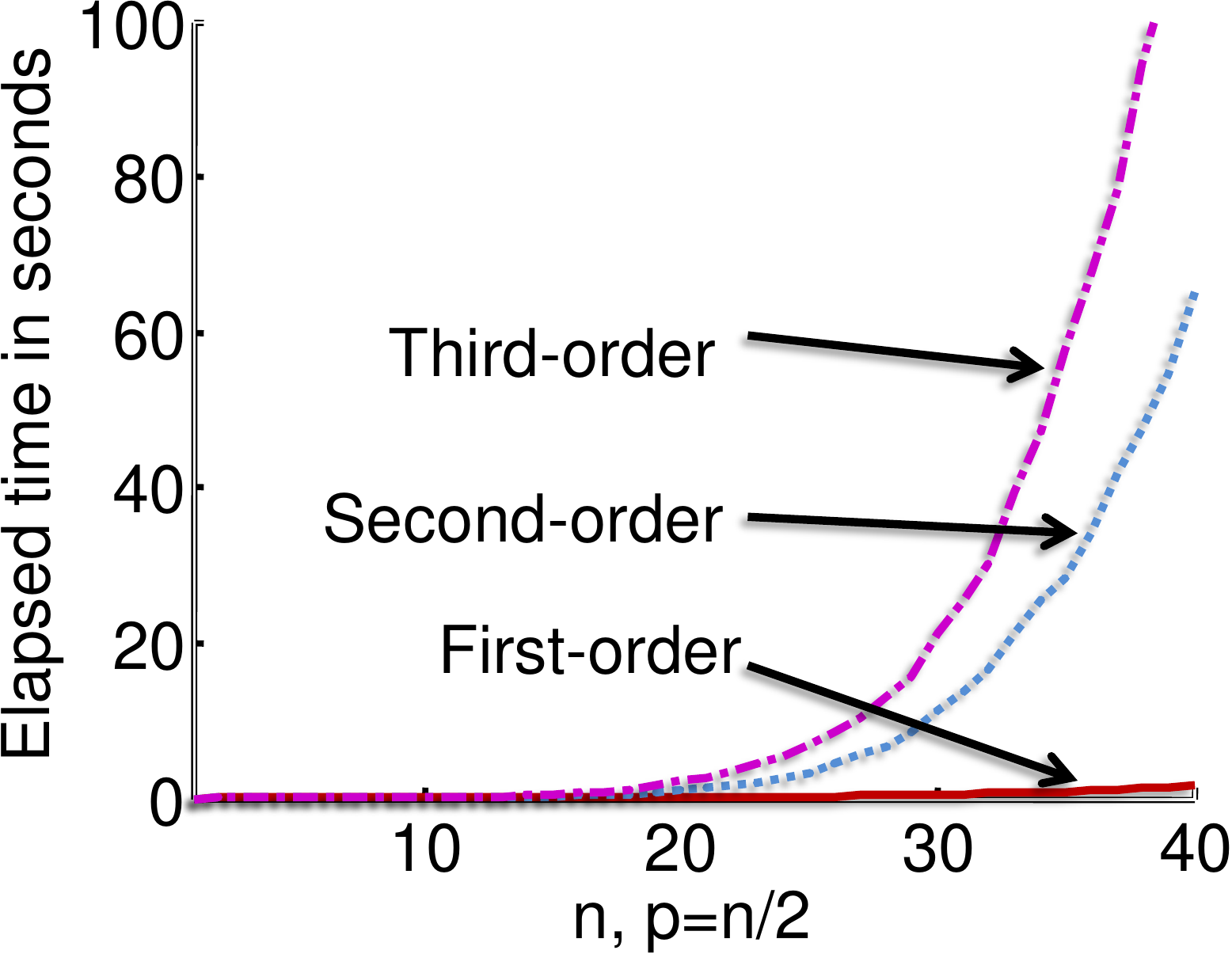}
   \label{fig:scalingfigure}
%}
}
\caption{Synthetic data experiments for a single partitioning of the item set}
\label{fig:experiments_singlesplit}
\end{center}
\end{figure*}

\paragraph{Learning a hierarchy of items}
\begin{figure*}[t!]
\begin{center}
\subfigure[Success rate for structure recovery vs. sample size $(n=16, k=4)$]{
        \includegraphics[width=.37\textwidth]{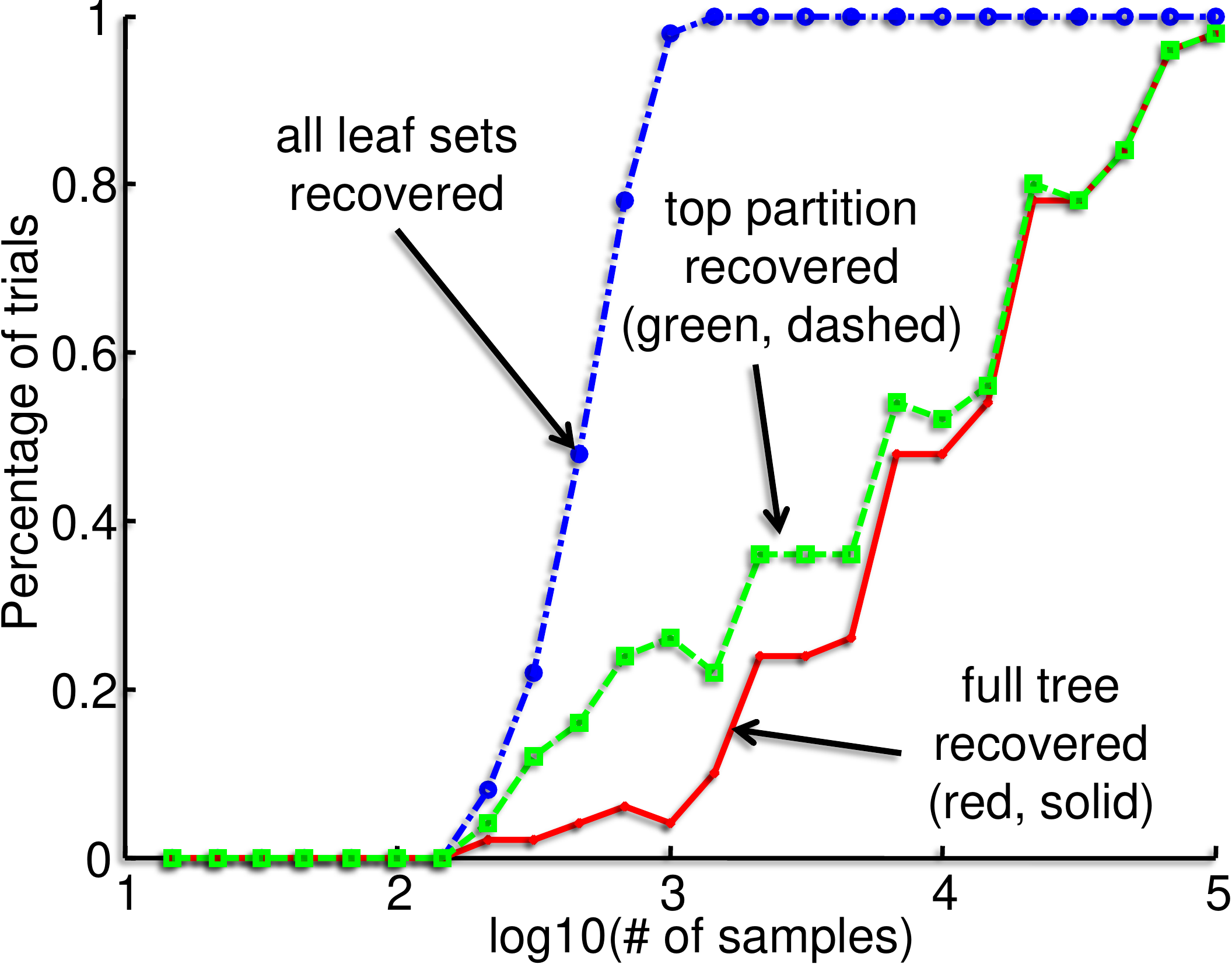}
   \label{fig:successrate16}
}\qquad\qquad
\subfigure[Number of samples required for structure recovery vs. number of items $n$]{
        \includegraphics[width=.37\textwidth]{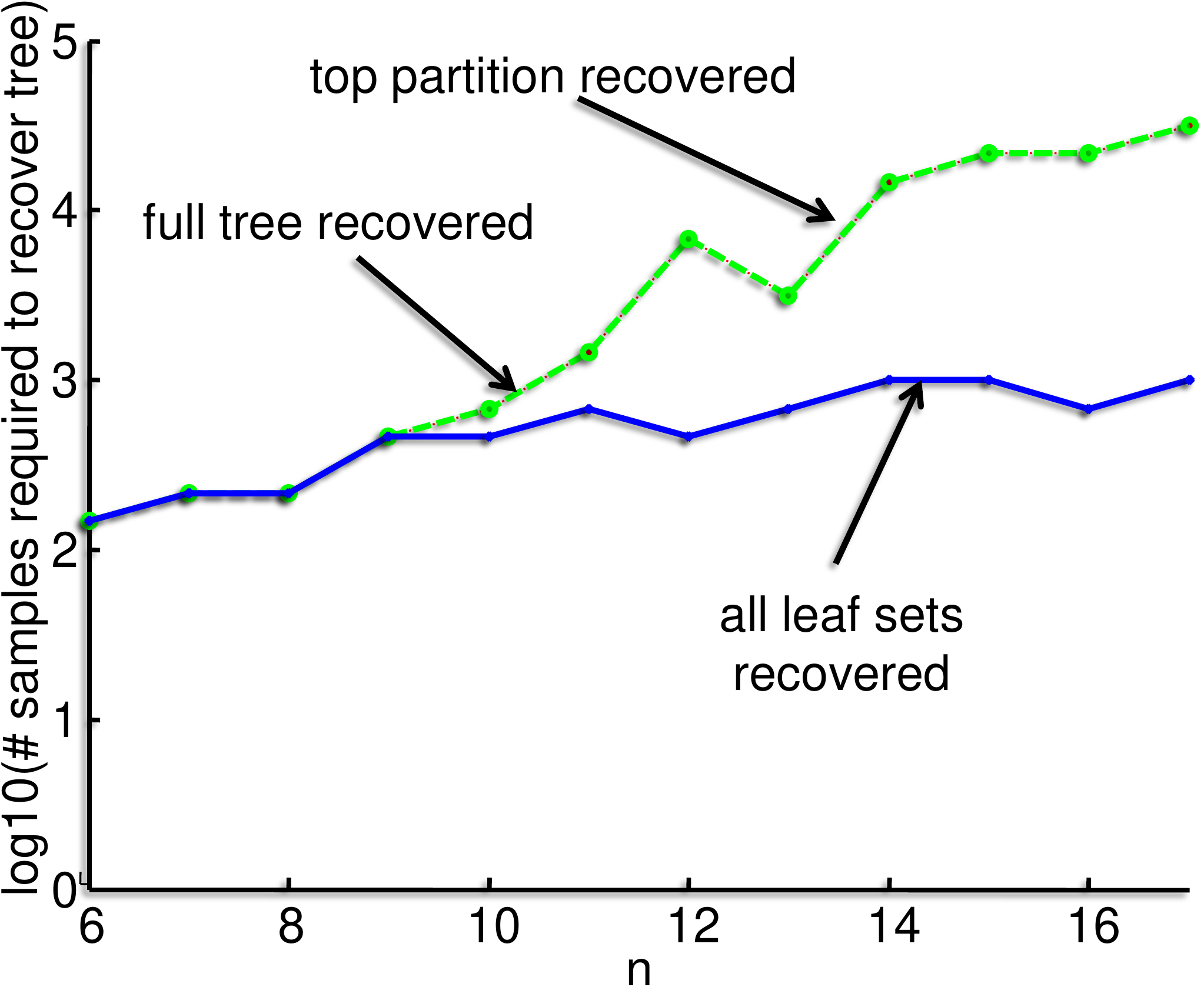}
   \label{fig:numsampsrequired}
}
\subfigure[Test set log-likelihood comparison]{
        \includegraphics[width=.37\textwidth]{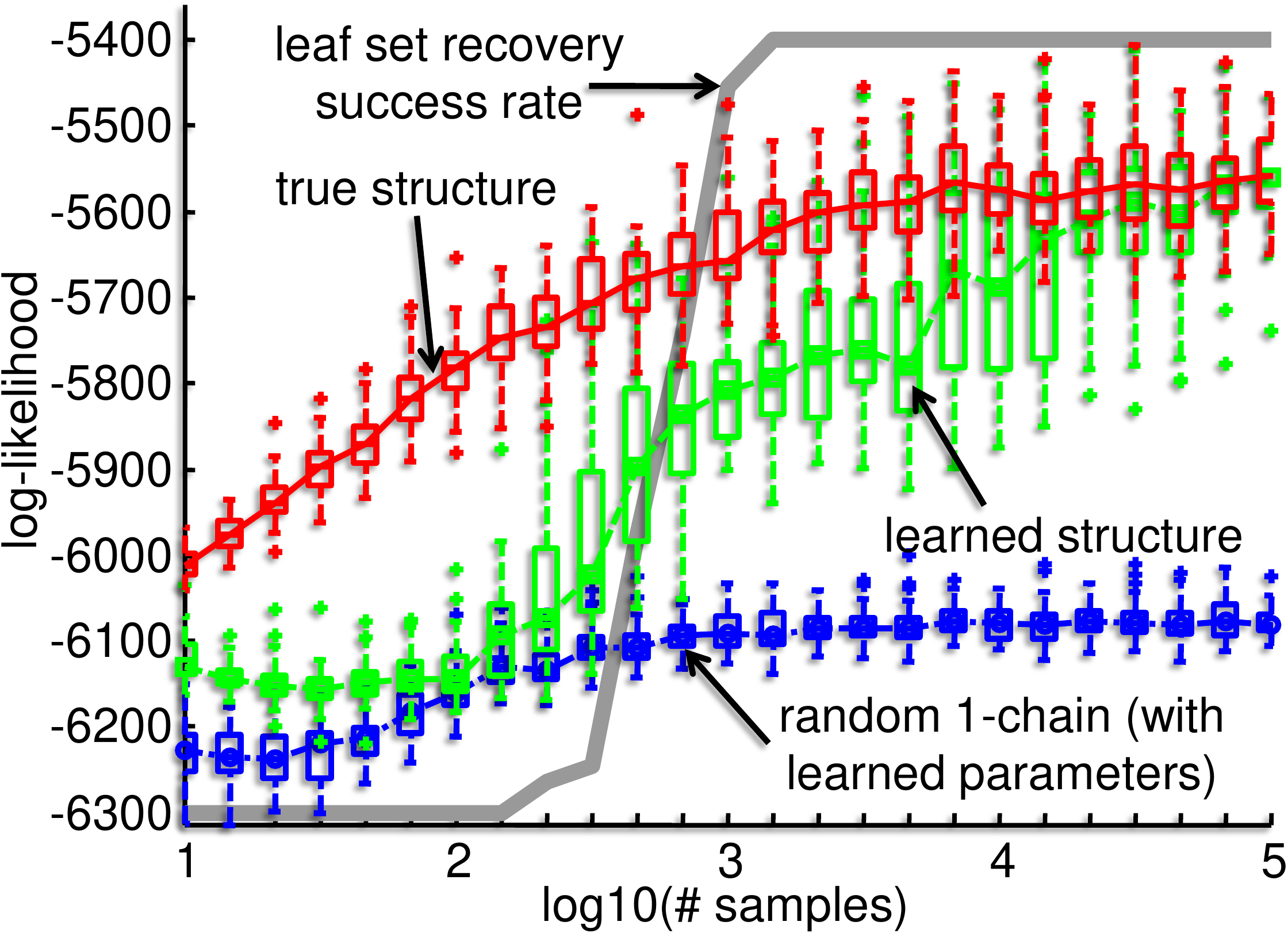}
   \label{fig:simloglike}
}\qquad\qquad
\subfigure[Anchors algorithm success rate $(n=16,\mbox{unknown } k)$]{
        \includegraphics[width=.37\textwidth]{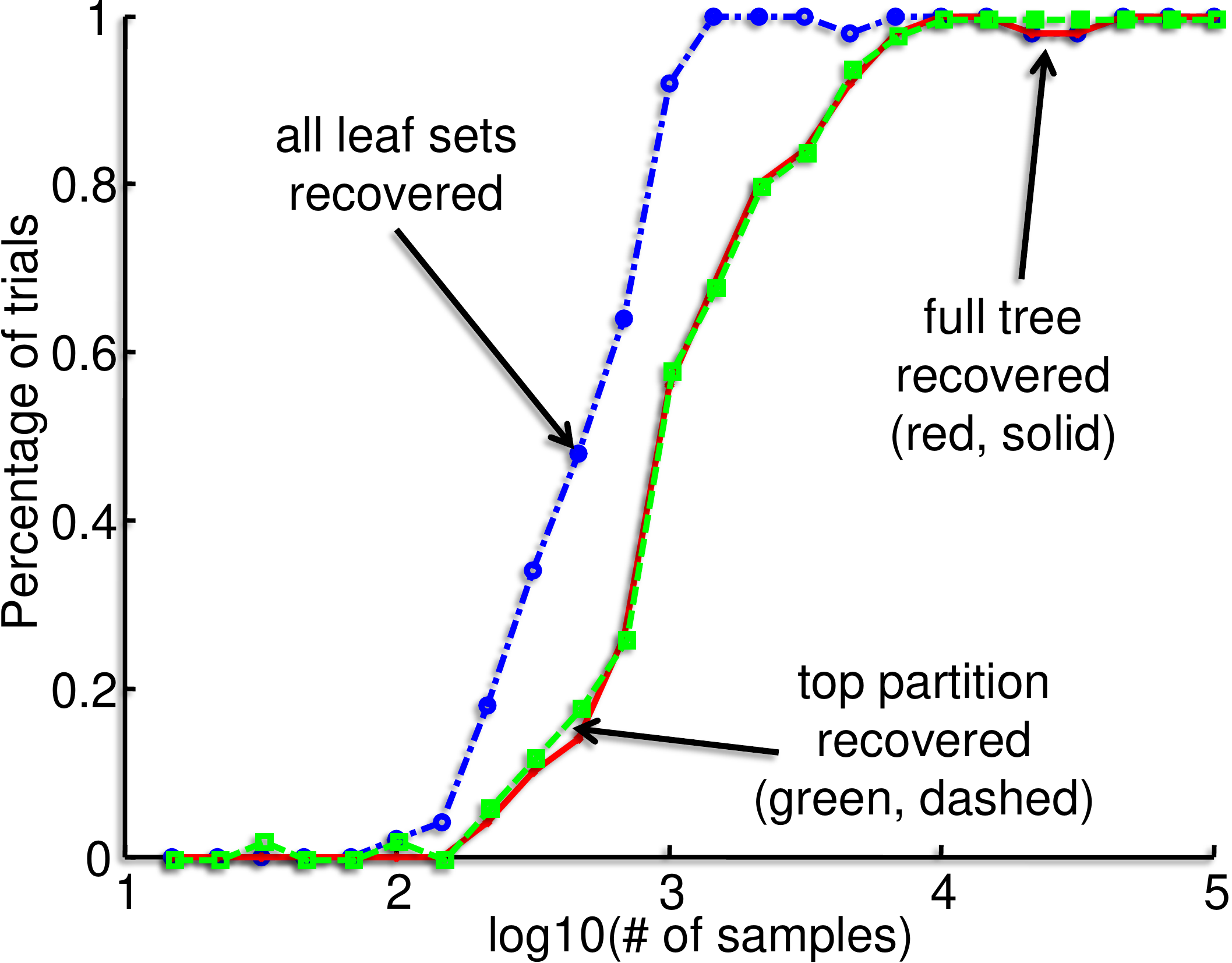}
   \label{fig:hierarchysuccessrate}
}
\caption{Structure discovery experiments on synthetic data}
\label{fig:experiments_structurelearning}
\end{center}
\end{figure*}
We next applied our methods to synthetic data to
show that, given enough samples, our algorithms
\emph{do} effectively
recover the optimal hierarchical structures which generated the original
datasets.  For various settings of $n$,
we simulated data drawn jointly from a $k$-thin chain
model (for $k=4$)
with a random parameter setting for each structure
and applied our exact method for learning thin chains to each sampled dataset.
%In Figure~\ref{fig:successrate16} we show 
First, we investigated the effect of varying sample size
on the proportion of trials (out of fifty)
for which our algorithms were able to (a) recover the full underlying tree
structure \emph{exactly}, (b) recover the topmost partition
correctly, or (c) recover all leaf sets correctly (but possibly
out of order).
Figure~\ref{fig:successrate16} shows the result for an itemset of size $n=16$.
Figure~\ref{fig:numsampsrequired}, shows, as a function of $n$,
the number of samples that were required in the same experiments to
(a) \emph{exactly} recover the full underlying
structure or (b) recover the correct leaf sets, for at least 90\% of the trials.
What we can observe from the plots is that, given enough samples,
reliable structure recovery \emph{is} indeed possible.  It is also interesting
to note that recovery of the correct leaf sets can be done with much fewer samples than
are required for recovering the full hierarchical structure of the model.

After learning a structure for each dataset, we learned model parameters and
evaluated the log-likelihood of each model on 200 test examples
drawn from the true distributions.  In Figure~\ref{fig:simloglike},
we compare log-likelihood performance when (a) the true
structure is given (but not parameters), (b) a $k$-thin chain is learned with known $k$,
and (c) when we use a random generated 1-chain structure.
% ((also known as the free model in~\cite{fligner88})).  
As expected, knowing the true structure
results in the best performance, and the 1-chain is overconstrained.
However, our structure learning algorithm is eventually able to catch up to the
performance of the true structure given enough samples.  It is also interesting
to note that the jump in performance at the halfway point in the plot coincides
with the jump in the success rate of discovering all leaf sets correctly --- we
conjecture that performance
is sometimes less sensitive to the actual hierarchy used, as long as the leaf
sets have been correctly discovered.

To test the Anchors algorithm,
we ran the same simulation using Algorithm~\ref{alg:anchors}
on data drawn from hierarchical models with no fixed $k$.
We generated roughly balanced structures, meaning that item sets
were recursively partitioned into (almost) equally sized subsets
at each level of the hierarchy.
%(leading to larger partitions, where it makes more sense to use
%the Anchors algorithm).
%stopping when an item set had no more than 3 items.
From Figure~\ref{fig:hierarchysuccessrate}, we see that the Anchors
algorithm can also discover the true structure given enough samples.
Interestingly, the difference in sample complexity for discovering leaf
sets versus discovering the full tree is not nearly as pronounced
as in Figure~\ref{fig:successrate16}.  We believe that this is due to the
fact that the balanced trees have less depth than the thin chains, leading to
fewer opportunities for our greedy top-down approach to commit errors.

\subsection{Data analysis: sushi preference data}
We now turn to analyzing real datasets.
For our first analysis, we examine a sushi preference ranking 
dataset~\citep{toshihiro03} consisting of 5000 full
rankings of ten types of sushi.  
The items are enumerated in Figure~\ref{tab:sushitypes}.
Note that, compared to the APA election data, the sushi dataset has twice
as many items, but fewer examples.
\begin{figure}[t]
\begin{center}
{\footnotesize
\begin{tabular}{|ccc|}
\hline
1. ebi (shrimp) &
2. anago (sea eel) &
3. maguro (tuna) \\
4. ika (squid) &
5. uni (sea urchin) &
6. sake (salmon roe) \\
7. tamago (egg) &
8. toro (fatty tuna) &
9. tekka-maki (tuna roll) \\ \, &
10. kappa-maki (cucumber roll)
& \, \\
\hline
\end{tabular}
}
\end{center}
\caption{List of sushi types in the~\cite{toshihiro03} dataset}
\label{tab:sushitypes}
\end{figure}

\begin{figure*}[t!]
\begin{center}
\subfigure[Average log-likelihood of held out 
test examples from the sushi dataset]{
\includegraphics[width=.35\textwidth]{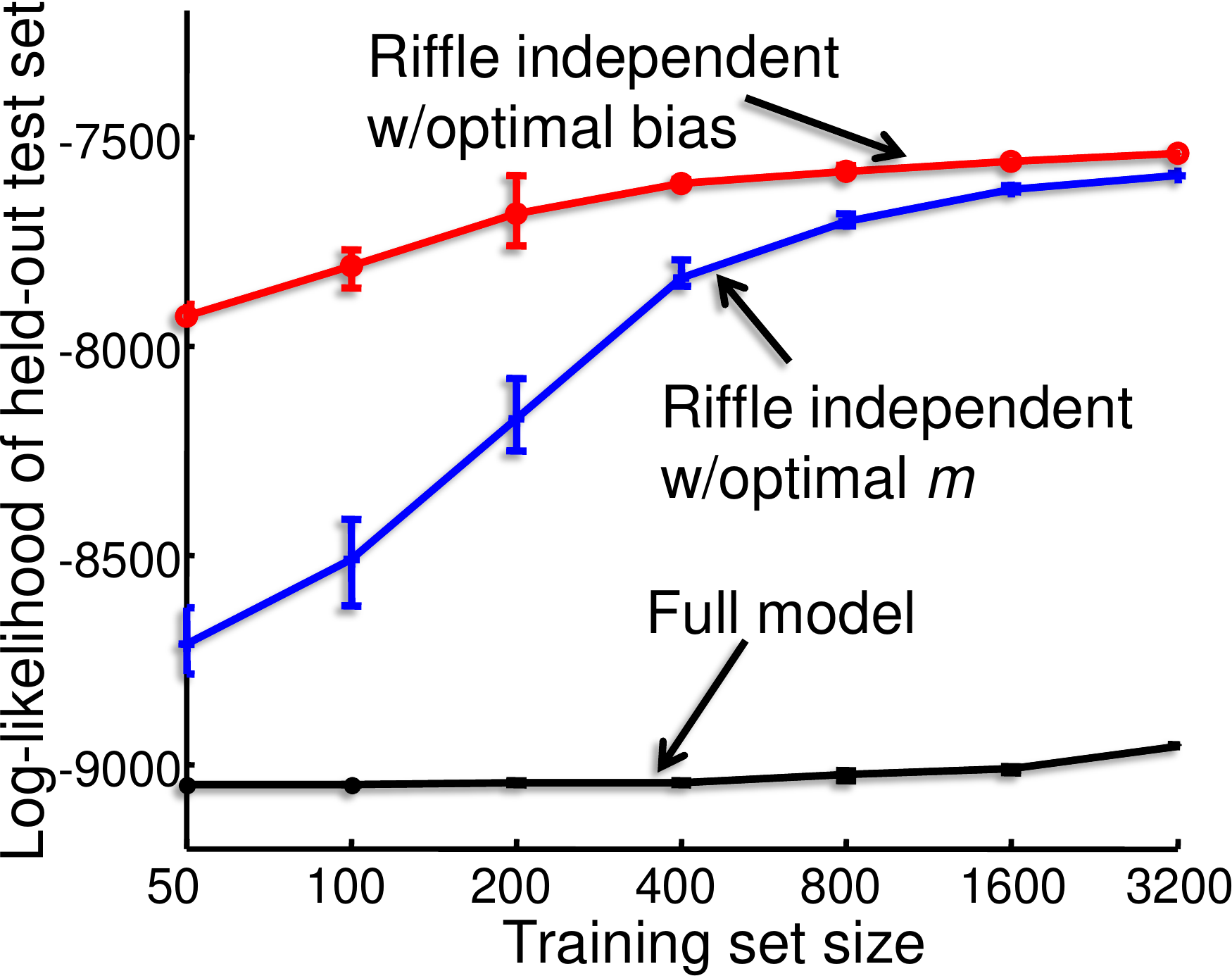}
   \label{fig:sushiloglike}
}\qquad\qquad
\subfigure[First-order probabilities of Uni (sea urchin) rankings]{
\includegraphics[width=.35\textwidth]{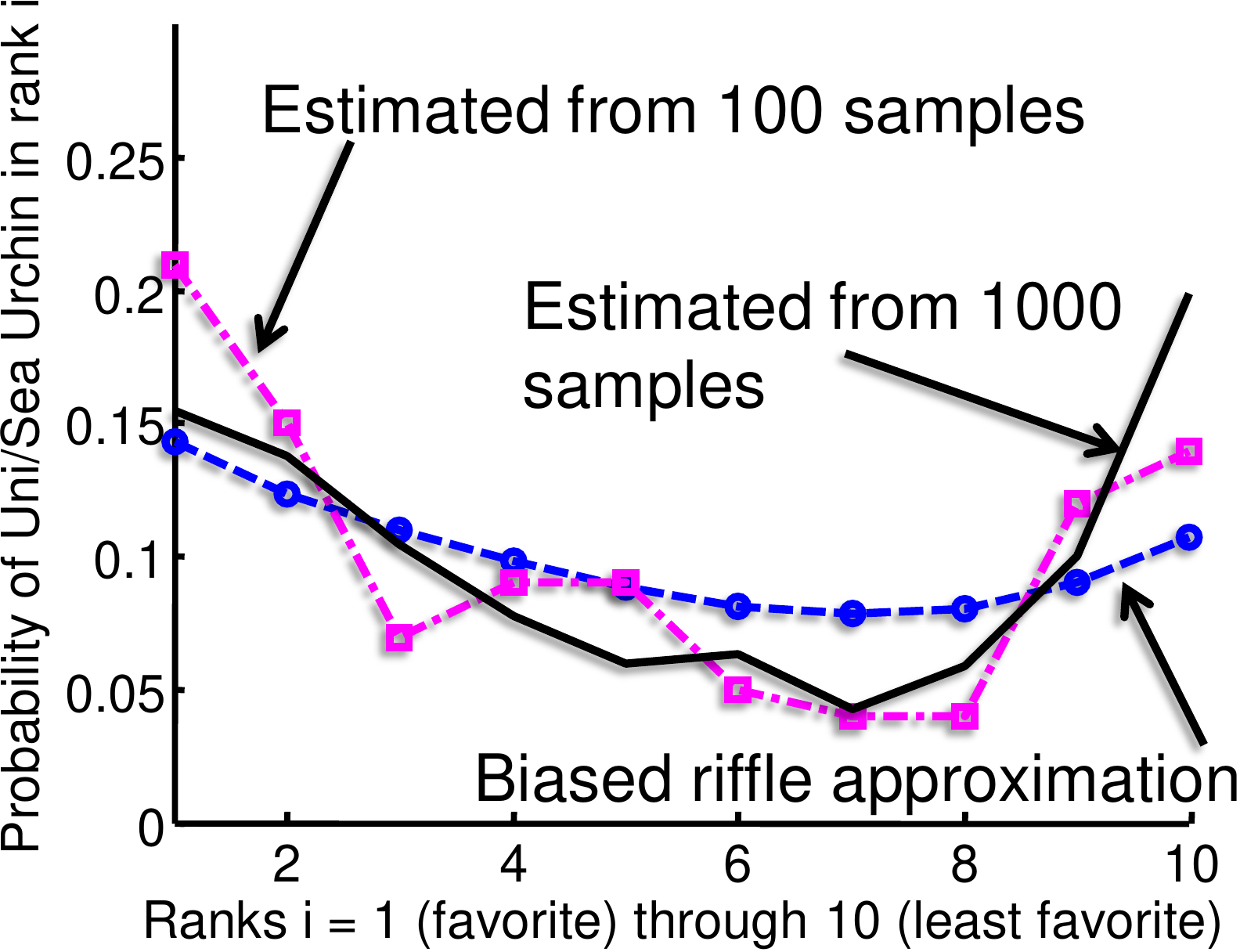}
   \label{fig:uniprobs}
}
\caption{Sushi preference ranking experiments}
\label{fig:experiments_sushi}
\end{center}
\end{figure*}

We begin by studying our methods in the case of a
single binary partitioning of the item set.
Unlike the APA dataset, there is no obvious way to 
naturally partition the types of sushi into two sets --- 
in our first set of experiments, we have arbitrarily
divided the item set into $A=\{1,\dots,5\}$ and $B=\{6,\dots,10\}$.

We divided the data into training
and test sets (with 500 examples)
and estimated the true distribution in three ways: (1) directly
from samples (with regularization),
(2) using a riffle
independent distribution (split evenly into two groups of five
and mentioned above) with the optimal shuffling distribution
$m$, and (3) with a biased riffle shuffle (and optimized bias $\alpha$).
Figure~\ref{fig:sushiloglike} plots testset log-likelihood as a function of
training set size --- we see that riffle independence assumptions can help
significantly to lower the sample complexity of learning.
Biased riffle shuffles, as can also be seen,
are a useful learning bias with very small samples.

As an illustration of the behavior of
biased riffle shuffles, see Figure~\ref{fig:uniprobs} which shows
the approximate first-order marginals of Uni (Sea Urchin) rankings, and the
biased riffle approximation.  The Uni marginals are interesting, because
while many people like Uni, thus providing high rankings,
many people also hate it, providing low rankings.
The first-order marginal estimates have significant variance at
low sample sizes, but with the biased riffle approximation, one can
achieve a reasonable approximation to the distribution even with few samples
at the cost of being somewhat oversmoothed.

\paragraph{Structure learning on the sushi dataset}
%\begin{itemize}\denselist
%\item loglikelihood plots?
%\end{itemize}
Figure~\ref{fig:sushitree} shows the hierarchical structure
that we learn using the entire sushi dataset.  Since the sushi
are not prepartitioned into distinct coalitions, it is
somewhat more difficult than with, say, the APA data, 
to interpret whether the estimated structure makes sense. 
However, parts of the tree certainly seem like reasonable groupings.
For example, all of the tuna related sushi types have been 
clustered together.  Tamago and kappa-maki (egg and cucumber rolls)
are ``safer'', typically more boring choices, while
uni and sake (sea urchin and salmon roe) are more daring.
Anago (sea eel), is the odd man out in the estimated hierarchy, being
partitioned away from the remaining items at the top of the tree.

%To be more quantitative, Figure~\ref{fig:stabletree_sushi}
%summarizes the results of our bootstrap analysis for the sushi dataset.
To understand the behavior of our algorithm with smaller sample sizes,
we looked for features of the tree from Figure~\ref{fig:sushitree} 
which remained stable even when learning with smaller sample sizes.  
Figure~\ref{fig:stabletree_sushi} summarizes the results of our bootstrap
analysis for the sushi dataset, in which 
we resample from the original training set 200 times at each of
different sample sizes and plot the proportion of learned
hierarchies which, (a) recover `sea eel' as the topmost partition,
(b) recover all leaf sets correctly, (c), recover the entire tree correctly,
(d) recover the tuna-related sushi leaf set,
(e) recover the \{tamago, kappa-maki\} leaf set,
and (f) recover the \{uni, sake\} leaf set.

\begin{figure*}[t!]
\begin{center}
\subfigure[First-order marginals of Irish election data]{
\includegraphics[width=.3\textwidth]{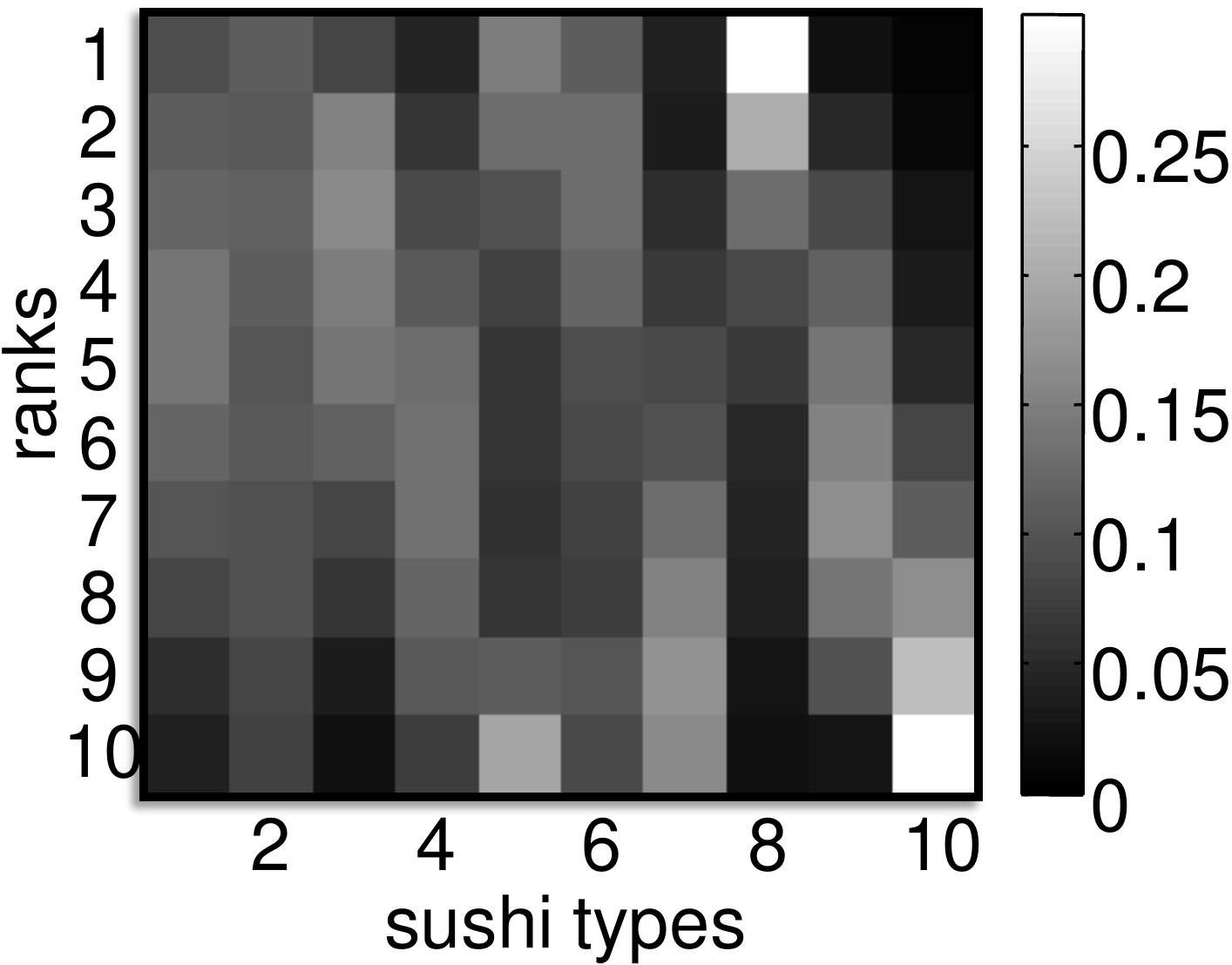}
\label{fig:sushifirstorder_exact}
}\qquad\qquad
\subfigure[Riffle independent approximation of first-order marginals
with learned hierarchy]{
\includegraphics[width=.3\textwidth]{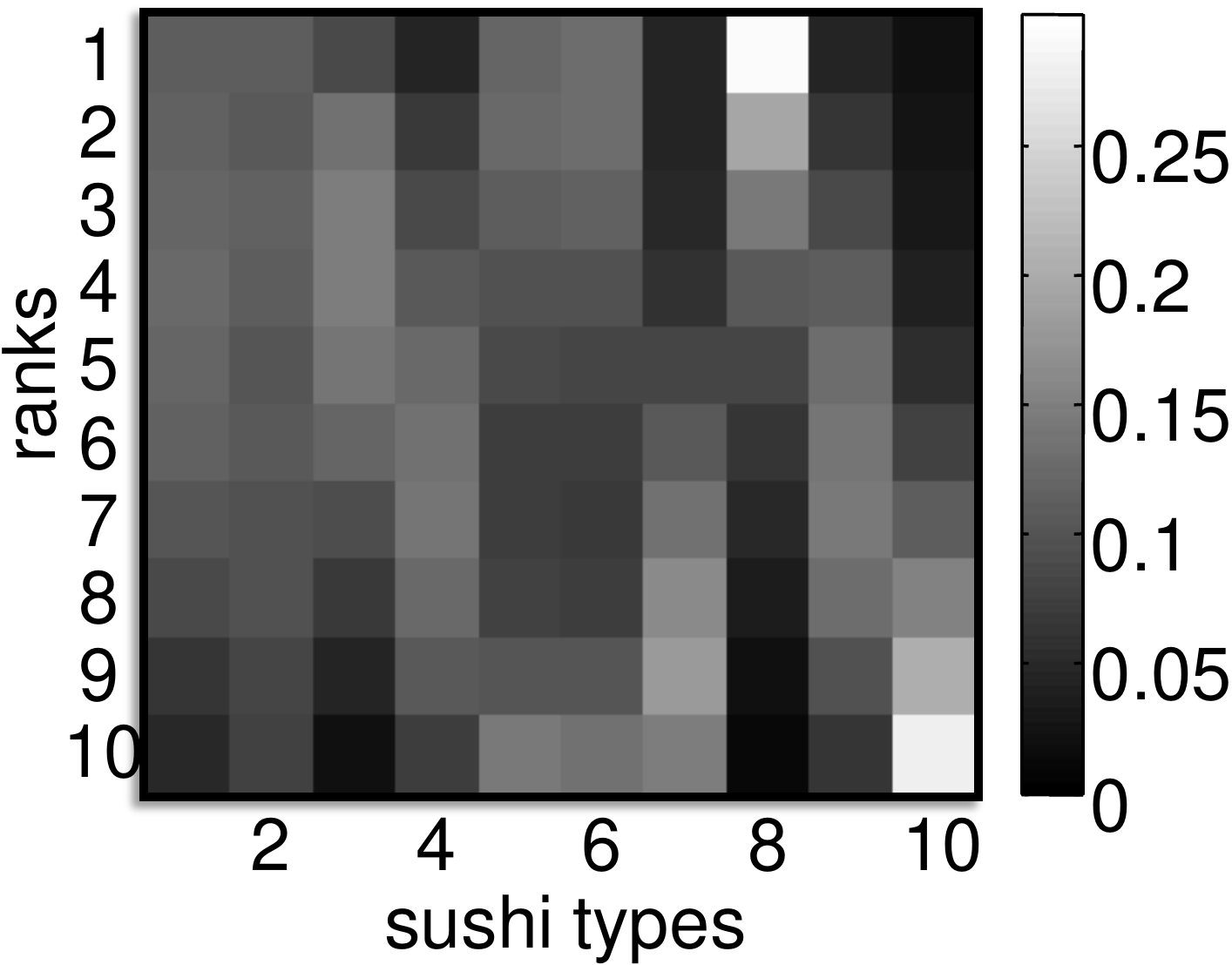}
\label{fig:sushifirstorder_approx}
}\,
\caption{Sushi preference dataset: exact first-order marginals
and riffle independent approximation}
\label{fig:sushifirstorder}
\end{center}
\end{figure*}

\begin{figure*}[t!]
\begin{center}
\subfigure[Stability of bootstrapped tree `features' of the sushi dataset]{
\includegraphics[width=.4\textwidth]{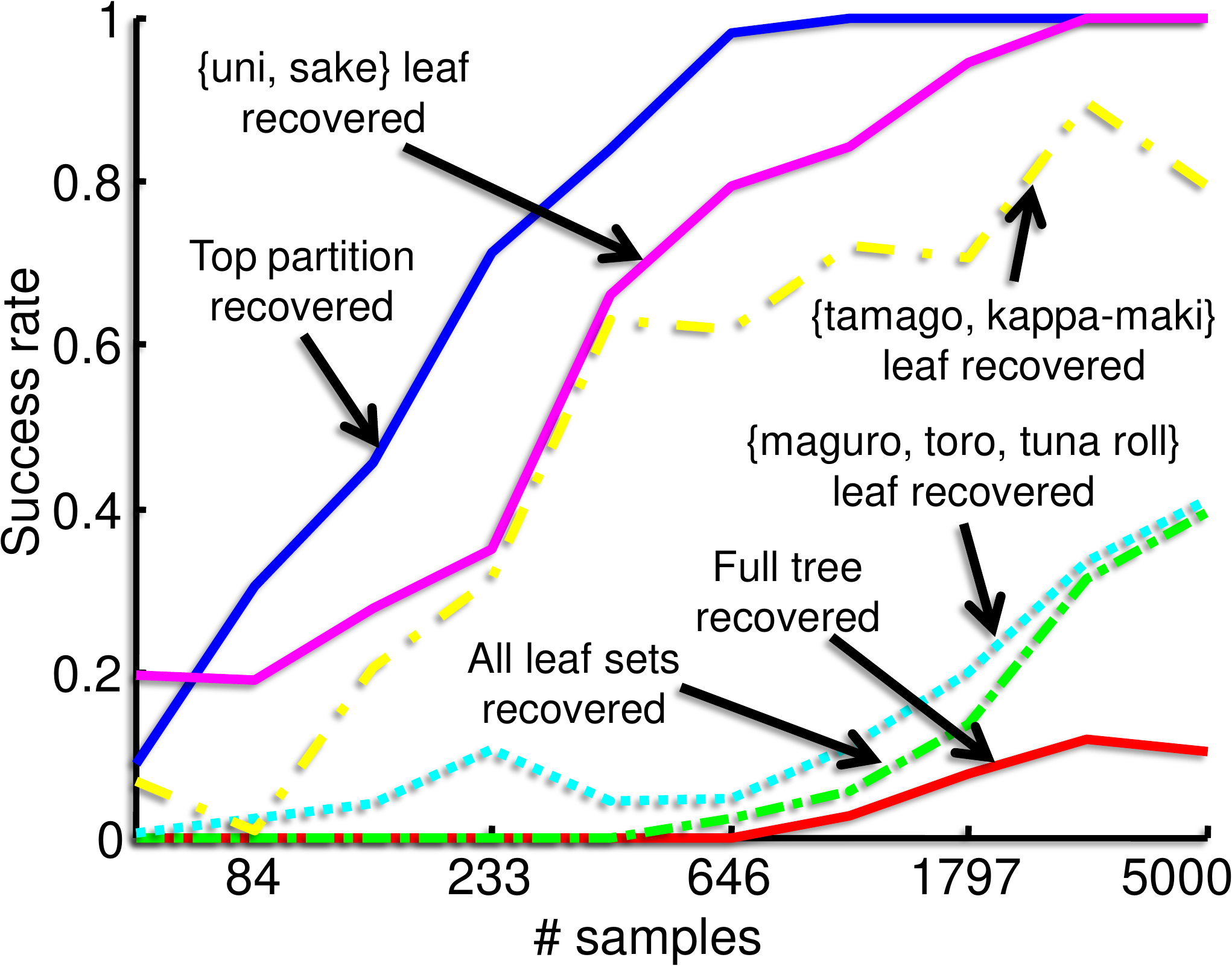}
\label{fig:stabletree_sushi}
}\;
\subfigure[Learned hierarchy for sushi dataset using all 5000 rankings]{
\includegraphics[width=.5\textwidth]{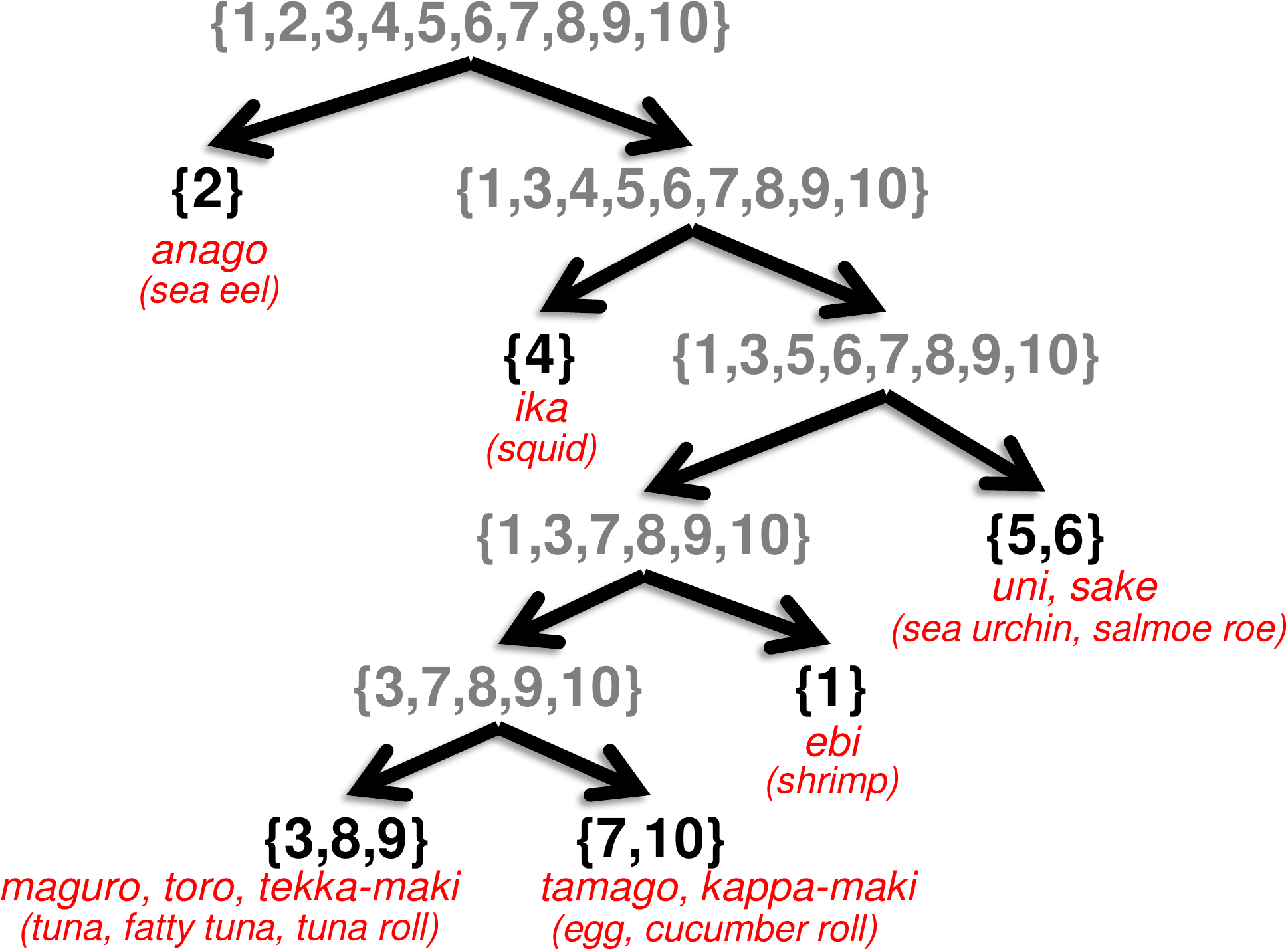}
\label{fig:sushitree}
}
\caption{Structure discovery experiments: Sushi preference dataset}
\label{fig:sushistructure}
\end{center}
\end{figure*}

\subsection{Data analysis: Irish election data}
We next applied our algorithms to a larger Irish House 
of Parliament (D\'ail \'Eireann)
election dataset from the Meath constituency in Ireland 
(Figure~\ref{fig:meathmap}).
The D\'ail \'Eireann uses the \emph{single transferable vote} (STV)
election system, in which voters rank a subset of candidates. 
In the Meath constituency, there were 
14 candidates in the 2002 election, running for five allotted seats.
The candidates identified with the two
major rival political parties, Fianna F\'ail and 
Fine Gael, as well as a number of smaller parties 
(Figure~\ref{fig:meathtable}).  
See~\cite{gormley06} for more election
details (including candidate names) as well as an alternative analysis.
In our experiments, we 
used a subset of roughly 2500 fully ranked ballots from the election.

\begin{figure}[t!]
\begin{center}
\subfigure[The Meath constituency in Ireland, shown in green,
was one of three constituencies to have electronic voting
in 2002. (Map from Wikipedia)]{
\raisebox{-60pt}{
\includegraphics[width=.30\textwidth]{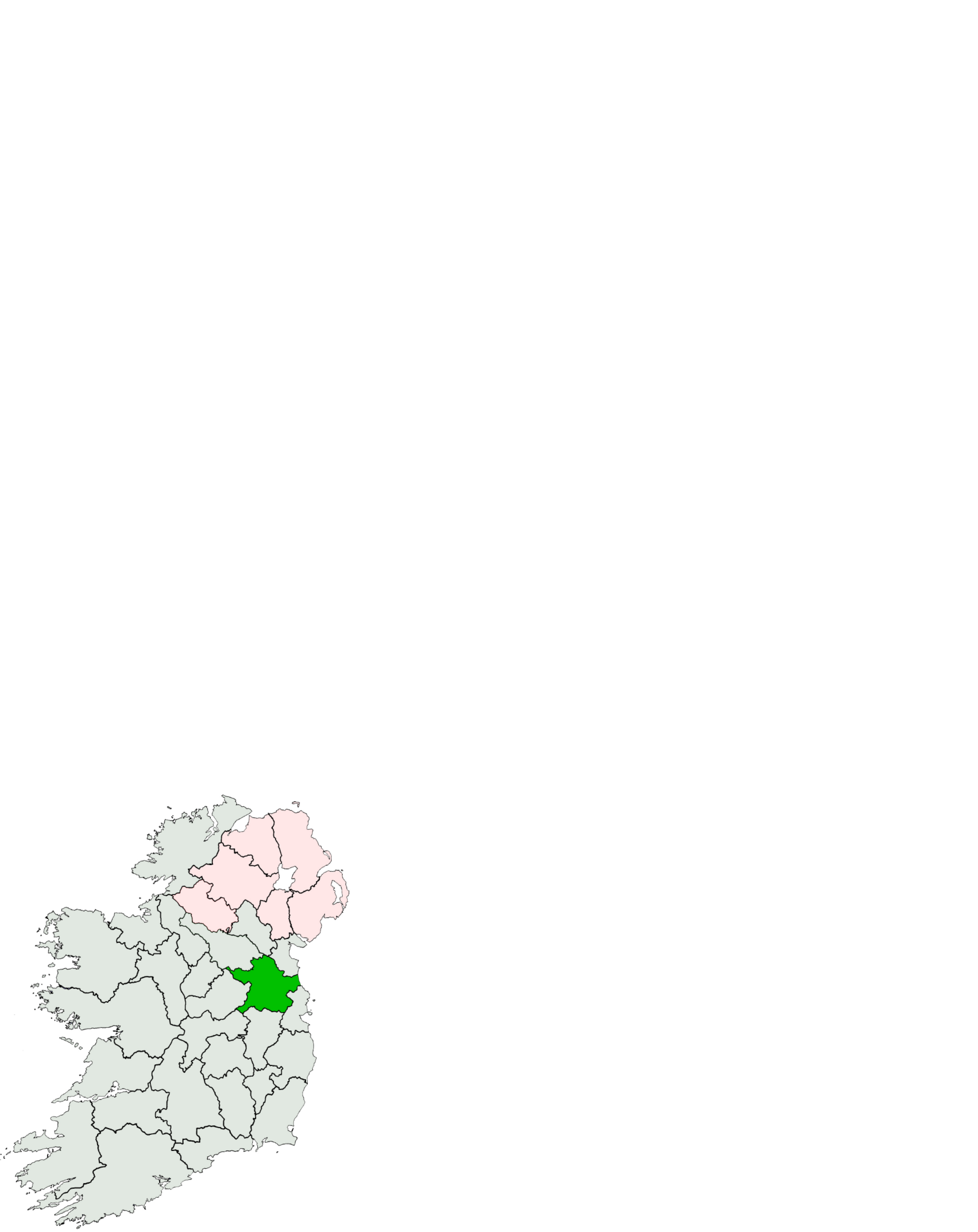}
\label{fig:meathmap}
}
}\qquad\qquad
\subfigure[List of candidates from the Meath
constituency election in 2002 for five seats in
the D\'ail \'Eireann (reproduced from~\cite{gormley06})]{
\raisebox{10pt}{\scriptsize
\begin{tabular}{|cc|c|}
\hline
& Candidate & Party \\
\hline
1 & Brady, J. & Fianna F\'ail \\
2 & Bruton, J. & Fine Gael \\
3 & Colwell, J. & Independent \\
4 & Dempsey, N. & Fianna F\'ail \\
5 & English, D. & Fine Gael \\
6 & Farrelly, J. & Fine Gael \\
7 & Fitzgerald, B. & Independent \\
8 & Kelly, T. & Independent \\
9 & O'Brien, P. & Independent \\
10 & O'Byrne, F. & Green Party \\
11 & Redmond, M. & Christian Solidarity \\
12 & Reilly, J. & Sinn F\'ein \\
13 & Wallace, M. & Fianna F\'ail \\
14 & Ward, P. & Labour \\
\hline
\end{tabular}\label{fig:meathtable}
}
}
\end{center}
\caption{Irish election dataset summary}
\end{figure}

\begin{figure*}[t!]
\begin{center}
\subfigure[First-order marginals of Irish election data]{
\includegraphics[width=.30\textwidth]{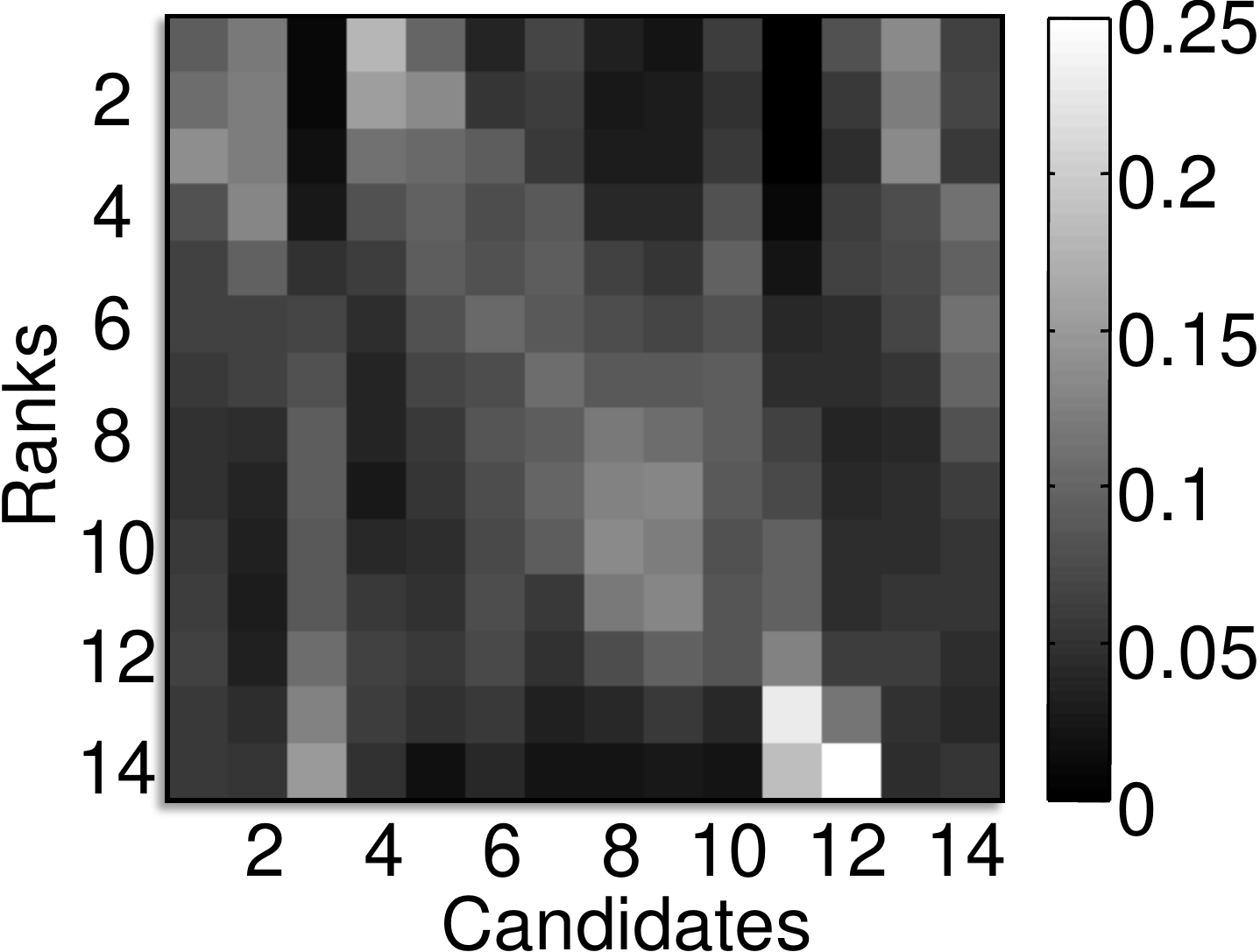}
\label{fig:meathfirstorder_exact}
}\,
\subfigure[Riffle independent approximation of first-order marginals
with $A=\{\mbox{Fianna F\'ail},\mbox{Fine Gael}\}$, and $B=\{\mbox{everything else}\}$]{
\includegraphics[width=.30\textwidth]{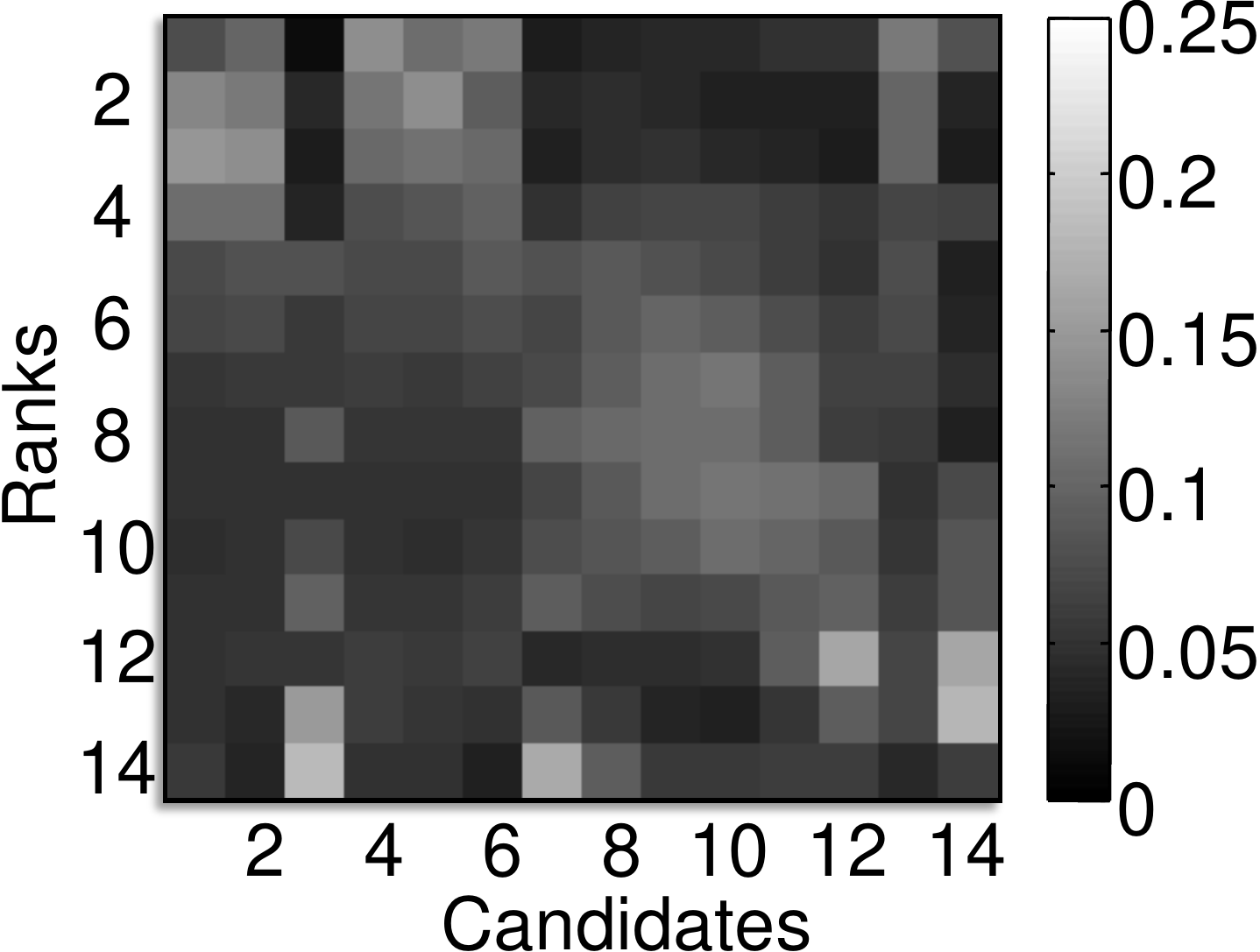}
\label{fig:meathfirstorder_approx1}
}\,
\subfigure[Riffle independent approximation of first-order marginals
with learned hierarchy in Figure~\ref{fig:meathtree}]{
\includegraphics[width=.30\textwidth]{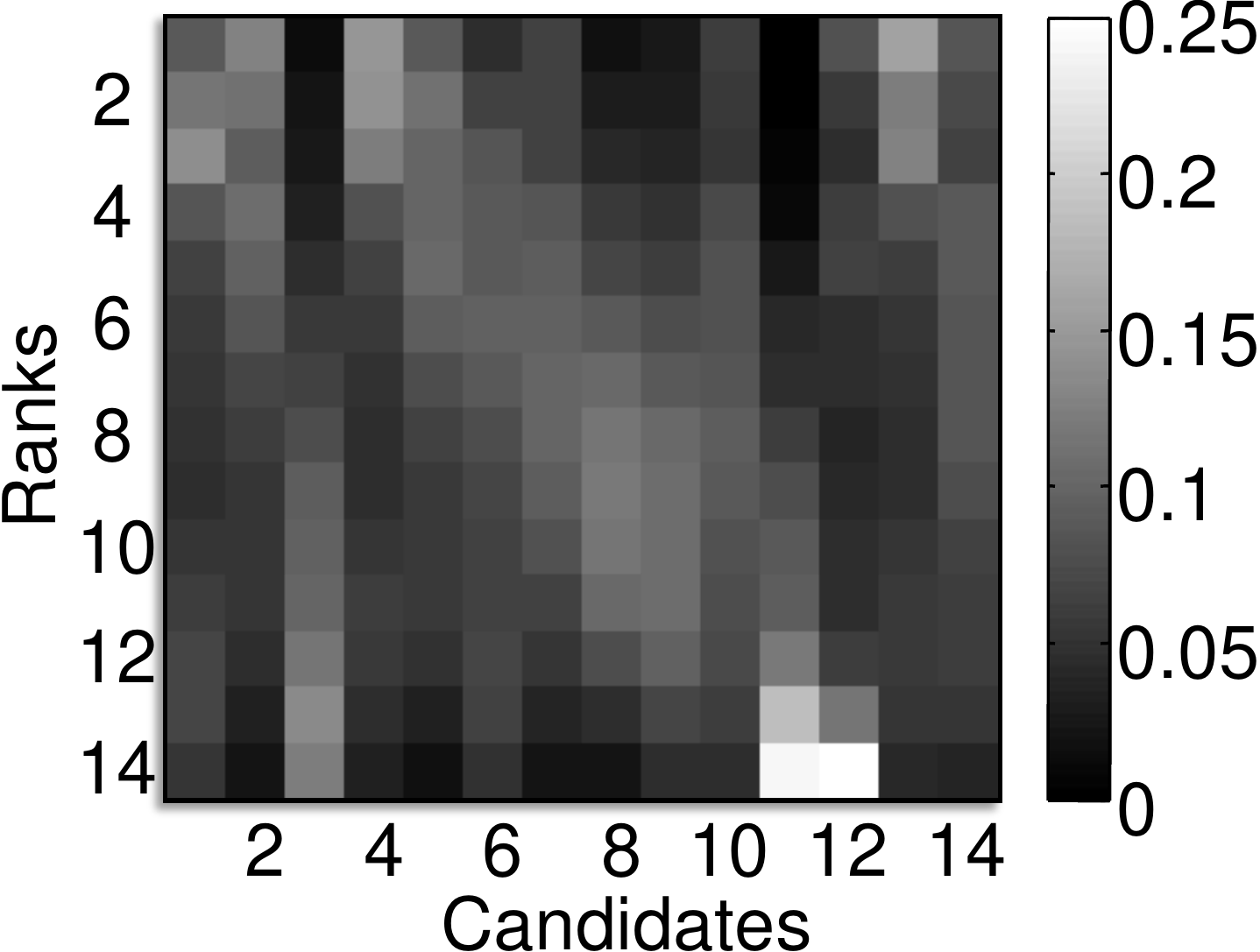}
\label{fig:meathfirstorder_approx2}
}
\caption{Irish Election dataset: exact first-order marginals 
and riffle independent approximations}
\label{fig:meathfirstorder}
\end{center}
\end{figure*}

\begin{figure*}[t!]
\begin{center}
\includegraphics[width=.5\textwidth]{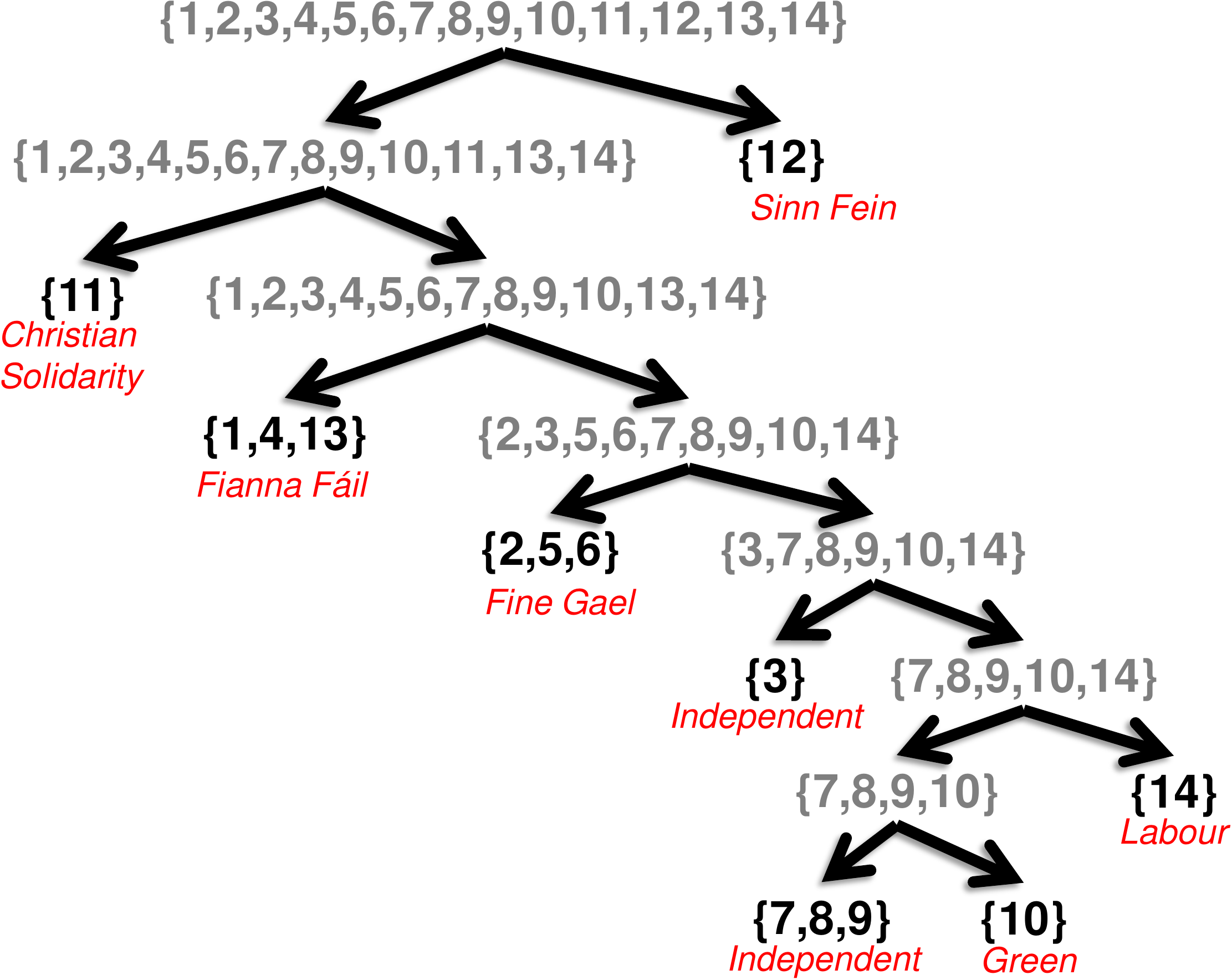}
\end{center}
\caption{Learned hierarchy for Irish Election dataset using
all 2500 ballots}
\label{fig:meathtree}
\end{figure*}

To summarize the dataset, Figure~\ref{fig:meathfirstorder_exact} 
shows the matrix of first-order marginals estimated from the dataset.
Candidates $\{1,2,4,5,6,13\}$ form the set of ``major'' party
candidates belonging to either Fianna F\'ail or Fine Gael, and
as shown in the figure, fared much better in the election than
the other seven minor party candidates.
Notably, candidates 11 and 12 (belonging to the Christian Solidary
Party and Sinn F\'ein, respectively) received on average, the lowest ranks
in the 2002 election.  One of the differences between the two candidates,
however, is that a significant portion of the electorate also
ranked the Sinn F\'ein candidate very high.

Though it may not necessarily be clear how one might partition 
the candidates, a natural idea might be to assume that the major party
candidates ($A$) are riffle independent of the minor party candidates ($B$).
In Figure~\ref{fig:meathfirstorder_approx1}, we show the first-order
marginals corresponding to an approximation in which $A$ and $B$
are assumed to be riffle independent.
Visually, the approximate first-order marginals can be seen to be roughly
similar to the exact first-order marginals, however there are significant
features of the matrix which are not captured by the approximation ---
for example, the columns belonging to candidates 11 and 12 are not well
approximated.  
In Figure~\ref{fig:meathfirstorder_approx2}, we plot a more principled
approximation corresponding to a learned hierarchy, which we discuss next.
As can be seen, the first-order marginals obtained via structure learning 
is visually much closer to the exact marginals.

\paragraph{Structure discovery on the Irish election data}
As with the APA data, both the exhaustive optimization of $\FFF$ and the Anchors algorithm
returned the same tree, with running times of
69.7 seconds and 2.1 seconds respectively (not including the
3.1 seconds required for precomputing mutual informations).
The resulting tree,
with candidates enumerated alphabetically from 1 through 14,
is shown (only up to depth 4), in Figure~\ref{fig:meathtree}.
As expected, the candidates belonging to the two major
parties, Fianna F\'ail and Fine Gael, are neatly partitioned into
their own leaf sets.  The topmost leaf is the Sinn Fein candidate,
indicating that voters tended to insert him into the ranking independently
of all of the other 13 candidates.

To understand the behavior of our algorithm with smaller sample sizes,
we looked for features of the tree from Figure~\ref{fig:meathtree} which
remained stable even when learning with smaller sample sizes.  In Figure~\ref{fig:stabletree},
we resampled from the original training set 200 times at 
different sample sizes and plot the proportion of learned 
hierarchies which, (a) recover the Sinn Fein candidate as 
the topmost leaf, (b) partition the two major parties into leaf sets,
and (c) agree with the original tree on all leaf sets,
and (d) recover the entire tree.
Note that while the dataset is insufficient to support
the entire tree structure, even with about 100 training examples, candidates 
belonging to the major parties
are consistently grouped together indicating strong party 
influence in voting behavior.

\begin{figure*}[t!]
\begin{center}
\subfigure[Stability of bootstrapped tree `features'
of the Irish dataset]{
\includegraphics[width=.36\textwidth]{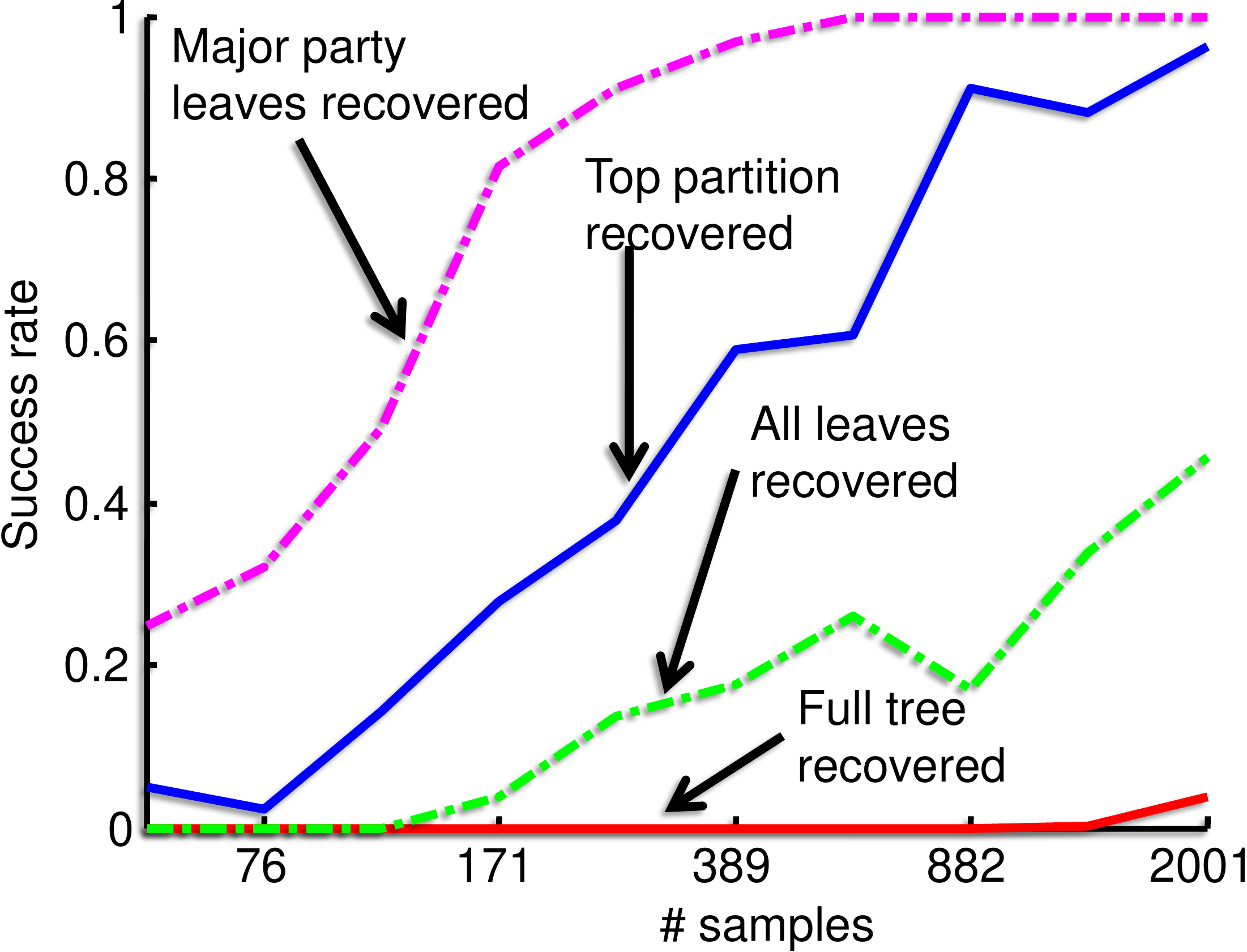}
\label{fig:stabletree}
}\qquad\qquad
\subfigure[Log-likelihood of held out test examples
from the Irish dataset using optimized structures]{
\includegraphics[width=.4\textwidth]{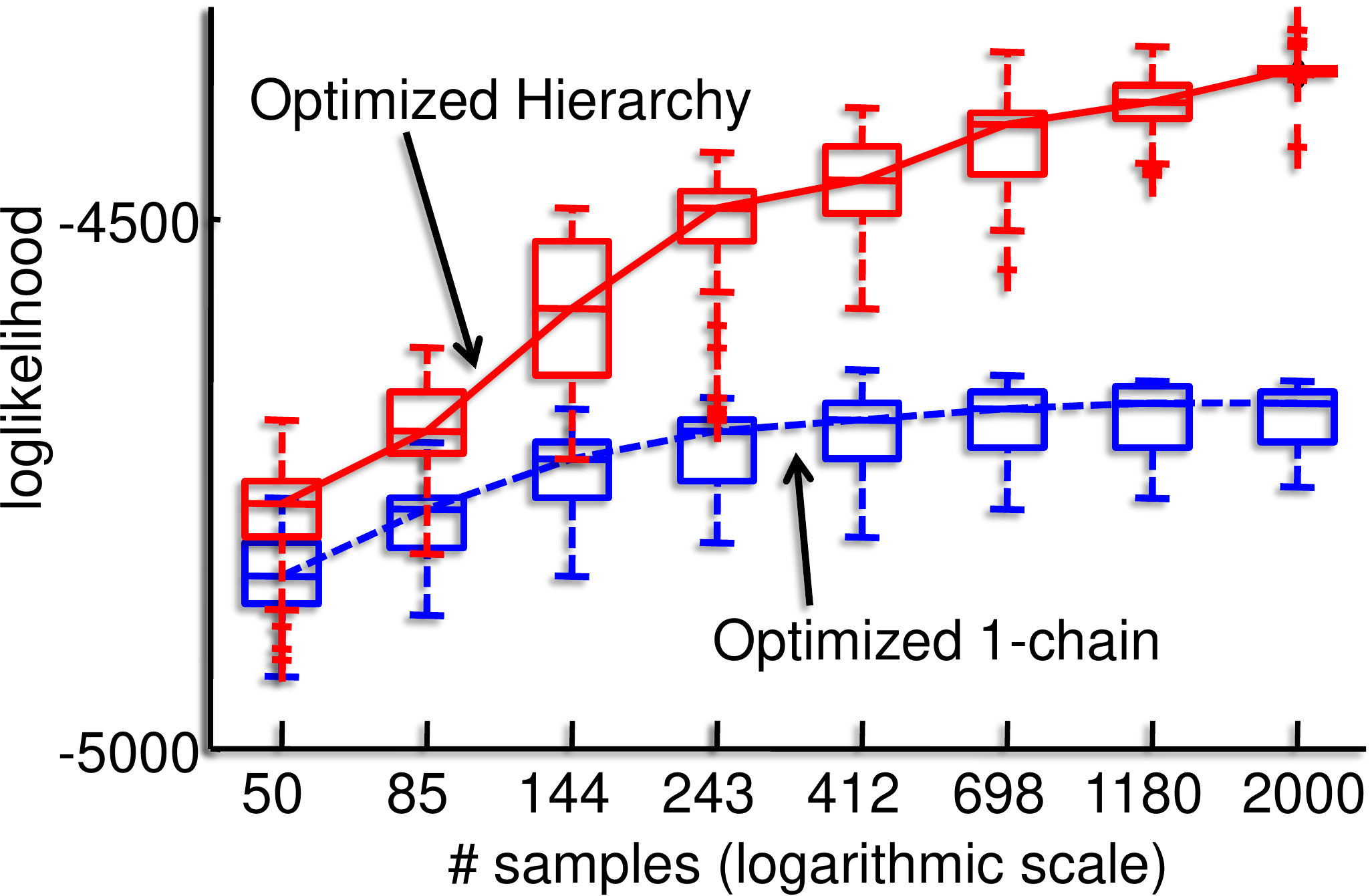}
\label{fig:meathloglike}
}
\caption{Structure Discovery Experiments: Irish Election dataset}
\label{fig:experiments_irish}
\end{center}
\end{figure*}
We compared the results between learning a
general hierarchy (without fixed $k$)
and learning a 1-thin chain model
on the Irish data.  Figure~\ref{fig:meathloglike} shows the
log-likelihoods achieved by both models on a held-out
test set as the training set size increases.
For each training set size, we subsampled the Irish
dataset 100 times to produce confidence intervals.
Again, even with small sample sizes, the hierarchy outperforms the 1-chain
and continually improves with more and more training data.
One might think that the hierarchical models, which use more parameters
are prone to overfitting, but in practice, the models learned by our
algorithm devote most of the extra parameters towards modeling the correlations
among the two major parties.  As our results suggest, such
intraparty ranking correlations are crucial for
achieving good modeling performance.

Finally, we ran our structure learning algorithm on 
two similar but smaller election datasets from the other
constituencies in the 2002 election which supported electronic
voting, the Dublin North and West constituencies.  
Figure~\ref{fig:dublintrees} shows the resulting hierarchies learned 
from each dataset.  
As with the Meath constituency, the Fianna F\'ail and Fine Gael
are consistently grouped together in leaf sets in the Dublin 
datasets.  Interestingly, the Sinn F\'ein and Socialist 
parties are also consistently grouped in the Dublin datasets,
potentially indicating some latent similarities between the two
parties.

\begin{figure*}[t!]
\begin{center}
\subfigure[Dublin North]{
\includegraphics[width=.34\textwidth]{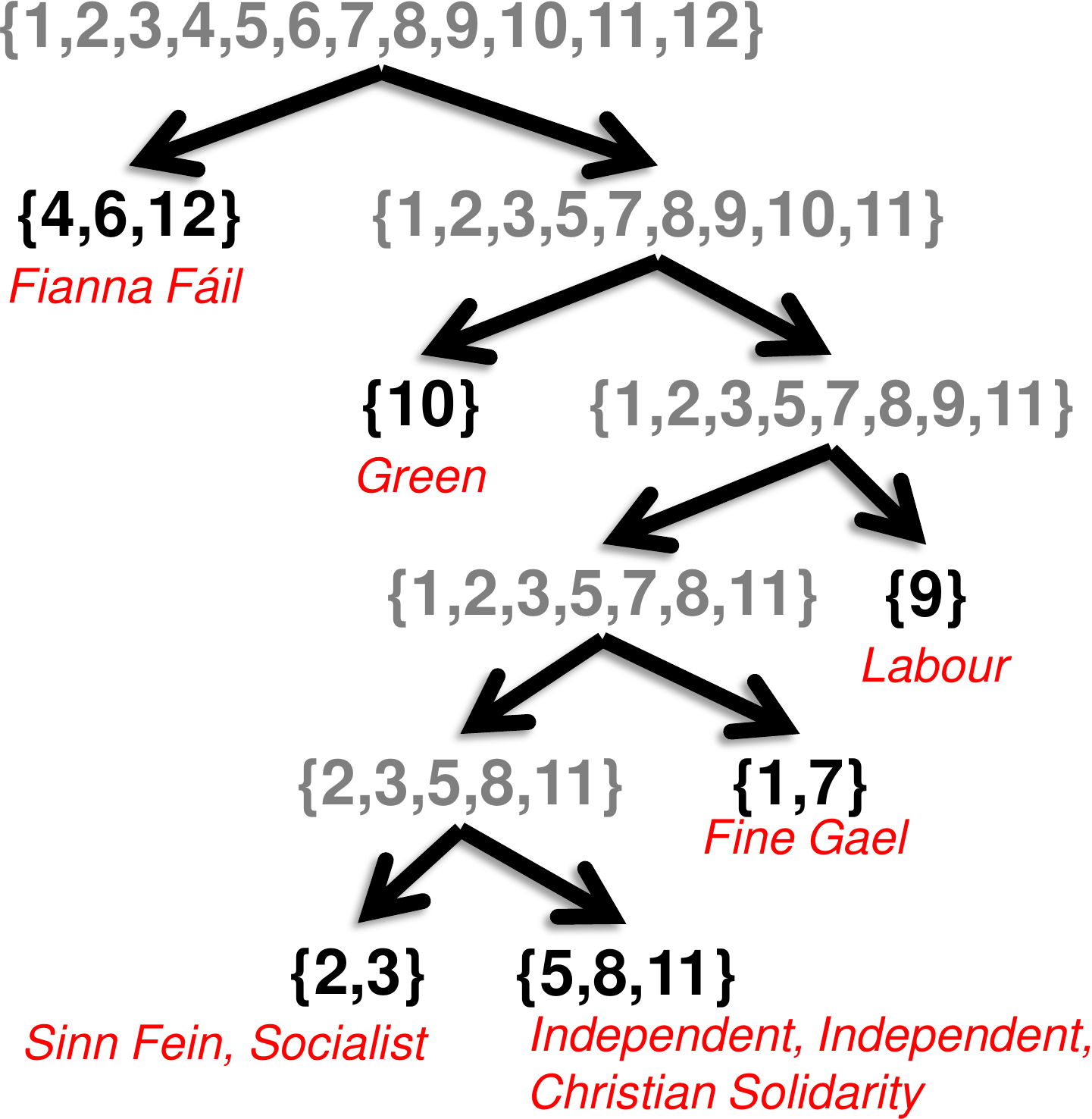}
\label{fig:dublinnorth}
}\qquad
\subfigure[Dublin West]{
\includegraphics[width=.40\textwidth]{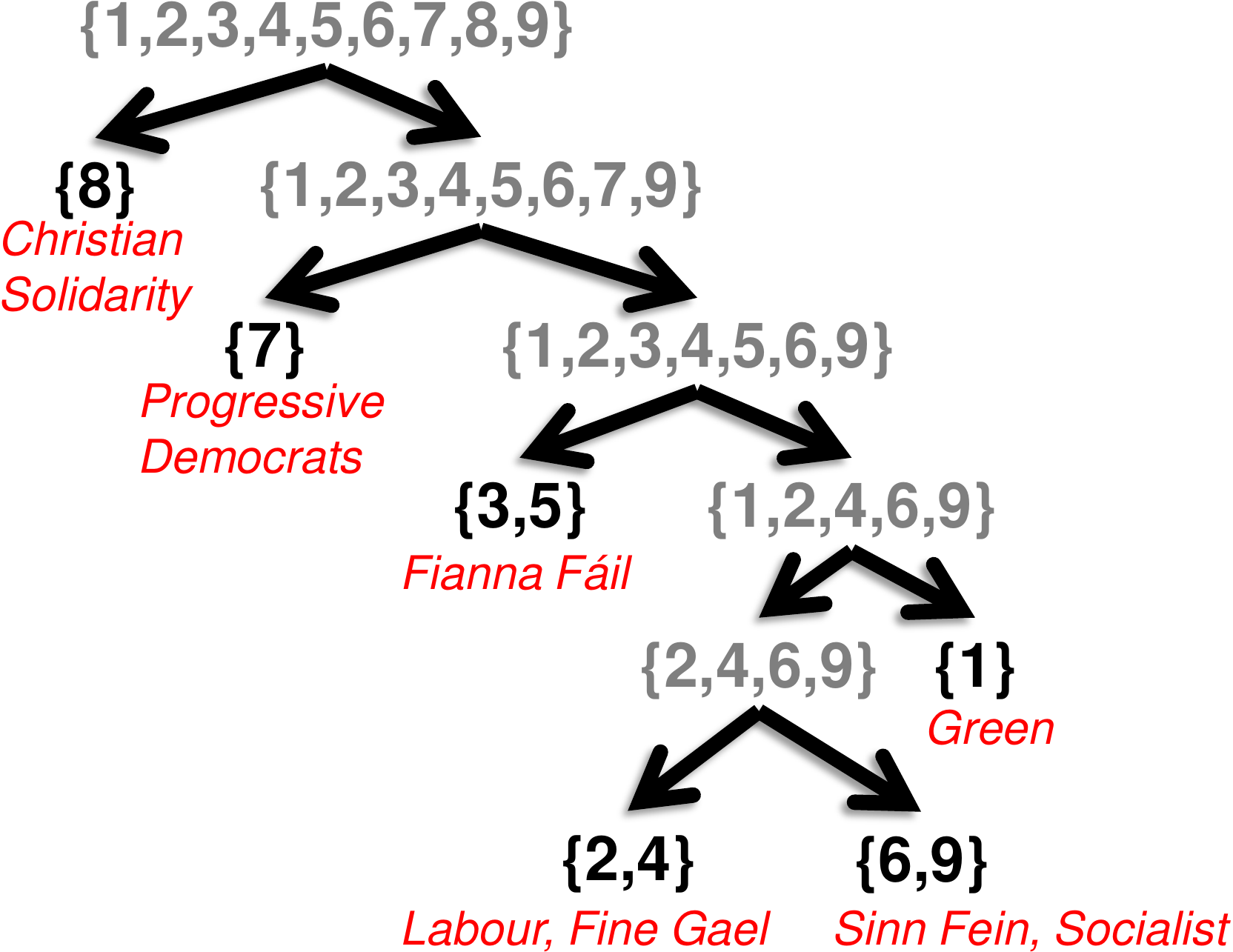}
\label{fig:dublinwest}
}
\caption{Learned hierarchies for Irish election data from Dublin (north)
and Dublin (west) constituencies}
\label{fig:dublintrees}
\end{center}
\end{figure*}

\section{Conclusions}
Exploiting independence structure for efficient inference
and low sample complexity is a simple yet powerful idea,
pervasive throughout the machine learning literature, showing up
in the form of Bayesian networks, Markov random fields, and more.
For rankings, independence can be problematic due to mutual exlusivity
constraints, and we began our paper by indicating a need for a
useful generalization of independence.

The main contribution of our paper is the definition
of such a generalized notion, namely, riffled independence.
There are a number of natural questions that immediately
follow any such definition, such as:
\begin{itemize}\denselist
\item Does the generalization retain any of the computational advantages
of probabilistic independence?
\item Can we find evidence that such
 generalized independence relations hold (or approximately hold)
in real datasets?
%\item Can we design scalable algorithms which can exploit 
%the generalized independence structure of a model?
\item If subsets of items in a ranking dataset indeed
satisfy the generalized independence assumption,
or approximately so, how could we algorithmically 
determine what these subsets should be from samples?
\end{itemize}
We have shown that for riffled independence, the 
answer to each of the above questions lies in the affirmative.
We next explored hierarchical riffle independent decompositions.
Our model, in which riffle independent subsets are recursively chained
together, leads to a simple, interpretable model whose structure
we can estimate from data, and we have successfully applied our learning
algorithms to several real datasets.

Currently, the success of our structure learning methods
depends on the existence of a 
fairly sizeable dataset of full rankings.
However, ranking datasets are more typically composed
of partial or incomplete rankings, which are often far
easier to elicit from a multitude of users.
For example, top-$k$ type rankings, or even rating data (in
which a user/judge provides a rating of an item between, say, 1 and 5)
are common.
Extending our parameter and structure learning algorithms
for handling such partially ranked data would be
a valuable and practical extension of our work.
For structure learning, our tripletwise mutual information
measures can already potentially be estimated within a
top-$k$ ranking setting.  It would be interesting to also 
develop methods for estimating these mutual information measures
from other forms of partial rankings.
Additionally, the effect of using
partial rankings on structure learning sample complexity
is not yet understood, and the field would benefit 
from a careful analysis.

Many other possible extensions are possible.
In our paper, we have developed algorithms for 
estimating maximum likelihood parameters.  For small
training set sizes, a Bayesian approach would be
more appropriate, where a prior is placed on the parameter
space.  However, if the prior distribution ties parameters
together (i.e., if the prior does not factor across parameters),
then the structure learning problem can be considerably
more complicated, since we would not be able to simply identify
independence relations.

Riffled independence is a new tool for analyzing ranked data
and as we have shown, has the potential to give new insights
into ranking datasets.  We strongly believe that it will be crucial
in developing fast and efficient inference and 
learning procedures for ranking data, and perhaps other
forms of permutation data.

\appendix
\section*{Acknowledgements}
This work is supported in part by the
ONR under MURI N000140710747, and
the Young Investigator Program grant N00014-08-1-0752.
We thank Khalid El-Arini for feedback on initial drafts,
and Brendan Murphy and Claire Gormley for providing
the Irish voting datasets.  Discussions with Marina
Meila provided valuable initial ideas upon which this 
work is based.

\section{More proofs and discussion}\label{app:proofs}
\subsection{Proof of Lemma~\ref{lem:tau}}

\begin{proof}
Let $i$ be an item in $A$ (with $1\leq i\leq p$).  
Since $\phi_A(\sigma)\in S_p$, $[\phi_A(\sigma)](i)$ is 
some number between $1$ and $p$.
By definition, for any $j\in\{1,\dots,p\}$ the interleaving map
$[\tau_{A,B}(\sigma)](j)$ returns the $j^{th}$ largest rank in $\sigma(A)$.
Thus, $[\tau_{A,B}(\sigma)](\phi_A(i))$ is the $\phi_A(i)$-th largest
rank in $\sigma(A)$, which is simply the absolute rank of item $i$.
Therefore, we conclude that $\sigma(i)=[\tau_{A,B}(\sigma)](\phi_A(i))$.
Similarly, if $p+1\leq i\leq n$, we have
$\sigma(i)=[\tau_{A,B}(\sigma)](\phi_B(i)+p)$ (the added $p$ is necessary
since the indices of $B$ are offset by $p$ in $\sigma$),
and we can conclude that
$\sigma=\tau_{A,B}(\sigma) [\phi_A(\sigma)\,\phi_B(\sigma)]$.
\end{proof}

\subsection{Log-likelihood interpretations}\label{sec:likelihood}
If we examine the KL divergence objective introduced in 
Section~\ref{sec:structure1},
it is a standard fact that minimizing Equation~\ref{eqn:mainproblem}
is equivalent to find the structure which maximizes the log-likelihood
of the training data. \vspace{-3mm}

{\footnotesize
\begin{align*}
\mathcal{F}[A,B] &= D_{KL}(\hat{h}(\sigma) \,||\, 
m(\tau_{A,B}(\sigma)) f(\phi_A(\sigma)) g(\phi_B(\sigma))
),  \\
	&= \sum_{\sigma \in S_n} \hat{h}(\sigma) \log\left(\frac{\hat{h}(\sigma)}{m(\tau_{A,B}(\sigma)) f(\phi_A(\sigma)) g(\phi_B(\sigma))}\right), \\
	&= \mbox{const.} - \sum_{\sigma\in S_n} \hat{h}(\sigma) \log\left(m(\tau_{A,B}(\sigma)) f(\phi_A(\sigma)) g(\phi_B(\sigma))\right), \\	
	&= \mbox{const.} - \log\left(\prod_{i=1}^{m} 
	m(\tau_{A,B}(\sigma^{(i)})) f(\phi_A(\sigma^{(i)})) g(\phi_B(\sigma^{(i)})) \right).
\end{align*}\vspace{-3mm}
}

\noindent In the above (with some abuse of notation), 
$m$, $f$ and $g$ are estimated using counts 
from the training data.
The equivalence is significant
because it justifies structure learning for data which is not 
necessarily generated from a distribution which factors into riffle
independent components.   Using a similar manipulation, we can rewrite our 
objective function (Equation~\ref{eqn:micriterion}) as:\vspace{-3mm}

{\footnotesize
\begin{align*}
\mathcal{F}[A,B] &= I(\sigma(A)\;;\;\phi_B(\sigma)) 
        +I(\sigma(B)\;;\;\phi_A(\sigma)), \\
	&= \mbox{const.} - \log\left(\prod_{i=1}^{m} \psi_A(\phi_A(\sigma^{(i)}),\tau_{A,B}(\sigma^{(i)}))g(\phi_B(\sigma^{(i)})) \right) \\
	&\qquad\qquad - \log\left(\prod_{i=1}^{m} f(\phi_A(\sigma^{(i)}))\psi_B(\phi_B(\sigma^{(i)}),\tau_{A,B}(\sigma^{(i)}))\right),
\end{align*}\vspace{-3mm}
}

\noindent which we can see to be a ``composite'' of two likelihood functions.
We are evaluating our data log-likelihood first under a model in which the 
absolute ranks of items in $A$ are independent of relative ranks of items
in $B$, and secondly under a model in which the absolute ranks of items in 
$B$ are independent of relative ranks of items in $A$.
Here again, $\psi_A$ and $\psi_B$ are estimated using counts
of the training data.
We see that if these distributions, $\psi_A$, $\psi_B$ factor
along their inputs, then optimizing the objective function is
equivalent to optimizing the likelihood under the riffle independent model.
Thus, if the data is already riffle independent (or nearly riffle 
independent), then the structure learning objective can 
indeed to be interpreted
as maximizing the log-likelihood of the data, but otherwise there
does not seem to be a clear equivalence between the two objective functions.

\subsection{Why testing for independence of relative ranks is insufficient}
Why can we not just check to see that the relative ranks of $A$ are 
independent of the relative ranks of $B$?  
Another natural objective function for detecting riffle independent
subsets is:
\begin{equation}\label{eqn:insufficient}
\mathcal{F}[A,B] \equiv I(\phi_A(\sigma)\,;\,\phi_B(\sigma)).
\end{equation}
Equation~\ref{eqn:insufficient} is certainly a necessary condition
for subsets $A$ and $B$ to be riffle independent 
but why would it not be sufficient?
It is easy to construct a counterexample ---
simply find a distribution in which the interleaving
depends on either of the relative rankings.

\begin{example}
In this example, we will consider a distribution on $S_4$.
Let $A=\{1,2\}$ and $B=\{3,4\}$.  To generate rankings $\sigma\in S_4$, 
we will draw independent relative rankings, $\sigma_A$ and $\sigma_B$, 
with uniform probability for each of $A$ and $B$. 
Then set the interleaving as follows:
\[
\tau = \left\{\begin{array}{cc}
\llbracket AABB \rrbracket & \mbox{if $\sigma_A=(1,2)$} \\
\llbracket BBAA \rrbracket & \mbox{otherwise} 
\end{array}\right..
\]
Finally set $\sigma = \tau\cdot[\sigma_A,\sigma_B]$.

Since the relative rankings are independent, $\mathcal{F}[A,B]=0$.
But since the interleaving depends on the relative ranking of items
in $A$, we see that $A$ and $B$ are not riffle independent in this example.
\end{example}

%\subsection{Third-order detectability counterexample}

\section{Manipulating interleaving distributions in the Fourier domain}\label{sec:fourierinterleavings}

\section{Proof of recurrence}

\begin{theorem}
Algorithm 1 returns a uniformly distributed $(p,q)$-interleaving.
\end{theorem}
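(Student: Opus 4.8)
The plan is to argue by induction on the deck size $n$, showing that {\sc DrawRiffleUnif}$(p,q,n)$ returns each element of $\Omega_{p,q}$ with probability exactly $1/{n\choose p}=1/|\Omega_{p,q}|$. The base case $n=1$ is immediate: here $\Omega_{p,q}$ is a singleton (the identity on $S_1$) and the algorithm deterministically returns $[1]$.

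The key structural observation driving the induction is that any $(p,q)$-interleaving $\tau$ assigns the top rank $n$ to exactly one of two items---either item $p$ (the largest element of $A$) or item $n$ (the largest element of $B$)---since the defining constraints $\tau(1)<\dots<\tau(p)$ and $\tau(p+1)<\dots<\tau(n)$ force the maximal rank to land at the end of one of the two increasing runs. I would then verify that the two branches of the algorithm implement precisely these two alternatives: the right-pile branch (taken with probability $q/n$) sets $\sigma(n)=n$ and fills the remaining ranks with a $(p,q-1)$-interleaving of $\{1,\dots,n-1\}$, while the left-pile branch (taken with probability $p/n$) sets $\sigma(p)=n$ and fills the rest with a $(p-1,q)$-interleaving. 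A short check of the reconstruction formulas confirms that each branch outputs a valid element of $\Omega_{p,q}$ and that, because the two branches are distinguished by which item receives rank $n$, they produce disjoint families whose union is all of $\Omega_{p,q}$ (the combinatorial shadow of Pascal's identity ${n-1\choose p}+{n-1\choose p-1}={n\choose p}$).

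With this in place, fix any target $\tau\in\Omega_{p,q}$. If $\tau(n)=n$, then $\tau$ can only be produced via the right-pile branch, whose recursive call must return the associated $(p,q-1)$-interleaving; by the induction hypothesis this happens with probability $1/{n-1\choose p}$, so
\begin{equation*}
\Pr[\text{output}=\tau]=\frac{q}{n}\cdot\frac{1}{{n-1\choose p}}=\frac{1}{{n\choose p}}.
\end{equation*}
Symmetrically, if $\tau(p)=n$, then $\tau$ arises only through the left-pile branch and $\Pr[\text{output}=\tau]=\frac{p}{n}\cdot\frac{1}{{n-1\choose p-1}}=\frac{1}{{n\choose p}}$; both identities are routine binomial manipulations. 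Since every $\tau$ falls into exactly one case, the output is uniform on $\Omega_{p,q}$.

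I expect the main obstacle to be bookkeeping rather than depth: specifically, checking that the index-shifting in the left-pile reconstruction ($\sigma(i)\leftarrow\sigma^-(i-1)$ for $i>p$) correctly realizes the claimed bijection onto $(p-1,q)$-interleavings and preserves the increasing-run constraints, and confirming that the two branches never produce the same permutation so that no probability mass is double-counted. The degenerate cases $p=0$ or $q=0$, where one branch has probability zero and $\Omega_{p,q}$ is again a singleton, should be noted but are handled automatically by the recursion.
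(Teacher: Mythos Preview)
Your proposal is correct and follows essentially the same approach as the paper: induction on $n$ with base case $n=1$, a case split on whether $\tau(n)=n$ or $\tau(p)=n$, and the binomial identity $\frac{q}{n}\cdot\frac{1}{\binom{n-1}{p}}=\frac{1}{\binom{n}{p}}$ (and its symmetric counterpart) to conclude. If anything, your write-up is more careful than the paper's---you treat both cases explicitly, verify disjointness of the two branches, and flag the degenerate cases $p=0$ and $q=0$, whereas the paper handles only one case and declares the other similar.
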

\begin{proof}
The proof is by induction on $n=p+q$.  The base case (when $n=1$)
is obvious since the algorithm can only return a single permutation.

Next, we assume for the sake of induction that for any $m<n$, the algorithm
returns a uniformly distributed interleaving and we want to show this
to also be the case for $n$.

Let $\tau$ be any interleaving in $\Omega_{p,q}$.  We will show that 
$m(\tau)=1/{n \choose p}$.
Consider $\tau^-=\tau^{-1}(1:n-1)$.  There are two cases: $\tau^{-1}$
is either a $(p,q-1)$-interleaving (in which case $\tau(n)=n$),
or a $(p-1,q)$-interleaving (in which case $\tau(p)=n$).

We will just consider the first case since the second is similar.
$\tau^{-1}$ is uniformly distributed by the inductive hypothesis
and therefore has probability $1/{n-1\choose p}$.

$\tau^{-1}(n)$ is set to $n$ independently with probability $q/n$, so we compute the 
probability of the interleaving resulting from the algorithm as:
\[
\frac{n-p}{n} \cdot \frac{1}{{n-1 \choose p}} = \frac{n-p}{n} \cdot \frac{p!(n-1-p)!}{(n-1)!}
	=\frac{p!(n-p)!}{n!} = \frac{1}{{n \choose p}}.
\]

\end{proof}

\paragraph{Fourier transforming the biased riffle shuffle}
We describe the recurrence satisfied by $m_{p,q}^{unif}$,
allowing one to write $m_{p,q}^{unif}$, a distribution on $S_n$,
in terms of $m^{unif}_{p,q-1}$ and $m^{unif}_{p-1,q}$,
distributions over $S_{n-1}$ (see Algorithm $1$ in main paper).
Given a function $f:S_{n-1}\to\reals$, we will define the
\emph{embedded} function $f\uparrow_{n-1}^{n}:S_n\to\reals$
by $f\uparrow_{n-1}^{n}(\sigma)=f(\sigma_1,\dots,\sigma_{n-1})$ if
$\sigma(n)=n$, and $0$ otherwise.
Algorithm~1 can be then rephrased as a 
recurrence relation as follows.
\begin{proposition}\label{prop:recurrence}
The uniform riffle shuffling distribution $m_{p,q}^{unif}$
obeys the recurrence relation:\vspace{-3mm}

{\footnotesize
\begin{equation}\label{eqn:recurrence}
m^{unif}_{p,q} = \left[\left(\frac{p}{p+q}\right)\cdot m^{unif}_{p-1,q}\uparrow_{n-1}^n
*\delta_{(p+1,\dots,n)}\right]+
\left[\left(\frac{q}{p+q} \right)\cdot m^{unif}_{p,q-1}\uparrow_{n-1}^n\right],
\end{equation}\vspace{-3mm}
}

\noindent with base cases:
$m^{unif}_{0,n} = m^{unif}_{n,0} = \delta_{\epsilon}$,
where $\delta_\epsilon$ is the delta function at the identity permutation.
\vspace{-2mm}
\end{proposition}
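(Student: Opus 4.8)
The plan is to read Equation~\ref{eqn:recurrence} as a restatement of the two-way branch in Algorithm~1 and to invoke the preceding theorem, which guarantees that DrawRiffleUnif$(p,q,n)$ returns a sample distributed according to $m^{unif}_{p,q}$. First I would condition on the algorithm's initial coin flip: with probability $p/(p+q)$ it takes the drop-from-left-pile branch and recurses on $(p-1,q,n-1)$, and with probability $q/(p+q)$ it takes the drop-from-right-pile branch and recurses on $(p,q-1,n-1)$. By the law of total probability, $m^{unif}_{p,q}$ is the mixture of the two conditional output laws with these weights, so it remains to identify each branch's conditional output distribution with the corresponding bracketed term. Throughout, the recursive calls return samples from $m^{unif}_{p,q-1}$ and $m^{unif}_{p-1,q}$ on $S_{n-1}$, which is exactly what the preceding theorem asserts for $n-1$.

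For the right-pile branch, the sample $\sigma^-\sim m^{unif}_{p,q-1}$ is mapped to $\sigma$ via $\sigma(i)=\sigma^-(i)$ for $i<n$ and $\sigma(n)=n$. This is precisely the embedding underlying $\uparrow_{n-1}^n$, so pushing $m^{unif}_{p,q-1}$ forward under it gives a function supported on $\{\sigma:\sigma(n)=n\}$ whose value at such a $\sigma$ is $m^{unif}_{p,q-1}$ evaluated at its restriction, i.e.\ exactly $m^{unif}_{p,q-1}\uparrow_{n-1}^n$. This yields the second bracket. For the left-pile branch, $\sigma^-\sim m^{unif}_{p-1,q}$ is mapped to $\sigma$ via $\sigma(i)=\sigma^-(i)$ for $i<p$, $\sigma(p)=n$, and $\sigma(i)=\sigma^-(i-1)$ for $i>p$. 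I would decompose this into two steps: first embed $\sigma^-\mapsto\tilde\sigma:=\sigma^-\uparrow_{n-1}^n$ (appending $n\mapsto n$), then reindex by a fixed cyclic permutation. The one genuine computation is to check index by index that $\sigma=\tilde\sigma\,\pi$, i.e.\ $\sigma(i)=\tilde\sigma(\pi(i))$ for all $i$, where $\pi$ is the cyclic shift of $\{p,\dots,n\}$ encoded by $\delta_{(p+1,\dots,n)}$ (fixing $i<p$, sending $p\mapsto n$, and $i\mapsto i-1$ for $i>p$). Since right-translation of a sample by $\pi$ corresponds, under the paper's convention $[f*\delta_\pi](\sigma)=f(\sigma\pi^{-1})$, to right-convolution of its law with $\delta_\pi$, the pushforward of $m^{unif}_{p-1,q}$ under this composite map is exactly $m^{unif}_{p-1,q}\uparrow_{n-1}^n*\delta_{(p+1,\dots,n)}$, the first bracket.

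Assembling the two branches with weights $p/(p+q)$ and $q/(p+q)$ then gives Equation~\ref{eqn:recurrence}. The base cases follow immediately, since $\Omega_{0,n}$ and $\Omega_{n,0}$ each consist of the single permutation $\epsilon$, forcing $m^{unif}_{0,n}=m^{unif}_{n,0}=\delta_\epsilon$.

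I expect the main obstacle to be the left-pile bookkeeping: verifying that the algorithm's index juggling is literally right-multiplication by the cyclic permutation $\delta_{(p+1,\dots,n)}$ and lining this up with the convolution convention so that right-translation of samples becomes right-convolution $*\delta_{(p+1,\dots,n)}$ rather than a left convolution or its inverse. An alternative route that avoids the pushforward language altogether is to use the uniform value $m^{unif}_{p,q}(\tau)=1/\binom{n}{p}$ (already established in the preceding theorem) and check the claimed right-hand side pointwise: it is supported on $\Omega_{p,q}$, splits that set according to whether rank $n$ is taken by the last right-pile item ($\tau(n)=n$) or the last left-pile item ($\tau(p)=n$), and evaluates to $(q/n)/\binom{n-1}{p}=1/\binom{n}{p}$ or $(p/n)/\binom{n-1}{p-1}=1/\binom{n}{p}$ respectively on each such $\tau$, with value $0$ off $\Omega_{p,q}$.
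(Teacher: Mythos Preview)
Your proposal is correct and follows exactly the approach the paper takes: the paper does not give a separate proof of this proposition, instead treating it as a direct restatement of Algorithm~1's two-way branch once the preceding theorem (that DrawRiffleUnif returns a uniform $(p,q)$-interleaving) is in hand. You have simply filled in the bookkeeping details the paper leaves implicit, including the identification of the left-pile index juggling with right-multiplication by the cycle $(p+1,\dots,n)$.
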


Note that by taking the support sizes of each of the functions in the
above recurrence, we recover the following well known recurrence
for binomial coefficients:\vspace{-3mm}

{\footnotesize
\begin{equation}\label{eqn:binomial}
\binom{n}{p} = \binom{n-1}{p-1} + \binom{n-1}{p},
\;\mbox{with base case}\;
\binom{n}{0}=\binom{n}{n} = 1.
\end{equation}\vspace{-3mm}
}

\noindent The \emph{biased riffle shuffle} is defined by:\vspace{-3mm}

{\footnotesize
\begin{equation}\label{eqn:biasedriffle}
m^\alpha_{p,q} \propto 
\left[\left(\frac{\alpha p}{p+q}\right)\cdot m^{unif}_{p-1,q}\uparrow_{n-1}^n
*\delta_{(p+1,\dots,n)}\right]+
\left[\left(\frac{(1-\alpha)q}{p+q} \right)\cdot m^{unif}_{p,q-1}\uparrow_{n-1}^n\right]
\end{equation}
}

\begin{algorithm2e}[t!]
{\scriptsize
{\sc RiffleHat($p,q$)} \xspace \\
\ $n\leftarrow p+q$ \;
\ Initialize $\widehat{m}^{prev},\widehat{m}^{curr}$
as arrays of $p+1$ Fourier transform data structures \;
\ \For{$i=1,2,...,n$}{
        \ \For{$j=\max(0,p-n+i),\dots,\min(i,p)$}{
                \ \If{$j==0$ or $j==i$}{
                        $\widehat{m}^{curr}[j] \leftarrow
                                \widehat{\delta}_{\epsilon\in S_i}$ \;
                }\Else{
                        $\widehat{m}^{curr}[j] \leftarrow
                                \left(\frac{i-j}{i}\right)\mbox{\sc Embed}
                                        (\widehat{m}^{prev}[j],i-1,i)$ \\
                        \qquad\qquad\qquad
                $ +\left(\frac{i}{j}\right)\mbox{\sc Convolve}(\mbox{\sc Embed}
                                        (\widehat{m}^{prev}[j-1],i-1,i),
                                        \widehat{\delta}_{(i,i-1,\dots,j)})$\;
                }
                $\widehat{m}^{prev}\leftarrow\widehat{m}^{curr}$ \;
        }
    }
\ \Return $\widehat{m}^{curr}[p]$\;
\caption{\scriptsize Pseudocode for computing the Fourier
transform of the uniform riffle shuffling distribution using
dynamic programming.}
\label{alg:rifflehat}
}
\end{algorithm2e}

\begin{figure*}[t!]
\begin{center}
\includegraphics[width=.65\textwidth]{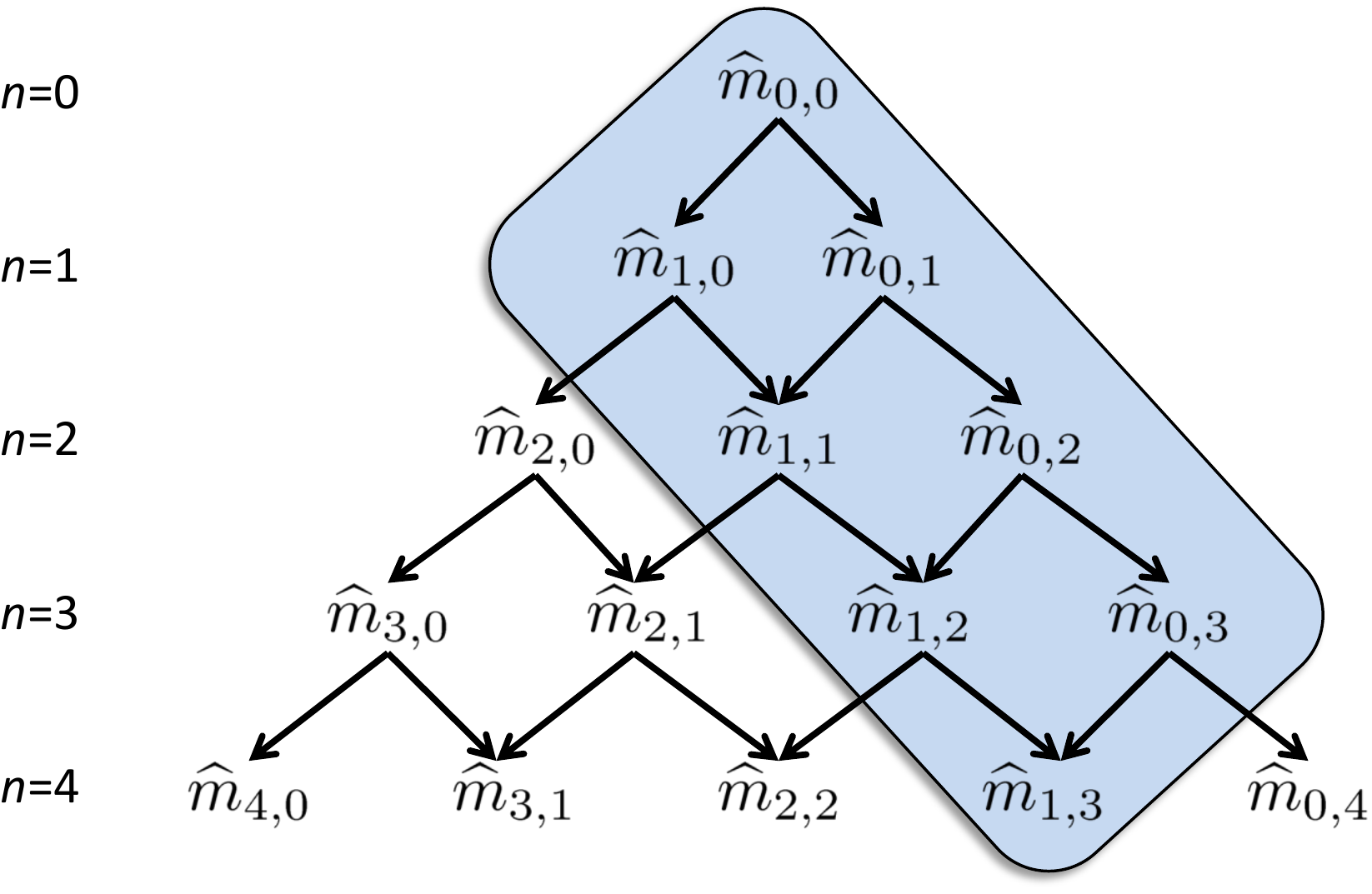}
\caption{The flow of information in Algorithm~\ref{alg:rifflehat} bears
much resemblance to Pascal's triangle for computing binomial coefficients.
The arrows in this diagram indicate the Fourier transforms that must be precomputed
before computing the Fourier transform of a larger interleaving distribution.
For example, to compute $\widehat{m}_{1,2}$, one must first compute 
$\widehat{m}_{1,1}$ and $\widehat{m}_{0,2}$.  In blue, we have outlined
the collection of Fourier transforms that are computed by Algorithm
~\ref{alg:rifflehat} while computing $\widehat{m}_{1,3}$}
\label{fig:pascal}
\end{center}
\end{figure*}

Writing the recursion in the form of Equation~\ref{eqn:recurrence}
provides a construction of the uniform riffle shuffle as a
sequence of operations on smaller distribution which can
be performed completely with respect to Fourier coefficients.
In particular, given the Fourier coefficients of a
function $f:S_{n-1}\to\reals$, one can construct the Fourier coefficients
of the embedding $f\uparrow_{n-1}^n$ by applying the branching rule
(see~\cite{sagan01,kondor08} for details).
Using the linearity property, the Convolution Theorem~\ref{prop:fourierprops}
and the fact that embeddings can be performed in the Fourier domain,
we arrive at the equivalent Fourier-theoretic recurrence for each
frequency level $i$.\vspace{-3mm}

{\footnotesize
\begin{equation}\label{eqn:fourierrecurrence}
\left[\widehat{m^{unif}_{p,q}}\right]_i = 
        \left(\frac{p}{p+q}\right)\cdot
        \left[\widehat{m^{unif}_{p-1,q}}\uparrow_{n-1}^n\right]_i
                \cdot \rho_i(p+1,\dots,n)
        +\left(\frac{q}{p+q}\right)\cdot
        \left[\widehat{m^{unif}_{p,q-1}}\right]_i
\end{equation}
}

where $\rho_i$ is the $i^{th}$ irreducible representation matrix
evaluated at the cycle $(p+1,\dots,n)$ (see~\cite{huangetal09b}
for details on irreducible representations).
Implementing the recurrence (Equation~\ref{eqn:fourierrecurrence})
in code can naively result in
an exponential time algorithm if one is not careful.
It is necessary to use dynamic programming to
be sure not to recompute things that were already computed.
In Algorithm~\ref{alg:rifflehat}, we present pseudocode
of such a dynamic programming approach, which builds a
`Pascal's triangle' similar to that which might be constructed
to compute a table of binomial coefficients.  The pseudocode
assumes the existence of Fourier domain algorithms
for convolving distributions and for embedding a distribution
over $S_{n-1}$ into $S_n$.  See Figure~\ref{fig:pascal} 
for a graphical illustration of the algorithm.

%\paragraph{Estimating Fourier coefficients of the interleaving distribution}

\subsection{Sample complexity analysis}
\begin{lemma}[adapted from~\cite{hoffgen93}]\label{lem:hoffgen_app}
The entropy of a discrete random variable with arity $R$
can be estimated to within accuracy $\Delta$ with probability
$1-\beta$ using
$O\left(\frac{R^2}{\Delta^2}\log^2\frac{R}{\Delta}\log\frac{R}{\beta}\right)$
i.i.d samples and the same time.
\end{lemma}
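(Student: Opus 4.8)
The plan is to analyze the natural plug-in (empirical) estimator. Given $m$ i.i.d.\ samples, let $\hat{p}_i = n_i/m$ be the empirical frequency of outcome $i$ (where $n_i$ counts the occurrences of outcome $i$), and estimate the entropy by $\hat{H} = \sum_{i=1}^R \eta(\hat{p}_i)$, where $\eta(x) \equiv -x\log x$. The goal is to bound $|\hat{H} - H|$ by the estimation error of the individual probabilities, and then to invoke a concentration inequality to control that error and a union bound to make it uniform over the $R$ outcomes.

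First I would establish per-coordinate concentration. Each count $n_i$ is a sum of $m$ i.i.d.\ Bernoulli random variables, so by a Chernoff/Hoeffding bound $\Pr(|\hat{p}_i - p_i| > \epsilon_0) \le 2\exp(-2m\epsilon_0^2)$. A union bound over the $R$ outcomes then guarantees $|\hat{p}_i - p_i| \le \epsilon_0$ simultaneously for all $i$ with probability at least $1-\beta$, provided $m = O\!\left(\frac{1}{\epsilon_0^2}\log\frac{R}{\beta}\right)$. Second, I would convert this coordinatewise accuracy into an entropy bound using the modulus of continuity of $\eta$. Since $\eta'(x) = -\log x - 1$ blows up as $x\to 0$, $\eta$ is not Lipschitz on $[0,1]$, but it does satisfy $|\eta(x)-\eta(y)| \le \omega(\epsilon_0)$ whenever $|x-y|\le\epsilon_0$, with $\omega(\epsilon_0) = O\!\left(\epsilon_0\log\tfrac{1}{\epsilon_0}\right)$. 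Summing over coordinates, on the high-probability event we obtain
\[
|\hat{H} - H| \;\le\; \sum_{i=1}^R |\eta(\hat{p}_i) - \eta(p_i)| \;\le\; R\,\omega(\epsilon_0) \;=\; O\!\left(R\,\epsilon_0\log\tfrac{1}{\epsilon_0}\right).
\]

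Finally I would balance the parameters. To force the right-hand side below $\Delta$, it suffices to take $\epsilon_0 = \Theta\!\left(\frac{\Delta}{R\log(R/\Delta)}\right)$, so that $R\,\epsilon_0\log(1/\epsilon_0)\le\Delta$. Substituting this choice into the sample bound $m = O(\epsilon_0^{-2}\log(R/\beta))$ yields
\[
m \;=\; O\!\left(\frac{R^2}{\Delta^2}\log^2\frac{R}{\Delta}\,\log\frac{R}{\beta}\right),
\]
as claimed; the running time is dominated by counting the samples, which is linear in $m$.

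The step I expect to be the main obstacle is the second one. Because the derivative of $\eta$ is unbounded near $0$, the contribution of the small probabilities cannot be controlled by a crude Lipschitz argument, and it is precisely the $\log(1/\epsilon_0)$ factor in the modulus of continuity that is responsible for the $\log^2(R/\Delta)$ factor in the final bound. Care is needed to verify that the modulus-of-continuity estimate is tight enough (and to handle the boundary cases $\hat{p}_i = 0$, where $\eta(\hat{p}_i)=0$) so that no spurious extra logarithmic factors are introduced.
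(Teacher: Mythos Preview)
Your argument is correct and is essentially the standard derivation of this bound. Note, however, that the paper does not actually prove this lemma: it is stated in the appendix with the attribution ``adapted from~\cite{hoffgen93}'' and is used as a black box in the subsequent sample-complexity analysis. Your reconstruction---plug-in estimator, Hoeffding plus union bound for uniform per-coordinate accuracy, then the modulus-of-continuity bound $|\eta(x)-\eta(y)|=O(\epsilon_0\log(1/\epsilon_0))$ for $|x-y|\le\epsilon_0$, and finally the choice $\epsilon_0=\Theta(\Delta/(R\log(R/\Delta)))$---is exactly the argument underlying H\"offgen's result, so there is nothing to compare against in the paper itself.

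One minor remark on the step you flagged: the modulus-of-continuity bound is indeed the only delicate point, but your treatment is adequate. Splitting into the cases $\min(x,y)\ge\epsilon_0$ (mean-value theorem with $|\eta'(t)|\le 1+\log(1/\epsilon_0)$) and $\min(x,y)<\epsilon_0$ (both values bounded by $O(\epsilon_0\log(1/\epsilon_0))$ directly) handles the boundary and gives the claimed $\omega(\epsilon_0)$ without extra logarithms.
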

\begin{lemma}\label{lem:allmi}
The collection of mutual informations
$I_{i;j,k}$ can be
estimated to within accuracy $\Delta$ for all triplets $(i,j,k)$ with probability at least
$1-\gamma$ using
$S(\Delta,\gamma)\equiv 
O\left(\frac{n^2}{\Delta^2}\log^2\frac{n}{\Delta}\log\frac{n^4}{\gamma}\right)$
i.i.d. samples and the same amount of time.
\end{lemma}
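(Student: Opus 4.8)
The plan is to reduce the simultaneous estimation of all $I_{i;j,k}$ to the single-triplet guarantee already established, and then control the accumulation of error across triplets by a union bound. First I would write each tripletwise mutual information as a signed sum of three entropies,
\[
I_{i;j,k} = H(\sigma(i)) + H(\mathds{1}[\sigma(j)<\sigma(k)]) - H\bigl(\sigma(i),\mathds{1}[\sigma(j)<\sigma(k)]\bigr),
\]
and observe that the arity of every random variable appearing here is at most $2n$: the variable $\sigma(i)$ ranges over $n$ ranks, the comparison bit is binary, and their joint has arity $2n$. Invoking Lemma~\ref{lem:hoffgen_app} with $R = O(n)$, each of these entropies can be estimated to accuracy $\Delta/3$ with failure probability $\beta$ from $O\!\left(\frac{n^2}{\Delta^2}\log^2\frac{n}{\Delta}\log\frac{n}{\beta}\right)$ samples; a three-way union bound over the entropies together with the triangle inequality then gives $|\hat I_{i;j,k} - I_{i;j,k}| \le \Delta$ with failure probability $3\beta$, recovering the per-triplet bound. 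This is exactly the reasoning behind the single-triplet lemma stated earlier, and the $n^2$ factor is traceable to $R^2 = O(n^2)$ here.

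The second step is to apply these estimates simultaneously across all triplets from a single shared sample set. Since the union bound tolerates dependence among failure events, the same draws may be reused for every triplet without penalty. There are at most $n^3$ distinct quantities $I_{i;j,k}$ (in fact fewer, since $I(\sigma(i);\sigma(j)<\sigma(k))=I(\sigma(i);\sigma(k)<\sigma(j))$ reduces the count to $n\binom{n-1}{2}$), so I would set the per-triplet failure probability to $\gamma' = \gamma/n^3$ and union-bound over all triplets, yielding an overall failure probability of at most $\gamma$.

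Finally I would substitute $\gamma' = \gamma/n^3$ into the single-triplet sample complexity. The logarithmic term becomes $\log\frac{n}{\gamma'} = \log\frac{n^4}{\gamma}$, so the required sample count is
\[
O\!\left(\frac{n^2}{\Delta^2}\log^2\frac{n}{\Delta}\log\frac{n^4}{\gamma}\right) = S(\Delta,\gamma),
\]
exactly as claimed, and since all the empirical entropy counts can be tabulated in a single pass over the data, the running time matches the sample count up to constants. The main point to be careful about is the bookkeeping in this reduction: confirming that reusing one sample set for all $O(n^3)$ triplets is legitimate (it is, precisely because the union bound needs no independence) and that the arity entering Höffgen's bound is genuinely $O(n)$ rather than larger, so that the $n^2$ factor — and not a higher power of $n$ — survives. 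Everything else is a direct substitution of the failure probability into the previously established single-triplet estimate.
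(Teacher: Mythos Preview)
Your proposal is correct and follows essentially the same argument as the paper: decompose $I_{i;j,k}$ into three entropies of arity $O(n)$, apply H\"offgen's bound with $\beta=\gamma/n^3$, and union-bound over the $O(n^3)$ triplets. If anything, your treatment is slightly more careful than the paper's in tracking the $\Delta/3$ accuracy per entropy term and the constant from the three-way triangle inequality, both of which are absorbed into the big-$O$.
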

\begin{proof}
Fix a $0<\gamma\leq 1$ and $\Delta$.
For any fixed triplet $(i,j,k)$, Hoffgen's result (Lemma~\ref{lem:hoffgen_app})
implies that $H(\sigma_i;\sigma_j<\sigma_k)$ can be estimated with accuracy $\Delta$
with probability at least $1-\gamma/n^3$ using
$O\left(\frac{n^2}{\Delta^2}\log^2\frac{n}{\Delta}\log\frac{n^4}{\gamma}\right)$ i.i.d. samples
since the variable $(\sigma_i,\sigma_j<\sigma_k)$ has arity $2n$ and setting
$\beta\equiv\frac{\gamma}{n^3}$.

Estimating the mutual information for the same triplet therefore
requires the same sample complexity by the expansion:
$I_{i;j,k}=H(\sigma_i)+H(\sigma_j<\sigma_k)-H(\sigma_i;\sigma_j<\sigma_k)$.
Now we use a simple union bound to bound the probability that the collection of mutual
informations over all triplets is estimated to within $\Delta$ accuracy.
Define $\Delta_{i,j,k}\equiv I_{i;j,k}-\hat{I}_{i;j,k}$.\vspace{-3mm}

{\footnotesize
\[
P(|\Delta_{i,j,k}|<\Delta,\;\forall (i,j,k)) \geq 1-\sum_{i,j,k} P(|\Delta_{i,j,k}|\geq\Delta) 
        \geq 1-n^3\cdot \frac{\gamma}{n^3}
        \geq 1-\gamma.\vspace{-3mm}
\]
}
\end{proof}
\begin{lemma}\label{lem:edges}
Fix $k\leq n/2$.
and let $A$ be a $k$-subset of $\{1,\dots,n\}$ with $A$ riffle independent of its complement $B$.
Let $A'$ be a $k$-subset with $A'\neq A$ or $B$.
If $A$ and $B$ are each $\epsilon$-third order strongly connected,
we have $\FF(A')=\FF(B')> \psi(n,k)\cdot \epsilon$,
where $\psi(n,k)\equiv (n-k)(n-2k)$.
\end{lemma}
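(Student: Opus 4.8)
The plan is to prove the two assertions separately. The equality $\FF(A')=\FF(B')$ is immediate from the definition of $\FF$ in Equation~\ref{eqn:floworder}: replacing $A'$ by $B'$ (and $B'$ by $A'$) merely interchanges the two sums over $\ABedge$ and $\BAedge$, leaving the total unchanged. This holds for every subset, independently of any riffled independence assumption, so it requires no further work. The substance of the lemma is the strict lower bound on $\FF(A')$, and for this I would argue by classifying the triplets that contribute to $\FF(A')$ according to their status with respect to the \emph{true} partition $(A,B)$.

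Concretely, every triplet counted in $\FF(A')$ is a ``pure crossing'' triplet $(i;j,k)$ for the candidate partition, i.e. the apex $i$ lies in one of $A',B'$ and the base pair $\{j,k\}$ lies entirely in the other. I would split these into three classes. If $\{i,j,k\}$ lies entirely inside $A$ or entirely inside $B$, then $I_{i;j,k}>\epsilon$ by $\epsilon$-third-order strong connectivity (Definition~\ref{def:strongconnectivity}). If the triplet is also pure-crossing with respect to $(A,B)$, then $I_{i;j,k}=0$ by riffled independence (the fact underlying Proposition~\ref{prop:ffnecessity}, that $A\perp_m B$ forces $I_{a;b,b'}=0$). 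All remaining (mixed) triplets contribute a nonnegative amount. Writing $N$ for the number of triplets that are internal to $A$ or to $B$ yet are cut, in the apex-versus-base sense, by $(A',B')$, this classification yields $\FF(A')\ge\sum_{\text{cut internal}}I_{i;j,k}>N\,\epsilon$, so it suffices to show $N\ge(n-k)(n-2k)$.

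I would then carry out the count. Since $|A'|=|A|=k$ and $A'\neq A$, let $t\equiv|A'\cap B|=|B'\cap A|$, so $1\le t\le k$; the hypothesis $A'\neq B$ rules out the degenerate maximal-overlap configuration when $k=n/2$. A triplet internal to $B$ is cut exactly when its three elements of $B$ do not all fall on the same side of the $A'/B'$ split with the apex isolated from its base pair; counting by which side the lone vertex occupies gives $t\binom{n-k-t}{2}+(n-k-t)\binom{t}{2}$, and the analogous count inside $A$ gives $t\binom{k-t}{2}+(k-t)\binom{t}{2}$. Summing and collecting terms, $N(t)$ simplifies (up to the convention for ordering the base pair) to $t\big[(n-k-t)(n-k-2)+(k-t)(k-2)\big]$, which is a downward parabola in $t$ and hence is minimized over $1\le t\le k$ at an endpoint $t\in\{1,k\}$.

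The main obstacle is precisely this final combinatorial inequality: verifying that $\min_t N(t)\ge(n-k)(n-2k)$ uniformly over all admissible $A'$. Concavity reduces it to the two extreme configurations, the single-element swap $t=1$ and the maximal overlap $t=k$, each of which must be bounded below by $(n-k)(n-2k)$ using $k\le n/2$ (so that $n-k\ge k$ and $n-2k\ge0$); the single swap is expected to be the binding case. Once $N(t)\ge(n-k)(n-2k)$ is in hand, the strictness of the per-triplet bound $I_{i;j,k}>\epsilon$ upgrades $\FF(A')\ge N\epsilon$ to the strict conclusion $\FF(A')>(n-k)(n-2k)\,\epsilon$, completing the argument.
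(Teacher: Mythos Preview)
Your proposal follows essentially the same strategy as the paper: lower-bound $\FF(A')$ by counting the triplets internal to $A$ or to $B$ that become crossing for $(A',B')$, express this count as a concave quadratic in the overlap parameter, and minimize at the endpoints. The paper parametrizes by $\ell=|A\cap A'|=k-t$ rather than your $t=|A'\cap B|$, and its count of cut-internal triplets uses ordered base pairs including the diagonal (yielding the polynomial $(k-\ell)\bigl((n-k)(n-2k)+\ell n\bigr)$ directly), but the argument is otherwise identical.
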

\renewcommand{\Xedge}{\Omega^{cross}_{A',B'}}
\renewcommand{\ABedge}{\Omega^{cross}_{A,B}}
\renewcommand{\Aedge}{\Omega^{int}_{A}}
\renewcommand{\Bedge}{\Omega^{int}_{B}}
\begin{proof}
Let us first establish some notation.
Given a subset $X\subset\{1,\dots,n\}$, define
\[
\Omega^{int}_X \equiv \{(x;y,z)\,:\,x,y,z\in X\}.
\]
Thus $\Aedge$ and $\Bedge$ are the sets of triplets whose indices are all internal to $A$
or internal to $B$ respectively.
We define $\Xedge$ to be the set of triplets which ``cross'' between the sets $A$ and $B$:
\[
\Xedge \equiv \{(x;y,z)\,:\,x\in A, y,z\in B,\,\mbox{or}\,x\in B, y,z\in A \}.
\]

The goal of this proof is to use the strong connectivity assumptions
to lower bound $\FF(A')$.  In particular, due to strong connectivity,
each triplet inside $\Xedge$ that also lies in either $\Aedge$ or $\Bedge$
must contribute at least $\epsilon$ to the objective function $\FF(A')$.
It therefore suffices to lower bound the number of triplets
which cross between $A'$ and $B'$, but are internal to either $A$ or $B$
(i.e., $|\Xedge\cap(\Aedge\cup\Bedge)|$).  Define $\ell\equiv |A\cap A'|$
and note that $0\leq \ell<k$.
It is straightforward to check that:
$|A\cap B'|=k-\ell$, $|B\cap A'| = k-\ell$, and $|B\cap B'|=(n-k)-(k-\ell)=n+\ell-2k$.\vspace{-3mm}

{\footnotesize
\begin{align*}
|\Xedge\cap(\Aedge\cup\Bedge)| &= |\Xedge\cap\Aedge|+|\Xedge\cap\Bedge|, \\
        &\geq \ell (k-\ell)^2+\ell^2(k-\ell)
                + (k-\ell)(n+\ell-2k)^2 
                +(n+\ell-2k)(k-\ell)^2, \\
        &\geq (k-\ell)\left((n-k)(n-2k)+\ell n\right), \\
        &\geq k\left((n-k)(n-2k)+kn\right).
\end{align*}\vspace{-3mm}
}

We do want the bound above to depend on $\ell$.
Intuitively, for a fixed $k$ and $n$, the above expression
is minimized when either $\ell=0$ or $k-1$ (a more formal argument
is shown below in the proof of Lemma~\ref{lem:claim}).
Plugging $\ell=0$ and $k-1$ and bounding from below yields:\vspace{-3mm}

{\footnotesize
\begin{align*}
|\Xedge\cap(\Aedge\cup\Bedge)|  &\geq \min\left(k(n-k)(n-2k),
			(n-k)(n-2k)+n(k-1)]\right),\\
        &\geq (n-k)(n-2k).
\end{align*}\vspace{-3mm}
}

\noindent Finally due to strong connectivity, we know that for each triplet in $\Aedge\cup\Bedge$,
we have $I_{x;y,z}>\epsilon$, thus each edge in
$\Xedge\cap(\Aedge\cup\Bedge)$ contributes at least $\epsilon$ to $\FF(A')$, establishing
the desired result.
\end{proof}
\begin{lemma}\label{lem:claim}
Under the same assumptions as
Lemma~\ref{lem:edges}, $p(n,k,\ell)=(k-\ell)\left((n-k)(n-2k)+\ell n\right)$
is minimized at either $\ell=0$ or $k-1$.
\end{lemma}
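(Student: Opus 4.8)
The plan is to treat $p(n,k,\ell)$ as a function of the single real variable $\ell$, holding $n$ and $k$ fixed, and to recognize it as a strictly concave quadratic. First I would expand the product. Writing $C \equiv (n-k)(n-2k)$ for the part that does not depend on $\ell$, we have
\begin{equation*}
p(n,k,\ell) = (k-\ell)(C + \ell n) = -n\,\ell^2 + (kn - C)\,\ell + kC.
\end{equation*}
Since the leading coefficient $-n$ is strictly negative (as $n \geq 1$), $p$ is a downward-opening parabola in $\ell$, i.e., a strictly concave function of $\ell$.

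Next I would invoke the elementary fact that a concave function restricted to a closed interval attains its minimum at one of the two endpoints. By the setup of Lemma~\ref{lem:edges}, the admissible values of $\ell = |A \cap A'|$ satisfy $0 \le \ell < k$, so $\ell$ ranges over the integers $\{0, 1, \dots, k-1\}$, all contained in the interval $[0, k-1]$. For a concave $p$, every point $\ell$ between the endpoints is bounded below by the linear interpolation of the endpoint values, hence by $\min\bigl(p(n,k,0),\, p(n,k,k-1)\bigr)$; this is precisely the statement that the minimum over $[0,k-1]$ (and a fortiori over the integer subset) is achieved at an endpoint. Applying this to $\ell=0$ and $\ell=k-1$ yields the claim.

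There is no substantial obstacle here: the whole argument is a one-variable concavity computation. The only points requiring a moment of care are confirming the sign of the leading coefficient (which needs $n > 0$, automatic since $n$ is the number of items) and correctly pinning down the range of $\ell$ from Lemma~\ref{lem:edges} --- in particular that $A' \neq A$ forces $\ell < k$, so $\ell = k-1$ is the rightmost admissible value. With these two facts established, the concavity argument closes the proof immediately.
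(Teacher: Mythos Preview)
Your proposal is correct and follows essentially the same approach as the paper: both recognize $p$ as a concave (downward-opening) quadratic in $\ell$ and conclude that its minimum over $\{0,\dots,k-1\}$ must occur at an endpoint. The paper phrases the last step slightly differently---it identifies the roots $\ell=k$ and $\ell=-\alpha/n\le 0$ and observes that the minimizer over $\{0,\dots,k-1\}$ is the point closest to one of these roots---but this is just another way of invoking concavity, and your direct ``concave on an interval $\Rightarrow$ minimum at an endpoint'' argument is if anything a bit cleaner.
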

\begin{proof}
Let $\alpha=(n-k)(n-2k)$.  We know that $\alpha\geq 0$ since $k\leq n/2$ by
assumption (and equals zero only when $k=n/2$).
We want to find the $\ell\in\{0,\dots,k-1\}$ which minimizes
the concave quadratic function $p(\ell)=(k-\ell)(\alpha+\ell n)$,
the roots of which are $\ell=k$ and $\ell=-\alpha/n$ (note that $-\alpha/n\leq 0$.
The minimizer is thus the element of $\{0,\dots,k-1\}$ which is closest
to either of the roots.
\end{proof}

\begin{theorem}
Let $A$ be a $k$-subset of $\{1,\dots,n\}$ with $A$ riffle independent of its complement $B$.
If $A$ and $B$ are each $\epsilon$-third order strongly connected,
then given
$S(\Delta,\epsilon)\equiv 
O\left(\frac{n^4}{\epsilon^2}\log^2\frac{n^2}{\epsilon}\log\frac{n^4}{\gamma}\right)$
i.i.d. samples,
the minimum of $\FFF$ (evaluated over all $k$-subsets of $\{1,\dots,n\}$)
is achieved at exactly the subsets $A$ and $B$
with probability at least $1-\gamma$.
\end{theorem}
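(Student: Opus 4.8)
The plan is to protect the population-level separation between the true partition and every competitor by a uniform estimation bound, then to choose the per-triplet accuracy $\Delta$ small enough that sampling noise cannot reorder the objective. Throughout I would condition on the \emph{good event} that every tripletwise mutual information is estimated accurately: by Lemma~\ref{lem:allmi}, drawing $S(\Delta,\gamma)$ i.i.d.\ samples guarantees $|I_{i;j,k}-\hat{I}_{i;j,k}|<\Delta$ simultaneously for all distinct triplets $(i,j,k)$ with probability at least $1-\gamma$. Note that the union bound over all $O(n^3)$ triplets is already folded into the $\log(n^4/\gamma)$ factor, so this single event controls the behavior of $\FFF$ at \emph{every} candidate $k$-subset at once.

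With the good event in hand, the two population-level inputs are immediate. At the truth, Proposition~\ref{prop:ffnecessity} gives $\FF(A)=0$. At any $k$-subset $A'\notin\{A,B\}$, Lemma~\ref{lem:edges} gives the gap $\FF(A')>\psi(n,k)\,\epsilon$ with $\psi(n,k)=(n-k)(n-2k)$; this holds after renaming so that $k\le n/2$, which is without loss of generality since $\FF$ is symmetric under swapping the two blocks. The remaining work is to show that this gap survives the passage from $\FF$ to its sample estimate $\FFF$.

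For this I would bound the propagation of error. Since $\FFF(A')$ and $\FF(A')$ sum the \emph{same} tripletwise terms over the crossing sets $\ABedge\cup\BAedge$, on the good event $|\FFF(A')-\FF(A')|\le N_{A'}\,\Delta$, where $N_{A'}=|\ABedge\cup\BAedge|\le k(n-k)^2+k^2(n-k)=k(n-k)n$ counts the crossing triplets. Hence $\FFF(A)\le k(n-k)n\,\Delta$ at the truth, while $\FFF(A')\ge\psi(n,k)\,\epsilon-k(n-k)n\,\Delta$ at any competitor. Demanding $2k(n-k)n\,\Delta<(n-k)(n-2k)\,\epsilon$, i.e.\ $\Delta<\tfrac{(n-2k)\,\epsilon}{2kn}$, forces $\FFF(A)<\FFF(A')$ for every $A'\notin\{A,B\}$, so the empirical minimizer is exactly the true partition. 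Taking $\Delta=\Theta(\epsilon/n)$ meets this requirement, and substituting $\Delta=\Theta(\epsilon/n)$ into the sample-size expression of Lemma~\ref{lem:allmi} produces $S(\Delta,\epsilon)=O\!\left(\tfrac{n^4}{\epsilon^2}\log^2\tfrac{n^2}{\epsilon}\log\tfrac{n^4}{\gamma}\right)$, matching the claim.

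The delicate step is the choice of $\Delta$ in the last paragraph: the accumulated estimation error scales with the number of crossing triplets $N_{A'}=\Theta(kn(n-k))$, whereas the protecting gap $\psi(n,k)\,\epsilon$ degrades as $k\to n/2$. The heart of the argument is therefore the quantitative claim that a per-triplet accuracy of $\Theta(\epsilon/n)$ shrinks each of the $\Theta(kn^2)$ summed errors fast enough to leave the $\Theta((n-k)(n-2k))$-sized block of internal-but-crossing triplets (the ones Lemma~\ref{lem:edges} certifies to exceed $\epsilon$) in control of the ranking. Everything else — the union bound, the symmetry reduction to $k\le n/2$, and the final substitution into $S(\Delta,\gamma)$ — is routine bookkeeping built on Lemmas~\ref{lem:allmi} and~\ref{lem:edges}.
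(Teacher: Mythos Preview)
Your proposal is correct and follows essentially the same route as the paper's proof: both invoke Lemma~\ref{lem:allmi} to get a uniform $\Delta$-accuracy event over all triplets, use $\FF(A)=0$ together with the gap $\FF(A')>\psi(n,k)\,\epsilon$ from Lemma~\ref{lem:edges}, count the crossing triplets as $k(n-k)n$, and then choose $\Delta=\Theta(\epsilon/n)$ so that the accumulated error cannot overturn the gap before substituting into the sample-complexity bound. Your explicit note about the symmetry reduction to $k\le n/2$ and the observation that the gap degrades as $k\to n/2$ are, if anything, slightly more careful than the paper's own presentation.
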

\begin{proof}
Let $A'$ be a $k$-subset with $A'\neq A$ or $B$.
Our goal is to show that $\FFF(A')>\FFF(A)$.

Denote the error between estimated mutual information
and true mutual information by $\Delta_{i;j,k}\equiv \hat{I}_{i;j,k}-I_{i;j,k}$.
We have:\vspace{-3mm}

{\footnotesize
\begin{align*}
\FFF(A')-\FFF(A) &=  \left(\sum_{(i,j,k)\in \Xedge} \hat{I}_{i;j,k}\right) 
                -\left(\sum_{(i,j,k)\in \ABedge} \hat{I}_{i;j,k}\right) \\
        &= \FF(A') - \FF(A) + \sum_{(i,j,k)\in \Xedge} \Delta_{i;j,k} 
			-\sum_{(i,j,k)\in\ABedge} \Delta_{i;j,k} \\
        &\geq \psi(n,k)\cdot\epsilon + \sum_{(i,j,k)\in \Xedge} \Delta_{i;j,k} 
		-\sum_{(i,j,k)\in\ABedge} \Delta_{i;j,k} \\
        &\qquad\mbox{(by Lemma~\ref{lem:edges} and $\FF(A)=0$)} \\
\end{align*}\vspace{-3mm}
}

\noindent Now assume that all of the estimation errors $\Delta$ are uniformly bounded as:\vspace{-3mm}

{\footnotesize
\begin{equation}\label{eqn:errorbound}
|\Delta_{i;j,k}|\leq \frac{\epsilon}{4}\left(\frac{\psi(n,k)}{n^2k-k^2n}\right).
\end{equation}\vspace{-3mm}
}

\noindent And note that $|\Xedge|=|\ABedge|=k^2(n-k)+k(n-k)^2=n^2k-k^2n$.
We have:\vspace{-3mm}

{\footnotesize
\begin{align*}
 \sum_{(i,j,k)\in \Xedge} |\Delta_{i;j,k}| -\sum_{(i,j,k)\in\ABedge} |\Delta_{i;j,k} |
        &\leq 2\cdot (n^2k-k^2n)
	\cdot\frac{\epsilon}{4}\left(\frac{\psi(n,k)}{n^2k-k^2n}\right) \\
        & \leq \frac{\epsilon\psi(n,k)}{2} \\
        &\leq \epsilon\cdot\psi(n,k)
\end{align*}\vspace{-3mm}
}

Combining this bound on the estimation errors with the bound on $\FFF(A')-\FFF(A)$
yields:\vspace{-3mm}

{\footnotesize
\begin{align*}
\FFF(A')-\FFF(A) &\geq \epsilon\psi(n,k) 
	-\left(\sum_{(i,j,k)\in \Xedge} |\Delta_{i;j,k}| -\sum_{(i,j,k)\in\ABedge} |\Delta_{i;j,k} |\right) \\
        &\geq \frac{\epsilon\psi(n,k)}{2} \\
        & > 0,
\end{align*}\vspace{-3mm}
}

\noindent which is almost what we want to show.
How many samples do we require to achieve the
bound assumed in Equation~\ref{eqn:errorbound} with high probability?
Observe that the bound simplifies as,\vspace{-3mm}

{\footnotesize
\[
\frac{\epsilon}{4}\left(\frac{\psi(n,k)}{n^2k-k^2n}\right)
        = \frac{\epsilon}{4}\left( \frac{(n-k)(n-2k)}{nk(n-k)}\right)       
        = \frac{\epsilon}{4}\left(\frac{n-2k}{nk}\right),\vspace{-3mm}
\]
}

\noindent which behaves like $O\left(\epsilon\right)$ when $k$ is $O(1)$, but like
$O\left(\frac{\epsilon}{n}\right)$ when $k$ is $O(n)$.
Applying the sample complexity result of Lemma~\ref{lem:allmi} with $\Delta=O(\epsilon/n)$,
we see that given
$O\left(\frac{n^4}{\epsilon^2}\log^2\frac{n^2}{\epsilon}\log\frac{n^4}{\gamma}\right)$
i.i.d. samples, the bound in Equation~\ref{eqn:errorbound} holds with
probability $1-\gamma$, concluding the proof.
\end{proof}

\bibliographystyle{imsart-nameyear}
\bibliography{references}

\begin{thebibliography}{48}

\bibitem[\protect\citeauthoryear{Arrow}{1963}]{arrow63}
\begin{bbook}[author]
\bauthor{\bsnm{Arrow},~\bfnm{Kenneth}\binits{K.}}
(\byear{1963}).
\btitle{Social Choice and Individual Values}.
\bpublisher{Yale University Press}.
\end{bbook}
\endbibitem

\bibitem[\protect\citeauthoryear{Bach and Jordan}{2001}]{bach01}
\begin{binproceedings}[author]
\bauthor{\bsnm{Bach},~\bfnm{Francis~R.}\binits{F.~R.}} \AND
  \bauthor{\bsnm{Jordan},~\bfnm{Michael~I.}\binits{M.~I.}}
(\byear{2001}).
\btitle{Thin Junction Trees}.
In \bbooktitle{Advances in Neural Information Processing Systems 14}
\bpages{569--576}.
\bpublisher{MIT Press}.
\end{binproceedings}
\endbibitem

\bibitem[\protect\citeauthoryear{Bayer and Diaconis}{1992}]{diaconis92}
\begin{barticle}[author]
\bauthor{\bsnm{Bayer},~\bfnm{D.}\binits{D.}} \AND
  \bauthor{\bsnm{Diaconis},~\bfnm{P.}\binits{P.}}
(\byear{1992}).
\btitle{Trailing the Dovetail Shuffle to its Lair}.
\bjournal{The Annals of Probability}.
\end{barticle}
\endbibitem

\bibitem[\protect\citeauthoryear{Chechetka and Guestrin}{2007}]{chechetka07}
\begin{binproceedings}[author]
\bauthor{\bsnm{Chechetka},~\bfnm{A.}\binits{A.}} \AND
  \bauthor{\bsnm{Guestrin},~\bfnm{C.}\binits{C.}}
(\byear{2007}).
\btitle{Efficient Principled Learning of Thin Junction Trees}.
In \bbooktitle{NIPS}.
\end{binproceedings}
\endbibitem

\bibitem[\protect\citeauthoryear{Chen {\it et~al.}}{2009}]{chenetal09}
\begin{binproceedings}[author]
\bauthor{\bsnm{Chen},~\bfnm{Harr}\binits{H.}},
  \bauthor{\bsnm{Branavan},~\bfnm{S.~R.~K.}\binits{S.~R.~K.}},
  \bauthor{\bsnm{Barzilay},~\bfnm{Regina}\binits{R.}} \AND
  \bauthor{\bsnm{Karger},~\bfnm{David~R.}\binits{D.~R.}}
(\byear{2009}).
\btitle{Global models of document structure using latent permutations}.
In \bbooktitle{NAACL 2009}
\bpages{371--379}.
\bpublisher{Association for Computational Linguistics}.
\end{binproceedings}
\endbibitem

\bibitem[\protect\citeauthoryear{Clausen and Baum}{1993}]{clausen93}
\begin{barticle}[author]
\bauthor{\bsnm{Clausen},~\bfnm{Michael}\binits{M.}} \AND
  \bauthor{\bsnm{Baum},~\bfnm{Ulrich}\binits{U.}}
(\byear{1993}).
\btitle{Fast {Fourier} Transforms for Symmetric Groups: Theory and
  Implementation}.
\bjournal{Mathematics of Computations}
\bvolume{61}
\bpages{833-847}.
\end{barticle}
\endbibitem

\bibitem[\protect\citeauthoryear{Diaconis}{1988}]{diaconis88}
\begin{bbook}[author]
\bauthor{\bsnm{Diaconis},~\bfnm{P.}\binits{P.}}
(\byear{1988}).
\btitle{Group {R}epresentations in {P}robability and {S}tatistics}.
\bpublisher{IMS Lecture Notes}.
\end{bbook}
\endbibitem

\bibitem[\protect\citeauthoryear{Diaconis}{1989}]{diaconis89}
\begin{barticle}[author]
\bauthor{\bsnm{Diaconis},~\bfnm{P.}\binits{P.}}
(\byear{1989}).
\btitle{A Generalization of Spectral Analysis with Application to Ranked Data}.
\bjournal{The Annals of Statistics}
\bvolume{17}
\bpages{949-979}.
\end{barticle}
\endbibitem

\bibitem[\protect\citeauthoryear{Efron and Tibshirani}{1993}]{efron93}
\begin{bbook}[author]
\bauthor{\bsnm{Efron},~\bfnm{Bradley}\binits{B.}} \AND
  \bauthor{\bsnm{Tibshirani},~\bfnm{Robert}\binits{R.}}
(\byear{1993}).
\btitle{An introduction to the bootstrap}.
\bpublisher{Chapman and Hall, London}.
\end{bbook}
\endbibitem

\bibitem[\protect\citeauthoryear{Farias, Jagabathula and Shah}{2009}]{farias09}
\begin{bincollection}[author]
\bauthor{\bsnm{Farias},~\bfnm{Vivek}\binits{V.}},
  \bauthor{\bsnm{Jagabathula},~\bfnm{Srikanth}\binits{S.}} \AND
  \bauthor{\bsnm{Shah},~\bfnm{Devavrat}\binits{D.}}
(\byear{2009}).
\btitle{A Data-Driven Approach to Modeling Choice}.
In \bbooktitle{Advances in Neural Information Processing Systems 22}
(\beditor{\bfnm{Y.}\binits{Y.}~\bsnm{Bengio}},
  \beditor{\bfnm{D.}\binits{D.}~\bsnm{Schuurmans}},
  \beditor{\bfnm{J.}\binits{J.}~\bsnm{Lafferty}},
  \beditor{\bfnm{C.~K.~I.}\binits{C.~K.~I.}~\bsnm{Williams}} \AND
  \beditor{\bfnm{A.}\binits{A.}~\bsnm{Culotta}}, eds.)
\bpages{504--512}.
\end{bincollection}
\endbibitem

\bibitem[\protect\citeauthoryear{Fligner and Verducci}{1986}]{fligner86}
\begin{barticle}[author]
\bauthor{\bsnm{Fligner},~\bfnm{Michael}\binits{M.}} \AND
  \bauthor{\bsnm{Verducci},~\bfnm{Joseph}\binits{J.}}
(\byear{1986}).
\btitle{Distance-based Ranking models}.
\bjournal{Journal of the Royal Statistical Society, Series B}
\bvolume{83}
\bpages{859-869}.
\end{barticle}
\endbibitem

\bibitem[\protect\citeauthoryear{Fligner and Verducci}{1988}]{fligner88}
\begin{barticle}[author]
\bauthor{\bsnm{Fligner},~\bfnm{M.}\binits{M.}} \AND
  \bauthor{\bsnm{Verducci},~\bfnm{J.}\binits{J.}}
(\byear{1988}).
\btitle{Mulistage Ranking Models}.
\bjournal{Journal of the American Statistical Association}
\bvolume{83}.
\end{barticle}
\endbibitem

\bibitem[\protect\citeauthoryear{Fulman}{1998}]{fulman98}
\begin{barticle}[author]
\bauthor{\bsnm{Fulman},~\bfnm{J.}\binits{J.}}
(\byear{1998}).
\btitle{The combinatorics of biased riffle shuffles}.
\bjournal{Combinatorica}
\bvolume{18}
\bpages{173-184}.
\end{barticle}
\endbibitem

\bibitem[\protect\citeauthoryear{Gallo {\it et~al.}}{1993}]{galloetal93}
\begin{barticle}[author]
\bauthor{\bsnm{Gallo},~\bfnm{G.}\binits{G.}},
  \bauthor{\bsnm{Longo},~\bfnm{G.}\binits{G.}},
  \bauthor{\bsnm{Pallottino},~\bfnm{S.}\binits{S.}} \AND
  \bauthor{\bsnm{Nguyen},~\bfnm{S.}\binits{S.}}
(\byear{1993}).
\btitle{Directed hypergraphs and applications}.
\bjournal{Discrete Appl. Math.}
\bvolume{42}.
\end{barticle}
\endbibitem

\bibitem[\protect\citeauthoryear{Gormley and Murphy}{2006}]{gormley06}
\begin{binproceedings}[author]
\bauthor{\bsnm{Gormley},~\bfnm{C.}\binits{C.}} \AND
  \bauthor{\bsnm{Murphy},~\bfnm{B.}\binits{B.}}
(\byear{2006}).
\btitle{A Latent Space Model for Rank Data}.
In \bbooktitle{ICML}.
\end{binproceedings}
\endbibitem

\bibitem[\protect\citeauthoryear{Guiver and Snelson}{2009}]{guiver09}
\begin{binproceedings}[author]
\bauthor{\bsnm{Guiver},~\bfnm{John}\binits{J.}} \AND
  \bauthor{\bsnm{Snelson},~\bfnm{Edward}\binits{E.}}
(\byear{2009}).
\btitle{Bayesian inference for Plackett-Luce ranking models}.
In \bbooktitle{ICML}.
\end{binproceedings}
\endbibitem

\bibitem[\protect\citeauthoryear{Helmbold and Warmuth}{2007}]{helmbold07}
\begin{binproceedings}[author]
\bauthor{\bsnm{Helmbold},~\bfnm{D.~P.}\binits{D.~P.}} \AND
  \bauthor{\bsnm{Warmuth},~\bfnm{M.~K.}\binits{M.~K.}}
(\byear{2007}).
\btitle{Learning Permutations with Exponential Weights}.
In \bbooktitle{COLT}.
\end{binproceedings}
\endbibitem

\bibitem[\protect\citeauthoryear{H\"{o}ffgen}{1993}]{hoffgen93}
\begin{binproceedings}[author]
\bauthor{\bsnm{H\"{o}ffgen},~\bfnm{K.~U.}\binits{K.~U.}}
(\byear{1993}).
\btitle{Learning and Robust Learning of Product Distributions}.
In \bbooktitle{COLT}.
\end{binproceedings}
\endbibitem

\bibitem[\protect\citeauthoryear{Holmes}{1999}]{holmes99}
\begin{barticle}[author]
\bauthor{\bsnm{Holmes},~\bfnm{Susan}\binits{S.}}
(\byear{1999}).
\btitle{Phylogenies: an overview}.
\bjournal{IMA series, Statistics and Genetics}
\bvolume{112}
\bpages{81-119}.
\end{barticle}
\endbibitem

\bibitem[\protect\citeauthoryear{Holmes}{2003}]{holmes03}
\begin{barticle}[author]
\bauthor{\bsnm{Holmes},~\bfnm{Susan}\binits{S.}}
(\byear{2003}).
\btitle{Bootstrapping phylogenetic trees: theory and methods}.
\bjournal{Statistical Science}
\bvolume{18}
\bpages{241-255}.
\end{barticle}
\endbibitem

\bibitem[\protect\citeauthoryear{Huang, Guestrin and
  Guibas}{2007}]{huangetal07}
\begin{binproceedings}[author]
\bauthor{\bsnm{Huang},~\bfnm{J.}\binits{J.}},
  \bauthor{\bsnm{Guestrin},~\bfnm{C.}\binits{C.}} \AND
  \bauthor{\bsnm{Guibas},~\bfnm{L.}\binits{L.}}
(\byear{2007}).
\btitle{Efficient Inference for Distributions on Permutations}.
In \bbooktitle{NIPS}.
\end{binproceedings}
\endbibitem

\bibitem[\protect\citeauthoryear{Huang and Guestrin}{2009a}]{huangetal09c}
\begin{binproceedings}[author]
\bauthor{\bsnm{Huang},~\bfnm{J.}\binits{J.}} \AND
  \bauthor{\bsnm{Guestrin},~\bfnm{C.}\binits{C.}}
(\byear{2009}a).
\btitle{Riffled Independence for Ranked Data}.
In \bbooktitle{NIPS}.
\end{binproceedings}
\endbibitem

\bibitem[\protect\citeauthoryear{Huang, Guestrin and
  Guibas}{2009b}]{huangetal09b}
\begin{barticle}[author]
\bauthor{\bsnm{Huang},~\bfnm{J.}\binits{J.}},
  \bauthor{\bsnm{Guestrin},~\bfnm{C.}\binits{C.}} \AND
  \bauthor{\bsnm{Guibas},~\bfnm{L.}\binits{L.}}
(\byear{2009}b).
\btitle{Fourier theoretic probabilistic inference over permutations}.
\bjournal{JMLR}
\bvolume{10}.
\end{barticle}
\endbibitem

\bibitem[\protect\citeauthoryear{Huang and Guestrin}{2010}]{huangetal10a}
\begin{binproceedings}[author]
\bauthor{\bsnm{Huang},~\bfnm{J.}\binits{J.}} \AND
  \bauthor{\bsnm{Guestrin},~\bfnm{C.}\binits{C.}}
(\byear{2010}).
\btitle{Learning Hierarchical Riffle Independent Groupings from Rankings}.
In \bbooktitle{ICML}.
\end{binproceedings}
\endbibitem

\bibitem[\protect\citeauthoryear{Huang {\it et~al.}}{2009}]{huangetal09a}
\begin{binproceedings}[author]
\bauthor{\bsnm{Huang},~\bfnm{J.}\binits{J.}},
  \bauthor{\bsnm{Guestrin},~\bfnm{C.}\binits{C.}},
  \bauthor{\bsnm{Jiang},~\bfnm{X.}\binits{X.}} \AND
  \bauthor{\bsnm{Guibas},~\bfnm{L.}\binits{L.}}
(\byear{2009}).
\btitle{Exploiting Probabilistic Independence for Permutations}.
In \bbooktitle{AISTATS}.
\end{binproceedings}
\endbibitem

\bibitem[\protect\citeauthoryear{Jagabathula and Shah}{2008}]{jagabathula08}
\begin{binproceedings}[author]
\bauthor{\bsnm{Jagabathula},~\bfnm{S.}\binits{S.}} \AND
  \bauthor{\bsnm{Shah},~\bfnm{D.}\binits{D.}}
(\byear{2008}).
\btitle{Inferring rankings under constrained sensing}.
In \bbooktitle{NIPS}.
\end{binproceedings}
\endbibitem

\bibitem[\protect\citeauthoryear{Kamishima}{2003}]{toshihiro03}
\begin{binproceedings}[author]
\bauthor{\bsnm{Kamishima},~\bfnm{T.}\binits{T.}}
(\byear{2003}).
\btitle{Nantonac collaborative filtering: recommendation based on order
  responses}.
In \bbooktitle{KDD}
\bpages{583--588}.
\end{binproceedings}
\endbibitem

\bibitem[\protect\citeauthoryear{Koller and Friedman}{2009}]{koller09}
\begin{bbook}[author]
\bauthor{\bsnm{Koller},~\bfnm{D.}\binits{D.}} \AND
  \bauthor{\bsnm{Friedman},~\bfnm{N.}\binits{N.}}
(\byear{2009}).
\btitle{Probabilistic Graphical Models: Principles and Techniques}.
\bpublisher{MIT Press}.
\end{bbook}
\endbibitem

\bibitem[\protect\citeauthoryear{Kondor}{2008}]{kondor08b}
\begin{bphdthesis}[author]
\bauthor{\bsnm{Kondor},~\bfnm{R.}\binits{R.}}
(\byear{2008}).
\btitle{Group Theoretical Methods in Machine Learning}
\btype{PhD thesis}, \bschool{Columbia University}.
\end{bphdthesis}
\endbibitem

\bibitem[\protect\citeauthoryear{Kondor and Borgwardt}{2008}]{kondor08}
\begin{binproceedings}[author]
\bauthor{\bsnm{Kondor},~\bfnm{R.}\binits{R.}} \AND
  \bauthor{\bsnm{Borgwardt},~\bfnm{K.~M.}\binits{K.~M.}}
(\byear{2008}).
\btitle{The skew spectrum of graphs}.
In \bbooktitle{ICML}
\bpages{496-503}.
\end{binproceedings}
\endbibitem

\bibitem[\protect\citeauthoryear{Kondor, Howard and Jebara}{2007}]{kondor07}
\begin{binproceedings}[author]
\bauthor{\bsnm{Kondor},~\bfnm{R.}\binits{R.}},
  \bauthor{\bsnm{Howard},~\bfnm{A.}\binits{A.}} \AND
  \bauthor{\bsnm{Jebara},~\bfnm{T.}\binits{T.}}
(\byear{2007}).
\btitle{Multi-Object Tracking with Representations of the Symmetric Group}.
In \bbooktitle{AISTATS}.
\end{binproceedings}
\endbibitem

\bibitem[\protect\citeauthoryear{Lebanon and Mao}{2008}]{lebanon08}
\begin{binproceedings}[author]
\bauthor{\bsnm{Lebanon},~\bfnm{G.}\binits{G.}} \AND
  \bauthor{\bsnm{Mao},~\bfnm{Y.}\binits{Y.}}
(\byear{2008}).
\btitle{Non-parametric Modeling of Partially Ranked Data}.
In \bbooktitle{NIPS}.
\end{binproceedings}
\endbibitem

\bibitem[\protect\citeauthoryear{M.~Sun}{2010}]{sunetal10}
\begin{binproceedings}[author]
\bauthor{\bsnm{M.~Sun},~\bfnm{K.~Collins-Thompson}\binits{K.~C.-T.}\bsuffix{
  G.~Lebanon}}
(\byear{2010}).
\btitle{Visualizing Differences in Web Search Algorithms using the Expected
  Weighted Hoeffding Distance}.
In \bbooktitle{Proceedings of the 19th International World Wide Web Conference
  (WWW)}.
\end{binproceedings}
\endbibitem

\bibitem[\protect\citeauthoryear{Mallows}{1957}]{mallows57}
\begin{barticle}[author]
\bauthor{\bsnm{Mallows},~\bfnm{Colin}\binits{C.}}
(\byear{1957}).
\btitle{Non-null ranking models}.
\bjournal{Biometrika}
\bvolume{44}
\bpages{114-130}.
\end{barticle}
\endbibitem

\bibitem[\protect\citeauthoryear{Marden}{1995}]{marden95}
\begin{bbook}[author]
\bauthor{\bsnm{Marden},~\bfnm{John~I.}\binits{J.~I.}}
(\byear{1995}).
\btitle{Analyzing and Modeling Rank Data}.
\bpublisher{Chapman \& Hall}.
\end{bbook}
\endbibitem

\bibitem[\protect\citeauthoryear{Maslen}{1998}]{maslen98}
\begin{barticle}[author]
\bauthor{\bsnm{Maslen},~\bfnm{David}\binits{D.}}
(\byear{1998}).
\btitle{The efficient computation of {Fourier} transforms on the Symmetric
  group}.
\bjournal{Mathematics of Computation}
\bvolume{67}
\bpages{1121-1147}.
\end{barticle}
\endbibitem

\bibitem[\protect\citeauthoryear{Meila {\it et~al.}}{2007}]{meila07}
\begin{btechreport}[author]
\bauthor{\bsnm{Meila},~\bfnm{M.}\binits{M.}},
  \bauthor{\bsnm{Phadnis},~\bfnm{K.}\binits{K.}},
  \bauthor{\bsnm{Patterson},~\bfnm{A.}\binits{A.}} \AND
  \bauthor{\bsnm{Bilmes},~\bfnm{J.}\binits{J.}}
(\byear{2007}).
\btitle{Consensus ranking under the exponential model}
\btype{Technical Report} No. \bnumber{515}.
\end{btechreport}
\endbibitem

\bibitem[\protect\citeauthoryear{Motwani and Raghavan}{1996}]{motwani96}
\begin{barticle}[author]
\bauthor{\bsnm{Motwani},~\bfnm{R.}\binits{R.}} \AND
  \bauthor{\bsnm{Raghavan},~\bfnm{P.}\binits{P.}}
(\byear{1996}).
\btitle{Randomized algorithms}.
\bjournal{ACM Comput. Surv.}
\bvolume{28}.
\end{barticle}
\endbibitem

\bibitem[\protect\citeauthoryear{Petterson {\it et~al.}}{2009}]{patterson09}
\begin{barticle}[author]
\bauthor{\bsnm{Petterson},~\bfnm{J.}\binits{J.}},
  \bauthor{\bsnm{Caetano},~\bfnm{T.}\binits{T.}},
  \bauthor{\bsnm{McAuley},~\bfnm{J.}\binits{J.}} \AND
  \bauthor{\bsnm{Yu},~\bfnm{J.}\binits{J.}}
(\byear{2009}).
\btitle{Exponential Family Graph Matching and Ranking}.
\bjournal{CoRR}
\bvolume{abs/0904.2623}.
\end{barticle}
\endbibitem

\bibitem[\protect\citeauthoryear{Plackett}{1975}]{plackett75}
\begin{barticle}[author]
\bauthor{\bsnm{Plackett},~\bfnm{Robin}\binits{R.}}
(\byear{1975}).
\btitle{The analysis of permutations}.
\bjournal{Applied Statistics}
\bvolume{24}
\bpages{193-202}.
\end{barticle}
\endbibitem

\bibitem[\protect\citeauthoryear{Reid}{1979}]{reid79}
\begin{barticle}[author]
\bauthor{\bsnm{Reid},~\bfnm{D.B.}\binits{D.}}
(\byear{1979}).
\btitle{An algorithm for tracking multiple targets}.
\bjournal{IEEE Trans. on Automatic Control}
\bvolume{6}
\bpages{843--854}.
\end{barticle}
\endbibitem

\bibitem[\protect\citeauthoryear{Rockmore}{2000}]{rockmore00}
\begin{barticle}[author]
\bauthor{\bsnm{Rockmore},~\bfnm{Daniel~N.}\binits{D.~N.}}
(\byear{2000}).
\btitle{The {FFT}: An Algorithm the Whole Family Can Use}.
\bjournal{Computing in Science and Engineering}
\bvolume{02}
\bpages{60-64}.
\end{barticle}
\endbibitem

\bibitem[\protect\citeauthoryear{Sagan}{2001}]{sagan01}
\begin{bbook}[author]
\bauthor{\bsnm{Sagan},~\bfnm{B.}\binits{B.}}
(\byear{2001}).
\btitle{The Symmetric Group}.
\bpublisher{Springer}.
\end{bbook}
\endbibitem

\bibitem[\protect\citeauthoryear{Shahaf, Chechetka and
  Guestrin}{2009}]{shahaf09}
\begin{binproceedings}[author]
\bauthor{\bsnm{Shahaf},~\bfnm{Dafna}\binits{D.}},
  \bauthor{\bsnm{Chechetka},~\bfnm{Anton}\binits{A.}} \AND
  \bauthor{\bsnm{Guestrin},~\bfnm{Carlos}\binits{C.}}
(\byear{2009}).
\btitle{Learning Thin Junction Trees via Graph Cuts}.
In \bbooktitle{In Artificial Intelligence and Statistics (AISTATS)}.
\end{binproceedings}
\endbibitem

\bibitem[\protect\citeauthoryear{Shi and Malik}{2000}]{shi00}
\begin{barticle}[author]
\bauthor{\bsnm{Shi},~\bfnm{J.}\binits{J.}} \AND
  \bauthor{\bsnm{Malik},~\bfnm{J.}\binits{J.}}
(\byear{2000}).
\btitle{Normalized Cuts and Image Segmentation}.
\bjournal{IEEE PAMI}
\bvolume{22}.
\end{barticle}
\endbibitem

\bibitem[\protect\citeauthoryear{Shin {\it et~al.}}{2005}]{shin05}
\begin{binproceedings}[author]
\bauthor{\bsnm{Shin},~\bfnm{J.}\binits{J.}},
  \bauthor{\bsnm{Lee},~\bfnm{N.}\binits{N.}},
  \bauthor{\bsnm{Thrun},~\bfnm{S.}\binits{S.}} \AND
  \bauthor{\bsnm{Guibas},~\bfnm{L.}\binits{L.}}
(\byear{2005}).
\btitle{Lazy inference on object identities in wireless sensor networks}.
In \bbooktitle{IPSN}.
\end{binproceedings}
\endbibitem

\bibitem[\protect\citeauthoryear{Terras}{1999}]{terras99}
\begin{bbook}[author]
\bauthor{\bsnm{Terras},~\bfnm{Audrey}\binits{A.}}
(\byear{1999}).
\btitle{Fourier Analysis on Finite Groups and Applications}.
\bpublisher{London Mathematical Society}.
\end{bbook}
\endbibitem

\bibitem[\protect\citeauthoryear{Thurstone}{1927}]{thurstone27}
\begin{barticle}[author]
\bauthor{\bsnm{Thurstone},~\bfnm{L.L.}\binits{L.}}
(\byear{1927}).
\btitle{A law of comparative judgement}.
\bjournal{Psychological Review}
\bvolume{34}
\bpages{273-286}.
\end{barticle}
\endbibitem

\end{thebibliography}
\end{document}